\documentclass{article}
\usepackage{amsmath,latexsym,amssymb,verbatim,enumerate,graphicx,mathtools,quantikz, appendix, quantikz, tikz, algorithm, placeins, algpseudocode, svg, subcaption, mathtools, physics, etoc, url, wrapfig}
\PassOptionsToPackage{numbers,sort&compress}{natbib}

\usepackage[preprint]{Styles/neurips_2026} 
\usepackage{xcolor}

\newcommand{\sidecaptionimage}[3]{%
\begin{tikzpicture}
    \node[inner sep=0] (img) {\includegraphics[width=#1]{#2}};
    \node[overlay, rotate=90, anchor=south] at ([xshift=-1mm]img.west) {#3};
\end{tikzpicture}%
}
\usepackage{theorem}
\newtheorem{definition}{Definition}[section]

\newtheorem{lemma}[definition]{Lemma}

\newtheorem{theorem}[definition]{Theorem}
\newtheorem{corollary}[definition]{Corollary}

\def\squareforqed{\hbox{\rlap{$\sqcap$}$\sqcup$}}
\def\qed{\ifmmode\squareforqed\else{\unskip\nobreak\hfil
\parfillskip=0pt\finalhyphendemerits=0\endgraf}\fi}
\def\endenv{\ifmmode\;\else{\unskip\nobreak\hfil
\penalty50\hskip1em\null\nobreak\hfil\;
\parfillskip=0pt\finalhyphendemerits=0\endgraf}\fi}
\newenvironment{proof}{\noindent \textbf{{Proof~} }}{\QEDB}

\newcommand*{\QEDB}{\null\nobreak\hfill\ensuremath{\square}}%

\newcommand{\nc}{\newcommand}
\nc{\rnc}{\renewcommand} 

 \rnc{\bra}[1]{\langle#1|} 
 \rnc{\ket}[1]{|#1\rangle}
 \rnc{\braket}[2]{\langle#1|#2\rangle} 
\def\>{\rangle}
\def\<{\langle}

\nc{\Exp}[1]{$\mathbb{E}\left\[#1\right\]$}
\nc{\Avg}[1]{\langle#1\rangle}

\rnc{\max}{\operatorname{max}} 
\nc{\poly}{\operatorname{poly}}
\nc{\Supp}{{\operatorname{Supp}}}
\nc{\Diam}{{\operatorname{Diam}}}
\nc{\smfrac}[2]{\mbox{$\frac{#1}{#2}$}} 
\nc{\ox}{\otimes} 
\nc{\dg}{\dagger} 
\nc{\rar}{\rightarrow} 
\nc{\lrar}{\longrightarrow}
\nc{\pair}[2]{{\langle #1, #2 \rangle}}

\nc{\vc}[1]{\bar{#1}}
\nc{\rnd}[1]{{\operatorname{rnd}(#1)}}
\nc{\Vol}{{\operatorname{Vol}}}
\nc{\Area}{{\operatorname{Area}}}
\nc{\diag}{{\operatorname{diag}}}
\nc{\mL}{{\mathcal{L}}}
\nc{\Var}{{\operatorname{Var}}}

\nc{\eqn}[1]{Eq.~(\ref{eqn:#1})}
\nc{\eqns}[2]{Eqs.~(\ref{eqn:#1}) and (\ref{eqn:#2})}

\nc{\1}{\openone} 
\nc{\id}{{\operatorname{id}}}

\nc{\cA}{{\cal A}} \nc{\cB}{{\cal B}} \nc{\cC}{{\cal C}}
\nc{\cD}{{\cal D}} \nc{\cE}{{\cal E}} \nc{\cF}{{\cal F}}
\nc{\cG}{{\cal G}} \nc{\cH}{{\cal H}} \nc{\cI}{{\cal I}}
\nc{\cJ}{{\cal J}} \nc{\cK}{{\cal K}} \nc{\cL}{{\cal L}}
\nc{\cM}{{\cal M}} \nc{\cN}{{\cal N}} \nc{\cO}{{\cal O}}
\nc{\cP}{{\cal P}} \nc{\cR}{{\cal R}} \nc{\cS}{{\cal S}}
\nc{\cT}{{\cal T}} \nc{\cU}{{\cal U}} \nc{\cV}{{\cal V}}
\nc{\cX}{{\cal X}} \nc{\cW}{{\cal W}} \nc{\cZ}{{\cal Z}}

\def\dD{|\mathcal{D}|}
\def\cC{\mathcal{C}_{x,t}}

\nc{\RR}{{{\mathbb R}}}
\nc{\CC}{{{\mathbb C}}}
\nc{\FF}{{{\mathbb F}}}
\nc{\HH}{{{\mathbb H}}}
\nc{\II}{{{\mathbb I}}}
\nc{\NN}{{{\mathbb N}}}
\nc{\ZZ}{{{\mathbb Z}}}
\nc{\PP}{{{\mathbb P}}}
\nc{\QQ}{{{\mathbb Q}}}
\nc{\UU}{{{\mathbb U}}}
\nc{\WW}{{{\mathbb W}}}
\nc{\EE}{{{\mathbb E}}}
\rnc{\SS}{{{\mathbb S}}}

\nc{\bfn}{\beta^{(n)}}
\nc{\Esm}{E_{sm}}
\nc{\Elap}{{E_{Lap}}}
\nc{{\Ediff}}{{E_{diff}}}
\nc{{\Ecross}}{{E_{cross}}}
\nc{{\Eapsf}}{{E_{A2}}}
\nc{{\EOeff}}{{E_{O,\text{eff}}}}
\nc{{\EDeff}}{{E_{D,\text{eff}}}}
\nc{{\Edev}}{{E_{dev}}}
\nc{{\Cbabai}}{{C_{\text{Babai}}}}
\nc{{\TV}}{{\operatorname{TV}}}
\nc{\alg}[1]{\textsf{#1}}
\nc{\vs}{{v^{(s)}}}

\author{%
  Henry Hunt$^*$ \\
  Department of Physics\\
  Stanford University\\
  Stanford, CA 94305 \\
  \texttt{hshunt@stanford.edu} \\
  \And
  Mason Kamb$^*$ \\
  Department of Applied Physics\\
  Stanford University\\
  Stanford, CA 94305 \\
  \texttt{kambm@stanford.edu} \\
  \And
  Surya Ganguli \\
  Department of Applied Physics\\
  Stanford University\\
  Stanford, CA 94305 \\
  \texttt{sganguli@stanford.edu} \\
}

\title{An exact information theory of generalization phase transitions in Bayesian diffusion models}

\begin{document}


\maketitle

\begin{abstract}

How diffusion models circumvent the curse of dimensionality to learn complex distributions over high dimensional spaces from a finite training set, instead of memorizing it, remains a fundamental mystery. To address this, we introduce analytically tractable Bayesian information restricted diffusion (BIRD) models, in which each pixel observes restricted information about noisy data. A BIRD model time-reverses diffusion by inferring which past training sample produced its current restricted observation using the Bayesian posterior. This model class generalizes existing analytical diffusion models that use spatially local information restriction. We show that spatially local BIRD models closely approximate trained diffusion models \textit{early in training}, across different architectures such as UNets and DiTs. Under minimal assumptions on the data distribution, we identify an information-theoretic phase boundary between memorization and generalization in the joint space of amount of training data, time in the reverse generative process, and amount of information restriction: a BIRD model memorizes when the mutual information between its restricted noisy observations and the training data exceeds the log number of training points, and it generalizes otherwise. Experiments across a range of datasets confirm our theoretically predicted location for the transition. We find that generation proceeds near the edge of memorization: both spatially local BIRD models and early-training diffusion models track the memorization-generalization phase boundary by increasingly restricting information over time. Overall, our results reveal a fundamental role for information restriction in generative AI to circumvent the curse of dimensionality.

\end{abstract} 

\section{Introduction and related work}

Remarkably, generative AI can now learn complex distributions over high dimensional spaces with tractable amounts of training data. For example,  diffusion models \citep{sohl2015deep,ho2020denoising,song2020denoising} robustly and consistently generalize: when trained on entirely disjoint subsets of data, two independent models will \textit{converge to the same general model} when the data subset size is moderately large \citep{kadkhodaie2023generalization}. Below this modest data threshold, the two models may instead memorize their own training data. Naively, however the curse of dimensionality would suggest that an intractable (e.g. exponential in ambient dimension) amount of data should be required to transition from memorization to generalization for arbitrary data distributions \citep{shalevshwartz2014understanding}. This raises a fundamental mystery as to when the transition from memorization to generalization occurs for diffusion models, and why it does not require that much training data.
Several theoretical approaches have been taken to explore this mystery. These approaches often involve analyzing the behavior of simplified diffusion models on simplified datasets. Diffusion models reverse a forward diffusion process that converts data into noise.  There exists an exact mathematical reversal of this process that starts from noise and follows the ideal, or empirical score function for the finite training set (App. \ref{app:bayes_opt_mem}, \cite{biroli2024dynamical}). However, such empirical score models require an exponential in dimension amount of data to avoid memorization \citep{biroli2024dynamical, de2022convergence}. While several papers have explored the hypothesis that the low dimensionality of the data distribution might rescue generalization of empirical score models \citep{achilli2025memorization, li2024adapting, chen2023score}, extensive experimental work shows empirical score models fail to generalize on datasets of realistic dimensionality under realistic modeling conditions \cite{li2024good, scarvelis2023closed, kambanalytic, niedobatowards, yi2023generalization, gu2023memorization, pham2025memorization}.
\begin{wrapfigure}{r}{0.6\textwidth} 
    \begin{center}
    \includegraphics[width=\linewidth]{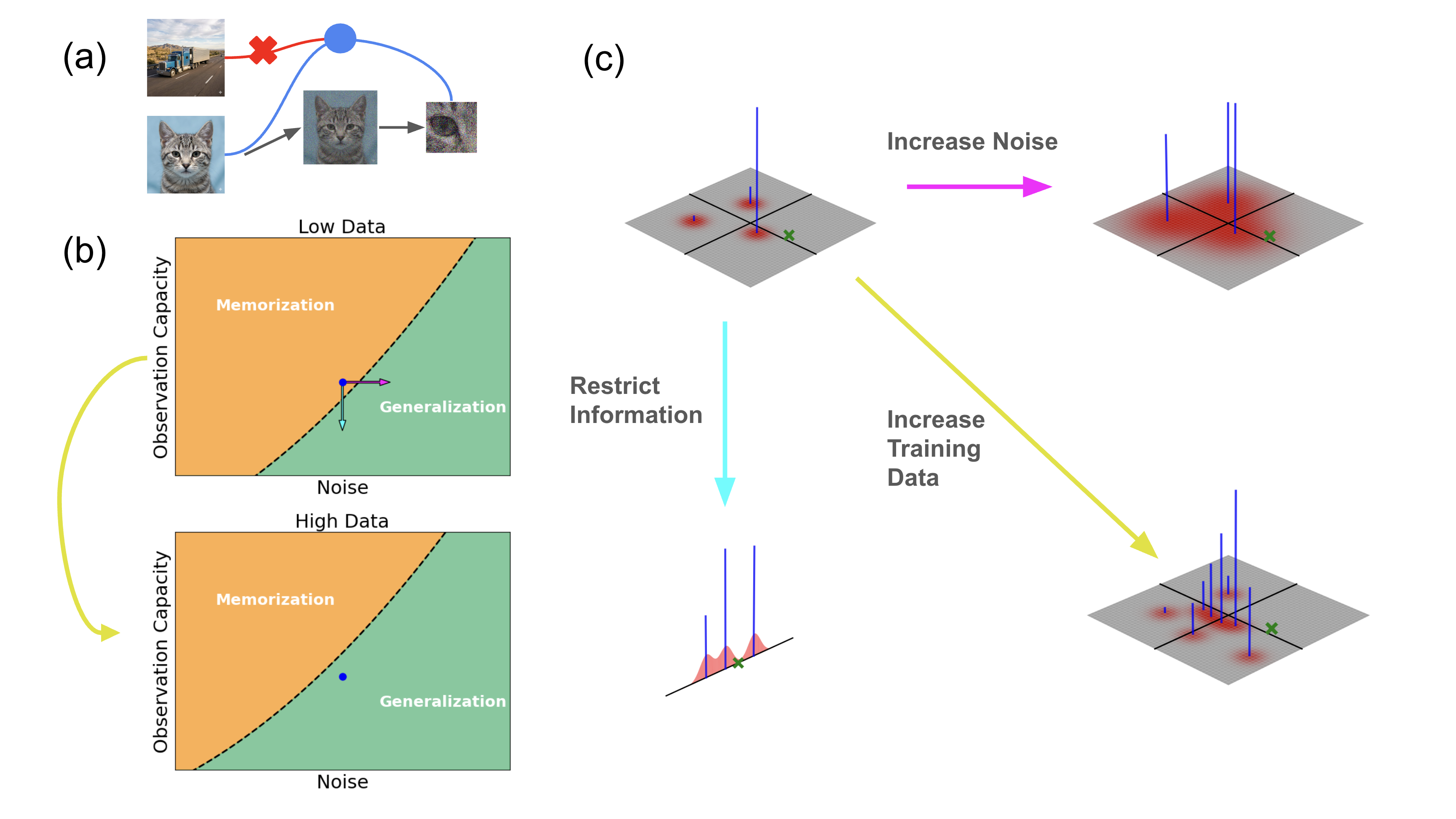}
    \end{center}
    \caption{\label{fig:cartoon_figure} A schematic figure showing different transitions from memorizing to generalizing phases in BIRD models. (a) A depiction of the Bayesian guessing game performed by a pixel as it assigns posterior probabilities to training set images. The forward process starts with a training image of a cat, adds noise to it, then the pixel observes a local patch (e.g. the eye). Based on this restricted observation, the pixel must guess whether this noisy patch was generated from the truck or the cat. The pixel memorizes if the posterior concentrates on the cat. (b) The phase diagram of the generalization/memorization transition for a BIRD model. Consider the blue point in the memorization phase (top phase diagram). Increased noise (pink arrow) and decreased observation capacity (blue arrow) each increase posterior entropy, making Bayesian guessing harder, and drive a transition from memorization to generalization. Also, increased dataset size (yellow arrow) can shift the phase transition boundary, and increase posterior entropy, so the original location (blue) dot is now in the generalization phase (bottom phase diagram). (c) In each of the $4$ plots, red indicates the distribution of a noised training set, the green x indicates a particular observed image, and the blue bars depict the posterior distribution over the training data conditioned on the observed image. The upper left plot corresponds to the blue point in the top phase diagram of panel b. The posterior concentrates on a single training point and so it is in the memorization phase.  The 3 colored arrows again correspond to the $3$ distinct ways one can increase posterior entropy and transition to the generalizing phase as in panel (b): by increasing noise (pink), by decreasing observation capacity (by projecting out a dimension) (blue), or by increasing the amount of training data (yellow).}
\end{wrapfigure}

The inability of empirical score models to generalize with tractable amounts of training data suggests that successful generalization in diffusion models may originate from limiting inductive biases that prevent learning the empirical score function, such as smoothness \citep{scarvelis2023closed,chen2025interpolation}, linearity \cite{wang2024unreasonable, wang2026random}, low-dimensionality \cite{he2026diffusion, wang2025diffusion}, geometry-adaptive bases \cite{kadkhodaie2023generalization}, early training \cite{favero2025bigger,bonnairediffusion,favero2025compositional,bardone2026theory}, and model parameterization/architecture \cite{boffi2024shallow, cui2025solvable,yoon2023diffusion,cui2024analysis,buchanan2026edge}. However, none of these works derived a highly accurate predictive theory for {\it what} trained, nonlinear diffusion models actually converge to in the generalizing phase on a wide variety of benchmark datasets.

One such theory to do so arose in \cite{kambanalytic,niedobatowards} which posited \textit{local score models} (LS), which are optimal diffusion models subject to locality constraints in which each pixel can only use a local spatial neighborhood of the image to compute the score. \cite{kambanalytic} also introduced \textit{equivariant} (ES) and (boundary-broken) \textit{equivariant-local} score models (ELS), which incorporated the additional inductive bias of (translation) group equivariance and its breaking via boundaries. In particular \cite{kambanalytic} showed that such analytic models could predict image outputs of trained convolution only diffusion models on a case by case basis with high quantitative accuracy.
\cite{lukoianov2025locality} then provided a method for obtaining locality scales and patch geometry for LS models via optimal linear denoising.  Interestingly, these works found that image generation proceeds starting from a large coarse spatial scale to a small fine one. Local generalization mechanisms have also since been investigated in later works \citep{bradley2025local, finn2025origins, hu2025local, ambrogioni2026out, gottwald2025localized, nguyen2026analytic}.

Importantly, while \cite{kambanalytic} developed a predictive analytic theory of convolutional diffusion models deep in the generalization phase, this and other works left open the mystery of when and where the memorization to generalization transition occurs on complex benchmark datasets for more powerful models. Answering this question in general is extremely challenging as it could depend {\it a priori} in a joint fashion on explicit biases from architecture, implicit biases from training dynamics, the structure of the data distribution, time or noise in the generative process, and of course the amount of data.  

In order to understand this question theoretically, in this paper we introduce a general class of diffusion models which we call Bayesian information restricted diffusion (BIRD) models. They are inspired by and encompass all the equivariant/local score models of \cite{kambanalytic,niedobatowards}. BIRD models possess a fortuitous combination of intuitive appeal, predictive capacity, {\it and} theoretical tractability, yielding significant conceptual insights into the memorization-generalization transition. Our main contributions are:

(1) We introduce an intuitively appealing general concept of BIRD models in which each image pixel $x$ can be thought of as an agent which observes at time $t$ in the reverse generation process restricted information $\mathcal C_{x,t}$ about a noisy image $\phi_t$ and uses optimal Bayesian inference over the training data to reverse the forward diffusion process (Fig. \ref{fig:cartoon_figure}).  This constitutes a well-defined diffusion model obtained from a finite training set without the need for any explicit architectural assumptions or learning dynamics. 

(2) We show spatially local BIRD models in which the restricted observation $\mathcal C_{x,t}$ corresponds to a local image patch around $x$, yield exceedingly good predictions for what many neural diffusion models learn \textit{early in training}: for early epochs, they can predict the {\it individual} image outputs of neural diffusion models {\it on a case by case basis} for a wide variety of architectures, including UNets with self-attention and diffusion transformers (DiTs), achieving high median $r^2\sim0.85-0.93$. This latter result goes beyond that of \cite{kambanalytic} which showed similarly high predictivity for small CNNs only. 

(3) Theoretically, under very mild assumptions on the data distribution, we develop a precise information theoretic criterion for the location of a memorization to generalization phase transition for general BIRD models in the joint space of amount of training data, time (or noise level in the reverse process), and amount of information restriction: if, under the forward diffusion process, the mutual information between the observation $\cC$ and the true data distribution, is greater (less) than the logarithm of the amount of data, then the BIRD model will memorize (generalize), because Bayesian guessing of the underlying image will be to easy (hard). We experimentally confirm the location of our theoretically predicted phase boundary on a variety of datasets. These results reveal that the amount of data need only be exponential in the mutual information to avoid memorization, and highlight a fundamental role of information restriction in promoting generalization. 

(4) Focusing on spatially local BIRD models in which information restriction is obtained by each pixel observing a location patch of patch scale $L$, we use various information theoretic analyses, to reveal the shape of the memorization to generalization phase transition boundary in the joint space of reverse generation time $t$ (or noise level $\sigma_t$) and patch scale $L$. We find a time dependent critical patch scale $L_c(t)$ that monotonically decreases in the reverse generation process as both time $t$ and noise $\sigma_t$ decrease from $t=1$ to $t=0$.  For any $t$, $L(t) > L_c(t)$ ($L(t) < L_c(t)$) then BIRD models will memorize (generalize).  Optimal generation and denoising proceeds along this phase transition boundary: in essence {\it successful generation proceeds at the edge of memorization} (Fig. \ref{fig:cartoon_figure}b).

(5) We explicitly compute the phase transition boundary between memorization and generalization for BIRD models on image data with power law fall off in the power spectral density $P(k) \sim k^{-2-\epsilon}$ with spatial frequency $k$. Here $\epsilon=0$ corresponds to scale invariant images \cite{ruderman1993statistics}.  We then show that a BIRD model will be in a generalization phase and effectively denoise for all time and noise levels if the amount of data grows with linear image size $L_I$ (e.g. $L_I$ by $L_I$ images) as $\exp L_I^\epsilon$.  This reveals the striking finding that BIRD models do not suffer from a curse of dimensionality for scale invariant images. Moreover, for small departures from scale invariance (e.g. $\epsilon \sim 0.1$ to $0.3$ as observed in practice), the data requirements are only exponential in $L_I^\epsilon$, not the total dimensionality of the space of images, which is $O(L_I^2)$.

Together, these results provide fundamental insights into the phase transition between memorization and generalization in a general class of diffusion models, and highlight the role of information restriction in allowing generalization with relatively modest amounts of data.  We summarize our results below, with much more detailed exposition in the Appendix (see e.g. App. \ref{app:notation_background} for a summary of all mathematical notation and background).

\section{Overall framework}

Diffusion models sample from a complex data distribution $\varphi \sim P_0(\varphi)$ by time reversing a forward diffusion process that transports $P_{0}(\varphi)$ to a simple isotropic Gaussian $\eta \sim \mathcal{N}(0,I)$ through a time dependent family of distributions of noised data $\phi_t \sim P_t(\phi_t)$ as time proceeds from $t=0$ to $t=1$. A single sample at any fixed time $t$ can be obtained via the interpolation $\phi_t(\varphi,\eta) = \sqrt{\bar{\alpha}_t} \varphi + \sqrt{1 - \bar{\alpha}_t} \eta$. The temporal schedule of the signal power is chosen to monotonically decrease from $\bar{\alpha}_0 = 1$ at time $t=0$ to  $\bar{\alpha}_1 = 0$. 
We define an important noise-to-signal ratio $\sigma_t^2 = \frac{1 - \bar{\alpha}_t}{\bar{\alpha}_t}$ that monotonically increases from $t = 0$ to $t = 1$. 

Diffusion models generally aim to reverse this flow through a reverse process SDE or ODE that evolves pure noise $\phi_1 \sim \mathcal{N}(0,I)$ backwards in time from $t=1$ to $t=0$, matching the marginals $P_t(\phi_t)$ of the forward process. These reverse processes use the \textit{score function} $\nabla \log P_t$ of $P_t(\phi_t)$, which must be computed or modeled. The score computation can be reframed as optimal denoising, via Tweedie's theorem:
\begin{align} \label{eq:Tweedie_m}
    \nabla \log P_t(\phi_t) = -\frac{\mathbb{E}[\eta | \phi_t]}{\sqrt{1 - \bar{\alpha}_t}} = \frac{\sqrt{\bar{\alpha}_t}}{1 - \bar{\alpha}_t} \mathbb{E}[\varphi | \phi_t] - \frac{\phi_t}{1 - \bar{\alpha}_t}.
\end{align}
One can approximate the posterior mean $\mathbb{E}[\varphi | \phi_t]$ by training a neural denoiser model $M_{t,\theta}[\phi_t]$ to predict $\varphi$ from $\phi_t$. In training this model, we usually do not have direct access to the true data distribution $P_0(\varphi)$; instead, we only have a {\it finite} dataset $\mathcal D$ consisting of $|\mathcal D|$ training samples $\varphi \in \mathcal D$ each drawn i.i.d from the true data distribution $P_0(\varphi)$. For this finite dataset, the Bayes optimal MMSE denoiser (also known as the empirical score function) is given by
\begin{align}\label{eq:full_bayes_optimal_m}
    M_t[\phi_t] = \mathbb{E}[\varphi | \phi_t ] = \sum_{\varphi \in \mathcal{D}} \varphi \,P_{train}(\varphi | \phi_t).
\end{align}
Here $P_{train}(\varphi | \phi_t)$ is the posterior probability that a clean training data point $\varphi \in \mathcal D$ at time $t=0$ led to $\phi_t$ in the forward process (Fig. \ref{fig:cartoon_figure}b). The structure of this analytic model has an appealing interpretation in terms of a Bayesian guessing game \cite{kambanalytic} where the model guesses or forms beliefs about which past data point $\varphi \in \mathcal D$ led to the noised image $\phi_t$ using the posterior $P_{train}(\varphi | \phi_t)$, then reverse flows towards the posterior mean.

While the Bayes optimal diffusion model is appealing in terms of both its analytic and conceptual simplicity, it unfortunately always memorizes the training data by eventually flowing to one of the training points $\varphi \in \mathcal D$ (see App. \ref{app:bayes_opt_mem} and \cite{biroli2024dynamical,kambanalytic}). In essence, the Bayesian guessing game, given the {\it entire} noised image $\phi_t$ becomes too easy at small enough times $t$, corresponding to small enough noise $\sigma_t$. Thus this model cannot generalize and is inconsistent with the behavior of trained neural diffusion models.

\subsection{Bayes Information Restricted Diffusion (BIRD) Models}\label{sec:BIRD_def}

The unfortunate ease of the Bayesian guessing game for the optimal denoiser raises the question of whether one can obtain alternate Bayesian diffusion models that retain analytic tractability and conceptual simplicity, but {\it also} generalize, thereby behaving more like trained neural network diffusion models. We show that we can {\it simultaneously} achieve {\it both} theoretical tractability {\it and} realistic generalization, by combining two key approaches: 1) making the Bayesian guessing game harder by restricting the image information the model uses to guess the training image; and 2) allowing different parts of the image to use {\it different} pieces of restricted information. We call such models Bayesian Information Restricted Diffusion (BIRD) models, which are inspired by and generalize \cite{kambanalytic,niedobatowards}.

In the general class of BIRD models (see App. \ref{app:BIRD} for more details) we imagine that each pixel $x$ is only allowed to observe the noisy image $\phi_t$ through a restricted, possibly stochastic, observation $\mathcal C_{x,t} \in \mathbb R^{n_{x,t}}$ of $\phi_t \in \mathbb R^d$. The observation dimension $n_{x,t}$ is less than the image dimension $d$. Then the pixel $x$ computes a denoised pixel value $M_{x,t}[\mathcal C_{x,t}]$.  The optimal Bayesian MMSE individual pixel denoiser has a structure similar to that of $M_t$ in \eqref{eq:full_bayes_optimal_m}: 
\begin{align}
    M_{x,t}[\mathcal{C}_{x,t}] = \sum_{\varphi \in \mathcal{D}} \varphi_x \, P_{train}(\varphi | \mathcal{C}_{x,t}).
    \label{eq:restricted_bayes_optimal_m}
\end{align}
Here $P_{train}(\varphi | \mathcal{C}_{x,t})$ is pixel $x$'s posterior belief distribution that any past training sample $\varphi \in \mathcal D$ led to the pixel's observation $\cC$ in the forward training Markov chain $\varphi \rightarrow \phi_t \rightarrow \mathcal C_{x,t}$ (see App. \ref{app:notation_background} for notation and App. \ref{app:BIRD} for explicit expressions for $P_{train}(\varphi | \mathcal{C}_{x,t})$). The pixel's observation restriction $\mathcal C_{x,t}$ at the end of this forward information channel makes the Bayesian guessing game harder, and ideally prevents memorization.  The MMSE denoiser for pixel $x$ in \eqref{eq:restricted_bayes_optimal_m} simply returns the posterior mean training image pixel value at pixel $x$. 

A simple channel restriction $\mathcal{C}_{x,t}$ is a deterministic linear projection operator $\mathcal P_{x,t}$ applied to $\phi_t$. 
The LS model of \cite{kambanalytic} is a special case where $\mathcal P_{x,t}$ projects an image $\phi_t$ onto a coordinate subspace of pixel components within a local spatial patch $\Omega_{x,t}$ surrounding pixel $x$, yielding the restricted observation $\cC[\phi_t] = \phi_{\Omega_{x,t}}$, where $\phi_{\Omega_{x,t}}$ is a vector of all pixel values of $\phi_t$ {\it restricted} to the local patch $\Omega_{x,t}$ (see App.\ref{app:notation_background} for notation).  The BIRD framework also generalizes ES and ELS machine of \cite{kambanalytic} (see App.\ref{app:BIRD_examples} for details).  We call such models spatially local BIRD models, indicating that spatial locality is the specific form of information restriction employed, though BIRD models and our theory of them are more general. 

We next show in Sec.~\ref{sec:describe_early} that spatially local BIRD models are highly relevant in that they describe well the early learning phase of many diffusion model architectures. And after that in Sec.~\ref{sec:theory_gen_mem} we show that the general class of all BIRD models provides a highly useful abstraction, in that a general theory of the memorization-generalization phase transition can be derived for {\it all} such BIRD models for essentially {\it any} data distribution.

\section{BIRD models describe the early learning phase of diffusion models} \label{sec:describe_early}

\begin{figure}[t]
    \centering
    \begin{subfigure}[t]{0.24\textwidth}
        \centering

        \sidecaptionimage{\linewidth}{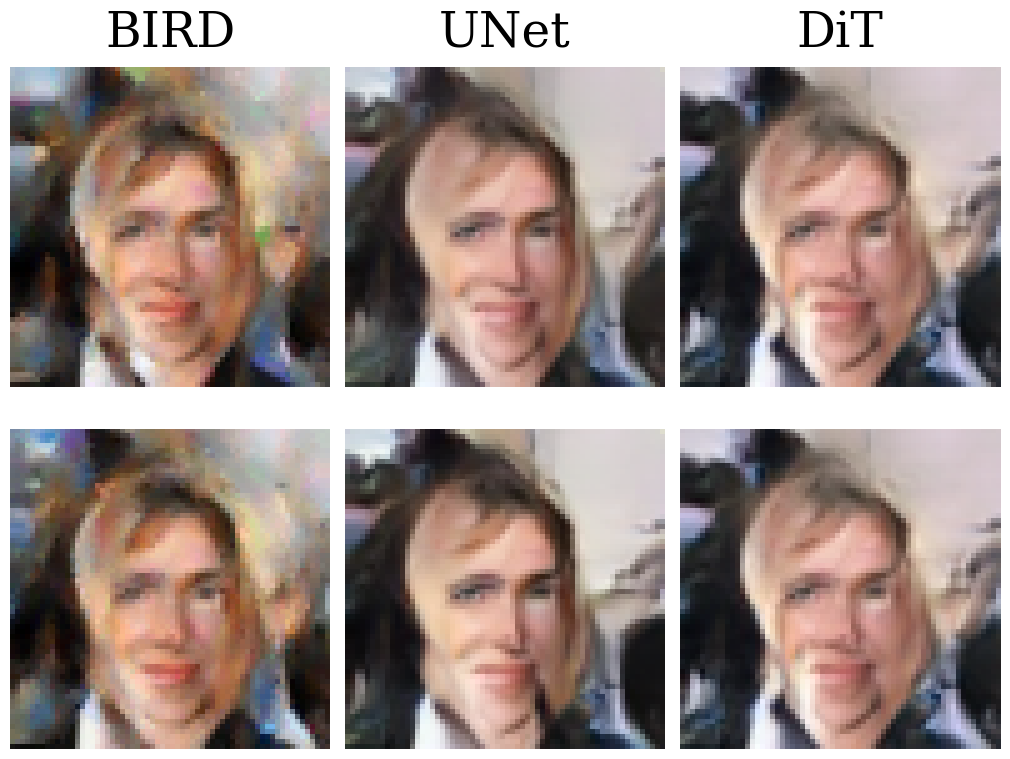}{$\mathcal{D}_2\,\,\,\,\,\,\,\,\,\,\,\,\,\mathcal{D}_1\,\,\,$}

        \vspace{0.25em}

        \sidecaptionimage{\linewidth}{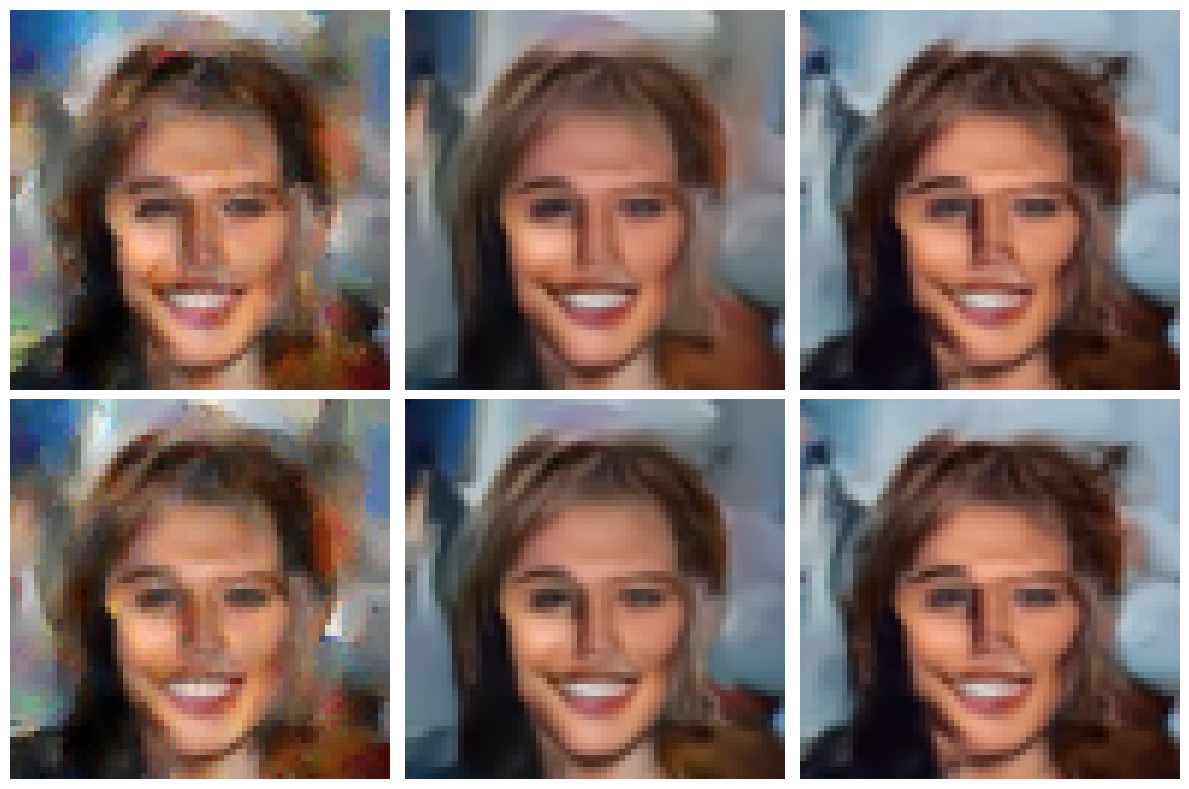}{$\mathcal{D}_2\,\,\,\,\,\,\,\,\,\,\,\,\,\,\mathcal{D}_1$}
        
        \vspace{0.25em}

        \sidecaptionimage{\linewidth}{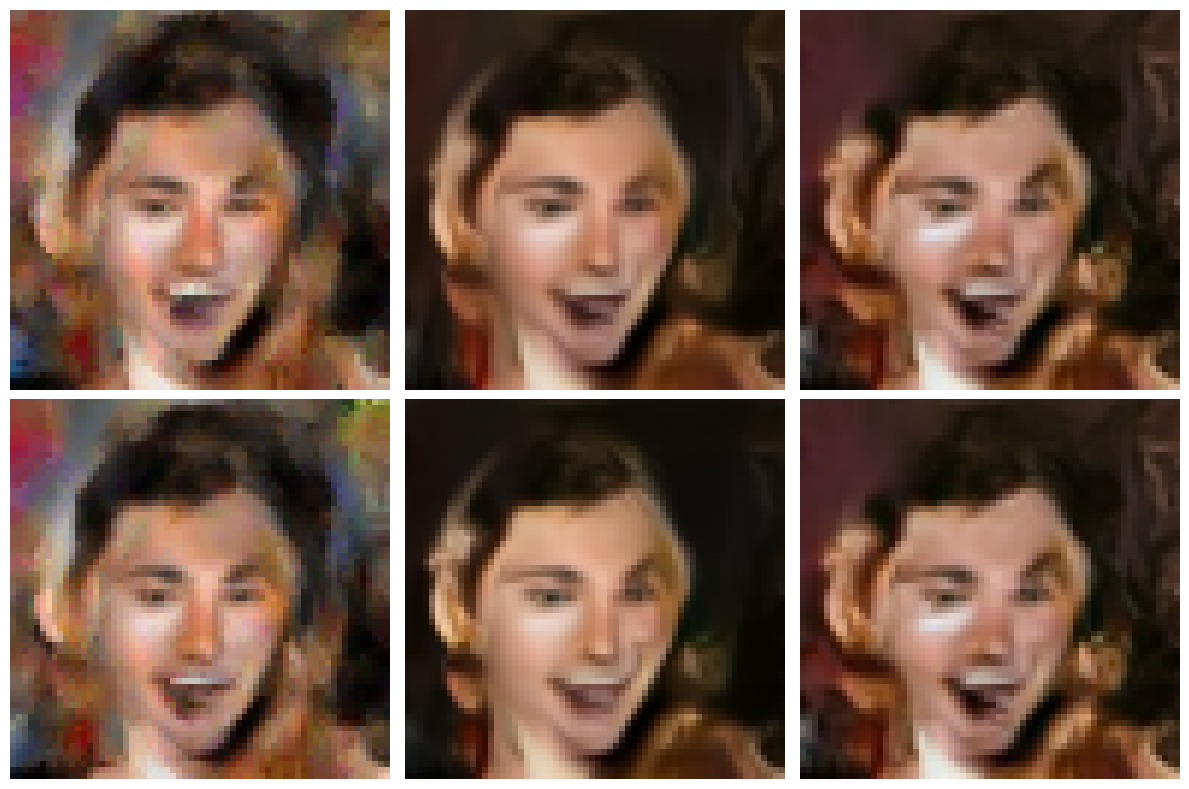}{$\mathcal{D}_2\,\,\,\,\,\,\,\,\,\,\,\,\,\mathcal{D}_1$}

        \caption{Celeba64}
    \end{subfigure}
    \begin{subfigure}[t]{0.24\textwidth}
        \centering

        \includegraphics[width=\linewidth]{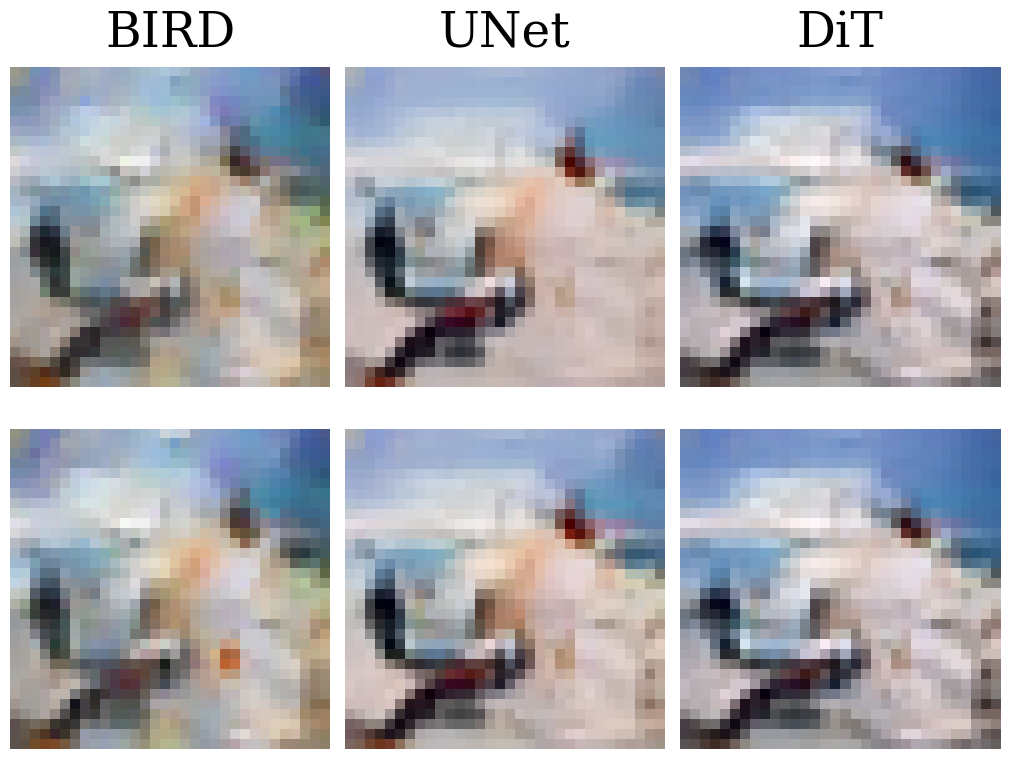}

        \vspace{0.25em}

        \includegraphics[width=\linewidth]{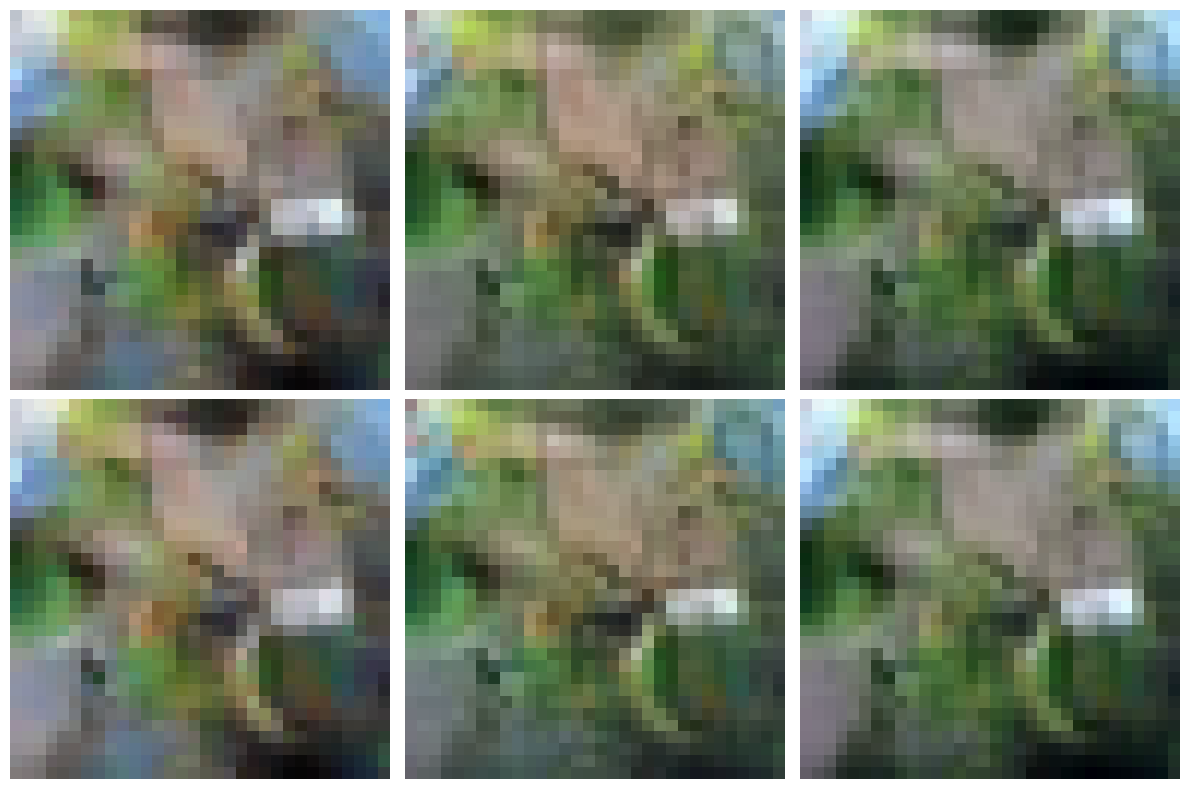}
        
        \vspace{0.25em}

        \includegraphics[width=\linewidth]{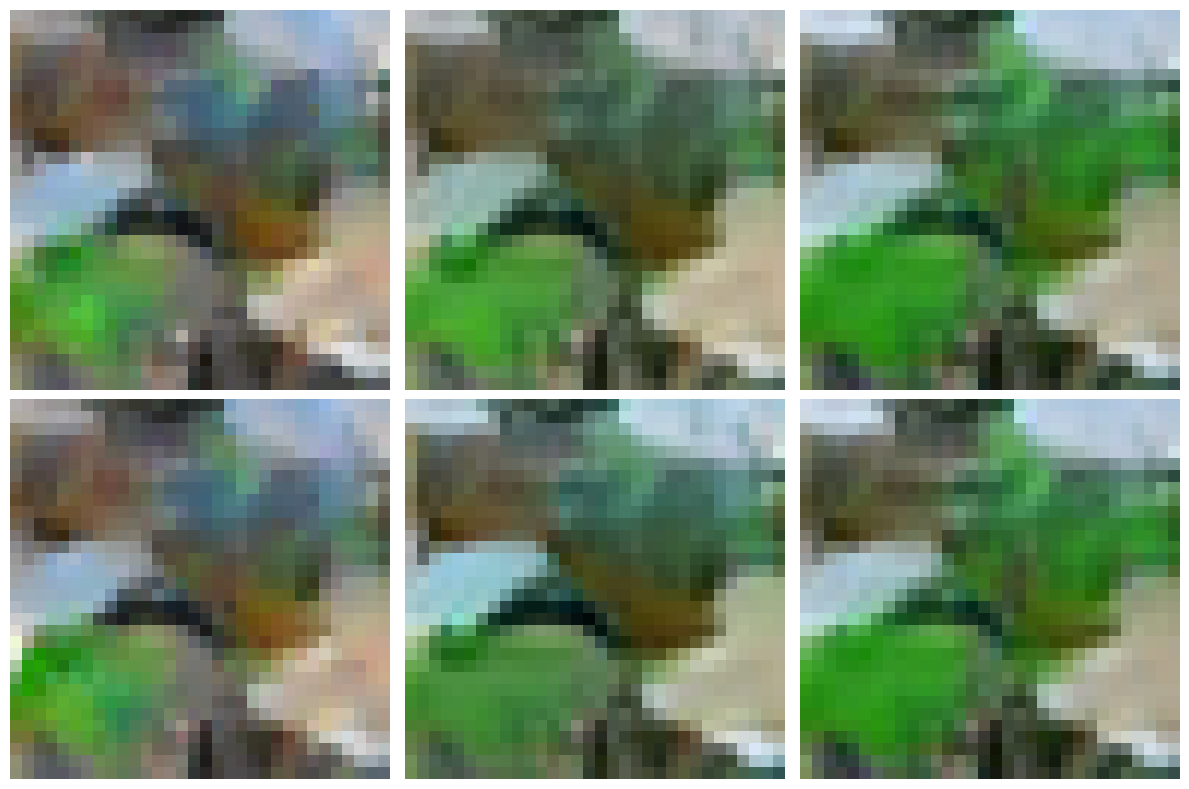}

        \caption{Cifar10}
    \end{subfigure}
    \begin{subfigure}[t]{0.24\textwidth}
        \centering

        \includegraphics[width=\linewidth]{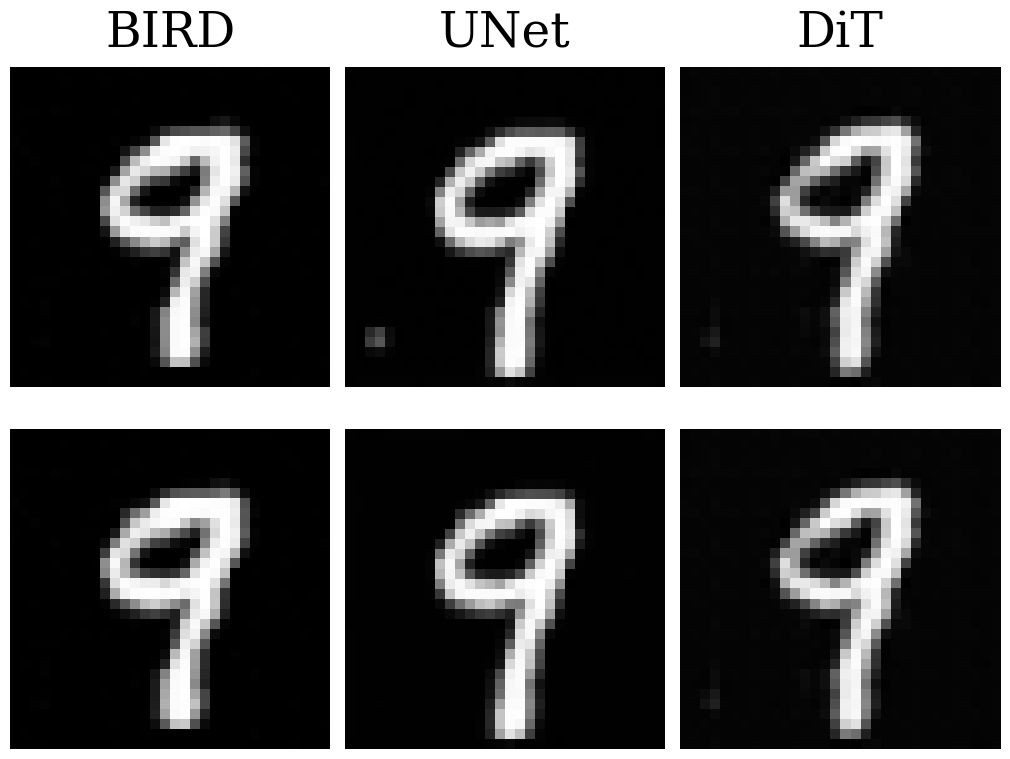}

        \vspace{0.25em}

        \includegraphics[width=\linewidth]{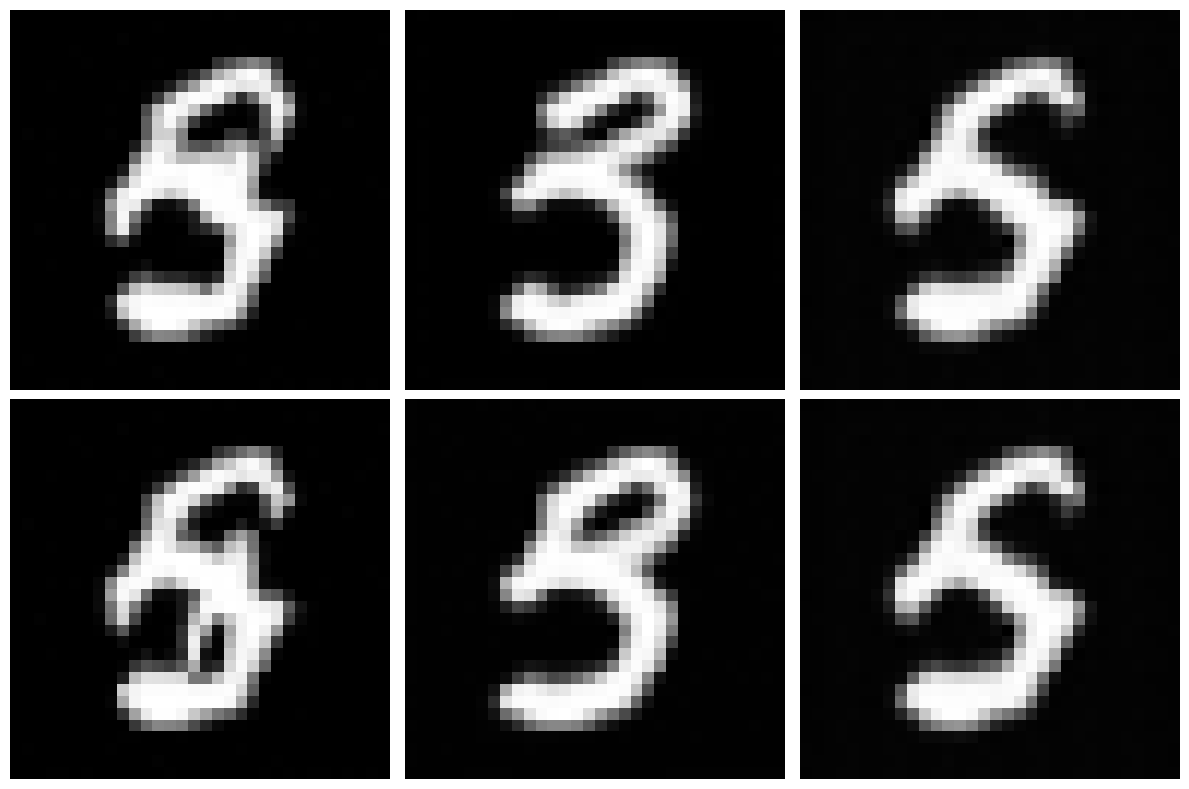}
        
        \vspace{0.25em}

        \includegraphics[width=\linewidth]{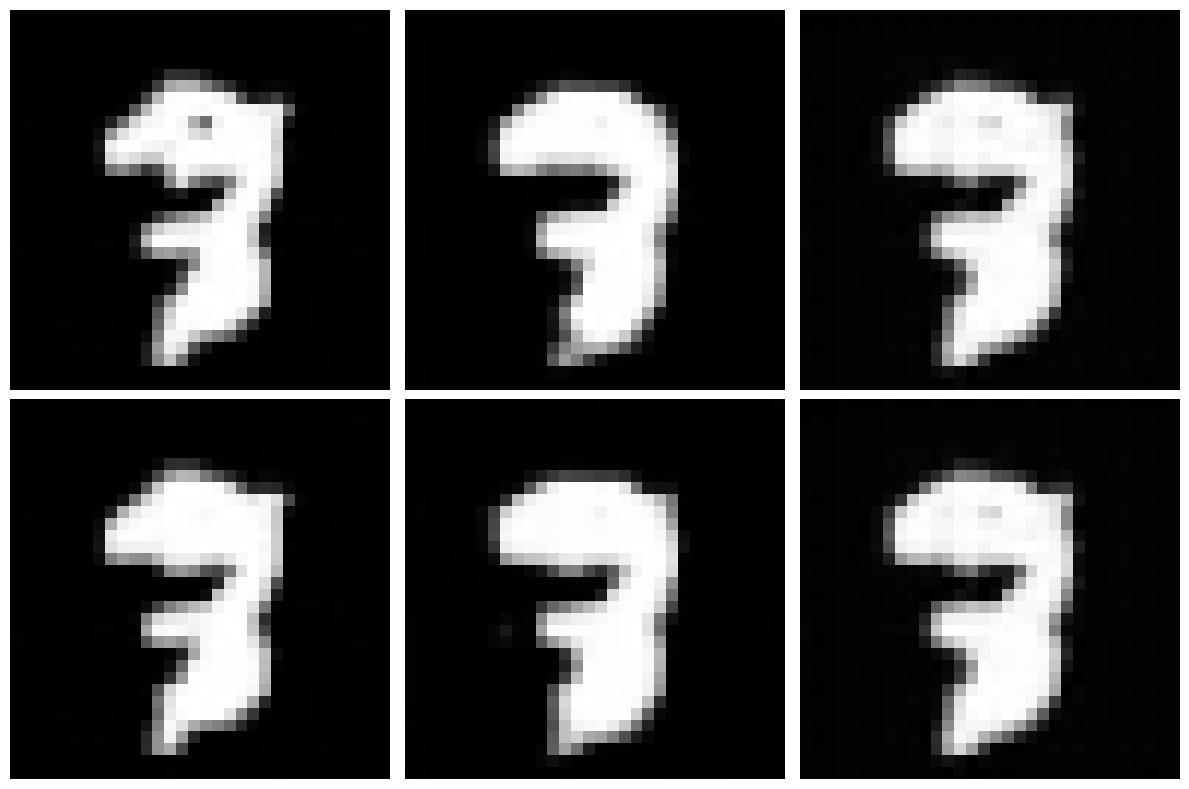}

        \caption{MNIST}
    \end{subfigure}
    \begin{subfigure}[t]{0.24\textwidth}
        \centering

        \includegraphics[width=\linewidth]{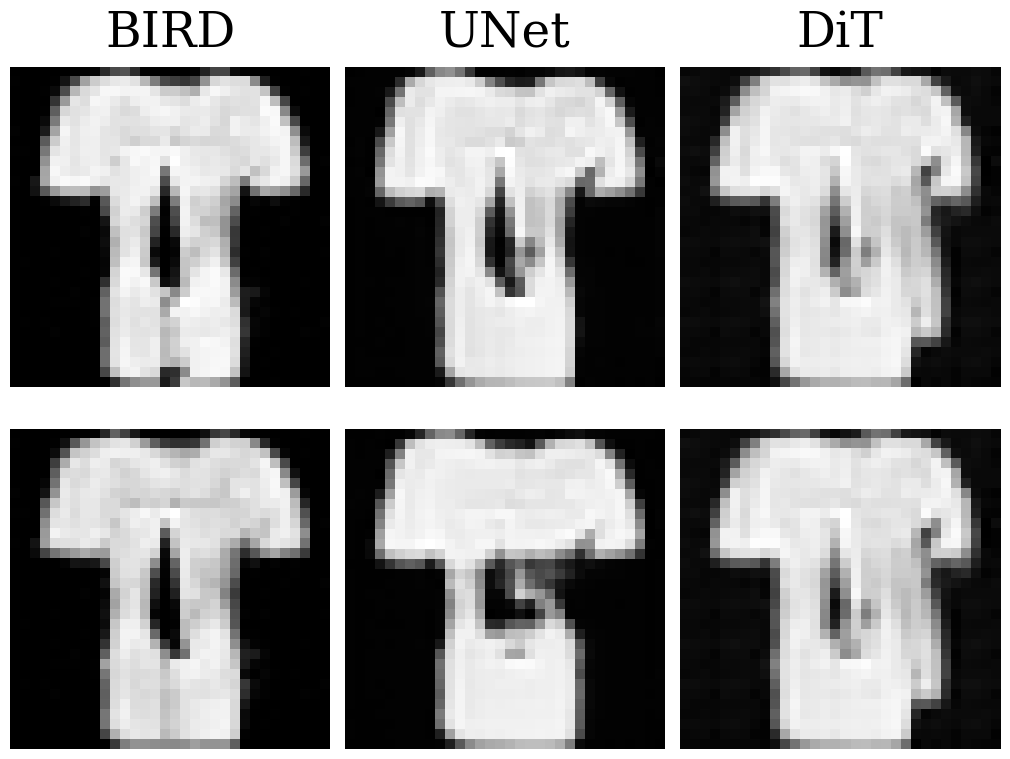}

        \vspace{0.25em}

        \includegraphics[width=\linewidth]{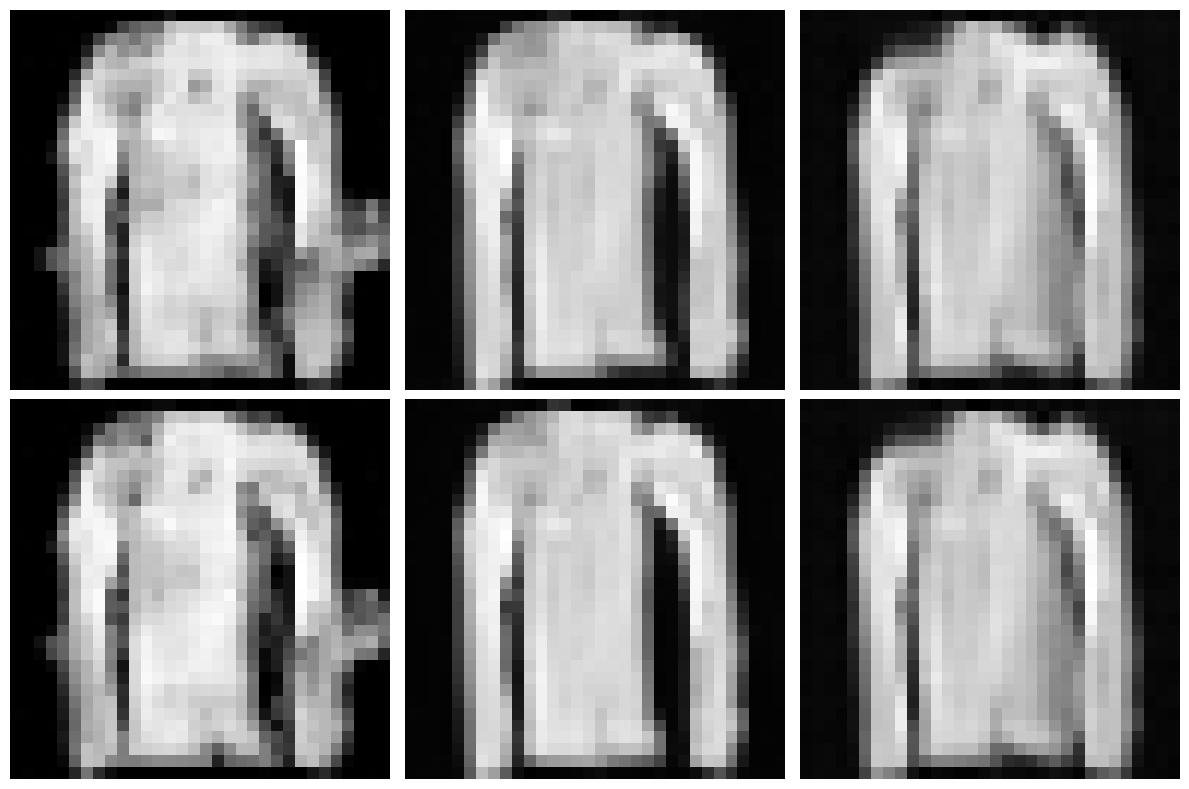}
        
        \vspace{0.25em}

        \includegraphics[width=\linewidth]{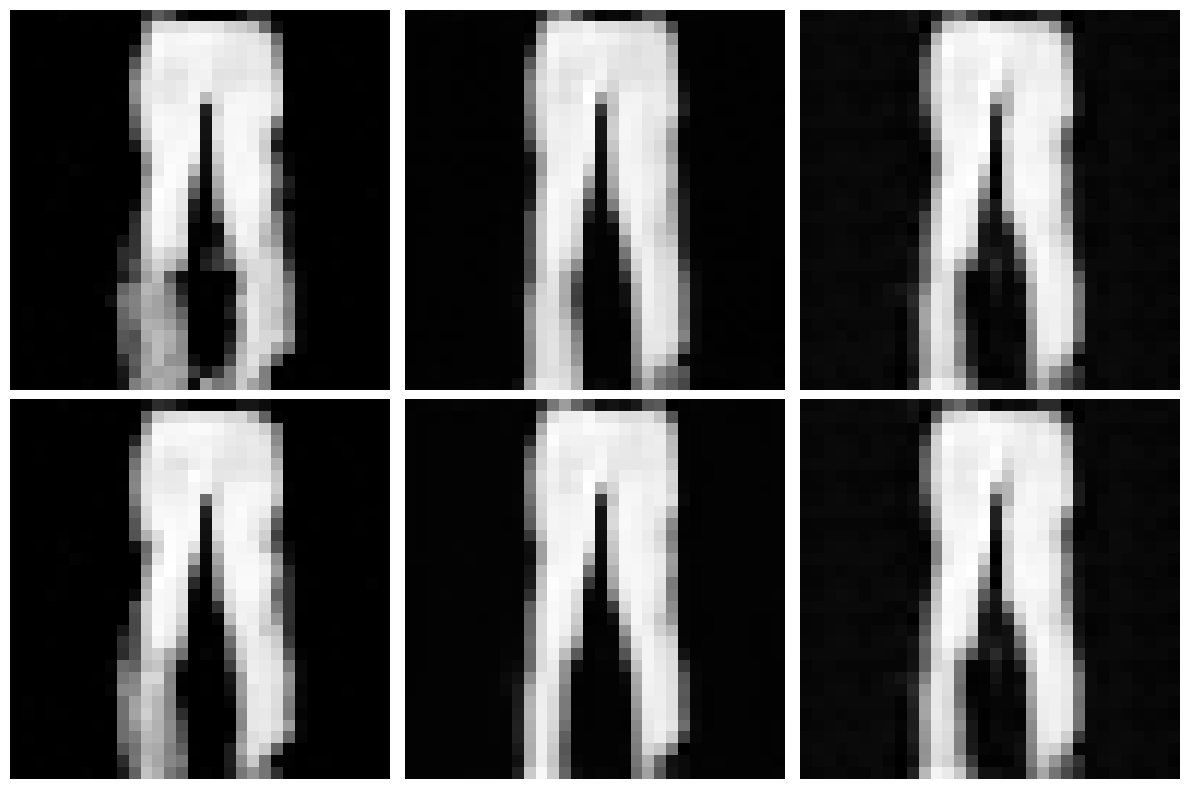}

        \caption{FashionMNIST}
    \end{subfigure}

    \caption{The consistent and robust generalization of diffusion models early in training, across various datasets and architectures, is well-predicted by spatially local BIRD models with a patch size near the critical scale, as described below. Here UNets and DiTs are trained on disjoint dataset subsets $\mathcal{D}_1$ and $\mathcal{D}_2$, but produce nearly identical images when fed the same noise input. These images are both similar to the corresponding BIRD model outputs. Many more such samples are shown in App. \ref{app:samples} in figs. \ref{fig:app_samps_cba}-\ref{fig:carveout}, along with details of the experiment and training procedure in App. \ref{app:early_training}.
    }
    \label{fig:early_training_consistency}
\end{figure}

\cite{kambanalytic} showed that spatially local BIRD models could accurately predict the {\it individual} image outputs of fully trained small CNN based diffusion models on a {\it case by case} basis for a range of datasets with a very high median $r^2 \approx 0.9$ between theory and experiment.  Here we extend this result by showing that spatially local BIRD models can {\it also} predict {\it individual} image outputs of more sophisticated architectures, like UNets with self-attention and diffusion transformers (DiTs), when they are in the \textit{earliest stages of the training process}. 

In our experiments on four standard datasets (Celeba64, CIFAR10, FashionMNIST, and MNIST), and two architectues (UNets and DiTs), we find that the agreement between spatially local BIRD theory and early trained diffusion models peaks at values of $r^2 \sim 0.9$ around $10$ to $30$ epochs of training before slowly declining thereafter. See tables \ref{tab:training_quantitative_1} and \ref{tab:training_quantitative_1_u} for detailed numerical values of the agreement, and see App. \ref{app:early_training} for more experimental parameters, and for our approach to patch selection and scale calibration, and further samples.  

We also show, in this early training regime, both BIRD models and trained architectures exhibit the phenomenon of consistent and robust generalization, where different models trained on disjoint data subsets nevertheless generate the same images when fed the same input noise (Fig. \ref{fig:early_training_consistency}). 
In our experiments, we compare the outputs of the $3$ distinct models (BIRD, UNets, DiTs) each obtained from two disjoint datasets $\mathcal D_1$ and $\mathcal D_2$. 
We find that all $3\times 2=6$ images are highly similar (Fig. \ref{fig:early_training_consistency}). 
This indicates BIRD models provide a good description of the generalization phase of both DiTs and UNets early in training, and that such BIRD models themselves are in a generalizing phase. More samples are included in App.~\ref{app:early_training}, as well as quantitative validations in Fig. \ref{fig:early_training_1} and tables \ref{tab:training_quantitative_1} and \ref{tab:training_quantitative_1_u}. 

This strong agreement between BIRD model theory and neural experiments motivates the use of BIRD models as a theoretical laboratory within which to study the nature of the memorization-generalization phase transition. We elucidate this theory next in Sec.\ref{sec:theory_gen_mem}.

\vspace{-1em}
\section{The memorization-generalization phase transition in BIRD models} \label{sec:theory_gen_mem}
\vspace{-1em}
\begin{figure}
    \centering
    \includesvg[width=\linewidth]{figures/figure_2.svg}
    \caption{Comparison of theory and experiment for the memorization-generalization phase transition. Experimental curves plot numerical estimates of the average entropy deficit ($\ln |\mathcal{D}| - S[P(\varphi|\phi_\Omega)]$) in a spatially local BIRD model with a patch size $5 \times 5$. $\sigma_t^2$ is the noise strength. Theory curves plot mutual information or its Gaussian upper bounds. As noise $\sigma^2_t$ decreases, the theoretically derived mutual information curves rise in the generalizing phase then saturate to $\ln \mathcal |D|$ in the memorizing phase. Moreover the mutual information theory curves closely track, or upper bound the entropy deficit experimental curves.  Each sub-figure corresponds to a different true data distribution: (a) Translationally invariant Gaussian data with $\alpha=1.7$ power law spectrum (see Sec.\ref{sec:spec_len}) (b) An isotropic hierarchical mixture of Gaussians model (see App. \ref{app:hierarchical_mixture_of_gaussians} for theory and experiment); (c) CIFAR10; (d) CelebA32. The theory curves for CIFAR10 and CelebA32 are given by a Gaussian upper bound on mutual information (see App. \ref{app:gaussian_bound}). See App.\ref{app:empirical_post_entropy} for further details.}

    \label{fig:posterior_entropy}
\end{figure}
We first show, via Fano's inequality, that perfect guessing in the pixel Bayesian guessing game (i.e. memorization) implies
$I(\mathcal{C}_{x,t}; \varphi) \geq \ln |D|$, where $I(\mathcal{C}_{x,t}; \varphi)$ is the mutual information between a {\it test} data point $\varphi$ drawn from the {\it true} data distribution $P_0(\varphi)$ and the pixel observation $\mathcal{C}_{x,t}$ under the forward testing Markov chain $\varphi \rightarrow \phi_t \rightarrow \cC$ (see App.\ref{app:notation_background} for notation and App.~\ref{app:fanos_bound} for a proof).  However, an approach via Fano's inequality only proves that in the memorization phase, we have $I(\mathcal{C}_{t,x}; \varphi) \geq \ln |\mathcal{D}|$. It does {\it not} prove the converse, i.e. that in the generalizing phase, the opposite holds: $I(\mathcal{C}_{t,x}; \varphi) < \ln |\mathcal{D}|$. 

To actually locate the phase transition between memorization and generalization, in App.~\ref{app:main_proof}, we prove for posteriors with sub-exponential tails, in large dataset size and high observation dimension limit, that
\begin{align}\label{eq:posterior_entropy_formula_main}
    S[P_{train}(\varphi | \mathcal{C}_{x,t})] &= \begin{cases}
        \ln|\mathcal{D}| - I(\varphi; \cC) & \ln|\mathcal{D}| > I(\varphi; \cC) \,\,\,\,\,\,\,(\text{generalization})\\ 
        0 & \ln|\mathcal{D}| \leq I(\varphi; \cC)\,\,\,\,\,\,\,(\text{memorization}).\\
    \end{cases}
\end{align}
Thus the phase transition boundary is {\it exactly} given by the equation
\begin{equation} \label{eq:memorization}
\ln|\mathcal{D}| = I(\varphi; \cC).
\end{equation}
Intuitively, more mutual information than the entropy $\ln|\mathcal{D}|$ of the uniform prior on the training data makes Bayesian guessing easy with zero posterior entropy $S[P_{train}(\varphi | \mathcal{C}_{x,t})]$ leading to memorization, while less mutual information makes Bayesian guessing hard with positive posterior entropy leading to generalization.  

We can experimentally test \eqref{eq:posterior_entropy_formula_main} because $S[P_{train}(\varphi | \cC)]$ can be numerically computed for any finite dataset, while the mutual information $I(\varphi; \cC)$ can be exactly calculated for simple data distributions and upper bounded for real data sets. In order to collapse the curves from different dataset sizes, and to compare theory and experiment,  we plot $\ln \dD - S[P_{train}(\varphi | \phi_{\Omega})]$, obtained from numerical experiments, and compare it either to: (1) the theoretically obtained mutual information (which it should be equal to in the generalizing phase), or (2) $\ln \dD$ (which it should be equal to in the memorizing phase). For real datasets, we cannot calculate the exact mutual information, so we use the Gaussian upper bound derived in App.\ref{app:gaussian_bound} which only depends on the second order statistics of the data.
Fig. \ref{fig:posterior_entropy} exhibits numerical validation of \eqref{eq:posterior_entropy_formula_main} for both real and toy datasets. 

Our results above generalize previous work \cite{biroli2024dynamical}, which studied the memorization to generalization phase transition for the empirical score function (in which each pixel observes the entire image).  They defined the transition time $t$ to occur when $S[P(\phi_{t})] = S_{sep}^{|\mathcal{D}|}$ where $S_{sep}^{|\mathcal{D}|}$ is entropy of $\dD$ separated Gaussians and computed this transition time for the training set only, when the data came from an isotropic Gaussian distribution.  We include information restriction, generalize to essentially arbitrary data distributions, and find a general information theoretic criterion for the memorization-generalization transition for both training and testing data, as well as a pointwise condition for any single observation (App. \ref{app:main_proof}).

\vspace{-1em}
\section{How BIRD models circumvent the curse of dimensionality}
\vspace{-1em}

\begin{figure}
    \centering
    \begin{subfigure}[b]{0.45\textwidth}
        \centering
        \includegraphics[width=\textwidth]{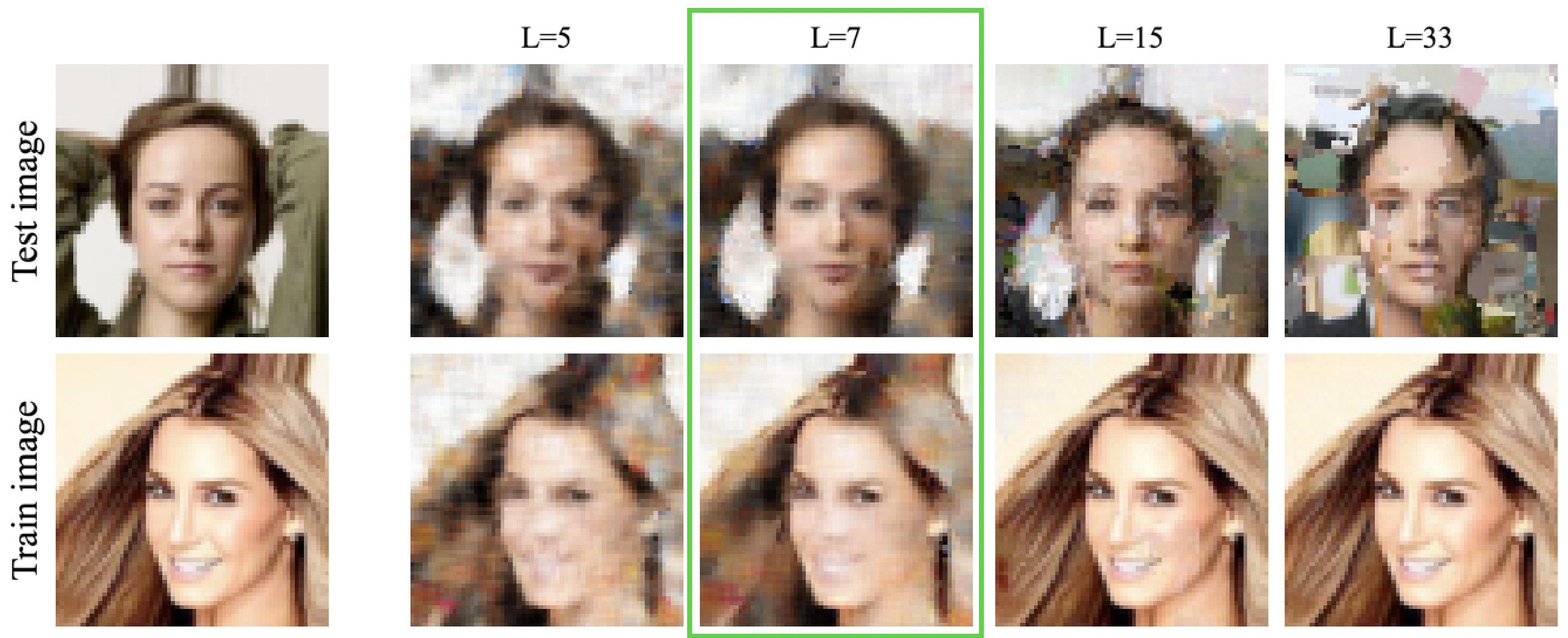}
        \caption{}
        \label{fig:pixelization}
    \end{subfigure}
    \begin{subfigure}[b]{0.5\textwidth}
        \centering
        \includegraphics[width=0.69\textwidth]{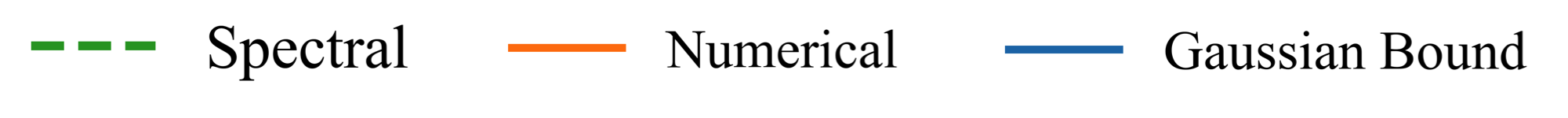}

        \includegraphics[width=0.32\textwidth]{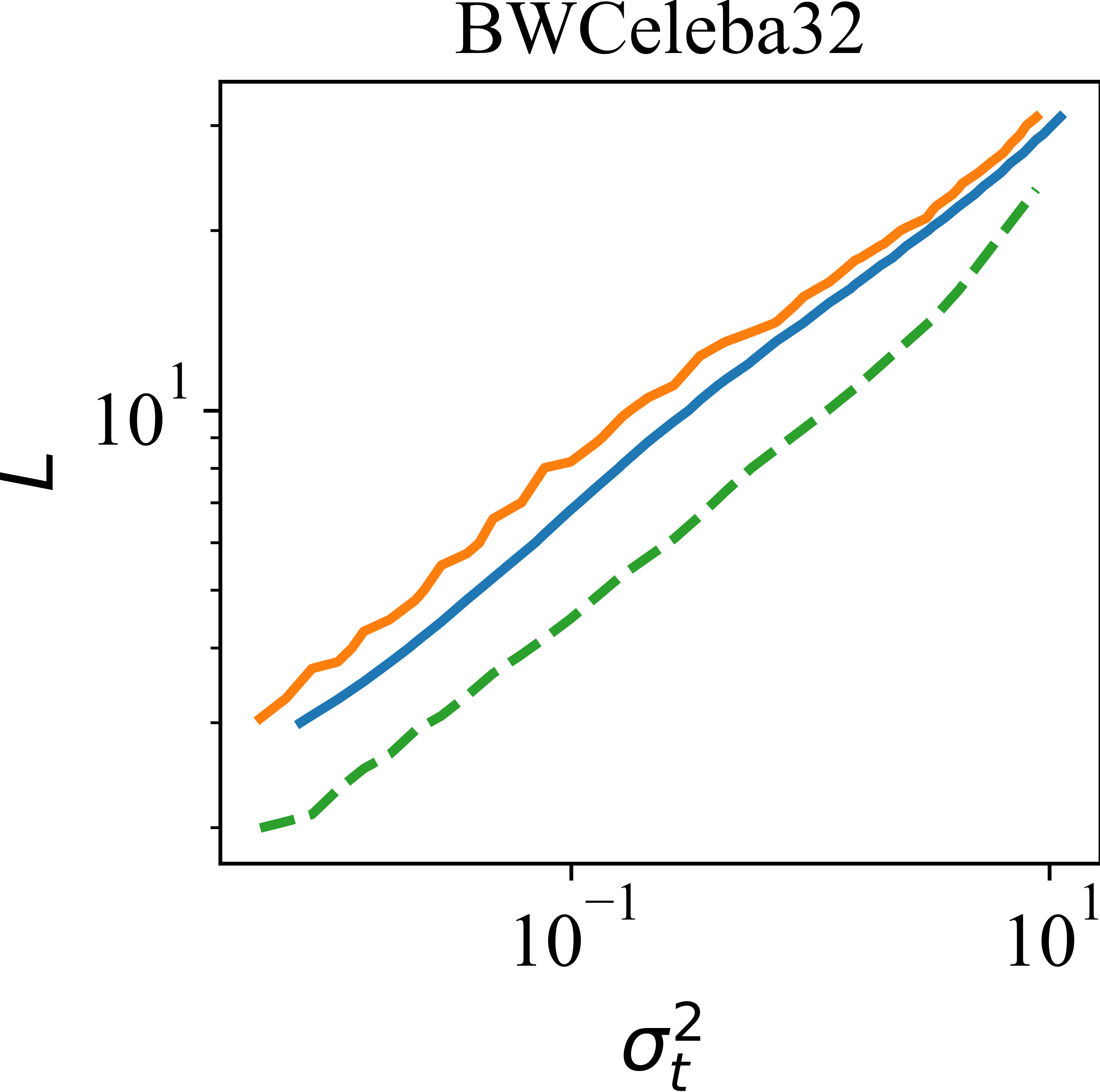}
        \includegraphics[width=0.32\textwidth]{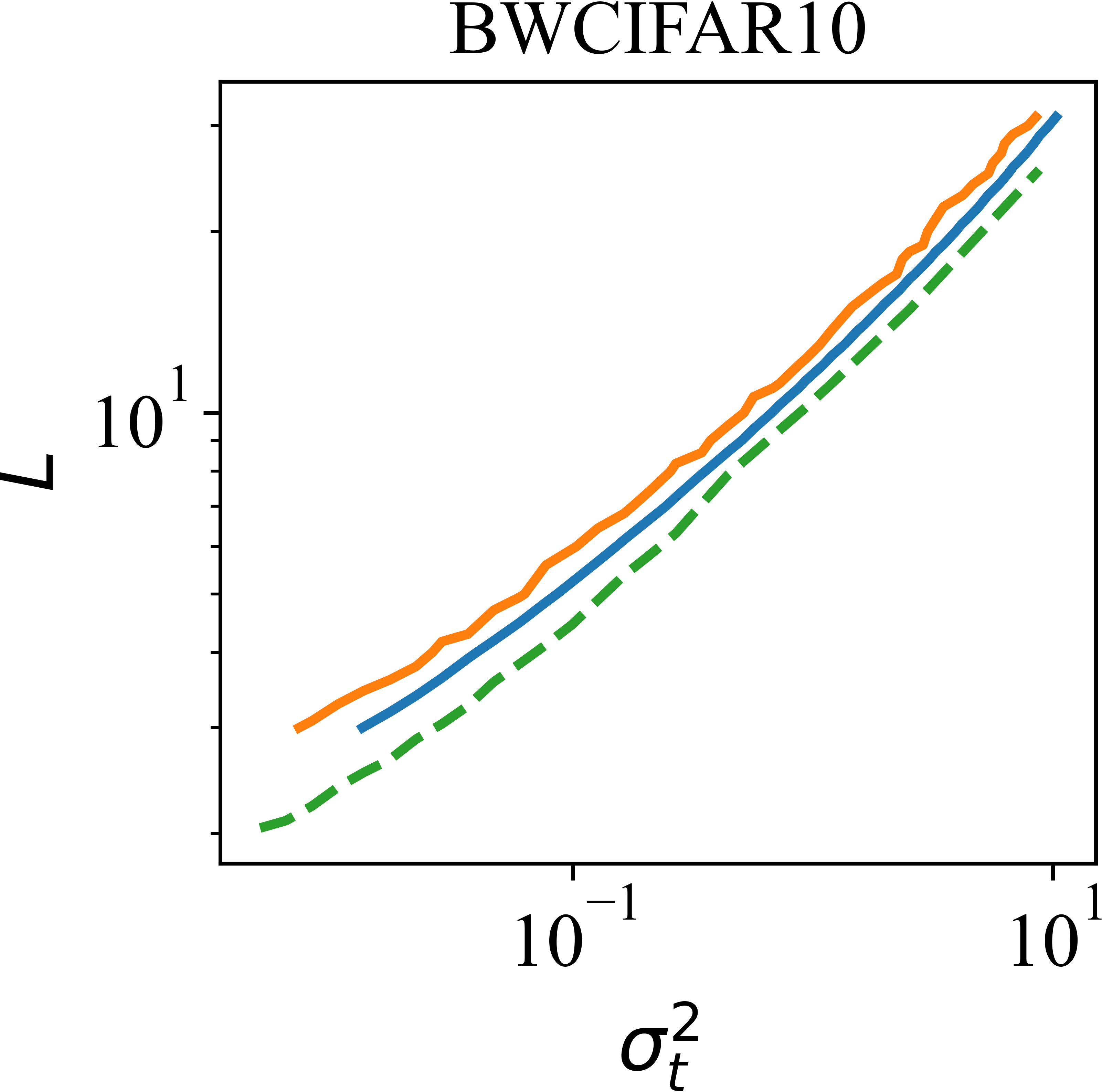}
        \includegraphics[width=0.32\textwidth]{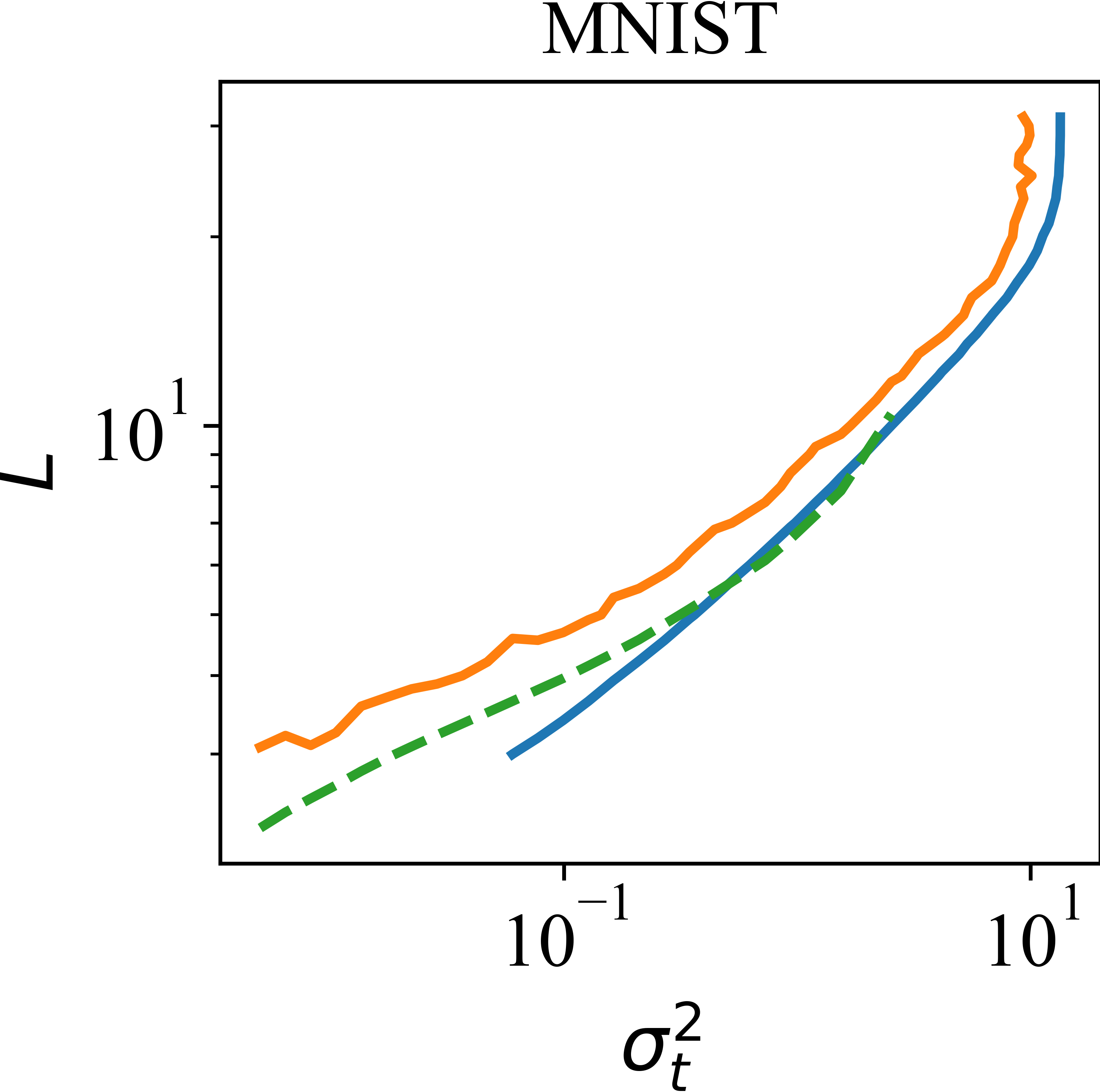}

        \caption{}
        \label{fig:mem_vs_spec}
    \end{subfigure}

    \caption{Effect of varying the patch scale $L$ (for square patches) in a spatially local BIRD model. (a) The top (bottom) shows the outcome of denoising of a single test (train) image using BIRD models of different patch sizes $L$ starting from noise $\sigma_t=1$. Optimal denoising occurs when $L=7$ (in general this optimal denoising scale coincides with $L_c$). For $L=5 < L_c$, from theory we know the denoiser is robust, but denoising performance is poor both on the training and test image. For $L=15 > L_c$, the model is in the memorization phase, and it poorly denoises the test image (compare rightmost to leftmost image in top row), but perfectly denoises (i.e. memorizes) the training image (compare rightmost to leftmost image in bottom row). (b) Comparisons between the anisotropic Gaussian bound (Lemma \ref{lemma:gaussian_length_bound_main}), the numerically computed values for the critical scales, and the spectral scale for BWCeleba32, BWCIFAR10, and MNIST, which are generally very close (see App. \ref{app:empirical_scaling} for details).\vspace{-1em}}
    \label{fig:multiscale}
\end{figure}


We now explain how, for scale invariant natural images, spatially local BIRD models can effectively denoise while avoiding memorization, even for large images.  

\subsection{The critical patch scale at the memorization-generalization transition}\label{sec:crit_scale}

A very natural information restriction approach, studied in \citep{kambanalytic}, is to allow each pixel $x$ to observe a local patch $\Omega_{L,x}$, where we define the patch scale $L = \sqrt{|\Omega_{L,x}|}$. Following $\cite{kambanalytic}$ we denote this projection as $\phi_{t;\Omega_{L,x}}$. For square patches \citep{kambanalytic} $L$ is simply the side length. We consider a one parameter family of patches, indexed by scale $L$, obeying $\Omega_{L',x} \supset \Omega_{L,x}$ if $L' > L$. The fact that larger patches include smaller patches implies that $I(\varphi, \phi_{t,{\Omega_{x,L}}})$ is monotonically increasing with the patch scale $L$. This implies, according to \eqref{eq:posterior_entropy_formula_main} that there is a \textit{critical patch scale} $L_{c}(\sigma_t)$, for any fixed noise level $\sigma_t$, obeying the equality
\begin{align}\label{eq:collapse_condition_m}
    \ln|\mathcal{D}| = I(\varphi ; \phi_{t,{\Omega_{L,x}}}),
\end{align}
such that if $L<L_c$ ($L>L_c)$ the spatially local BIRD model is in the generalizing (memorizing) phase.  $L_c(\sigma_t)$ then defines a phase boundary between memorization and generalization in the joint space of patch scale $L$ and noise $\sigma_t$ (or time $t$ in the reverse process). 

In high dimensions, the critical scale also generally sets the \textit{optimal} scale (with respect to denoising error on the test set) for the spatially local BIRD model. The reason for this is as follows. If the model uses a scale $L > L_c$, denoising will reduce to nearest-neighbor search over the training set, which will fail to correspond to the underlying population distribution over images and thus yield significant statistical error (see \ref{fig:pixelization} for an example of this failure mode as the length scale $L$ is taken large). Meanwhile, if $L < L_c$, the noised training set distribution provides a robust estimate for the true distribution over noised images, but the information the model learns about the sample $I(\varphi ; \phi_{\Omega_{L;x}})$ will be less than the critical value $I(\varphi ; \phi_{\Omega_{L_{c};x}})$, so it will perform worse than a higher sub-critical scale. In low dimensions, there is no longer a `sharp' phase transition as above, but the optimal scale $L$ will nonetheless generally sit around $L \approx L_c$.

Without {\it any} assumptions on the data distribution, we can make a very general statement about the shape of the phase boundary $L_c(\sigma_t)$. Since  $I(\varphi ; \phi_{t,{\Omega_{L,x}}})$ monotonically increases with $L$ and decreases with $\sigma_t$, implicit differentiation of \eqref{eq:collapse_condition_m} with respect to $L$ implies that $L_{c}(\sigma_t)$ monotonically increases with the noise $\sigma_t$. 

With relatively few additional assumptions, we can make more precise statements about the phase boundary $L_{c}(\sigma_t)$ and its dependence on $|\mathcal{D}|$. We first establish the following bounds, which we call the `anisotropic' and `isotropic' Gaussian bounds (see App. \ref{app:scaling_bounds}):
\begin{lemma}\label{lemma:gaussian_length_bound_main}
    Suppose $\varphi$ is drawn from an arbitrary data distribution $P_0$ with mean $\mu$ and covariance $\Sigma$. We define the distribution $Q_1 = \mathcal{N}(\mu, \Sigma)$ to be a Gaussian with identical second-order statistics to $P_0$, and $Q_2 = \mathcal{N}(\mu, \text{Diag}[\Sigma])$ to be the Gaussian with identical diagonal covariance and zero off-diagonal elements. We then have $L_{c;P}(\sigma_t) \geq L_{c; Q_1}(\sigma_t) \geq L_{c; Q_2}(\sigma_t)$.
\end{lemma}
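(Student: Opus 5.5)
Since $L_c$ is defined by $\ln|\mathcal{D}| = I(\varphi;\phi_{t,\Omega_{L,x}})$ and this mutual information is monotonically increasing in $L$, it suffices to prove the pointwise mutual information ordering
\begin{align*}
I_P(\varphi;\phi_{t,\Omega_{L,x}}) \le I_{Q_1}(\varphi;\phi_{t,\Omega_{L,x}}) \le I_{Q_2}(\varphi;\phi_{t,\Omega_{L,x}})
\end{align*}
for every patch $\Omega_{L,x}$ and noise level $\sigma_t$. A smaller information curve must reach $\ln|\mathcal{D}|$ at a larger $L$, which gives $L_{c;P}\ge L_{c;Q_1}\ge L_{c;Q_2}$. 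Both inequalities concern only the marginal of $\varphi$ on the patch. Projecting $P_0$, $Q_1$, $Q_2$ onto $\Omega$ gives the patch mean $\mu_\Omega$ with covariances $\Sigma_\Omega$, $\Sigma_\Omega$, and $\mathrm{Diag}[\Sigma_\Omega]$ respectively, and the noise on the patch stays independent isotropic Gaussian. I will therefore work in $\mathbb{R}^n$, $n=|\Omega|$, with the channel $\psi=\varphi_\Omega+\sigma_t\xi$ (rescaling $\phi_t$ by $1/\sqrt{\bar\alpha_t}$ does not change mutual information).

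\textbf{First inequality.} Write $I(\varphi;\psi)=h(\psi)-h(\psi|\varphi)$. The conditional term $h(\psi|\varphi)=\tfrac n2\ln(2\pi e\sigma_t^2)$ is the same for every prior. The output $\psi$ has covariance $\Sigma_\Omega+\sigma_t^2 I$ under both $P$ and $Q_1$. By the maximum-entropy property of Gaussians at fixed covariance, $h_P(\psi)\le \tfrac12\ln\det(2\pi e(\Sigma_\Omega+\sigma_t^2 I))=h_{Q_1}(\psi)$. This yields
\begin{align*}
I_P\le I_{Q_1}=\tfrac12\ln\det(I+\Sigma_\Omega/\sigma_t^2).
\end{align*}

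\textbf{Second inequality.} Here I need $\det(I+\Sigma_\Omega/\sigma_t^2)\le\prod_i(1+\Sigma_{ii}/\sigma_t^2)$. The matrix $A=I+\Sigma_\Omega/\sigma_t^2$ is positive definite, and its diagonal entries are exactly $1+\Sigma_{ii}/\sigma_t^2$. Hadamard's inequality $\det A\le\prod_i A_{ii}$ therefore gives $I_{Q_1}\le I_{Q_2}=\tfrac12\sum_i\ln(1+\Sigma_{ii}/\sigma_t^2)$. Equivalently, $I_{Q_2}-I_{Q_1}$ is the total correlation of a Gaussian, which is nonnegative.

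\textbf{Main obstacle.} The delicate step is the translation from the MI ordering to the $L_c$ ordering. $L$ is discrete and the equation $\ln|\mathcal{D}|=I$ may have no exact solution. I would define $L_c$ as the smallest $L$ with $I\ge\ln|\mathcal{D}|$, or treat $L$ as continuous with monotone nested patches. Then the pointwise ordering of the three monotone curves immediately orders their first crossing points. If some curve is only non-strictly monotone I would use the infimum definition, and the ordering still follows.
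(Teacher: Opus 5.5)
Your argument is correct and has the same skeleton as the paper's proof (Corollary~\ref{cor:diag_gaussian_bound}). You establish $I_P\le I_{Q_1}\le I_{Q_2}$ pointwise in $L$, using the maximum-entropy property of the Gaussian output at fixed covariance for the first inequality, and then transfer this ordering to the crossing points $L_c$.

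\textbf{Second inequality.} This is where your route genuinely differs. The paper justifies $I_{Q_1}\le I_{Q_2}$ by concavity of $\mathrm{Tr}\ln$ together with ``$\mathrm{Diag}[\Sigma]\prec\Sigma$ in the ordering of semi-definite matrices.'' Read literally in the Loewner order, this fails. The difference $\Sigma-\mathrm{Diag}[\Sigma]$ is traceless, so the two matrices are Loewner-incomparable unless $\Sigma$ is diagonal. Moreover, $\mathrm{Diag}[\Sigma]\preceq\Sigma$ would give the opposite inequality anyway. The paper's step only works if $\prec$ is read as majorization: by Schur--Horn the diagonal of $\Sigma_\Omega$ is majorized by its spectrum, and Schur-concavity of $\sum_i\ln(1+x_i/\sigma_t^2)$ then gives the claim. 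Your use of Hadamard's inequality on $I+\Sigma_\Omega/\sigma_t^2$ reaches the same inequality more directly and without that ambiguity. Equivalently, it is subadditivity of differential entropy, since the output under $Q_2$ is the product of the coordinate marginals of the output under $Q_1$.

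\textbf{Final step.} Your handling of the transfer to $L_c$ is also slightly more careful than the paper's, which uses the equality definition of $L_c$ plus monotonicity in $L$. With $L_c=\inf\{L: I(L)\ge\ln|\mathcal{D}|\}$, the pointwise ordering directly gives the inclusion $\{L: I_P(L)\ge\ln|\mathcal{D}|\}\subseteq\{L: I_{Q_1}(L)\ge\ln|\mathcal{D}|\}$, and likewise for $Q_1$ versus $Q_2$. This handles discrete $L$. Monotonicity in $L$ is then needed only to make the paper's equality definition agree with yours.
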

The Gaussian bounds are helpful numerically because they can be computed directly from the data covariance matrix, which enables direct numerical computation of a lower bound on the critical scale for realistic data (fig. \ref{fig:mem_vs_spec}). It also theoretically provides a general lower bound of the form $L_{c}(\sigma_t) \sim \Omega(\sigma_t \sqrt{\ln|\mathcal{D}|})$ (see App. \ref{app:gaussian_bound} for details). We also show (see App. thm. \ref{theorem:extensive_scaling_appendix} for proof) a very general scaling theorem, which covers a wide range of naturalistic images and shows that $L_{spec} \sim O(\sigma_t)$ scaling of the critical scale is \textit{generic}:
\begin{theorem}\label{theorem:extensive_scaling_main}
    Suppose we have a translationally-invariant distribution over images $\varphi$ with bounded pointwise variance and asymptotically image-size-extensive entropy, with an entropy density $\lim_{|\Omega| \to \infty} \frac{1}{|\Omega|} S(\varphi_{\Omega}) = \tilde{S}$. Then, for $\ln|\mathcal{D}|$ which is $O(1)$ with respect to $\sigma_t$,
    \begin{align}\label{eq:bulk_mem_scaling_m}
         L_{c}(\sigma_t) \sim O\big(\sigma_t \sqrt{\ln|\mathcal{D}|}\big)
    \end{align}
\end{theorem}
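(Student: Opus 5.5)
The plan is to work directly with the defining condition \eqref{eq:collapse_condition_m}, $\ln|\mathcal{D}| = I(\varphi;\phi_{t,\Omega_{L,x}})$, and to upper bound the patch mutual information in a way that scales as $|\Omega|/\sigma_t^2 = L^2/\sigma_t^2$ in the relevant regime. Write $\phi_{t,\Omega}\propto \varphi_\Omega + \sigma_t\eta_\Omega$ after rescaling by $\sqrt{\bar\alpha_t}$, which leaves mutual information unchanged. Since the noise is additive Gaussian, the Gaussian channel bound gives
\begin{align}
I(\varphi;\phi_{t,\Omega}) = I(\varphi_\Omega;\varphi_\Omega+\sigma_t\eta_\Omega) \le \tfrac12 \ln\det\!\big(I + \Sigma_\Omega/\sigma_t^2\big) \le \tfrac12 \operatorname{tr}(\Sigma_\Omega)/\sigma_t^2 \le \frac{|\Omega|\, v}{2\sigma_t^2},
\end{align}
where $v$ bounds the pointwise variance and translation invariance makes each diagonal entry equal. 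This is exactly the anisotropic-to-isotropic Gaussian chain of Lemma \ref{lemma:gaussian_length_bound_main}, and it yields $L_c(\sigma_t)^2 \ge 2\sigma_t^2\ln|\mathcal D|/v$. That settles the lower side, $L_c = \Omega(\sigma_t\sqrt{\ln|\mathcal D|})$.

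For the matching upper side, showing $L_c = O(\sigma_t\sqrt{\ln|\mathcal D|})$, I need a lower bound on $I(\varphi_\Omega;\varphi_\Omega+\sigma_t\eta_\Omega)$ that is also linear in $|\Omega|/\sigma_t^2$. Here I will use extensivity of the entropy together with a coarse-graining argument. I partition $\Omega$ into blocks $B$ of side $b$, with $b$ a large fixed constant chosen so that the block entropy is close to $b^2\tilde S$, and I let the observer average the noisy field over each block. By data processing, $I$ is at least the mutual information between the block averages of $\varphi$ and the noisy block averages, whose noise variance is $\sigma_t^2/b^2$.

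In the regime where $\sigma_t$ is large compared to the signal scale, I expand the mutual information at small signal-to-noise ratio. The relevant fact is that $I(X;X+sZ)\approx \operatorname{tr}\Sigma_X/(2s^2)$ as $s\to\infty$, which follows from the I-MMSE relation, where the MMSE tends to the variance. This gives
\[
I \gtrsim \frac{|\Omega|}{b^2}\cdot\frac{b^2\,\bar v_b}{2\sigma_t^2},
\]
where $\bar v_b>0$ is the variance of a block mean. I need $\bar v_b$ bounded away from zero uniformly, and the place extensive entropy enters is to rule out degenerate distributions whose block averages have vanishing variance. Intuitively, positive entropy density forces nontrivial fluctuations. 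Alternatively, one can avoid block averaging and use MMSE$(\mathrm{snr})\ge$ the variance of a single pixel conditioned on its far field, which is positive because $\tilde S$ is finite rather than $-\infty$.

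Combining the two sides gives $I(\varphi;\phi_{t,\Omega_L}) = \Theta(L^2/\sigma_t^2)$ uniformly. Since $\ln|\mathcal D|=O(1)$ relative to $\sigma_t$, the critical scale satisfies $L_c\to\infty$ as $\sigma_t$ grows, which justifies both the large-$|\Omega|$ extensive regime and the small-SNR per-pixel regime. Solving the defining condition then gives $L_c \sim \sigma_t\sqrt{\ln|\mathcal D|}$.

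I expect the main obstacle to be the lower bound: turning the entropy-density hypothesis into a quantitative, uniform lower bound on the MMSE-based information gain. The natural route is the I-MMSE identity
\[
I=\tfrac12\int_0^{1/\sigma_t^2}\mathrm{mmse}(\gamma)\,d\gamma,
\]
combined with a lower bound on $\mathrm{mmse}(\gamma)$ per pixel for small $\gamma$. That lower bound in turn requires controlling the conditional variance of $\varphi_x$ given the other pixels in $\Omega$ via $\tilde S$, for instance through an entropy-power-type inequality, $\operatorname{Var}\ge e^{2h}/(2\pi e)$, applied to conditional differential entropies.
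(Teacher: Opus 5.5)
Your lower side is correct and is the same as the paper's. The bound $\tfrac12\ln\det(I+\Sigma_\Omega/\sigma_t^2)\le|\Omega|v/(2\sigma_t^2)$ gives $L_c^2\ge 2\sigma_t^2\ln|\mathcal{D}|/v$, and hence $L_c\to\infty$, which the paper uses exactly as you do.

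The gap is in the upper bound on $L_c$. The expansion $I(X;X+sZ)\approx\operatorname{tr}\Sigma_X/(2s^2)$ holds at fixed dimension, but you apply it to a number of block means that grows like $\sigma_t^2$, and it is not uniform in dimension. Correlated fields, such as the power-law images the theorem targets, can put eigenvalues up to $|\Omega|v\ge 2\sigma_t^2\ln|\mathcal{D}|$ into the patch covariance at the critical scale. For such eigenvalues, $\ln(1+\lambda/\sigma_t^2)$ is much smaller than $\lambda/\sigma_t^2$. Concretely, let $\varphi_x=Z+\sqrt{\epsilon}\,\xi_x$ with a shared $Z\sim\mathcal{N}(0,v-\epsilon)$ and i.i.d.\ standard normal $\xi_x$. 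This field is translation invariant, with $\tilde S=\tfrac12\ln(2\pi e\epsilon)$ and $\bar v_b\ge v-\epsilon$, yet
\[
I=\tfrac{|\Omega|-1}{2}\ln\bigl(1+\epsilon/\sigma_t^2\bigr)+\tfrac12\ln\bigl(1+(\epsilon+|\Omega|(v-\epsilon))/\sigma_t^2\bigr).
\]
For $|\Omega|=c\,\sigma_t^2$ this is about $c\epsilon/2+\tfrac12\ln(1+cv)$, far below your claimed $c\,\bar v_b/2$ when $\epsilon\ll v$. The coefficient of $|\Omega|/\sigma_t^2$ has to be an entropy power ($e^{2\tilde S}/(2\pi e)=\epsilon$ here), not a variance.

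Your per-pixel fallback fails for a related reason. By the chain rule, $\tilde S$ controls the entropy of a pixel given its \emph{past}, not given \emph{all} other pixels of the patch. The interpolation variance can vanish while $\tilde S$ stays finite. For example, $\varphi_x=\xi_{x+e_1}-\xi_x$ has variance $2$ and $\tilde S=\tfrac12\ln(2\pi e)$. But $\operatorname{Var}(\varphi_x\mid\varphi_{\Omega\setminus x})=2/(N+1)$ for a centered patch of half-width $N$, so summing per-pixel bounds gives only $o(|\Omega|)$.

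The missing idea is to apply a dimension-normalized inequality to the whole patch, so that extensivity enters directly through $S(\varphi_\Omega)/|\Omega|$. The paper uses the entropy power inequality:
\[
I(\phi_\Omega;\varphi)=S(\phi_\Omega)-S(\sigma_t\eta_\Omega)\ge\frac{|\Omega|}{2}\ln\Bigl(1+\frac{e^{2S(\varphi_\Omega)/|\Omega|}}{2\pi e\,\sigma_t^2}\Bigr).
\]
Since $L_c\to\infty$, it then replaces $S(\varphi_\Omega)/|\Omega|$ by $\tilde S$, which yields $L_c\lesssim 2\sigma_t e^{-\tilde S}\sqrt{\pi e\ln|\mathcal{D}|}$.

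If you want to keep your I-MMSE framing, use the vector estimation bound instead. With $Y_\gamma=\sqrt{\gamma}\,\varphi_\Omega+\eta_\Omega$, you have $\mathrm{mmse}(\gamma)\ge\frac{|\Omega|}{2\pi e}\exp\bigl(\tfrac{2}{|\Omega|}h(\varphi_\Omega\mid Y_\gamma)\bigr)$ and $h(\varphi_\Omega\mid Y_\gamma)\ge S(\varphi_\Omega)-\gamma|\Omega|v/2$. Integrating over $\gamma\in[0,\sigma_t^{-2}]$ reproduces the paper's leading term $|\Omega|e^{2\tilde S}/(4\pi e\,\sigma_t^2)$.
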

The intuition behind this theorem is that for sufficiently high noise, each pixel will confer an amount of information proportional to the signal-to-noise ratio $O(1/\sigma_t^2)$ about the underlying image $\varphi$, so the whole patch provides $O(\sigma_{t}^{-2}|\Omega|)$ bits of information; setting this equal to $\ln|\mathcal{D}|$ and rearranging gives (\ref{eq:bulk_mem_scaling_m}). 

Our theory implies if $L>L_c(\sigma_t)$, the spatially local BIRD denoiser will memorize.  We examine the denoising performance of such a denoiser as a function of the patch scale $L$ and confirm this memorization behavior in Fig. \ref{fig:pixelization}.  However, merely avoiding memorization is not sufficient to guarantee that a model can denoise \textit{effectively}: for instance, as shown in Fig. \ref{fig:pixelization}, BIRD model with a small scale such as $L \ll L_c$ may be robust to the training data (i.e. generalize) but be a poor denoiser. The reason is that with such a small patch size $L$, it cannot access correlations that are important for denoising (at a given noise level).  We next examine another length scale for natural images that is relevant for (linear) denoising. 

\subsection{The spectral length scale for effective linear denoising} \label{sec:spec_len}

While natural images contain correlations at many different length scales, existing explorations of spatial locality in diffusion models \citep{lukoianov2025locality, dieleman2024spectral_autoregression} emphasize a quantity which we call the \textit{spectral scale}, which corresponds to the scale of relevant second-order correlations in the data. Naturalistic images have a well-known power-law falloff in their power spectral density of $P(k) \approx A|k|^{-\alpha}$ \citep{ruderman1993statistics}. The exponent $\alpha$ is typically $\alpha \approx 2 - \epsilon$, with $\epsilon$ typically in the range 0.1-0.3, near the perfectly scale-invariant value $\alpha = 2$. When performing denoising, there is a natural spectral length scale, defined as the inverse of the spatial frequency where the noise power $\sigma_t^2$ equals the signal power, yielding 
 $ L_{spec} = ( \frac{\sigma_t^2}{A})^{1/\alpha}$. 
For exactly scale-invariant images with $\alpha = 2-\epsilon$, this reduces to $L_{spec} \sim \sigma_t^{2/(2-\epsilon)}$. This scale can also be interpreted as the approximate radius of the optimal linear denoiser, or the Wiener filter \citep{wiener1949extrapolation}. Because of the relatively compact support of this filter, a spatially local BIRD model with patch scale $L \geq L_{spec}$ should still denoise with relatively small error. Indeed, \cite{lukoianov2025locality} established empirically that BIRD models with $L \sim L_{spec}$ generated reasonable samples. Conversely if $L \ll L_{spec}$ we should expect poor denoising, as confirmed in Fig.\ref{fig:pixelization}.  

This raises a critical issue for BIRD models: their patch scale $L$ may need to be close to the spectral scale $L_{spec}$ to achieve good denoising performance, but if $L_{spec} \gg L_c$, then the BIRD model cannot access the spectral scale without exhibiting deleterious memorization. 


How much data is required so that a BIRD model can at least access the spectral scale $L_{spec}$ without memorizing. Our theoretical results in Sec.\ref{sec:crit_scale} imply that for $L_{c} \sim L_{spec}$ at all noise levels $\sigma_t$, we need the proportionality $L_{c} \sim \sigma_t \sqrt{\ln|\mathcal{D}|} \sim \sigma_t^{\frac{2}{2 - \epsilon}} \sim L_{spec}$ to hold, which implies $\ln|\mathcal{D}| \sim \sigma_t^{\frac{2\epsilon}{2 - \epsilon}} \sim L_{spec}^{\epsilon}$. The largest spectral scale that the model may need to access is the image size $L_I$. This implies that the minimum dataset size $|\mathcal D|$ required to avoid memorization must scale with image size $L_I$ as
\begin{align}
    \ln|\mathcal{D}| \sim L_I^{\epsilon}
\end{align}
When $\epsilon$ is taken to 0 (i.e. for perfectly-scale invariant data), we see that there is \textit{no required scaling of dataset size with image size}, showing that spatially local BIRD models easily circumvent the curse of dimensionality for perfectly scale-invariant data.


Next, we numerically compare the critical scale $L_c(\sigma_t)$ and the spectral scale $L_{spec}(\sigma_t)$ for several standard image datasets. We compute $L_c(\sigma_t)$ by solving \eqref{eq:collapse_condition_m} using an estimator for the mutual information. We can also compute the spectral scale $L_{spec}$ from the dataset's power spectral density. In Fig \ref{fig:mem_vs_spec}, we show $L_{c}(\sigma_t)$ and $L_{spec}(\sigma_t)$ and a theoretical prediction for $L_{c}$ from a Gaussian bound. We see clearly in these curves that the scaling behavior of $L_{c}$ and $L_{spec}$ are highly aligned, consistent with our theoretical expectations for scale-invariant image data. See App. \ref{app:empirical_scaling} for more experimental details.

\section{Discussion}\label{sec:discussion}
Using information theory, we were able to obtain the memorization-generalization phase diagram for a highly general class of BIRD models and for a very general class data distributions. While BIRD models introduce inductive biases through information restriction, a significant limitation of our theory is that it cannot capture potentially other inductive biases due to architectural choices or learning dynamics. For example, BIRD models cannot express a bias towards linearity which may arise in trained models. However, we are not aware of any theory of diffusion models that can precisely delineate memorization-generalization phase transitions boundaries with arbitrary architectures and data distributions. All existing theory either use simplified architectures or simple models of data. BIRD models therefore provide an exciting new theoretical laboratory to study memorization and generalization across many possible data distributions. 

\bibliographystyle{unsrtnat}
\bibliography{bib}

\begin{thebibliography}{52}
\providecommand{\natexlab}[1]{#1}
\providecommand{\url}[1]{\texttt{#1}}
\expandafter\ifx\csname urlstyle\endcsname\relax
  \providecommand{\doi}[1]{doi: #1}\else
  \providecommand{\doi}{doi: \begingroup \urlstyle{rm}\Url}\fi

\bibitem[Sohl-Dickstein et~al.(2015)Sohl-Dickstein, Weiss, Maheswaranathan, and Ganguli]{sohl2015deep}
Jascha Sohl-Dickstein, Eric Weiss, Niru Maheswaranathan, and Surya Ganguli.
\newblock Deep unsupervised learning using nonequilibrium thermodynamics.
\newblock In \emph{International conference on machine learning}, pages 2256--2265. PMLR, 2015.

\bibitem[Ho et~al.(2020)Ho, Jain, and Abbeel]{ho2020denoising}
Jonathan Ho, Ajay Jain, and Pieter Abbeel.
\newblock Denoising diffusion probabilistic models.
\newblock \emph{Advances in neural information processing systems}, 33:\penalty0 6840--6851, 2020.

\bibitem[Song et~al.(2020)Song, Meng, and Ermon]{song2020denoising}
Jiaming Song, Chenlin Meng, and Stefano Ermon.
\newblock Denoising diffusion implicit models.
\newblock \emph{arXiv preprint arXiv:2010.02502}, 2020.

\bibitem[Kadkhodaie et~al.(2023)Kadkhodaie, Guth, Simoncelli, and Mallat]{kadkhodaie2023generalization}
Zahra Kadkhodaie, Florentin Guth, Eero~P Simoncelli, and St{\'e}phane Mallat.
\newblock Generalization in diffusion models arises from geometry-adaptive harmonic representation.
\newblock \emph{arXiv preprint arXiv:2310.02557}, 2023.

\bibitem[Shalev-Shwartz and Ben-David(2014)]{shalevshwartz2014understanding}
Shai Shalev-Shwartz and Shai Ben-David.
\newblock \emph{Understanding Machine Learning: From Theory to Algorithms}.
\newblock Cambridge University Press, 2014.

\bibitem[Biroli et~al.(2024)Biroli, Bonnaire, De~Bortoli, and M{\'e}zard]{biroli2024dynamical}
Giulio Biroli, Tony Bonnaire, Valentin De~Bortoli, and Marc M{\'e}zard.
\newblock Dynamical regimes of diffusion models.
\newblock \emph{arXiv preprint arXiv:2402.18491}, 2024.

\bibitem[De~Bortoli(2022)]{de2022convergence}
Valentin De~Bortoli.
\newblock Convergence of denoising diffusion models under the manifold hypothesis.
\newblock \emph{arXiv preprint arXiv:2208.05314}, 2022.

\bibitem[Achilli et~al.(2025)Achilli, Ambrogioni, Lucibello, M{\'e}zard, and Ventura]{achilli2025memorization}
Beatrice Achilli, Luca Ambrogioni, Carlo Lucibello, Marc M{\'e}zard, and Enrico Ventura.
\newblock Memorization and generalization in generative diffusion under the manifold hypothesis.
\newblock \emph{Journal of Statistical Mechanics: Theory and Experiment}, 2025\penalty0 (7):\penalty0 073401, 2025.

\bibitem[Li and Yan(2024)]{li2024adapting}
Gen Li and Yuling Yan.
\newblock Adapting to unknown low-dimensional structures in score-based diffusion models.
\newblock \emph{Advances in Neural Information Processing Systems}, 37:\penalty0 126297--126331, 2024.

\bibitem[Chen et~al.(2023)Chen, Huang, Zhao, and Wang]{chen2023score}
Minshuo Chen, Kaixuan Huang, Tuo Zhao, and Mengdi Wang.
\newblock Score approximation, estimation and distribution recovery of diffusion models on low-dimensional data.
\newblock In \emph{International Conference on Machine Learning}, pages 4672--4712. PMLR, 2023.

\bibitem[Li et~al.(2024)Li, Chen, and Li]{li2024good}
Sixu Li, Shi Chen, and Qin Li.
\newblock A good score does not lead to a good generative model.
\newblock \emph{arXiv preprint arXiv:2401.04856}, 2024.

\bibitem[Scarvelis et~al.(2023)Scarvelis, Borde, and Solomon]{scarvelis2023closed}
Christopher Scarvelis, Haitz S{\'a}ez de~Oc{\'a}riz Borde, and Justin Solomon.
\newblock Closed-form diffusion models.
\newblock \emph{arXiv preprint arXiv:2310.12395}, 2023.

\bibitem[Kamb and Ganguli(2025)]{kambanalytic}
Mason Kamb and Surya Ganguli.
\newblock An analytic theory of creativity in convolutional diffusion models.
\newblock In \emph{Forty-second International Conference on Machine Learning}, 2025.

\bibitem[Niedoba et~al.(2025)Niedoba, Zwartsenberg, Murphy, and Wood]{niedobatowards}
Matthew Niedoba, Berend Zwartsenberg, Kevin~Patrick Murphy, and Frank Wood.
\newblock Towards a mechanistic explanation of diffusion model generalization.
\newblock In \emph{Forty-second International Conference on Machine Learning}, 2025.

\bibitem[Yi et~al.(2023)Yi, Sun, and Li]{yi2023generalization}
Mingyang Yi, Jiacheng Sun, and Zhenguo Li.
\newblock On the generalization of diffusion model.
\newblock \emph{arXiv preprint arXiv:2305.14712}, 2023.

\bibitem[Gu et~al.(2023)Gu, Du, Pang, Li, Lin, and Wang]{gu2023memorization}
Xiangming Gu, Chao Du, Tianyu Pang, Chongxuan Li, Min Lin, and Ye~Wang.
\newblock On memorization in diffusion models.
\newblock \emph{arXiv preprint arXiv:2310.02664}, 2023.

\bibitem[Pham et~al.(2025)Pham, Raya, Negri, Zaki, Ambrogioni, and Krotov]{pham2025memorization}
Bao Pham, Gabriel Raya, Matteo Negri, Mohammed~J Zaki, Luca Ambrogioni, and Dmitry Krotov.
\newblock Memorization to generalization: Emergence of diffusion models from associative memory.
\newblock \emph{arXiv preprint arXiv:2505.21777}, 2025.

\bibitem[Chen(2026)]{chen2025interpolation}
Zhengdao Chen.
\newblock On the interpolation effect of score smoothing in diffusion models.
\newblock In \emph{The Fourteenth International Conference on Learning Representations}, 2026.
\newblock URL \url{https://openreview.net/forum?id=O33LAUliUF}.

\bibitem[Wang and Vastola(2024)]{wang2024unreasonable}
Binxu Wang and John~J Vastola.
\newblock The unreasonable effectiveness of gaussian score approximation for diffusion models and its applications.
\newblock \emph{arXiv preprint arXiv:2412.09726}, 2024.

\bibitem[Wang et~al.(2026)Wang, Zavatone-Veth, and Pehlevan]{wang2026random}
Binxu Wang, Jacob Zavatone-Veth, and Cengiz Pehlevan.
\newblock A random matrix theory perspective on the consistency of diffusion models.
\newblock \emph{arXiv preprint arXiv:2602.02908}, 2026.

\bibitem[He et~al.(2026)He, Qiu, and Tao]{he2026diffusion}
Ye~He, Yitong Qiu, and Molei Tao.
\newblock Diffusion model's generalization can be characterized by inductive biases toward a data-dependent ridge manifold.
\newblock \emph{arXiv preprint arXiv:2602.06021}, 2026.

\bibitem[Wang et~al.(2025)Wang, Zhang, Zhang, Chen, Ma, and Qu]{wang2025diffusion}
Peng Wang, Huijie Zhang, Zekai Zhang, Siyi Chen, Yi~Ma, and Qing Qu.
\newblock Diffusion models learn low-dimensional distributions via subspace clustering.
\newblock In \emph{2025 IEEE 10th International Workshop on Computational Advances in Multi-Sensor Adaptive Processing (CAMSAP)}, pages 211--215. IEEE, 2025.

\bibitem[Favero et~al.(2025{\natexlab{a}})Favero, Sclocchi, and Wyart]{favero2025bigger}
Alessandro Favero, Antonio Sclocchi, and Matthieu Wyart.
\newblock Bigger isn't always memorizing: Early stopping overparameterized diffusion models.
\newblock \emph{arXiv preprint arXiv:2505.16959}, 2025{\natexlab{a}}.

\bibitem[Bonnaire et~al.(2026)Bonnaire, Urfin, Biroli, and Mezard]{bonnairediffusion}
Tony Bonnaire, Rapha{\"e}l Urfin, Giulio Biroli, and Marc Mezard.
\newblock Why diffusion models don{\textquoteright}t memorize: The role of implicit dynamical regularization in training.
\newblock In \emph{The Thirty-ninth Annual Conference on Neural Information Processing Systems}, 2026.
\newblock URL \url{https://openreview.net/forum?id=BSZqpqgqM0}.

\bibitem[Favero et~al.(2025{\natexlab{b}})Favero, Sclocchi, Cagnetta, Frossard, and Wyart]{favero2025compositional}
Alessandro Favero, Antonio Sclocchi, Francesco Cagnetta, Pascal Frossard, and Matthieu Wyart.
\newblock How compositional generalization and creativity improve as diffusion models are trained.
\newblock In \emph{International Conference on Machine Learning}, pages 16286--16306. PMLR, 2025{\natexlab{b}}.

\bibitem[Bardone et~al.(2026)Bardone, Merger, and Goldt]{bardone2026theory}
Lorenzo Bardone, Claudia Merger, and Sebastian Goldt.
\newblock A theory of learning data statistics in diffusion models, from easy to hard.
\newblock \emph{arXiv preprint arXiv:2603.12901}, 2026.

\bibitem[Boffi et~al.(2024)Boffi, Jacot, Tu, and Ziemann]{boffi2024shallow}
Nicholas~M Boffi, Arthur Jacot, Stephen Tu, and Ingvar Ziemann.
\newblock Shallow diffusion networks provably learn hidden low-dimensional structure.
\newblock \emph{arXiv preprint arXiv:2410.11275}, 2024.

\bibitem[Cui et~al.(2025)Cui, Pehlevan, and Lu]{cui2025solvable}
Hugo Cui, Cengiz Pehlevan, and Yue~M Lu.
\newblock A solvable model of learning generative diffusion: theory and insights.
\newblock \emph{arXiv preprint arXiv:2501.03937}, 2025.

\bibitem[Yoon et~al.(2023)Yoon, Choi, Kwon, and Ryu]{yoon2023diffusion}
TaeHo Yoon, Joo~Young Choi, Sehyun Kwon, and Ernest~K Ryu.
\newblock Diffusion probabilistic models generalize when they fail to memorize.
\newblock In \emph{ICML 2023 workshop on structured probabilistic inference $\{$$\backslash$\&$\}$ generative modeling}, 2023.

\bibitem[Cui et~al.(2024)Cui, Krzakala, Vanden-Eijnden, and Zdeborov{\'a}]{cui2024analysis}
Hugo Cui, Florent Krzakala, Eric Vanden-Eijnden, and Lenka Zdeborov{\'a}.
\newblock Analysis of learning a flow-based generative model from limited sample complexity.
\newblock In \emph{International Conference on Learning Representations}, volume 2024, pages 51929--51955, 2024.

\bibitem[Buchanan et~al.(2026)Buchanan, Pai, Ma, and De~Bortoli]{buchanan2026edge}
Sam Buchanan, Druv Pai, Yi~Ma, and Valentin De~Bortoli.
\newblock On the edge of memorization in diffusion models.
\newblock \emph{Advances in Neural Information Processing Systems}, 38:\penalty0 96113--96157, 2026.

\bibitem[Lukoianov et~al.(2025)Lukoianov, Yuan, Solomon, and Sitzmann]{lukoianov2025locality}
Artem Lukoianov, Chenyang Yuan, Justin Solomon, and Vincent Sitzmann.
\newblock Locality in image diffusion models emerges from data statistics.
\newblock \emph{arXiv preprint arXiv:2509.09672}, 2025.

\bibitem[Bradley(2025)]{bradley2025local}
Arwen Bradley.
\newblock Local mechanisms of compositional generalization in conditional diffusion.
\newblock \emph{arXiv preprint arXiv:2509.16447}, 2025.

\bibitem[Finn et~al.(2025)Finn, Keller, Theodosis, and Ba]{finn2025origins}
Emma Finn, T~Anderson Keller, Manos Theodosis, and Demba~E Ba.
\newblock Origins of creativity in attention-based diffusion models.
\newblock \emph{arXiv preprint arXiv:2506.17324}, 2025.

\bibitem[Hu et~al.(2025)Hu, Liu, Zhang, and Gao]{hu2025local}
Fangjun Hu, Guangkuo Liu, Yifan Zhang, and Xun Gao.
\newblock Local diffusion models and phases of data distributions.
\newblock \emph{arXiv preprint arXiv:2508.06614}, 2025.

\bibitem[Ambrogioni(2026)]{ambrogioni2026out}
Luca Ambrogioni.
\newblock How out-of-equilibrium phase transitions can seed pattern formation in trained diffusion models.
\newblock \emph{arXiv preprint arXiv:2603.20092}, 2026.

\bibitem[Gottwald et~al.(2025)Gottwald, Liu, Marzouk, Reich, and Tong]{gottwald2025localized}
Georg~A Gottwald, Shuigen Liu, Youssef Marzouk, Sebastian Reich, and Xin~T Tong.
\newblock Localized diffusion models.
\newblock \emph{arXiv preprint arXiv:2505.04417}, 2025.

\bibitem[Nguyen et~al.(2026)Nguyen, Do, Pauwels, and Weiss]{nguyen2026analytic}
Minh~Hai Nguyen, Quoc~Bao Do, Edouard Pauwels, and Pierre Weiss.
\newblock An analytic theory of convolutional neural network inverse problems solvers.
\newblock \emph{arXiv preprint arXiv:2601.10334}, 2026.

\bibitem[Ruderman and Bialek(1993)]{ruderman1993statistics}
Daniel Ruderman and William Bialek.
\newblock Statistics of natural images: Scaling in the woods.
\newblock \emph{Advances in neural information processing systems}, 6, 1993.

\bibitem[Dieleman(2024)]{dieleman2024spectral_autoregression}
Sander Dieleman.
\newblock Diffusion is spectral autoregression, 2024.
\newblock URL \url{https://sander.ai/2024/09/02/spectral-autoregression.html}.

\bibitem[Wiener(1949)]{wiener1949extrapolation}
Norbert Wiener.
\newblock \emph{Extrapolation, interpolation, and smoothing of stationary time series: with engineering applications}.
\newblock The MIT press, 1949.

\bibitem[Karras et~al.(2022)Karras, Aittala, Aila, and Laine]{karras2022elucidating}
Tero Karras, Miika Aittala, Timo Aila, and Samuli Laine.
\newblock Elucidating the design space of diffusion-based generative models.
\newblock \emph{Advances in neural information processing systems}, 35:\penalty0 26565--26577, 2022.

\bibitem[Achilli et~al.(2026)Achilli, Benedetti, Biroli, and Mézard]{biroli2026speciation}
Beatrice Achilli, Marco Benedetti, Giulio Biroli, and Marc Mézard.
\newblock Theory of speciation transitions in diffusion models with general class structure, 2026.
\newblock URL \url{https://arxiv.org/abs/2602.04404}.

\bibitem[Derrida(1980)]{derrida1980randomenergy}
B.~Derrida.
\newblock Random-energy model: Limit of a family of disordered models.
\newblock \emph{Phys. Rev. Lett.}, 45:\penalty0 79--82, Jul 1980.
\newblock \doi{10.1103/PhysRevLett.45.79}.
\newblock URL \url{https://link.aps.org/doi/10.1103/PhysRevLett.45.79}.

\bibitem[Mézard and Montanari(2009)]{mezard2009info}
Marc Mézard and Andrea Montanari.
\newblock \emph{Information, Physics, and Computation}.
\newblock Oxford University PressOxford, January 2009.
\newblock ISBN 9780191718755.
\newblock \doi{10.1093/acprof:oso/9780198570837.001.0001}.
\newblock URL \url{http://dx.doi.org/10.1093/acprof:oso/9780198570837.001.0001}.

\bibitem[Cover and Thomas(2006)]{cover2006elementsofinfo}
Thomas~M. Cover and Joy~A. Thomas.
\newblock \emph{Elements of Information Theory (Wiley Series in Telecommunications and Signal Processing)}.
\newblock Wiley-Interscience, USA, 2006.
\newblock ISBN 0471241954.

\bibitem[Ventura et~al.(2024)Ventura, Achilli, Silvestri, Lucibello, and Ambrogioni]{ventura2024manifolds}
Enrico Ventura, Beatrice Achilli, Gianluigi Silvestri, Carlo Lucibello, and Luca Ambrogioni.
\newblock Manifolds, random matrices and spectral gaps: The geometric phases of generative diffusion.
\newblock \emph{arXiv preprint arXiv:2410.05898}, 2024.

\bibitem[Niedoba et~al.(2024)Niedoba, Zwartsenberg, Murphy, and Wood]{niedoba2024towards}
Matthew Niedoba, Berend Zwartsenberg, Kevin Murphy, and Frank Wood.
\newblock Towards a mechanistic explanation of diffusion model generalization.
\newblock \emph{arXiv preprint arXiv:2411.19339}, 2024.

\bibitem[Yu et~al.(2024)Yu, Shen, Huang, Li, and Zhao]{yu2024unmasking}
Hu~Yu, Li~Shen, Jie Huang, Hongsheng Li, and Feng Zhao.
\newblock Unmasking bias in diffusion model training.
\newblock In \emph{European Conference on Computer Vision}, pages 374--390. Springer, 2024.

\bibitem[Deck and Bischoff(2023)]{deck2023easing}
Katherine Deck and Tobias Bischoff.
\newblock Easing color shifts in score-based diffusion models.
\newblock \emph{arXiv preprint arXiv:2306.15832}, 2023.

\bibitem[Choi et~al.(2022)Choi, Lee, Shin, Kim, Kim, and Yoon]{choi2022perception}
Jooyoung Choi, Jungbeom Lee, Chaehun Shin, Sungwon Kim, Hyunwoo Kim, and Sungroh Yoon.
\newblock Perception prioritized training of diffusion models.
\newblock In \emph{Proceedings of the IEEE/CVF conference on computer vision and pattern recognition}, pages 11472--11481, 2022.

\bibitem[Salimans and Ho(2022)]{salimans2022progressive}
Tim Salimans and Jonathan Ho.
\newblock Progressive distillation for fast sampling of diffusion models.
\newblock \emph{arXiv preprint arXiv:2202.00512}, 2022.

\end{thebibliography}

\clearpage
\begin{appendices}
\appendixpage
\addappheadtotoc

\section{Notation and background on diffusion models.} \label{app:notation_background}

\subsection{Notation conventions}

We use the following notation:
\begin{itemize}
    \item $\varphi \in \mathbb{R}^d$ denotes an example from the training set. For images or patches of size $L$ pixels by $L$ pixels by $C$ color channels, we have $d=L\times L \times C$.  
    \item $\phi$ represents any arbitrary image (or other data) that can serve as input to either the score function or diffusion model. 
    \item $P_0(\varphi)$ denotes the true data (i.e. target) distribution, from which we sample both our training and test data.
    \item $\mathcal{D}$ represents a finite set of training data consisting of $|\mathcal{D}|$ training samples. Each training data point $\varphi \in \mathcal D$ is drawn i.i.d. from the true data distribution $\varphi \sim P_0$. 
    \item $P_{train}(\varphi) = \frac{1}{|\mathcal{D}|} \sum_{\varphi' \in \mathcal{D}} \delta(\varphi - \varphi')$ denotes the empirical training distribution for any fixed realization $\mathcal{D}$ of the training set.  It is simply a uniform distribution over $|\mathcal{D}|$ specific training examples $\varphi \in \mathcal D$.
    \item $x$ represents a pixel location in an image.
    \item For image data, $\phi_x$ and $\varphi_x$ will represent the pixel values of the images $\phi$ and $\varphi$ at pixel location $x$; both are elements of $\mathbb{R}^C$. 
    \item $M[\phi]: \mathbb{R}^d \to \mathbb{R}^d$ represents a model that takes as input an image $\phi$ and produces a new image (e.g. an estimate of the score function, or a denoised version of $\phi$). We will denote by $M_x[\phi] \in \mathbb{R}^C$ the value of the outputs of this model, given an input $\phi$, at the pixel location $x$.
    \item $\Omega_x$ denotes a patch neighborhood indexed by the pixel $x$. The pixel location $x$ will be contained within $\Omega_x$. We will denote by $|\Omega_x|$ the number of pixels in the patch. We will also denote by $\Omega_{x,L}$ a patch neighborhood of $x$ with a characteristic length $L$, such that the size of the patch is $|\Omega_{x;L}| = L^2$.
    \item $\phi_{\Omega_x}$ and $\varphi_{\Omega_x}$ denotes the restriction of images $\phi$ and $\varphi$ to a neighborhood $\Omega_x$ around a pixel $x$. For images with $C$ channels, we have $\phi_{\Omega_x} \in \mathbb{R}^{|\Omega_x| \times C}$.
    \item $\mathcal{C}_{x,t} : \mathbb{R}^d \to \mathbb{R}^n$ denotes a (possibly stochastic) observation map that takes as input a noised image $\phi_t \in \mathbb R^d$, and outputs the $n$ dimensional observation $\mathcal{C}_{x,t}(\phi_t)$ that pixel $x$ makes about the image $\phi_t$. If the map is stochastic we denote its conditional distribution by $P(\mathcal C_{x,t} | \phi_t)$. If the map is deterministic we denote it by the function $\mathcal C_{x,t}(\phi_t)$.  For brevity we also sometimes drop the argument $\phi_t$, and then $\mathcal{C}_{x,t} \in \mathbb R^n$ denotes the pixel $x$'s observation value, reflecting the state of knowledge of pixel $x$ at time $t$ in the diffusion process. 
    \item We denote the {\it training} Markov process $\varphi \rightarrow \phi_t \rightarrow \mathcal C_{x,t}$ for pixel $x$ at time $t$ by the joint distribution $P_{train}(\varphi, \phi_t, \mathcal C_{x,t}) = P_{train}(\varphi)P(\phi_t | \varphi)P(\mathcal C_{x,t} | \phi_t)$.  Here $\varphi \sim P_{train}(\varphi)$ is a randomly chosen training image $\varphi \in \mathcal D$, $P(\phi_t | \varphi)$ is the conditional distribution of a noisy image $\phi_t$ under the forward diffusion process, starting from the training image $\varphi$ , and $P(\mathcal C_{x,t} | \phi_t)$ is the (possibly stochastic) observation that pixel $x$ makes about image $\phi_t$.  We can think about the Markov process $\varphi \rightarrow \phi_t \rightarrow \mathcal C_{x,t}$ as an information channel from the past training data $\mathcal D$ at time $t=0$ to the current pixel observation $\mathcal C_{x,t}$ at time $t$. We denote by $P_{train}(\varphi | \mathcal C_{x,t})$ the Bayesian posterior distribution over the training data given a pixel $x$'s channel observation $\mathcal C_{x,t}$.  This posterior assigns a probability to each of the $|\mathcal D|$ training images $\varphi \in \mathcal D$.  If a pixel $x$ had to guess, based on its (restricted) observation $\mathcal C_{x,t}$ of the  noisy image $\phi_t$, which clean training image $\varphi \in \mathcal D$ at time $0$ in the past led to $\phi_t$ under the forward process, then any optimal guess would be based on the posterior $P_{train}(\varphi | \mathcal C_{x,t})$, which encapsulates the pixel's belief state about the past training data origin of its current observation. 
    \item We denote the {\it testing} Markov process $\varphi \rightarrow \phi_t \rightarrow \mathcal C_{x,t}$ for pixel $x$ at time $t$ by the joint distribution $P_{test}(\varphi, \phi_t, \mathcal C_{x,t}) = P_{0}(\varphi)P(\phi_t | \varphi)P(\mathcal C_{x,t} | \phi_t).$  This is identical to the training Markov process except the prior is now the {\it true} data distribution $P_0(\varphi)$ from which a test sample is drawn, rather than the empirical training distribution $P_{train}(\varphi)$ for a specific dataset $\mathcal D$. We denote the posterior distribution of the input given the output under this Markov process by $P_{test}(\varphi | \mathcal C_{x,t})$.  Unlike $P_{train}(\varphi | \mathcal C_{x,t})$, the support of $P_{test}(\varphi | \mathcal C_{x,t})$ need not be restricted  to the training data $\varphi \in \mathcal D$. 
    \item $\bar{\alpha}_t$ denotes the signal variance in the forward diffusion process.
    \item $\sigma_t^2 = \frac{1 - \bar{\alpha}_t}{\bar{\alpha}_t}$ denotes the noise-to-signal ratio at a time $t$.
    \item $I(A; B)$ will indicate the mutual information between two random variables $A$ and $B$. $I(A ; B \sim P)$ will indicate the mutual information between these variables with a particular emphasis that the random variable $B$ is distributed according to $P$.
    \item $S[P]$ will denote the entropy of a distribution $P$. $S_P[A]$ will indicate the entropy of the random variable $A$ under the distribution $P$.
    \item $\mathcal{N}(x|\mu, \Sigma)$ represents the PDF of the normal distribution with mean $\mu$ and covariance $\Sigma$.  We also use the short-hand $\mathcal{N}(\mu, \Sigma)$ when we do not need to refer to the name of a specific random variable.
\end{itemize}

\subsection{Background on diffusion models}
Diffusion models are designed to solve the problem of generating samples from a complex data distribution $\varphi \sim P_0(\varphi)$ in high dimensions (e.g. for $\varphi \in \mathbb R^d$ for large $d$). To do so, they first develop, through a forward diffusion process, an intermediate bridge between the complex data distribution $P_{0}(\varphi)$ and simple isotropic Gaussian noise distribution $\eta \sim \mathcal{N}(0,I)$.  This bridge corresponds to a time dependent family of distributions of  noised data $\phi_t \sim P_t(\phi_t)$ that interpolate between data points $\varphi$ at time $t=0$ and pure isotropic noise $\eta$ at time $t=1$. A single sample from this bridge at any fixed time $t$ can be obtained via the interpolation 
\begin{align}\label{eq:noise_interpolation}
    \phi_t(\varphi,\eta) = \sqrt{\bar{\alpha}_t} \varphi + \sqrt{1 - \bar{\alpha}_t} \eta.
\end{align}
The parameter $\bar{\alpha}_t$ can be thought of as the signal power, while $1-\bar{\alpha}_t$ can be thought of as the noise power $\bar{\alpha}_t$.  The temporal schedule of the signal power is chosen to monotonically decrease from $\bar{\alpha}_0 = 1$ at time $t=0$ to  $\bar{\alpha}_1 = 0$ at time $t=1$. Thus over the time interval $t \in [0,1]$, the forward diffusion process gradually turns data $\varphi$ into noise $\eta$ through the family of noised samples $\phi_t$. We define a useful quantity $\sigma^2_t$, which characterizes the noise to signal ratio at a time $t$:
\begin{align} \label{eq:noise_to_signal}
    \sigma_t^2 = \frac{1 - \bar{\alpha}_t}{\bar{\alpha}_t}.
\end{align}
(This parameter is the direct analog of the $\sigma_t^2$ parameter in variance-exploding forward processes \citep{karras2022elucidating}). The interpolation in \eqref{eq:noise_interpolation} induces a flow on distributions $P_t(\phi_t)$ defined by the forward diffusion partial differential equation
\begin{align} \label{eq:forward_diffusion}
    \frac{d P_t(\phi_t)}{dt} = \gamma_t\nabla \cdot (\phi_t P_t(\phi_t) ) + \gamma_t \nabla^2 P_t(\phi_t),
\end{align}
with $\gamma_t = -\frac{1}{2}\partial_t \ln \bar{\alpha}_t$. Diffusion models in deterministic mode (DDIM parameterization, \cite{song2020denoising}) aim to reverse this flow by evolving pure noise samples $\phi_1 \sim \mathcal{N}(0,I)$ backwards in time under the ODE
\begin{align} \label{eq:reverse_process}
    \frac{d\phi_t}{dt} = -\gamma_t(\phi_t + \nabla \log P_t(\phi_t)),
\end{align}
where $\nabla \log P_t$ is the \textit{score function} for the distribution $P_t(\phi_t)$ over noised samples induced by the  interpolation in \eqref{eq:noise_interpolation} or equivalently the forward diffusion equation in \eqref{eq:forward_diffusion}. This ODE provably transports the distribution $\mathcal{N}(0,I)$ back to the data distribution $P_0(\varphi)$ through the intermediate family of distributions $P_t(\phi_t)$ as time flows backwards from $t=1$ to $t=0$. Thus the reverse process in \eqref{eq:reverse_process} exactly reverses the forward diffusion in \eqref{eq:forward_diffusion} in terms of the time-dependent family of marginal distributions $P_t(\phi_t)$.

However, implementing the reverse process in \eqref{eq:reverse_process} requires access to the score function $\nabla \log P_t$. When direct calculation of the score is not available or not desirable, there is a convenient identity that reframes the score computation as a problem of optimal denoising, known as Tweedie's theorem:
\begin{align} \label{eq:Tweedie}
    \nabla \log P_t(\phi_t) = -\frac{\mathbb{E}[\eta | \phi_t]}{\sqrt{1 - \bar{\alpha}_t}} = \frac{\sqrt{\bar{\alpha}_t}}{1 - \bar{\alpha}_t} \mathbb{E}[\varphi | \phi_t] - \frac{\phi_t}{1 - \bar{\alpha}_t}
\end{align}
Thus the calculation of the posterior mean $\mathbb{E}[\varphi | \phi_t]$ over the data $\varphi$ at time $t=0$, conditioned on the noised sample $\phi_t$ at any point $t>0$ in the reverse flow, becomes a central quantity required to successfully implement the reverse flow to transport $P_t(\phi_t)$ back to $P_0(\phi_0 = \varphi)$. 
In turn, one can approximate this posterior mean by minimizing the following squared loss over the parameters $\theta$ of a neural network $M_{t,\theta}[\phi_t]$:
\begin{align}\label{eq:neural_loss}
    \mathcal{L}_t = \frac{1}{2} \mathbb{E}_{\varphi \sim P_0, \eta \sim \mathcal{N}(0,I)}[ \norm{\varphi - M_{t,\theta}[\phi_t(\varphi,\eta)]}^2].
\end{align}
This loss is a denoising loss that trains the neural network function $M_{t,\theta}$ to take any noisy sample $\phi_t$ at time $t$ as input, and convert it back to the clean sample $\varphi$ that led to $\phi_t$ under the forward diffusion process.  It is well known that the minimum mean squared error (MMSE) solution to any optimization of the form in \eqref{eq:neural_loss} will yield the posterior mean $\mathbb{E}[\varphi | \phi_t]$ as the Bayes optimal estimator or denoiser.  More precisely, if the family of neural network functions $M_{t,\theta}$ is expressive enough to compute this posterior mean, then for the optimal $\theta^*$ that achieves zero loss in \eqref{eq:neural_loss} we have the MMSE result
\begin{equation} \label{eq:MMSE_denoise}
M_{t,\theta^*}[\phi_t] = \mathbb{E}[\varphi | \phi_t].
\end{equation}
Learning the reverse process from a finite training set and then implementing it through a parametric neural network $M_{t,\theta}$ then corresponds at a high level to four main steps:
\begin{enumerate}
\item Replace the average over the true data distribution $P_0(\varphi)$ with an empirical average over a finite training set.
\item Minimize the loss in \eqref{eq:neural_loss} to obtain the learned neural network $M[t,\theta](\phi_t)$.
\item Replace the posterior mean $\mathbb{E}[\varphi | \phi_t]$ in Tweedie's formula in \eqref{eq:Tweedie} with its approximation $M_{t,\theta}[\phi_t]$, thereby obtaining an approximate score function.
\item Insert this approximate score into the deterministic ODE representing the reverse process in \eqref{eq:reverse_process}.
\end{enumerate}

In practice, for purposes of variance reduction, rather than predicting the clean data $\varphi$ from the noisy sample $\phi_t$, one can also predict the noise $\eta$ or the velocity $v_t$ through the following two losses respectively: 
\begin{align}
    \mathcal{L}^\eta_t &= \frac{1}{2} \mathbb{E}_{\varphi \sim P_0, \eta \sim \mathcal{N}(0,I)}[ \norm{\eta - M_{t,\theta}[\phi_t(\varphi,\eta)]}^2]\\
    \mathcal{L}^v_t &= \frac{1}{2} \mathbb{E}_{\varphi \sim P_0, \eta \sim \mathcal{N}(0,I)}[ \norm{v_t(\varphi,\eta) - M_{t,\theta}[\phi_t(\varphi,\eta)]}^2].
\end{align}
Here the velocity $v_t(\varphi,\eta)$ roughly points from the clean data $\varphi$ to the noisy sample $\phi_t$ with weighting coefficients:
\begin{align*}
    v_t(\varphi,\eta) = \sqrt{\frac{\bar{\alpha}_t}{1 - \bar{\alpha}_t}} \,\phi_t - \frac{\varphi}{\sqrt{1 - \bar{\alpha}_t}}.
\end{align*}
In analogy to \eqref{eq:MMSE_denoise}, the MMSE neural network (assuming no expressivity constraints) computes the following posterior means for the losses $\mathcal{L}^\eta_t$ and $\mathcal{L}^v_t$ respectively:
\begin{align*}
    \mathbb{E}[\eta | \phi_t] &= \frac{\phi_t}{\sqrt{1 - \bar{\bar{\alpha}_t}}} - \sqrt{\frac{\bar{\alpha}_t}{1 - \bar{\alpha}_t}}\,\, \mathbb{E}[\varphi | \phi_t]\\
    \mathbb{E}[v_t(\phi_t,\eta) | \phi_t] &= \sqrt{\frac{\bar{\alpha}_t}{1 - \bar{\alpha}_t}} \,\phi_t - \frac{1}{\sqrt{1 - \bar{\alpha}_t}} \mathbb{E}[\varphi | \phi_t]. 
\end{align*}
The MMSE solutions to all three losses above produce a linear combination of $\phi_t$ and the optimal MMSE denoiser $\mathbb{E}[\varphi | \phi_t]$.  Thus we can easily interchange theoretical results derived for one approach to any other approach.  For the remainder of the theory section, we will focus primarily on the loss (\ref{eq:neural_loss}) for which the optimal model simply computes $\mathbb{E}[\varphi | \phi_t]$.  However, for numerical experiments in our paper, we use velocity prediction, which shows significant benefits from the perspective of training stability and the stability of generated outputs, especially early in training.

\section{Bayes-optimal diffusion models always memorize finite training sets} \label{app:bayes_opt_mem}


In the denoising loss in \eqref{eq:neural_loss} we never have direct access to the true data distribution $P_0(\varphi)$. Instead, we only have a {\it finite} dataset $\mathcal D$ consisting of $|\mathcal D|$ training samples $\varphi \in \mathcal D$ each drawn i.i.d from the true data distribution $P_0(\varphi)$.  This leads to an empirical training distribution 
\begin{equation} \label{eq:P_train}
P_{train}(\varphi) = \frac{1}{|\mathcal{D}|} \sum_{\varphi' \in \mathcal{D}} \delta(\varphi - \varphi'), 
\end{equation}
which is simply a uniform sum of delta-functions over $|\mathcal{D}|$ specific training examples $\varphi \in \mathcal D$. It follows that the Bayes optimal MMSE denoiser that achieves minimal loss in \eqref{eq:neural_loss}, when $P_0(\varphi)$ replaced with $P_{train}(\varphi)$, computes the posterior mean over the {\it finite} training set:
\begin{align}\label{eq:full_bayes_optimal}
    M_t[\phi_t] = \mathbb{E}[\varphi | \phi_t ] = \sum_{\varphi \in \mathcal{D}} \varphi \,P_{train}(\varphi | \phi_t).
\end{align}
Here $P_{train}(\varphi | \phi_t)$ is the posterior probability that a clean training data image $\varphi \in \mathcal D$ at time $t=0$ leads to the noisy observed image $\phi_t$ at a given time $t$ under the forward diffusion process.  Because the forward diffusion process shrinks the training data and adds isotropic Gaussian noise to it (see \eqref{eq:noise_interpolation}), the forward conditional probability of obtaining $\phi_t$ starting from $\varphi \in \mathcal D$ is simply an isotropic Gaussian distribution whose mean shrinks with time and whose variance grows with time: 
\begin{align}
    P(\phi_t | \varphi) = \mathcal{N}(\phi_t | \sqrt{\bar{\alpha}_t} \varphi, (1 - \bar{\alpha}_t) I).
\end{align}
And since the prior over the training data in \eqref{eq:P_train} is uniform, Bayes rule simply yields 
\begin{align}\label{eq:posterior_train}
    P_{train}(\varphi | \phi_t) = \frac{P(\phi_t | \varphi)}{\sum_{\varphi'} P(\phi_t | \varphi')} =  \frac{\mathcal{N}(\phi_t | \sqrt{\bar{\alpha}_t} \varphi, (1 - \bar{\alpha}_t) I)}{\sum_{\varphi' \in \mathcal{D}} \mathcal{N}(\phi_t | \sqrt{\bar{\alpha}_t} \varphi', (1 - \bar{\alpha}_t) I)}.
\end{align}

In summary, the Bayesian posterior distribution $P_{train}(\varphi | \phi_t)$ in \eqref{eq:posterior_train} over the past training data $\varphi \in \mathcal D$ given the current image $\phi_t$ provides a full analytic solution to both the Bayes optimal denoiser in \eqref{eq:full_bayes_optimal} and the reverse process in \eqref{eq:reverse_process} through Tweedie's theorem in \eqref{eq:Tweedie}. 

As explained in \citep{kambanalytic}, the structure of this analytic solution has an appealing interpretation in terms of a Bayesian guessing game about the past of the forward diffusion process given a noised image $\phi_t$. 
This Bayesian guessing game is played by both the denoiser and the reverse process. 
In essence, both the denoiser and the reverse process are trying to guess which training image $\varphi \in \mathcal D$ at time $t=0$ led to $\phi_t$ under the forward diffusion. 
The optimal Bayesian observer would summarize this guess in the Bayesian posterior distribution $P_{train}(\varphi | \phi_t)$ in \eqref{eq:posterior_train}.  
The optimal MMSE Bayesian denoiser then uses this posterior to compute the posterior mean training set image through \eqref{eq:full_bayes_optimal}. Similarly, the task of the reverse process is to flow $\phi_t$ backwards in time from $t$ to $0$ and reverse the forward diffusion. 
The Bayes-optimal reverse process (obtained by inserting \eqref{eq:posterior_train} into \eqref{eq:full_bayes_optimal}, 
then inserting \eqref{eq:full_bayes_optimal} into \eqref{eq:Tweedie}, 
and then finally inserting \eqref{eq:Tweedie} into \eqref{eq:reverse_process}) 
uses the posterior to flow $\phi_t$ back to each data point $\varphi \in \mathcal D$ with a Bayesian posterior weight controlled by $P_{train}(\varphi | \phi_t)$.  
In essence, the optimal Bayesian reverse process guesses the past training data origin of $\phi_t$ under the forward diffusion and flows back towards this origin.  

While the Bayes optimal diffusion model is appealing in terms of its both its analytic and conceptual simplicity, its sampling behavior is unfortunately quite unrealistic and does not explain what trained neural network based diffusion models actually do.  The key issue is that for any finite training set, the Bayes optimal reverse process memorizes the training data, and can only flow to one of the training set points $\varphi \in \mathcal D$ (see also \citep{kambanalytic}).  Intuitively, as time decreases from $t=1$ to $t=0$ in the reverse process, the posterior distribution in \eqref{eq:posterior_train} increases its probability on training points $\varphi$ close to $\phi_t$, and in turn,  $\phi_t$ then increasingly flows towards such training points. The positive feedback between posterior belief $P_{train}(\varphi | \phi_t)$ and reverse flow of $\phi_t$ to images $\varphi$ of high belief, eventually causes $\phi_t$ to converge to a {\it single} training set point $\varphi \in \mathcal D$. 

In terms of the Bayesian guessing game interpretation, with small amounts of data and low enough noise levels, the guessing game is {\it too easy}. With only a small number of training data points $\varphi \in \mathcal D$, which are likely to be well separated (relative to the noise scale $\sigma_t$) in an exponentially large high dimensional space, $\phi_t$ will generically be much closer to the data point $\varphi^*$ that generated it than any other data point $\varphi \neq \varphi^*$.  In this situation, the posterior belief $P_{train}(\varphi | \phi_t$) will have a very high probability when $\varphi=\varphi*$ and a low probability otherwise. Thus the posterior belief $P_{train}(\varphi | \phi_t$) {\it correctly} guesses the origin of $\phi_t$ in the forward process.  

Even more intuitively, in terms of images, if one observes {\it the entirety} of a noised image $\phi_t$ at moderately low levels of noise, and one knows that the noised image came from a small set of widely separated clean training images, it is easy to guess which training image it came from.  Thus the ease of the Bayesian guessing game in high dimensions is intimately tied to the deleterious memorization behavior of the Bayes optimal diffusion model. This memorization behavior hurts denoising on a test image; this denoiser will be overly influenced by a particular training set image.  Also, from the diffusion model perspective,  this memorization behavior also impedes creative generation of new images different from any image in the training set.

\section{Bayesian Information Restricted Diffusion (BIRD) Models}
\label{app:BIRD}

We have seen that optimal Bayesian diffusion models memorize finite training sets precisely because the Bayesian guessing game of which training set image $\varphi \in \mathcal D$ leads to a given noise image $\phi_t$ in the forward diffusion is too easy, especially for small numbers of training images in a high dimensional space at low levels of noise (or equivalently last in the reverse process). This raises the question of whether one can obtain alternate Bayesian diffusion models that retain analytic tractability and conceptual simplicity, but {\it also} generalize appropriately, thereby behaving more like trained neural network diffusion models.  

We show that we can {\it simultaneously} achieve {\it both} theoretical tractability {\it and} more realistic generalization. The key idea is to prevent memorization by making the Bayesian guessing game harder {\it without} increasing the amount of training data $|\mathcal D|$ or increasing the noise level. In particular we combine two key approaches: 1) we instead make the Bayesian guessing game harder by restricting the image information the Bayesian guesser uses to guess the past of the forward diffusion process; and 2) we embrace  diversity by allowing different parts of the image, down to the level of {\it individual pixels}, to use {\it different} pieces of restricted information to guess the past.  The former prevents memorization while the latter enhances creative generalization.  We call such models Bayesian Information Restricted Diffusion (BIRD) models. 

To introduce the general class of BIRD models, we first note that the denoising loss \eqref{eq:neural_loss}, as a simple squared loss, decouples across pixels $x$.  In particular, we can write it as a sum over pixels $x$ (or more generally, as a sum over any orthonormal basis over pixels):
\begin{align} \label{eq:BIRD_loss}
    \mathcal{L}_t = \sum_x \mathcal{L}_{x,t} \quad \text{where} \quad \mathcal{L}_{x,t} = \frac{1}{2} \mathbb{E}_{\varphi \sim P_{train}, \phi_t} \norm{\varphi_x - M_{x,t}[\phi_t]}^2.
\end{align}
Here $M_{x,t}[\phi_t] \in \mathbb R^C$ is the evaluation of the model denoiser at time $t$ at {\it single} pixel location $x$, and it is trained to recover the single clean training image pixel value $\varphi_x \in \mathbb R^C$.  We can thus posit a separate denoiser model for each pixel and analyze them individually. Without this decoupling, analyzing the behavior of diffusion models with inductive biases becomes significantly less tractable; see e.g. \cite{finn2025origins} for an example of the complexities that arise when couplings between pixels are introduced.  

To achieve the restricted information, we ensure that the denoiser function $M_{x,t}$ is not allowed to observe the entire noised image $\phi_t$. Instead it is only allowed to make a restricted, possibly stochastic, observation $\mathcal C_{x,t} \in \mathbb R^{n_{x,t}}$ of $\phi_t \in \mathbb R^d$.  We usually assume the observation dimensionality $n_{x,t}$ is less than the full image dimensionality $d$.  If the observation is stochastic we denote its conditional distribution by $P(\mathcal C_{x,t} | \phi_t)$. If it is deterministic we denote it by the function $\mathcal C_{x,t}(\phi_t)$.  For brevity we also sometimes drop the argument $\phi_t$, and then $\mathcal{C}_{x,t} \in \mathbb R^{n_{x,t}}$ denotes the pixel $x$'s observation value.  Furthermore we assume the that different pixels $x$ can make {\it different} observations $\mathcal C_{x,t}$ about the noised image $\phi_t$.  This diversifies the state of knowledge of each pixel. The BIRD model is then simply the collection of individual pixel denoisers $M_{x,t}[\mathcal{C}_{x,t}]$ that each achieve the MMSE loss for $\mathcal L_{x,t}$ in \eqref{eq:BIRD_loss}.

Just like the full optimal Bayesian model, the MMSE BIRD model also has an analytic solution for pixel x given by
\begin{align}
    M_{x,t}[\mathcal{C}_{x,t}] = \sum_{\varphi \in \mathcal{D}} \varphi_x \, P_{train}(\varphi | \mathcal{C}_{x,t}), \label{eq:restricted_bayes_optimal}
\end{align}
where the Bayesian posterior is given by
\begin{align}\label{eq:bayes_weights_channel}
    P_{train}(\varphi | \mathcal{C}_{x,t}) = \frac{P(\mathcal{C}_{x,t}| \varphi)}{\sum_{\varphi'} P(\mathcal{C}_{x,t} | \varphi')}.
\end{align}
We can then combine these pixel-wise estimates into the BIRD model for the full image
\begin{equation}
    M_{t}[\varphi] = \sum_{x} M_{x,t}[\mathcal{C}_{x,t}]\hat{x}
\end{equation}
To understand this solution, consider the training Markov process $\varphi \rightarrow \phi_t \rightarrow \mathcal C_{x,t}$ for pixel $x$ at time $t$, denoted by the joint distribution $P_{train}(\varphi, \phi_t, \mathcal C_{x,t}) = P_{train}(\varphi)P(\phi_t | \varphi)P(\mathcal C_{x,t} | \phi_t)$.  Here $\varphi \sim P_{train}(\varphi)$ is a randomly chosen training image $\varphi \in \mathcal D$, $P(\phi_t | \varphi)$ is the conditional distribution of a noisy image $\phi_t$ under the forward diffusion process, starting from the training image $\varphi$, and $P(\mathcal C_{x,t} | \phi_t)$ is the (possibly stochastic) observation that pixel $x$ makes about image $\phi_t$.  We can think about the training Markov process $\varphi \rightarrow \phi_t \rightarrow \mathcal C_{x,t}$ as an information channel from the past training data $\mathcal D$ at time $t=0$ to the current pixel observation $\mathcal C_{x,t}$ at time $t$. 

$P_{train}(\varphi | \mathcal C_{x,t})$ in \eqref{eq:bayes_weights_channel} is then the Bayesian posterior distribution over the training data $\varphi \in \mathcal D$ given a pixel $x$'s channel observation $\mathcal C_{x,t}$. This posterior assigns a probability to each of the $|\mathcal D|$ training images $\varphi \in \mathcal D$.  If a pixel $x$ had to guess, based on its (restricted) observation $\mathcal C_{x,t}$ of the  noisy image $\phi_t$, which clean training image $\varphi \in \mathcal D$ at time $0$ in the past led to $\phi_t$ under the forward diffusion, then any optimal guess would be based on the posterior $P_{train}(\varphi | \mathcal C_{x,t})$, which encapsulates the pixel's belief state about the past training data origin of its current observation. The individual Bayes optimal pixel denoiser at pixel $x$, based on its restricted observation $\mathcal C_{x,t}$, then simply returns in \eqref{eq:restricted_bayes_optimal} the weighted average of pixel values $\varphi_x$ of pixel $x$ for all images $\varphi$ in the training dataset $\mathcal D$, where the weights are determined by the posterior $P_{train}(\varphi | \mathcal{C}_{x,t})$. 

The channel restriction $\mathcal C_{x,t}$ at the end of the forward information channel $\varphi \rightarrow \phi_t \rightarrow \mathcal C_{x,t}$ makes the backwards Bayesian guessing game harder, and ideally prevents the posterior $P(\varphi | \mathcal{C}_{x,t})$ from memorizing by concentrating on a single correct training example $\varphi \in \mathcal{D}$.  This behavior would be in contrast to the more informative posterior $P_{train}(\varphi | \phi_t)$ that always forces the unrestricted Bayes optimal diffusion model to memorize at low enough levels of noise (or equivalently late in the reverse process), as discussed in App. \ref{app:bayes_opt_mem}.



\subsection{Examples of BIRD models: LS, ES, and ELS}\label{app:BIRD_examples}

A natural and simple channel restriction $\mathcal{C}_{x,t}$ is a deterministic low dimensional projection applied to $\phi_t$. 
In the case of a linear projection, $\mathcal C_{x,t}(\phi_t) = \mathcal P_{x,t} \phi_t$ where $\mathcal P_{x,t}$ is an $n_{x,t}$ by $d$ matrix with orthonormal rows. Under this restriction, the Bayesian posterior is given by
\begin{align}
    P(\varphi | \mathcal P_{x,t} \phi_t) = \frac{\exp(- \frac{1}{2(1 - \bar{\alpha}_t)} \norm{\mathcal P_{x,t} (\phi_t - \sqrt{\bar{\alpha}_t} \varphi }_S^2)}{\sum_{\varphi'} \exp(- \frac{1}{2(1 - \bar{\alpha}_t)} \norm{\mathcal P_{x,t} (\phi_t - \sqrt{\bar{\alpha}_t} \varphi' }_S^2)}.
\end{align}
The Local Score (LS) machine of \cite{kambanalytic} is an instance of this framework when the projection operator $\mathcal P_{x,t}$ projects onto a coordinate subspace of pixel components within a local patch $\Omega_x$ surrounding pixel $x$:
\begin{align}
    P(\varphi | \phi_{t;\Omega_x}) = \frac{\exp(- \frac{1}{2(1 - \bar{\alpha}_t)} \norm{\phi_{t,\Omega_x} - \sqrt{\bar{\alpha}_t}\varphi_{\Omega_x} }^2)}{\sum_{\varphi'} \exp(- \frac{1}{2(1 - \bar{\alpha}_t)} \norm{\phi_{t,\Omega_x} - \sqrt{\bar{\alpha}_t} \varphi_{\Omega_x} }^2 )}.
\end{align}
Here we follow the notation of \cite{kambanalytic}, where $\phi_{\Omega_x}$ refers to a vector of concatenated pixel values of the image $\phi$, but restricted only to all pixels inside the local patch of pixels $\Omega_x$. Projections onto subspaces in more general bases may be admitted, as hypothesized by \cite{lukoianov2025locality}, although the latter paper did not propose any specific basis other than the pixel basis used by \cite{kambanalytic}. 

While this framework is already fairly general, there is a further generalization of it that can be considered, which is to regress onto a correlated target $T_x$ rather than onto $\varphi$ itself:
\begin{align}
    \mathcal{L} = \sum_{x} \frac{1}{2} \big\langle \norm{T_x - M_x[\mathcal{C}_{x,t}] }^2 \big\rangle_{\varphi, T_x, \mathcal{C}_{x,t}},
\end{align}
for which the minimizer is given by
\begin{align}\label{eq:correlated_target}
    M_x[\mathcal{C}_{x,t}] = \sum_{\varphi} \int \,dT_x\, T_x P(T_x | \varphi, \mathcal{C}_{x,t}) P(\varphi | \mathcal{C}_{x,t}) 
\end{align}
This generalization may be useful for various stochastic flow-matching setups. It also allows us to reproduce the Equivariant Score (ES) machine and Equivariant Local Score (ELS) machine of \cite{kambanalytic} under a particular correlated choice of target $T_x$ and channel $\mathcal{C}_{x,t}$. \cite{kambanalytic} consider the setting of a model that is \textit{equivariant} under a group $G$, which means that for any $g \in G$ the model $M[\phi]$ satisfies $M[U_g \phi] = U_g M[\phi]$, where $U_g$ is a unitary representation of the group $G$. Under the following choice of correlated channel and target: 
\begin{align}
    g &\sim \text{Haar}(G)\\
    \mathcal{C}_{x,t} &= U_g^\dagger \phi_t\\
    T_x &= U_g \varphi
\end{align}
we reproduce the ES machine with the equation (\ref{eq:correlated_target}):
\begin{align}
    M_x[\phi_t] &= \sum_{\varphi \in \mathcal{D}}\sum_{g \in G} U_g \varphi \,P(\varphi, g | U_g^\dagger \phi_t)\\
    P(\varphi,g | U_g \phi_t) &= \frac{\exp(-\frac{1}{2(1 - \bar{\alpha}_t)} \norm{\varphi - U_g^\dagger \phi_t}^2)}{\sum_{\varphi' \in \mathcal{D}} \sum_{g' \in G} \exp(-\frac{1}{2(1 - \bar{\alpha}_t)} \norm{\varphi' - U_{g'}^\dagger \phi_t}^2)}
\end{align}
The ELS machine can be reproduced by specializing $g$ to the group of translations $\mathcal{T}$ on images with $U_g$ the fundamental representations, and then subsequently projecting the output $U_g^\dagger \phi_t$ into a patch $\Omega_x$:
\begin{align}
    M_x[\phi_t] &= \sum_{\varphi \in \mathcal{D}}\sum_{g \in \mathcal{T}} (U_g \varphi)_x \,P(\varphi, g | (U_g^\dagger \phi_t)_{\Omega_x})\\
    P(\varphi,g | U_g \phi_t) &= \frac{\exp(-\frac{1}{2(1 - \bar{\alpha}_t)} \norm{\varphi_{\Omega_x} - (U_g^\dagger \phi_t)_{\Omega_x}}^2)}{\sum_{\varphi' \in \mathcal{D}} \sum_{g' \in G} \exp(-\frac{1}{2(1 - \bar{\alpha}_t)} \norm{\varphi'_{\Omega_x} - (U_{g'}^\dagger \phi_t)_{\Omega_x}}^2)}
\end{align}

\section{A theory of BIRD memorization-generalization phase transitions}\label{app:mem_gen_theory}

In App. \ref{app:bayes_opt_mem} we noted that unrestricted Bayes optimal diffusion models always memorize at low enough levels of noise (corresponding to late enough times in the reverse process). Motivated by this issue we introduced a very general class of BIRD models in App. \ref{app:BIRD}. In this section we now wish to develop a general theory of when such BIRD models memorize versus generalize. 

\subsection{A high level overview of memorization versus generalization}

Memorization versus generalization can be assessed by the behavior of the posterior distribution $P_{train}(\varphi | \mathcal{C}_{x,t})$ in \eqref{eq:bayes_weights_channel} which encapsulates each pixel $x$'s belief state about which past training image $\varphi \in \mathcal D$ at time $t=0$ led to the pixel's current observation $C_{x,t}$ at the current time $t$ under the training Markov process for forward diffusion and observation $\varphi \rightarrow \phi_t \rightarrow C_{x,t}$, specified by the joint distribution $P_{train}(\varphi, \phi_t, \mathcal C_{x,t}) = P_{train}(\varphi)P(\phi_t | \varphi)P(\mathcal C_{x,t} | \phi_t)$.  In the memorization phase, the posterior $P_{train}(\varphi | \mathcal{C}_{x,t})$ would concentrate on a single, or very small number of training images $\varphi \in \mathcal D$. 

An important consequence of the concentration of the posterior $P_{train}(\varphi | \mathcal{C}_{x,t})$ is that the accompanying BIRD denoiser and diffusion model then both become extremely sensitive to the particular realization of the training data $\mathcal D$.  For example, consider two {\it distinct} data sets $\mathcal D$ and $\mathcal D'$ each drawn i.i.d. from the same true data distribution $P_0(\varphi)$.  Because the posterior $P_{train}(\varphi | \mathcal{C}_{x,t})$ concentrates on individual (or a small number of) data points in the memorizing phase, the BIRD model denoiser and reverse process would yield {\it different} outputs even for the {\it same} observation inputs $\mathcal C_{x,t}$ when using the two different training datasets $\mathcal D$ and $\mathcal D'$.  Thus the BIRD model would not be {\it robust} to the choice of the training data $\mathcal D$. 

In contrast, the hallmark of the generalization phase is that a BIRD model {\it should be robust} to the choice of training data $\mathcal D$.  In particular, averages of quantities with respect to the posterior $P_{train}(\varphi | \mathcal{C}_{x,t})$, should not depend on the detailed realization of the training data $\mathcal D$. An important such average is the posterior mean computed by the denoiser output $M_{x,t}[\mathcal C_{x,t}]$ in \eqref{eq:restricted_bayes_optimal}.  In this robust regime, two different denoisers, and two different BIRD model reverse samplers, obtained from two different datasets $\mathcal D$ and $\mathcal D'$, {\it consistently generalize} to the same denoiser and sampler.

How do we describe this common denoiser in the robust, consistent generalization phase?  Well, a simple way to achieve this phase is to take the limit of a large number of data points $|\mathcal D|$. Then averages of quantities with respect to $P_{train}(\varphi)$ in \eqref{eq:P_train} should converge to averages with respect to the true distribution $P_0(\varphi)$ from which a new test example $\varphi$ would be drawn.  Thus we can consider the {\it testing} Markov process $\varphi \rightarrow \phi_t \rightarrow \mathcal C_{x,t}$ for pixel $x$ at time $t$, defined by the joint distribution $P_{test}(\varphi, \phi_t, \mathcal C_{x,t}) = P_{0}(\varphi)P(\phi_t | \varphi)P(\mathcal C_{x,t} | \phi_t).$  This is identical to the training Markov process above, except the prior is now the {\it true} data distribution $P_0(\varphi)$ from which a test sample is drawn, rather than the empirical training distribution $P_{train}(\varphi)$ for a specific dataset $\mathcal D$. We denote the posterior distribution of the input $\varphi$ given the output $\mathcal C_{x,t}$ under this Markov process by $P_{test}(\varphi | \mathcal C_{x,t})$. This posterior distribution is the outcome of the Bayesian guessing game in the limit of a large amount of data.   

With the testing Markov process in place, we can see that in the robust, consistent generalization phase, the denoiser output in \eqref{eq:restricted_bayes_optimal} converges to a common denoiser for any dataset $\mathcal D$, where the dataset dependent posterior $P_{train}(\varphi | \mathcal C_{x,t})$ is replaced with the dataset indpendent posterior $P_{test}(\varphi | \mathcal C_{x,t})$.  This common denoiser then determines a common reverse sampler as described in App. \ref{app:bayes_opt_mem}.  In this fashion, in the generalization phase, when two different BIRD models are trained on two different datasets, they both consistently converge to this same sampler.

\subsection{The entropy of Bayesian guessing determines memorization and generalization}

We would like to derive sharper conditions on the minimum amount of training data $|\mathcal D|$ required to achieve generalization instead of memorization, and how such a data amount threshold depends on time $t$ in the reverse process (or equivalently the amount of noise), as well as the amount of information the restricted observation $\mathcal C_{x,t}$ provides about the training data $\varphi \in \mathcal D$. 

To do so, it is useful to focus on the entropy of the outcome of the Bayesian guessing game, encapsulated by posterior distribution $P_{train}(\varphi | \mathcal{C}_{x,t})$.  This entropy is given by 
\begin{align}
    S[P_{train}(\varphi | \mathcal{C}_{x,t})] = -\sum_{\varphi \in \mathcal{D}} P_{train}(\varphi | \mathcal{C}_{x,t}) \ln P_{train}(\varphi | \mathcal{C}_{x,t}).
\end{align}
This entropy can be written as the prior entropy $S[P_{train}(\varphi)] = \ln|\mathcal{D}|$, minus the amount of information learned about the past training data $\varphi \in D$ relative to the prior by the current measurement $\mathcal{C}_{x,t}$, also known as the \textit{Bayesian surprise}:
\begin{equation}
    S[P_{train}(\varphi | \mathcal{C}_{x,t})] = \ln|\mathcal{D}| - D_{KL}(P_{train}(\varphi | \mathcal{C}_{x,t})\,\, || \,\, P_{train}(\varphi) ).
\end{equation}
In the memorization phase, $P_{train}(\varphi | \mathcal{C}_{x,t})$ concentrates on a single point and so the entropy vanishes. In a generalizing phase, the posterior will have support on many samples, and so we may expect the Bayesian surprise $D_{KL}(P_{train}(\varphi | \mathcal{C}_{x,t})\,\, || \,\, P_{train}(\varphi) )$ to converge to its value under the true data distribution, i.e. $D_{KL}(P_{test}(\varphi | \mathcal{C}_{x,t})\,\, || \,\, P_{0}(\varphi) )$.

As we will see below, when both the logarithm of the amount of data $\ln |\mathcal D|$ and the data dimension $d$ are large, there is a sharp phase transition between the two phases. We find that whenever $\ln|\mathcal D| - D_{KL}(P_{test}(\varphi | \mathcal{C}_{x,t})\,\, || \,\, P_{0}(\varphi) ) \geq 0$, the training surprise matches the test surprise.  This corresponds to the generalization phase.  Conversely, when this inequality does not hold, the posterior entropy is zero.  Thus we obtain the following formula in the large $\ln \dD$ limit (see Theorem \ref{thm:pointwise_memorization} in App. \ref{app:main_proof} for a formal theorem statement and proof):
\begin{equation}
    S[P_{train}(\varphi|\mathcal{C}_{x,t})] = \begin{cases}
        \ln |\mathcal{D}| - D_{KL}(P_{test}(\varphi | \mathcal{C}_{x,t}) || P_0(\varphi)) & \ln |\mathcal{D}| > D_{KL}(P_{test}(\varphi |\mathcal{C}_{x,t}) || P_0(\varphi)) \\
        0 & \ln |\mathcal{D}| \leq D_{KL}(P_{test}(\varphi | \mathcal{C}_{x,t}) || P_0(\varphi))
    \end{cases}
\end{equation}
This result yields a pointwise lower threshold for the minimum amount of data $|\mathcal{D}|$ required to avoid memorization and achieve generalization for any fixed observation $\mathcal C_{x,t}$.  If $\ln |\mathcal D|$ is less than the Bayesian test surprise $D_{KL}(P_{test}(\varphi | \mathcal{C}_{x,t}) || P_0(\varphi))$, then the BIRD model is in the memorization regime given that observation.  

To understand this result intuitively, note that the Bayesian surprise quantifies how much a pixel $x$ needs to update its prior belief distribution about the training data $P_0(\phi)$ to obtain the posterior belief distribution $P_{test}(\varphi | \mathcal{C}_{x,t})$ after making the observation $C_{x,t}$. If this surprise is high, then intuitively $\mathcal{C}_{x,t}$ is highly informative about $\varphi$. Thus the higher the surprise, the more data $|\mathcal D|$ is required to avoid memorization for any fixed observation $\mathcal C_{x,t}$. Indeed we obtain the extremely simple condition that the amount of data $|\mathcal D|$ needs to be at least exponential in the surprise to achieve generalization.  Conversely, when $\ln |\mathcal D|$ is greater than the surprise, the posterior entropy is nonnegative, indicating the generalization phase. And quantitatively it is equal to the prior entropy minus the Bayesian surprise, yielding the remaining entropy after the observation. 

While the above result holds pointwise for a {\it single} observation $\mathcal C_{x,t}$, we can also find a simple averaged condition when $\cC$ is drawn \textit{randomly} from the test distribution via the testing Markov process $\varphi \rightarrow \phi_t \rightarrow \cC$ where $\varphi \sim P_0(\phi)$.  As is well known, the average of the Bayesian surprise  $D_{KL}(P_{test}(\varphi | \mathcal{C}_{x,t}) || P_0(\varphi))$ over the marginal distribution $P_{test}(\cC)$ yields the mutual information $I(\varphi; \mathcal{C}_{x,t})$.  Thus if the average surprise is high, so is the mutual information (as they are equal).  This yields an extremely simple and highly general information theoretic criterion for the memorization-generalization phase transition boundary in any BIRD model and a wide variety of data distributions: 
\begin{equation} \label{eq:memorization_app}
    \begin{cases}
        \ln |\mathcal{D}| > I(\varphi; \mathcal{C}_{x,t}) \quad \text{Generalization phase.}  \\
        \ln |\mathcal{D}| < I(\varphi; \mathcal{C}_{x,t}) \quad \text{Memorization phase.}
    \end{cases}
\end{equation}
This result then implies that restricting information contained in observations can reduce the minimum data threshold required for generalization in BIRD models. 

In summary, \eqref{eq:memorization_app} reveals that the amount of data $|\mathcal D|$ needs to be at least exponential in the mutual information between current observations $C_{x,t}$ and the true test data $\varphi \sim P_0(\varphi)$ in the forward testing Markov process $\varphi \rightarrow \phi_t \rightarrow \cC$, to avoid memorization and achieve generalization.  

More formal statements of these results and their proofs are given below.

\subsection{Three types of phase transitions between memorization and generalization.}

One can think of the phase transition boundary between the memorization phase and generalization phase, determined by the equality condition $I(\mathcal{C}_{t,x}; \varphi) = \ln |\mathcal{D}|$ in several ways:  

\paragraph{A data phase transition.} First, in terms of the amount of data $|\mathcal D|$ for a fixed observation method $\mathcal C_{x,t}$ and fixed time $t$, as $\ln |\mathcal D|$ increases from below $I(\mathcal{C}_{t,x}; \varphi)$ to above $I(\mathcal{C}_{t,x}; \varphi)$, we transition from the memorization to generalization phase. The critical amount of data $D_c$ at which this transition occurs obeys the equation $I(\mathcal{C}_{t,x}; \varphi) = \ln D_c$. For $|\mathcal D| < D_c$ any BIRD model memorizes while for $| \mathcal D| > D_c$ it generalizes.

\paragraph{A temporal phase transition.} Second, we can consider time $t$ in the reverse process at a fixed amount of data $|\mathcal D|$ (and an observation method that does not depend explicitly on time).  At times $t$ near $1$ near the beginning of the reverse process, the noise to signal ratio $\sigma_t^2$ is large and therefore the mutual information $I(\mathcal{C}_{t,x}; \varphi)$ is small.  However at times $t$ near $0$ near the end of the reverse process, the noise to signal ratio $\sigma_t^2$ is small and therefore the mutual information $I(\mathcal{C}_{t,x}; \varphi)$ is large.  Therefore, as the reverse process proceeds from $t=1$ down to $t=0$, the mutual information $I(\mathcal{C}_{t,x}; \varphi)$ could in principle transition from below $\ln |\mathcal D|$ to above $\ln |\mathcal D|$, implying the reverse process could transition from a generalizing phase to a memorizing phase. The temporal boundary $t_b$ at which this transition occurs obeys the equation $I(\mathcal{C}_{t_b,x}; \varphi) = \ln |\mathcal D|$.  For $t > t_b$ any BIRD model (with a fixed, time-independent observation method) generalizes early in the reverse process,  while for $t < t_b$ it memorizes late in the reverse process. 

\paragraph{An observation phase transition.} Third, we can consider a one parameter family of observations $\mathcal C_{x,t}$ of varying information content,  at a fixed time $t$ and a fixed amount of data $|\mathcal D|$.  For example, let $n_{x,t}$ be the dimensionality of the observation $\mathcal C_{x,t}$ such that reducing $n_{x,t}$ monotonically drops measurements of $\phi_t$. As a concrete example, let $\mathcal C_{x,t}(\phi) = \phi_{\Omega_x}$ be the restriction of the image $\phi$ to a local $L$ by $L$ image patch $\Omega_x$ centered at $x$, yielding the restricted image $\phi_{\Omega_x}$ of dimension $n_{x,t} = L\times L \times C$. In this example, reducing $n_{x,t}$ by reducing the patch size $L$ strictly reduces the information content of the observation $\mathcal C_{x,t}$. Now when $n_{x,t}$ varies from large to small, the mutual information $I(\mathcal{C}_{t,x}; \varphi)$ can also vary from large to small.  Thus as $n_{x,t}$ varies from large to small, the mutual information $I(\mathcal{C}_{t,x}; \varphi)$ can in principle transition from above $\ln |\mathcal D|$ to below $\ln |\mathcal D|$, implying a transition from memorizing to generalizing as the dimensionality or information content of the observation $\mathcal C_{x,t}$ is reduced. The critical dimension $n_c$ at which this transition occurs obeys the equation $I(\mathcal{C}^{n_c}_{t,x}; \varphi) = \ln | \mathcal D |$. For $n_{x,t} > n_c$ any BIRD model memorizes for highly informative observations, while for $n_{x,t} < n_c$ it generalizes for information restricted observations.     

\paragraph{A general co-dimension one phase boundary.} Of course in full generality, in the {\it joint} space of the amount of data $|\mathcal D|$, the time $t$ in the reverse process, and any family of observations $\mathcal C_{x,t}$ of varying information content, the equation $I(\mathcal{C}_{t,x}; \varphi) = \ln |\mathcal D|$ determines a co-dimension one phase boundary between the memorization and generalization phase.  The three factors of amount of data, time in reverse process, and information content of observations will compete to determine which phase the BIRD model is in.  Small amounts of data, late or small times $t$, and large observational information content, each favor the memorization phase, while opposite directions favor generalization.  As we will see, BIRD models can combat memorization at late times in the reverse process, by reducing the information content of observations $\mathcal C_{x,t}$ as $t$ decreases. The observational restriction impedes the memorization at small times $t$, and if strong enough, favors generalization instead.

\subsection{\label{app:simple_entropy_calc}A simple calculation of the posterior entropy in the generalizing phase}
Consider a channel $\mathcal{C}_{x,t}$ and a set of codewords or images $\dD$ drawn i.i.d. from the distribution $P_0$. Given a input $\varphi$ the channel produces the output $\mathcal{C}_{x,t}$ with probability density $P(\mathcal{C}_{x,t} | \varphi)$. We can define the training set probability distribution conditioned on the channel outputs $P_{train}: \mathbb{R}^d  \rightarrow \mathbb{R}$:
\begin{equation}
    P_{train}(\mathcal{C}_{x,t}) = \frac{1}{|\mathcal{D}|}\sum_{\varphi \in \mathcal{D}} P(\mathcal{C}_{x,t} | \varphi) 
\end{equation}

We can then use Bayes rule to find the Bayesian probability over the training set given $\mathcal{C}_{x,t}$ $P_{train}: \mathcal{D} \times \mathbb{R}^{d} \rightarrow \mathbb{R}$.
\begin{equation}
    P_{train}(\varphi | \mathcal{C}_{x,t})  = \frac{P(\mathcal{C}_{x,t} | \varphi)}{P_{train}(\mathcal{C}_{x,t})} = \frac{P(\mathcal{C}_{x,t} | \varphi)}{\sum_{\varphi' \in \mathcal{D}} P(\mathcal{C}_{x,t} | \varphi') }
\end{equation}
Note that both $P_{train}$ distributions are random variables dependent on the codewords $\mathcal{D}$. We also define the test set probabilities, which are not random.
\begin{align}
    P_{test}(\mathcal{C}_{x,t}) &= \int d\varphi \: P_0(\varphi) P(\mathcal{C}_{x,t} | \varphi) \\
    P_{test}(\varphi | \mathcal{C}_{x,t}) &= \frac{P(\mathcal{C}_{x,t} | \varphi)}{P_{test}(\mathcal{C}_{x,t})}
\end{align}
We would like to understand the entropy of the Bayesian posterior on the training set:
\begin{align}
    S[P_{train}(\varphi | \mathcal{C}_{x,t})] &= \sum_{\varphi \in \mathcal{D}} -P_{train} (\varphi | \mathcal{C}_{x,t}) \ln P_{train} (\varphi | \mathcal{C}_{x,t}) \\
    &= \sum_{\varphi \in \mathcal{D}} -\frac{P(\cC | \varphi)}{\sum_{\varphi' \in \mathcal{D}}P(\cC | \varphi')} \ln \frac{P(\cC | \varphi)}{\sum_{\varphi' \in \mathcal{D}}P(\cC | \varphi')} \\
    &= \ln \Big(\sum_{\varphi \in \mathcal{D}}P(\cC | \varphi) \Big) - \frac{1}{\sum_{\varphi' \in \mathcal{D}}P(\cC | \varphi)}\sum_{\varphi \in \mathcal{D}} P(\cC | \varphi) \ln P(\cC | \varphi) \\
    &= \ln\dD + \ln \Big( \frac{\sum_{\varphi \in \mathcal{D}} P(\mathcal{C}_{x,t} | \varphi)}{\dD}\Big) \label{eq:S_train_means_start} \\
    &\hspace{20pt} - \frac{1}{\sum_{\varphi \in \mathcal{D}} P(\mathcal{C}_{x,t} | \varphi)/\dD}\frac{1}{\dD}\sum_{\varphi \in \mathcal{D}} P(\mathcal{C}_{x,t} | \varphi) \ln P (\mathcal{C}_{x,t} | \varphi) \label{eq:S_train_means_end}
\end{align}
All the terms in this sum are independent random variables. As the dataset becomes large, the central limit theorem lets us replace the empirical means with their actual means up to a $O(\frac{1}{\sqrt{\dD}})$ corrections, so long as their means are non-zero. This shows that
\begin{align}
    \frac{1}{\dD}\sum_{\varphi \in \mathcal{D}} P(\mathcal{C}_{x,t} | \varphi) &\rightarrow \EE_{\varphi \sim P_0}[P(\mathcal{C}_{x,t} | \varphi)]\\
    &= \int d\varphi P_0(\varphi) P(\mathcal{C}_{x,t} | \varphi) = P_{test}(\mathcal{C}_{x,t}) \label{eq:p_mean_to_p_test}\\
    -\frac{1}{\dD}\sum_{\varphi \in \mathcal{D}} P(\mathcal{C}_{x,t} | \varphi) \ln P(\mathcal{C}_{x,t} | \varphi) &\rightarrow \EE_{\varphi \sim P_0}[ - P(\mathcal{C}_{x,t} | \varphi) \ln P(\mathcal{C}_{x,t} | \varphi)] \\
    &= -\int d\varphi P_0(\varphi) P(\mathcal{C}_{x,t} | \varphi) \ln P(\mathcal{C}_{x,t} | \varphi) \\
    &\hspace{-80pt}= -\int d\varphi P_{test}(\mathcal{C}_{x,t}, \varphi) (\ln P_{test}(\mathcal{C}_{x,t}) + \ln \frac{P(\mathcal{C}_{x,t}, \varphi)}{P_0(\varphi) P_{test}(\mathcal{C}_{x,t})}) \\
    &\hspace{-80pt}= P_{test}(\mathcal{C}_{x,t}) \Big(-\ln P_{test}(\mathcal{C}_{x,t}) - \int d\varphi P_{test}(\varphi|\mathcal{C}_{x,t}) (\ln \frac{P_{test}(\varphi|\mathcal{C}_{x,t})}{P_0(\varphi)}) \Big) \\
    &\hspace{-80pt}= P_{test}(\mathcal{C}_{x,t}) \Big(-\ln P_{test}(\mathcal{C}_{x,t}) - D_{KL}(P_{test}(\varphi | \mathcal{C}_{x,t}) || P_0(\varphi))\Big) \label{eq:p_mean_to_DK}
\end{align}
Plugging equations \ref{eq:p_mean_to_p_test} and \ref{eq:p_mean_to_DK} into equation \ref{eq:S_train_means_start}-\ref{eq:S_train_means_end} we see,
\begin{align}
    S[P_{train}(\varphi | \mathcal{C}_{x,t})] &= \ln \dD + \ln P_{test}(\mathcal{C}_{x,t}) \\
    &+ \frac{1}{P_{test}(\mathcal{C}_{x,t})} P_{test}(\mathcal{C}_{x,t}) \Big(-\ln P_{test}(\mathcal{C}_{x,t}) - D_{KL}(P_{test}(\varphi | \mathcal{C}_{x,t}) || P_0(\varphi))\Big) \\
    &+ O(\frac{1}{\sqrt{\dD}})\\
    &= \ln \dD - D_{KL}(P_{test}(\varphi | \mathcal{C}_{x,t}) || P_0(\varphi)) + O(\frac{1}{\sqrt{\dD}})
\end{align}
When either average above becomes smaller than $O(\frac{1}{\sqrt{D}})$ the fluctuations might dominate over the mean. At the same time, we know that entropy must be strictly non-negative and should decrease as the information increases, so we can guess:
\begin{equation}
    S[P_{train}(\varphi | \mathcal{C}_{x,t})] = \max(\ln \dD - D_{KL}(P_{test}(\varphi | \mathcal{C}_{x,t}) || P_0(\varphi)), 0)
\end{equation}
Because the central limit theorem no longer applies, it is difficult to estimate the error in this formula. As it turns out, $O(1/\sqrt{D})$ is too optimistic in the memorizing phase. In the section below, we use the random energy model to prove that this formula becomes exact in the large $\dD$ limit with $O(\frac{\ln \ln \dD}{\ln \dD})$ finite size corrections. If we average this result over $\mathcal{C}_{x,t}$ drawn from $P_{test}$.
\begin{equation}
    \EE_{\mathcal{C}_{x,t} \sim P_{test}}[S[P_{train}(\varphi | \mathcal{C}_{x,t})]] = \max(\ln \dD - I(\varphi; \mathcal{C}_{x,t}), 0)
\end{equation}
\subsection{\label{app:main_proof} A proof from the Random Energy Model}
In this section, we give several proofs based on methods from statistical mechanics, inspired by recent approaches in the literature of diffusion models as in \cite{biroli2024dynamical, biroli2026speciation}. We start by defining an energy function $E$ and inverse temperature $\beta = 1/T$ so that $P_{train}$ corresponds to a Boltzmann distribution:
\begin{align}
     - \beta E(\varphi | \mathcal{C}_{x,t}) = \ln P(\mathcal{C}_{x,t} | \varphi) \label{eq:energy_def}\\
     P_{train}(\varphi | \mathcal{C}_{x,t}) = \frac{e^{-\beta E(\varphi | \mathcal{C}_{x,t})}}{\sum_{\varphi' \in \mathcal{D}} e^{-\beta E(\varphi' | \mathcal{C}_{x,t})}}
     \label{eq:Boltzmann_def}
\end{align}
Therefore, the energy $E(\varphi | \mathcal{C}_{x,t})$ is simply the temperature $T$ times the minus log likelihood of making the channel observation $\mathcal{C}_{x,t}$ at time $t$ under the forward diffusion process starting from a data point $\varphi$ at time $t=0$.  If we fix the channel observation $\mathcal{C}_{x,t}$, we can think of $E(\varphi | \mathcal{C}_{x,t})$ in \eqref{eq:energy_def} as an energy function on the training data $\varphi \in \mathcal D$, with a high (low) energy assigned to $\varphi$ indicating a low (high) likelihood that $\varphi$ was the origin of the channel observation $\mathcal{C}_{x,t}$ in the forward diffusion process. After normalization, the Boltzmann distribution in \eqref{eq:Boltzmann_def} is simply the posterior probability that a particular training data point $\varphi \in \mathcal D$ at time $t=0$ was the origin of the conditioned channel observation $\mathcal{C}_{x,t}$ in the forward diffusion process. 
\begin{theorem}[Independent Pointwise Posterior Entropy]\label{thm:pointwise_memorization}
    If $\mathcal{D}$ is a set of training images drawn i.i.d. from $P_0$ and $\mathcal{C}_{x,t}$ is a channel such that $P(E | \mathcal{C}_{x,t})$ has a rate function for large $E$, with a continuous first derivative. We also assume that both the rate function and its derivative have a finite number of maxima and minima at $E = O(1)$.
    \begin{equation}
        S[P_{train}(\varphi|\mathcal{C}_{x,t})] = \begin{cases}
            \ln |\mathcal{D}| - D_{KL}(P_{test}(\varphi | \mathcal{C}_{x,t}) || P_0(\varphi)) & \ln |\mathcal{D}| > D_{KL}(P_{test}(\varphi | \mathcal{C}_{x,t}) || P_0(\varphi)) \\
            0 & \ln |\mathcal{D}| \leq D_{KL}(P_{test}(\varphi | \mathcal{C}_{x,t}) || P_0(\varphi))
        \end{cases}
    \end{equation}
    up to $O(\frac{\ln \ln |\mathcal{D}|}{\ln |\mathcal{D}|})$ corrections.
\end{theorem}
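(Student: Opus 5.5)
Fix the observation $\mathcal{C}_{x,t}$ and treat the energies $E_\varphi = E(\varphi|\mathcal{C}_{x,t})$, $\varphi\in\mathcal{D}$, as $|\mathcal{D}|$ i.i.d. random variables with law $P(E|\mathcal{C}_{x,t})$ induced by $\varphi\sim P_0$. This is exactly a random energy model in the sense of \cite{derrida1980randomenergy,mezard2009info}, with partition function $Z=\sum_{\varphi\in\mathcal{D}} e^{-\beta E_\varphi}$. The posterior entropy is $S=\ln Z+\beta\langle E\rangle$, where $\langle\cdot\rangle$ is the Boltzmann average (\eqref{eq:Boltzmann_def}). I would work with the density of states. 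Write $P(E|\mathcal{C}_{x,t})\approx e^{-I(E)}$ with rate function $I$, scaled so that $\ln|\mathcal{D}|$ is the large parameter. The expected number of training points with energy near $E$ is then $\mathcal{N}(E)\approx\exp(\ln|\mathcal{D}|-I(E))$. Standard REM concentration (first and second moment on counts in energy bins) gives the following: for $E$ with $\ln|\mathcal{D}|-I(E)\gg \ln\ln|\mathcal{D}|$, the count is $\mathcal{N}(E)(1+o(1))$ with high probability. Where the exponent is negative, the bin is empty with high probability. Define the ground-state energy $E_0$ as the smallest root of $\ln|\mathcal{D}|=I(E_0)$. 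The minimum of the $E_\varphi$ then lies at $E_0+O(\ln\ln|\mathcal{D}|/I'(E_0))$; this is where the stated $O(\ln\ln|\mathcal{D}|/\ln|\mathcal{D}|)$ error will come from.

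Next I would evaluate $Z$ by Laplace's method restricted to $E\ge E_0$:
\[
\frac{1}{\ln|\mathcal{D}|}\ln Z \to \max_{E\ge E_0}\frac{\ln|\mathcal{D}|-I(E)-\beta E}{\ln|\mathcal{D}|}.
\]
The key identity is that the unconstrained maximizer $E^*$ satisfies $I'(E^*)=-\beta$. This $E^*$ is exactly the typical energy under the tilted law $P(E|\mathcal{C})e^{-\beta E}/\mathbb{E}[e^{-\beta E}]$, which is the law of $E$ under $P_{test}(\varphi|\mathcal{C}_{x,t})$, since $e^{-\beta E}=P(\mathcal{C}|\varphi)$. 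By the same identity, $I(E^*)=D_{KL}(P_{test}(\varphi|\mathcal{C}_{x,t})\|P_0)$ to leading order. This holds because the KL divergence equals $\mathbb{E}_{test}[\ln P(\mathcal{C}|\varphi)]-\ln P_{test}(\mathcal{C})=-\beta E^*-\ln\mathbb{E}[e^{-\beta E}]$, and the Legendre relation gives $-\ln\mathbb{E}[e^{-\beta E}]=I(E^*)+\beta E^*$. The analysis then splits into two cases.

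In Case 1 (generalization), $\ln|\mathcal{D}|>I(E^*)$, so $E^*>E_0$ lies inside the populated region. Here $\ln Z\approx\ln|\mathcal{D}|-I(E^*)-\beta E^*$ and $\langle E\rangle\approx E^*$, which gives $S\approx\ln|\mathcal{D}|-I(E^*)=\ln|\mathcal{D}|-D_{KL}$. This recovers the CLT calculation of App.~\ref{app:simple_entropy_calc} as a consistency check.

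In Case 2 (memorization), $E^*<E_0$, so the maximum is pinned at the boundary $E_0$ and the system condenses. Since $I'(E_0)>-\beta$, the weights of states above $E_0$ decay geometrically, with rate $e^{-(\beta+I'(E_0))\Delta E}$ relative to the ground state. The Boltzmann measure therefore sits on $O(1)$ lowest states, and in fact on essentially one state once the gaps are resolved. I would show the entropy is $O(\ln\ln|\mathcal{D}|)$ absolutely, which is $o(\ln|\mathcal{D}|)$, i.e.\ zero at the stated precision. The boundary case $\ln|\mathcal{D}|=D_{KL}$ is continuous, so it can be assigned to either branch.

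The main obstacle is making the extreme-value step rigorous near the edge $E_0$. This requires controlling fluctuations of the count of states in the lowest bins, where $\mathcal{N}(E)$ is only polylogarithmic, and showing that these bins contribute at most $O(\ln\ln|\mathcal{D}|)$ to both $\ln Z$ and $\beta\langle E\rangle$. I would handle it with a Poisson-process approximation for the lowest order statistics, obtained by linearizing $I$ at $E_0$ using the assumed continuous derivative. The hypothesis of finitely many extrema of $I$ and $I'$ is used to guarantee a unique relevant root $E_0$ and a unique saddle $E^*$, so that the Laplace estimates are not spoiled by multiple competing branches.
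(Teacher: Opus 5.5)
Your proposal follows the paper's proof. The paper also treats the energies at fixed $\mathcal{C}_{x,t}$ as a random energy model and shows the density of states concentrates on $\exp(\ln|\mathcal{D}| - I(E))$ wherever that exponent is positive. It likewise evaluates $Z$ by a saddle point restricted to the populated region, and uses the finite-extrema hypothesis to argue that, once the annealed saddle leaves that region, the quenched maximizer is pinned at the edge where the micro-canonical entropy vanishes, giving zero entropy density.

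The differences are minor. The paper identifies the generalizing-phase entropy with $\ln|\mathcal{D}| - D_{KL}(P_{test}(\varphi|\mathcal{C}_{x,t})\,\|\,P_0(\varphi))$ by evaluating the annealed entropy directly at the physical temperature, where $\mathbb{E}[e^{-\beta E}] = P_{test}(\mathcal{C}_{x,t})$, rather than through your Legendre relation. It also does not attempt your finer Poisson-process treatment of the edge. One correction there: that analysis gives Poisson--Dirichlet weights spread over $O(1)$ lowest states, not concentration on a single state. The resulting $O(1)$ entropy is still harmless at the stated precision.
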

\begin{proof}
We will begin by considering the case where $\mathcal{C}_{x,t}$ is drawn from the test set. This means $\mathcal{C}_{x,t}$, as a random channel observation, is statistically {\it independent} of the random training data $\mathcal{D}$. If $\mathcal{C}_{x,t}$ were drawn from the training set, then the dataset sample that generates $\cC$ would not be identically distributed and would break the i.i.d assumption in hypothesis of this theorem. We will first fix $\mathcal{C}_{x,t}$ to be an arbitrary value, then we will average over the distribution of $\mathcal{C}_{x,t}$. 
Now since the data points $\varphi \in \mathcal D$ are drawn i.i.d. from $P_0(\varphi)$, the energies $E(\varphi | \mathcal{C}_{x,t})$ of each data point $\varphi$ can be thought of as independent random variables. 
Therefore, the Boltzmann distribution in \eqref{eq:Boltzmann_def} is equivalent to an independent random energy model (REM) as first described by \cite{derrida1980randomenergy} (see also \cite{mezard2009info} for a nice pedagogical treatment).
The REM is solvable by a saddle point approximation which becomes exact in a ``thermodynamic" limit, which in our case, corresponds to a large amount of training data, or large $|\mathcal D|$. We define 
\begin{equation}
    P(E' | \mathcal{C}_{x,t}) = \int d\varphi \: \delta(E' - E(\varphi | \mathcal{C}_{x,t})) P_0(\varphi)
\end{equation}
This is simply the distribution of energies $E'$ we expect to see conditioned on a fixed channel observation $\mathcal{C}_{x,t}$ where the randomness comes from the distribution of a single training point $P_0(\varphi)$.
In the REM, the log number of states, $\ln \dD$, is the extensive parameter, originally corresponding to the number of spins.  For ML readers not familiar with the statistical mechanics language, the extensive parameter, $\ln \dD$, is used to group together different values based on their relative size to it.
\begin{align}
    \text{intensive} &\rightleftharpoons O(1) \\
    \text{extensive} &\rightleftharpoons O(\ln \dD)  \\
    \text{exponentially large} &\rightleftharpoons O(e^{\alpha \ln \dD}) = O(\dD^{\alpha}) \\
    \text{exponentially small} &\rightleftharpoons O(e^{-\alpha \ln \dD}) = O(\frac{1}{\dD^{\alpha}})
\end{align}
In some strict sense, this means that we are imagining that all of the variables in the problem have some dependence on $\ln \dD$ which allows us to take the limit $\ln \dD \rightarrow \infty$, called the thermodynamic limit. To make this analysis quite general, we will avoid explicit expressions for this dependence on $\ln \dD$ and only express asymptotic scaling. We would like to define this limit so there are sensible answers for the entropy of the posterior and relevant quantities at polynomial orders in $\ln \dD$. By that we mean that we are interested in finding asymptotic series as $\ln \dD \rightarrow \infty$ like $\EE[O] = \sum_{n=-\infty}^{N} O_n (\ln \dD)^n$. In this context, quantities which are exponentially small do not show up in any order of these asymptotic series since they decay faster than any rational function. Therefore, exponentially small quantities can simply be set to 0.   For the REM to have a sensible thermodynamic limit, we require that the energy is extensive. This is equivalent to the existence of a function $I$ called the rate function defined in the following way:
\begin{align}
    \epsilon(\varphi | \cC) &= \frac{E(\varphi | \cC)}{\ln \dD} \\
    I(\epsilon | \mathcal{C}_{x,t}) &:=\frac{-\ln P(\epsilon \ln |\mathcal{D}| \Big| \mathcal{C}_{x,t})}{\ln|\mathcal{D}|} = \Omega(1) \label{eq:rate_fnc_definition} \\
    P(\epsilon | \cC) &= e^{- \ln \dD I(\epsilon | \cC)}
\end{align}
$\epsilon$ is called the intensive energy because we removed the $\ln \dD$ scaling of the energy by dividing. We also assume that the inverse temperature is intensive. The existence of the rate function implies that there is sensible probability distribution for the intensive energy.


If we unwrap the asymptotic expression in \ref{eq:rate_fnc_definition}, we find
\begin{align}
    \frac{-\ln P(\epsilon \ln |\mathcal{D}| \Big| \mathcal{C}_{x,t})}{\ln|\mathcal{D}|}  &\geq C_1 \\
    \ln P(\epsilon \ln |\mathcal{D}| \Big| \mathcal{C}_{x,t}) &\leq - C_1 \ln \dD \\
    P(\epsilon \ln \dD \Big| \mathcal{C}_{x,t}) &\leq C_2 e^{- C_1 \ln \dD} \\
    \Leftarrow P(E| \mathcal{C}_{x,t}) &\leq C_2 e^{- C_1 E - o(1)}.
\end{align}
We'll now look at the "density of states" which measures how many energies sit inside of a window around $\epsilon$.
\begin{equation}
    \rho(\epsilon | \mathcal{C}_{x,t}) = \frac{1}{W}\sum_{\varphi \in \mathcal{D}} \mathbf{1}_{\epsilon(\varphi | \mathcal{C}_{x,t}) \in [\epsilon, \epsilon+W]}
\end{equation}
where $\mathbf{1}_{E(\varphi | \mathcal{C}_{x,t}) \in [\epsilon, \epsilon+W]}$ is the indicator function
\begin{equation}
    \mathbf{1}_{E(\varphi | \mathcal{C}_{x,t}) \in [\epsilon, \epsilon+W]} = \begin{cases}
        1 & \epsilon(\varphi | \mathcal{C}_{x,t}) \in [\epsilon, \epsilon+W] \\
        0 & \epsilon(\varphi | \mathcal{C}_{x,t}) \notin [\epsilon, \epsilon+W]
    \end{cases}
\end{equation}
Saddle-point approximation leads to an expected density of states 



\begin{align}
    \mathbb{E}[\rho(\epsilon | \mathcal{C}_{x,t})] &= \mathbb{E}[\sum_{\varphi \in \mathcal{D}} \mathbf{1}_{E(\varphi | \mathcal{C}_{x,t}) \in [\epsilon, \epsilon+W]}]/W= |\mathcal{D}| \mathbb{E}_{\varphi \sim P_0}[\mathbf{1}_{E(\varphi | \mathcal{C}_{x,t}) \in [\epsilon, \epsilon+W]}]/W \\
    &= \int_{\epsilon}^{\epsilon + W} \frac{d\epsilon}{W} \: e^{\ln |\mathcal{D}| \Big(1 - I(\epsilon | \mathcal{C}_{x,t})\Big)} \\
    &= \begin{cases}
        e^{\ln|\mathcal{D}|(1 - I(\epsilon| \mathcal{C}_{x,t})) + o(1)} & 1 > I(\epsilon| \mathcal{C}_{x,t}) \\
        0 + O(e^{-\ln |\mathcal{D}|}) & 1 \leq I(\epsilon | \mathcal{C}_{x,t})
    \end{cases}
\end{align}
Since $\mathbf{1}_{E(\varphi | \mathcal{C}_{x,t}) \in [\epsilon, \epsilon+W]}$ is its own square, the variance of $\rho$ is equal to its expectation which implies the standard deviation is $O(\sqrt{\dD})$ where the leading order term is $O(\dD)$. Therefore, the leading order term of the density of states is deterministic
\begin{equation}
    \rho(\epsilon | \mathcal{C}_{x,t}) = \begin{cases}
        e^{\ln|\mathcal{D}|(1 - I(\epsilon| \mathcal{C}_{x,t}))} + O(e^{\ln|D|/2}) & 1 > I(\epsilon| \mathcal{C}_{x,t}) \\
        0 + O(e^{- \ln |\mathcal{D}|}) & 1 \leq I(\epsilon | \mathcal{C}_{x,t})
    \end{cases}
\end{equation}
The micro-canonical entropy $S(\epsilon)$ is just the log density of states. It is deterministic to all orders in $\ln \dD$.
\begin{align}
    S(\epsilon | \mathcal{C}_{x,t}) &= \ln \rho(\epsilon | \cC) \\
    &= \begin{cases}
        \ln\Big(e^{\ln|\mathcal{D}|(1 - I(\epsilon| \mathcal{C}_{x,t}))} + O(e^{\ln|D|/2})\Big) & 1 > I(\epsilon| \mathcal{C}_{x,t}) \\
        \ln(0 + O(e^{-\ln |\mathcal{D}|})) & 1 \leq I(\epsilon| \mathcal{C}_{x,t})
    \end{cases} \label{eq:before_log_trick}\\
    &= \begin{cases}
        \ln|\mathcal{D}|(1 - I(\epsilon| \mathcal{C}_{x,t})) + \ln(1 + O(e^{-\ln|D|/2})) & 1 > I(\epsilon| \mathcal{C}_{x,t}) \\
        -\infty + O(e^{-\ln |\mathcal{D}|}) & 1 \leq I(\epsilon| \mathcal{C}_{x,t})
    \end{cases} \label{eq:after_log_trick}\\
    &= \begin{cases}\label{eq:deterministic_entropy}
        \ln|\mathcal{D}|(1 - I(\epsilon| \mathcal{C}_{x,t})) + O(e^{-\ln |\mathcal{D}|/2}) & 1 > I(\epsilon| \mathcal{C}_{x,t}) \\
        -\infty + O(e^{-\ln |\mathcal{D}|}) & 1 \leq I(\epsilon| \mathcal{C}_{x,t})
    \end{cases}
\end{align}
Between equation \ref{eq:before_log_trick} and equation \ref{eq:after_log_trick}, we multiplied and divided by the deterministic contribution to the density of states and then used the logarithm to separate the deterministic contribution into its own term. From equation \eqref{eq:deterministic_entropy}, we see that the entropy is deterministic up to exponentially small corrections.
We now define the annealed micro-canonical entropy density $s_{ann, MC}(\epsilon | \mathcal{C}_{x,t})$ and the micro-canonical entropy density $s(\epsilon | \cC)$ as follows:

\begin{align}
    s_{ann,MC}(\epsilon | \mathcal{C}_{x,t}) &:= \frac{\ln \mathbb{E}[\rho(\epsilon | \mathcal{C}_{x,t})]}{\ln |\mathcal{D}|}  = 1 - I(\epsilon | \mathcal{C}_{x,t})\\
    s(\epsilon | \mathcal{C}_{x,t}) := \frac{S(\epsilon | \cC)}{\ln \dD} &= \begin{cases}
        s_{ann, MC}(\epsilon | \mathcal{C}_{x,t}) & s_{ann, MC}(\epsilon | \mathcal{C}_{x,t}) > 0 \\
        -\infty & s_{ann, MC}(\epsilon | \mathcal{C}_{x,t}) \leq 0
    \end{cases}\label{eq:quench_to_ann_entropy}
\end{align}
Our goal is to calculate the entropy of the Gibbs distribution which is the posterior over training images. This requires we switch from the micro-canonical ensemble perspective to the canonical ensemble perspective. We can define the standard thermodynamic functions for the canonical ensemble:
\begin{align*}
    \text{The Partition Function} \rightleftharpoons Z(\beta | \cC) &:= \sum_{\varphi \in \mathcal{D}} e^{-\beta E(\varphi | \cC)} \\
        \text{Gibbs Distribution} \rightleftharpoons P_{\beta} (\varphi | \cC) &:= \frac{e^{-\beta E(\varphi | \cC)}}{Z} \\
    \text{The Free Energy} \rightleftharpoons F(\beta | \cC) &:= \frac{\ln Z(\beta | \cC)}{-\beta} \\
    \text{The Expected Energy} \rightleftharpoons E(\beta | \cC) &:= \sum_{\varphi \in \mathcal{D}} E(\varphi | \cC )P_{\beta}(\varphi | \cC)\\
    \text{The Canonical Entropy} \rightleftharpoons S(\beta | \cC) &= \sum_{\varphi} -P_{\beta}(\varphi | \cC) \ln P_{\beta}(\varphi | \cC) \\
    &= \sum_{\varphi} -\frac{e^{-\beta E(\varphi | \cC)}}{Z} (-\ln Z - \beta E(\varphi | \cC)) \\
    &= \ln Z + \beta \sum_{\varphi} E(\varphi | \cC) P_{\beta}(\varphi | \cC) \\
    &= -\beta F(\beta | \cC) + \beta E(\beta | \cC)
\end{align*}
In models with disorder, like the random energy model, you often look at the disorder averaged or "annealed" partition function. We can define the annealed versions of the functions in the canonical ensemble. Since I've proven that some of the above functions are deterministic, I can replace them with their annealed averages where appropriate.
\begin{align*}
    \text{Annealed Partition Function} \rightleftharpoons Z_{ann}(\beta | \cC) &:= \EE_{\mathcal{D}}[\sum_{\varphi \in \mathcal{D}} e^{-\beta E(\varphi | \cC)}] \\
    \text{(Canonical) Annealed Free Energy} \rightleftharpoons F_{ann}(\beta | \cC) &:= \frac{\ln Z_{ann}(\beta | \cC)}{-\beta} \\
    \text{(Canonical) Annealed Expected Energy} \rightleftharpoons E_{ann}(\beta | \cC) &:= \frac{1}{Z_{ann}}\EE_{\mathcal{D}}[\sum_{\varphi \in \mathcal{D}} E(\varphi | \cC )e^{- \beta E(\varphi | \cC)}]\\
    \text{(Canonical) Annealed Canonical Entropy} \rightleftharpoons s_{ann, MC}(\beta | \cC) &= -\beta F_{ann}(\beta | \cC) + \beta E_{ann}(\beta | \cC)
\end{align*}
The free energy, expected energy, and canonical entropy are extensive so its natural to define the intensive densities for each.
\begin{align}
    \text{The Free Energy Density} \rightleftharpoons  f(\beta | \cC) &:= \frac{F(\beta | \cC)}{\ln \dD} \\
    \text{Expected Energy Density} \rightleftharpoons \epsilon(\beta | \cC) &:= \frac{E(\beta | \cC)}{\ln \dD}\\
    \text{The Canonical Entropy Density} \rightleftharpoons s(\beta | \cC) &:= \frac{S(\beta | \cC)}{\ln \dD}
\end{align}
Because this model has an exact Saddle point in the thermodynamic limit, the canonical ensemble functions will just be Legendre transforms of their micro-canonical counter parts, but the regions where density of states drop to zero constrain the energy. Similar to the micro-canonical entropy, they will be deterministic to all orders in $\ln \dD$. We can see this by using saddle point approximation to find the partition function.


\begin{align*}
   Z(\beta | \mathcal{C}_{x,t}) &= \sum_{\varphi \in \mathcal{D}} e^{-\beta E(\varphi | \cC)}  = \int_{\RR} d\epsilon \: e^{-\ln|\mathcal{D}| \beta \epsilon} \rho(\epsilon | \cC) \\
   &= \int_{s_{ann, MC}(\epsilon | \mathcal{C}_{x,t}) > 0} d\epsilon \: e^{\ln|\mathcal{D}| (s_{ann, MC}(\epsilon | \mathcal{C}_{x,t})  - \beta \epsilon)} + O(e^{\ln \dD /2}) \\
   &= \exp(\ln \dD \Big(\max_{\epsilon \: s.t.\: s_{ann, MC}(\epsilon) > 0} s_{ann, MC}(\epsilon | \mathcal{C}_{x,t})  - \beta \epsilon \Big) + O(\ln \ln \dD))
\end{align*}
The $O(\ln \ln |D|)$ error term above accounts for the normalization of the Gaussian fluctuations around the saddle point. Expanding up to second order in $\epsilon' - \epsilon(\beta)$ leads to a Gaussian with variance $\sigma^2 = O((\ln|D|)^{-1})$. The normalization of a Gaussian integral is $(2 \pi \sigma^2)^{-1/2} = \exp(-\ln \sigma + O(1)) = \exp(O(\ln \ln \dD))$, which produces this term. If $\epsilon(\beta)$ lies at a boundary of integration, the lowest order fluctuation is exponential and the normalization still scales as $O((\ln\dD)^{-1}) = \exp(O(\ln \ln \dD))$, so the same logic applies.  Now, using this result to find the free energy density:
\begin{equation}
    -\beta f(\beta | \cC) = \max_{\epsilon \: s.t.\: s_{ann, MC}(\epsilon) > 0} s_{ann, MC}(\epsilon | \mathcal{C}_{x,t})  - \beta \epsilon + O(\frac{\ln \ln \dD}{\ln \dD}).
\end{equation}
We can similarly solve for the energy density
\begin{align*}
    \epsilon(\beta | \cC) &= \frac{1}{Z} \int_{\RR} d\epsilon \: \rho(\epsilon | \cC) \epsilon e^{-\beta \ln \dD \epsilon} \\
    &= \frac{1}{Z} \int_{s_{ann, MC}(\epsilon) > 0} \epsilon e^{\ln \dD(s_{ann, MC}(\epsilon| \cC) -\beta \epsilon)} \\
    &= \text{argmax}_{s_{ann, MC}(\epsilon) > 0} s_{ann, MC}(\epsilon| \cC) -\beta \epsilon + O(\frac{\ln \ln \dD}{\ln \dD})
\end{align*}
For the rest of the proof, I will omit the $O(\ln \ln |D|/ \ln |D|)$ term. It should be assumed that all canonical ensemble functions have corrections of that order. We make one last transformation from micro-canonical to canonical transformation for annealed thermodynamic functions. As always in thermodynamics, saddle point approximation shows that the annealed canonical functions concentrate to the annealed micro-canonical functions in the large $\ln \dD$ limit. I will elide some details since this is the third similar calculation. Note that $\epsilon$ is now allowed to range over the entire real line instead of only where $s_{ann, MC}(\epsilon) > 0$. 
\begin{align*}
    Z_{ann}(\beta | \cC) &:= \EE_{\mathcal{D}}[\sum_{\varphi \in \mathcal{D}} e^{-\beta E(\varphi | \cC)}] = |\mathcal{D}| \EE_{\varphi}[  e^{-\beta E(\varphi | \cC)}] \\
    &= |\mathcal{D}| \int d\epsilon   e^{\ln \dD (s_{ann, MC}(\epsilon | \cC) -\beta \epsilon)} = \exp (\ln \dD \max_{\epsilon} s_{ann, MC}(\epsilon| \cC) -\beta \epsilon) \\
    \implies f_{ann}(\beta | \cC) &= \frac{\ln Z_{ann}(\beta | \cC)}{-\beta \ln \dD} \rightarrow \frac{-1}{\beta}\max_{\epsilon} s_{ann, MC}(\epsilon| \cC) -\beta \epsilon \\
    \implies \epsilon_{ann}(\beta | \cC) &= \text{argmax}_\epsilon s_{ann, MC}(\epsilon| \cC) -\beta \epsilon \\
    \implies s_{ann}(\beta | \cC) &= - \beta f_{ann}(\beta | \cC) + \beta \epsilon_{ann}(\beta | \cC) = s_{ann, MC}(\epsilon_{ann}(\beta)| \cC)
\end{align*}

Note the difference between the annealed and actual (quenched) energy in the thermodynamic limit of the canonical ensemble
\begin{align*}
    \epsilon_{ann}(\beta) &= \text{argmax}_{\epsilon} &s_{ann, MC}(\epsilon| \cC) -\beta \epsilon \\
    \epsilon(\beta) &= \text{argmax}_{s_{ann,MC}(\epsilon) > 0} &s_{ann, MC}(\epsilon| \cC) -\beta \epsilon
\end{align*}
If $\epsilon_{ann}(\beta) = \epsilon(\beta)$, then $s_{ann}(\epsilon_{ann}(\beta)) = s_{ann}(\epsilon(\beta)) \geq 0$. On the other hand, if $\epsilon_{ann}(\beta) \neq \epsilon(\beta)$ then we can conclude $s_{ann}(\epsilon_{ann}(\beta)) < 0$ since no $\epsilon$ with $s_{ann}(\epsilon) \geq 0$ was able to achieve this minimum. We now must apply more of our hypothesizes to determine what happens to $s_{ann, MC}(\epsilon(\beta))$ when actual (quenched) and annealed energies disagree.

By assumption, $I(\epsilon)$ and $I'(\epsilon)$ have a finite number of local maxima and minima that must lie between some $E_a$ and $E_b$. This means all local minima and maxima of $s_{ann,MC}(\epsilon)$ and $s_{ann,MC}'(\epsilon)$ must lie between $\frac{E_a}{\ln \dD}$ and $\frac{E_b}{\ln \dD}$. Outside this sub-extensive ($O(\frac{1}{\ln \dD})$) region, the derivative of entropy has a constant sign. Since $P(\ln \dD \epsilon) = e^{- \ln\dD I(\epsilon)}$ and normalization implies $\lim_{E \rightarrow - \infty} P(E) = 0$, $\lim_{\epsilon \rightarrow -\infty} I(\epsilon) = \infty$. Therefore, $I'(\epsilon)$ must be negative and $s_{ann, MC}'(\epsilon)$ must be positive below $\frac{E_a}{\ln \dD}$. By similar logic, we conclude that $s_{ann, MC}'(\epsilon)$ is negative above $\frac{E_b}{\ln \dD}$. Since $\beta > 0$ and $\epsilon = O(1)$, we conclude that $\epsilon_{ann}(\beta) < \frac{E_a}{\ln \dD}$ and $\epsilon(\beta) < \frac{E_a}{\ln \dD}$ for all beta. Since the derivative is monotonic in this region, there can only be one solution to the equation $\frac{\partial s_{ann, MC}}{\partial \epsilon} = \beta$ which $\epsilon_{ann}$ must satify. Therefore, if $\epsilon(\beta) \neq \epsilon_{ann}(\beta)$, then $\frac{\partial s_{ann, MC}}{\partial \epsilon}(\epsilon(\beta)) \neq \beta$. This implies $\epsilon(\beta)$ cannot lie at a local maximum of the optimization problem and therefore must lie on the boundary where $s_{ann,MC}(\epsilon(\beta)) = 0$. So, we see
\begin{equation*}
    \epsilon(\beta) \neq \epsilon_{ann}(\beta) \implies s_{ann,MC}(\epsilon(\beta)) = 0
\end{equation*}
We can combine these results with equation \ref{eq:quench_to_ann_entropy} to get the entropy.
\begin{equation}
    s(\beta | \mathcal{C}_{x,t})  = \begin{cases}
        s_{ann, MC}(\epsilon_{ann}(\beta | \cC) | \mathcal{C}_{x,t}) & s_{ann, MC}(\epsilon_{ann}(\beta| \cC) | \mathcal{C}_{x,t}) > 0 \\
        0 & s_{ann, MC}(\epsilon_{ann}(\beta| \cC)| \mathcal{C}_{x,t}) \leq 0
    \end{cases}
\end{equation}
So, we can see this thermodynamic model has two different phases. One where the Gibbs distribution is spread over many states and $s_{ann, MC}(\epsilon_{ann}(\beta) | \mathcal{C}_{x,t}) > 0$ and one where all the probability concentrates on a single state and $s_{ann, MC}(\epsilon_{ann}(\beta) | \mathcal{C}_{x,t}) \leq 0$. These correspond exactly to the generalizing and the memorizing phase described in the main text. Note that simply calculating $s_{ann, MC}(\epsilon(\beta) | \mathcal{C}_{x,t})$ for a given temperature and $\cC$ is enough to determine which phase the model is in. Work previously done by \cite{biroli2024dynamical} explicitly solved for $s(\epsilon)$ and $\epsilon(\beta)$ in the case of isotropic Gaussian data.

However, stepping back from these saddle-point equations and examining the context of the problem allows us to work in more generality. One can generically write the canonical annealed canonical entropy in terms of the temperature using the annealed free energy and annealed energy


\begin{align}
    S_{ann}(\beta | \mathcal{C}_{x,t}) &= -\beta F_{ann}(\beta) + \beta E_{ann}(\beta) = \ln \mathbb{E}[Z| \mathcal{C}_{x,t}] + \beta \frac{\mathbb{E}[E Z| \mathcal{C}_{x,t}]}{\mathbb{E}[Z| \mathcal{C}_{x,t}]} \\
    &= \ln |\mathcal{D}| + \ln \mathbb{E}_{E}[e^{-\beta E}| \mathcal{C}_{x,t}] + \beta \frac{\mathbb{E}_{E}[E e^{-\beta E}| \mathcal{C}_{x,t}]}{\mathbb{E}_{E}[e^{-\beta E}| \mathcal{C}_{x,t}]}
\end{align}
Usually, evaluating this expression would require knowing the full moment generating function for the $E$ distribution, which would be as difficult as solving for $s(\epsilon)$. However, we are not interested in the behavior of our model at arbitrary temperatures. Recall that equation \ref{eq:energy_def} defines a specific temperature for the model.
\begin{align}
    \mathbb{E}_{E}[e^{-\beta E} | \mathcal{C}_{x,t}] &= \int d\varphi \: P_0(\varphi) e^{-\beta E(\varphi | \mathcal{C}_{x,t})} \\
    &= \int d\varphi \: P_0(\varphi) P(\mathcal{C}_{x,t} | \varphi) = P_{test}(\mathcal{C}_{x,t})
\end{align}
So, we see that at the temperature given by \ref{eq:energy_def}, the moment generating function of the energy distribution is just the test set probability of $\mathcal{C}_{x,t}$. Similarly, for the energy expectation, we find
\begin{align}
    \frac{\beta \mathbb{E}_{E}[E e^{-\beta E}| \mathcal{C}_{x,t}]}{\mathbb{E}_{E}[e^{-\beta E}| \mathcal{C}_{x,t}]} &= \int d\varphi \: \frac{P_0(\varphi)}{P_{test}(\mathcal{C}_{x,t})} \beta E(\varphi | \mathcal{C}_{x,t}) e^{-\beta E(\varphi | \mathcal{C}_{x,t})} \\
    &= \int d\varphi \: \frac{P_0(\varphi)}{P_{test}(\mathcal{C}_{x,t})} P(\mathcal{C}_{x,t} | \varphi) \: (- \ln P(\mathcal{C}_{x,t} | \varphi)) \\
    &= \int d\varphi \:P_{test} (\varphi | \mathcal{C}_{x,t})\: (- \ln P(\mathcal{C}_{x,t} | \varphi))
\end{align}
Combining everything, we find
\begin{align*}
    \ln\dD \: \: s_{ann,MC}(\epsilon_{ann}(\beta) | \cC) &= S_{ann}(\beta | \mathcal{C}_{x,t}) \\
    &= \ln |\mathcal{D}| + \ln P_{test}(\mathcal{C}_{x,t}) + \int d\varphi \:P_{test} (\varphi | \mathcal{C}_{x,t})\: (- \ln P(\mathcal{C}_{x,t} | \varphi)) \\
    &= \ln |\mathcal{D}| + \int d\varphi \:P_{test} (\varphi | \mathcal{C}_{x,t})\: (- \ln \frac{P_{test}(\varphi | \mathcal{C}_{x,t})}{P_0(\varphi)}) \\
    &= \ln |\mathcal{D}| - D_{KL}(P_{test}(\varphi | \mathcal{C}_{x,t}) || P_0(\varphi))
\end{align*}
Plugging this into the formula for the actual (quenched) entropy,
\begin{equation}
    S(\beta | \mathcal{C}_{x,t}) = \max\Big(0, \ln |\mathcal{D}| - D_{KL}(P_{test}(\varphi | \mathcal{C}_{x,t}) || P_0(\varphi))\Big)
\end{equation}
From this, we can conclude the pointwise collapse condition.
\end{proof}

This pointwise condition is useful for situations where $\mathcal{C}_{x,t}$ is not drawn from a known distribution. So, long as it is drawn independently from $\mathcal{D}$, we can calculate the posterior entropy. If $\mathcal{C}_{x,t}$ is drawn from the test set, the expression further simplifies:
\begin{theorem}[Test Set Averaged Posterior Entropy]\label{thm:average_memorization_condition}
    If $\mathcal{D}$ is a set of training images drawn i.i.d. from $P_0$ and $\mathcal{C}$ is a channel  such that $P(E | \mathcal{C}_{x,t})$ has sub-exponential tails and $\sqrt{Var(D_{KL}(P_{test}(\varphi | \mathcal{C}_{x,t}) || P_0(\varphi)))} = o(1)$, then when a noisy image is drawn from the test set
    \begin{equation}
        \mathbb{E}_{\mathcal{C}_{x,t} \sim P_{test}}[S[P_{train}(\varphi|\mathcal{C}_{x,t})]] = \begin{cases}
            \ln |\mathcal{D}| - I(\varphi; \mathcal{C}_{x,t}) & \ln |\mathcal{D}| > I(\varphi; \mathcal{C}_{x,t}) \\
            0 & \ln |\mathcal{D}| \leq I(\varphi; \mathcal{C}_{x,t})
        \end{cases}
    \end{equation}
    up to $o(1)$ corrections.
\end{theorem}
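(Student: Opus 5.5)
The plan is to reduce the averaged statement to the pointwise result, Theorem \ref{thm:pointwise_memorization}, and then average over $\cC \sim P_{test}$. Write $K(\cC) := D_{KL}(P_{test}(\varphi | \cC) \,\|\, P_0(\varphi))$. A test observation $\cC$ is generated from a fresh $\varphi \sim P_0$ independent of $\mathcal{D}$, so conditioning on $\cC$ leaves $\mathcal{D}$ i.i.d. from $P_0$. Theorem \ref{thm:pointwise_memorization} therefore applies for each fixed $\cC$ and gives
\begin{equation*}
S[P_{train}(\varphi|\cC)] = \max\big(0, \ln\dD - K(\cC)\big) + O\Big(\tfrac{\ln\ln\dD}{\ln\dD}\Big).
\end{equation*}
The sub-exponential tail hypothesis on $P(E|\cC)$ is what supplies the rate-function structure used in that theorem. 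I would check that the error term holds uniformly in $\cC$, or at least on a set of $P_{test}$-probability $1-o(1)$. Off that set, the trivial bound $0 \le S \le \ln\dD$ controls the contribution, provided the exceptional mass is $o(1/\ln\dD)$.

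Next I take expectations. The standard identity $\EE_{\cC\sim P_{test}}[K(\cC)] = I(\varphi;\cC)$ follows by writing $I$ as the average KL between the posterior and the prior. The function $g(k) = \max(0, \ln\dD - k)$ is $1$-Lipschitz, which gives
\begin{equation*}
\big|\EE[g(K)] - g(\EE K)\big| \le \EE|K - \EE K| \le \sqrt{\Var(K)} = o(1),
\end{equation*}
using the variance hypothesis and Jensen/Cauchy--Schwarz. Since $g(\EE K) = \max(0, \ln\dD - I(\varphi;\cC))$, this is exactly the claimed piecewise formula. The $O(\ln\ln\dD/\ln\dD)$ pointwise correction is itself $o(1)$, so the total error is $o(1)$.

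I expect the main obstacle to be the interchange of the pointwise asymptotics with the average over $\cC$. Theorem \ref{thm:pointwise_memorization} is stated for a fixed observation, but its constants depend on $\cC$ through the rate function, and atypical $\cC$ could have poorly behaved energy distributions. My first attempt would use the sub-exponential tail assumption to get uniform control of the density-of-states concentration and the saddle-point error over a high-probability set of $\cC$. Outside that set I would fall back on $0\le S\le\ln\dD$ and a tail bound on $P_{test}$. If uniformity fails, the weaker route is dominated convergence applied to $S/\ln\dD$: this yields the result to leading order $o(\ln\dD)$, and the Lipschitz argument then sharpens it to $o(1)$ wherever the pointwise error is uniform.
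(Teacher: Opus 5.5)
Your proposal is correct and follows the same route as the paper: invoke the pointwise theorem for test observations drawn independently of $\mathcal{D}$, use the variance hypothesis to replace $D_{KL}(P_{test}(\varphi|\mathcal{C}_{x,t})\,\|\,P_0(\varphi))$ by its mean, and identify that mean with $I(\varphi;\mathcal{C}_{x,t})$. Your $1$-Lipschitz bound on $k\mapsto\max(0,\ln|\mathcal{D}|-k)$ is the rigorous form of the paper's one-line ``replace the annealed entropy with its mean'' step, and the paper's proof does not address the uniformity-in-$\mathcal{C}_{x,t}$ issue you flag.
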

\begin{proof}
    Since the annealed entropy is assumed to have sub extensive standard deviation, we can replace it with its mean:
    \begin{align}
        \mathbb{E}_{\mathcal{C}_{x,t} \sim P_{test}} [S_{ann}(\varphi | \mathcal{C}_{x,t})] &= \ln \dD - \mathbb{E}_{\mathcal{C}_{x,t} \sim P_{test}}[D_{KL}(P_{test}(\varphi | \mathcal{C}_{x,t}) || P_0(\varphi))]\\
        &\hspace{-60pt}= \ln \dD + \int d\varphi d\mathcal{C}_{x,t} \:P_{test}(C_{x,t}) P (\varphi | \mathcal{C}_{x,t})\: (- \ln \frac{P_{test}(\varphi | \mathcal{C}_{x,t})}{P_0(\varphi)})\\
        &\hspace{-60pt}=  \ln |\mathcal{D}| + \int d\varphi d\mathcal{C}_{x,t} \:P_{test} (\varphi, \mathcal{C}_{x,t})\: (- \ln \frac{P_{test}(\varphi, \mathcal{C}_{x,t})}{P_{test}(\mathcal{C}_{x,t})P_0(\varphi)}) \\
        &\hspace{-60pt}= \ln |\mathcal{D}| - I(\varphi; \mathcal{C}_{x,t})
    \end{align}
\end{proof}
\begin{theorem}[Training Set Posterior Entropy]
    If $\mathcal{D}/\varphi_0$ is a set of training images drawn i.i.d. from $P_0$ and $\mathcal{C}$ is a channel such that $P(E | \mathcal{C}_{x,t})$ satisfies the conditions in theorem \ref{thm:pointwise_memorization}, $\sqrt{Var(D_{KL}(P_{test}(\varphi | \mathcal{C}_{x,t}) || P_0(\varphi)))} = o(1)$, and $Var(\ln \frac{P_{test}(\cC, \varphi)}{P_{test}(\cC) P_0(\varphi)}) = o(1)$, then when a noisy image is drawn from the train set
    \begin{equation}
        \mathbb{E}_{\mathcal{C}_{x,t} \sim P_{train}}[S[P_{train}(\varphi|\mathcal{C}_{x,t})]] = \begin{cases}
            \ln |\mathcal{D}| - I(\varphi; \mathcal{C}_{x,t}) & \ln |\mathcal{D}| > I(\varphi; \mathcal{C}_{x,t}) \\
            0 & \ln |\mathcal{D}| \leq I(\varphi; \mathcal{C}_{x,t})
        \end{cases}
    \end{equation}
    up to $o(1)$ corrections.
\end{theorem}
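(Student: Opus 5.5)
The idea is to reduce the training-set case to the pointwise result, Theorem \ref{thm:pointwise_memorization}, by separating out the one training point that generated the observation. Write $\mathcal{D} = \{\varphi_0\} \cup \mathcal{D}'$, where $\varphi_0$ is the training image that produced $\cC$ through the forward chain $\varphi_0 \to \phi_t \to \cC$. The set $\mathcal{D}' = \mathcal{D}/\varphi_0$ holds $|\mathcal{D}|-1$ points drawn i.i.d.\ from $P_0$, and it is independent of $(\varphi_0,\cC)$. Marginally, $(\varphi_0,\cC)$ has exactly the test joint law $P_{test}(\varphi,\cC)$. The partition function then splits as
\begin{equation*}
Z = P(\cC|\varphi_0) + Z',
\end{equation*}
where $Z' = \sum_{\varphi\in\mathcal{D}'} P(\cC|\varphi)$.

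Conditional on $\cC$, the set $\mathcal{D}'$ is independent of $\cC$, so Theorem \ref{thm:pointwise_memorization} applies to the Gibbs measure on $\mathcal{D}'$. Its proof gives the quenched free energy $\ln Z' = \ln|\mathcal{D}| + \ln P_{test}(\cC)$ in the generalizing regime, up to $O(\ln\ln|\mathcal{D}|/\ln|\mathcal{D}|)$ relative corrections; here $\ln(|\mathcal{D}|-1)$ differs from $\ln|\mathcal{D}|$ only negligibly. In the frozen regime, $\ln Z'$ is instead set by the lowest energy, which lies at the boundary where $s_{ann,MC}=0$. The planted term, by comparison, satisfies
\begin{equation*}
\ln P(\cC|\varphi_0) - \ln P_{test}(\cC) = \ln\frac{P_{test}(\cC,\varphi_0)}{P_{test}(\cC)P_0(\varphi_0)} =: i(\varphi_0;\cC),
\end{equation*}
the pointwise information density. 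Its mean is $I(\varphi;\cC)$, and by the variance hypothesis it concentrates there up to $o(1)$.

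The argument now splits into cases by comparing the planted weight with $Z'$. If $\ln|\mathcal{D}| > I$, then $Z'$ exponentially dominates the planted weight, whose ratio to $Z'$ is $e^{i-\ln|\mathcal{D}|}$. The entropy is therefore the pointwise value $\ln|\mathcal{D}| - D_{KL}(P_{test}(\varphi|\cC)\,\|\,P_0)$, plus a correction bounded by $e^{i-\ln|\mathcal{D}|}\cdot O(\ln|\mathcal{D}|)$, which is $o(1)$. Averaging over $\cC$, and using the concentration of $D_{KL}$ exactly as in Theorem \ref{thm:average_memorization_condition}, gives $\ln|\mathcal{D}| - I$.

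If $\ln|\mathcal{D}| < I$, the planted weight dominates. It must dominate both the annealed estimate of $Z'$ and, in the frozen regime, the minimal-energy term of $\mathcal{D}'$. For the second comparison I would argue that the energy of the ground state of $\mathcal{D}'$ cannot exceed $-\ln P_{test}(\cC) + \ln|\mathcal{D}|$ in likelihood terms. This follows from a Markov or first-moment bound: the expected number of points in $\mathcal{D}'$ with likelihood ratio above $e^{a}$ is at most $|\mathcal{D}|e^{-a}$. Taking $a = I - \delta$ therefore shows that no point of $\mathcal{D}'$ competes with $\varphi_0$ except with small probability. The posterior then places mass $1-o(1)$ on $\varphi_0$, so its entropy is $o(1)$.

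The main obstacle is turning these comparisons into an $o(1)$ bound on the expected entropy. Entropy can be as large as $\ln|\mathcal{D}|$ on the bad events, so I need both the probability of those events and the residual mass to be $o(1/\ln|\mathcal{D}|)$. Chebyshev alone, applied to $i(\varphi_0;\cC)$, only gives $o(1)$ probabilities. I would first try to use the $o(1)$ variance hypothesis together with a margin $\delta$ that shrinks slowly. The exactly critical window $|\ln|\mathcal{D}|-I| = o(1)$ would be handled by continuity of $\max(0,\ln|\mathcal{D}|-I)$.
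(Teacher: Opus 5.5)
Your proposal follows the paper's proof: split off the planted term $P(\cC|\varphi_0)$, apply Theorem~\ref{thm:pointwise_memorization} to the remaining $|\mathcal{D}|-1$ independent points, and rewrite the planted log-ratio as the information density $\ln\frac{P_{test}(\cC,\varphi_0)}{P_{test}(\cC)P_0(\varphi_0)}$, whose concentration lets you compare it with $\ln(|\mathcal{D}|-1)$; your first-moment (Markov) bound is a slightly cleaner substitute for the paper's comparison against the ground-state term $e^{-\beta E^*}$. The obstacle you raise at the end does not arise at the paper's level of precision, because its corrections are in the entropy density $S/\ln|\mathcal{D}|$ (Theorem~\ref{thm:pointwise_memorization} already carries an $O(\ln\ln|\mathcal{D}|/\ln|\mathcal{D}|)$ density error), so bad events of probability $o(1)$ suffice; Chebyshev with a margin $\delta\ln|\mathcal{D}|$, followed by $\delta\to 0$, provides exactly that.
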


\begin{proof}
    If $\mathcal{C}_{x,t}$ is not drawn independently from $\mathcal{D}$, for example if $\mathcal{C}_{x,t}$ is generated from the training set, there is less that we can say. This violates the i.i.d. assumption imposed on the $\mathcal{D}$ training set in theorem \ref{thm:pointwise_memorization}. A training image $\varphi_0 \in \mathcal{D}$ is first chosen uniformly and then $\mathcal{C}_{x,t}$ is drawn from $P(\mathcal{C}_{x,t} | \varphi)$. This leads to additional contribution to the annealed partition function
\begin{equation}
    Z_{train} = P(\mathcal{C}_{x,t}| \varphi_0) + \sum_{\varphi \in \mathcal{D}/\{\varphi_0\}} P(\mathcal{C}_{x,t} | \varphi)
\end{equation}
Each term in the second sum is identically distributed and their contribution from the second term is identical to the test partition function with $|\mathcal{D}| - 1$ samples. However, this $P(\mathcal{C}_{x,t}| \varphi_0)$ term must be specifically accounted for.
In the error correction context, this quantity is usually taken to be much smaller than $\ln |\mathcal{D}|$ and therefore ignored. If we don't ignore it, it can lead to the collapse transition happening faster.
\begin{align}
    Z_{train} &= P(\mathcal{C}_{x,t}| \varphi_0) + \max((|\mathcal{D}| - 1) P_{test}(\mathcal{C}_{x,t}), e^{-\ln \dD \beta \epsilon^*}) \\
\end{align}
where $\epsilon^*$ is the intensive energy such that $I(\epsilon^* | \mathcal{C}_{x,t}) = 1$. We must assume that $\ln P(\mathcal{C}_{x,t} | \varphi_0)/ \ln \dD$  concentrates to some deterministic $O(1)$ value when $\varphi_0$ is drawn from $P_0$ and $\cC$ from $P(\cdot | \phi_0)$. Because the two terms are exponentially small in $\ln\dD$, $Z_{train}$ is always only dominated by one of the two terms. If $ \frac{\ln P(\mathcal{C}_{x,t} | \varphi_0)}{\beta \ln \dD} > \epsilon^*$, then there are exponentially many other states around the energy $\frac{\ln P(\mathcal{C}_{x,t} | \varphi_0)}{\beta \ln \dD}$ and $Z_{train} = Z_{test} + O(1)$ we conclude the theorem by applying theorem \ref{thm:average_memorization_condition}. If $ \frac{\ln P(\mathcal{C}_{x,t} | \varphi_0)}{\beta \ln \dD} < \epsilon^*$ then,
\begin{equation*}
    -\beta F(\beta) = \ln Z = \begin{cases}
        \ln (\dD -1 ) + \ln P_{test}(\mathcal{C}_{x,t}) & \ln (\dD -1 ) + \ln P_{test}(\mathcal{C}_{x,t}) >  \ln P(\mathcal{C}_{x,t} | \varphi_0)\\
        \ln P(\mathcal{C}_{x,t} | \varphi_0) & \ln (\dD -1 ) + \ln P_{test}(\mathcal{C}_{x,t}) \leq \ln P(\mathcal{C}_{x,t} | \varphi_0)
    \end{cases}
\end{equation*}
In the collapsed phase, we know the energy associated with the training sample that generated $\cC$ is $\frac{\ln P(\mathcal{C}_{x,t} | \varphi_0)}{\beta \ln \dD}$. In the generalizing phase, we can use the conclusion of theorem \ref{thm:pointwise_memorization}  since $Z_{train} = Z_{test} + O(e^{-\ln |D|})$.
\begin{equation*}
    S(\beta) = \begin{cases}
        \ln (\dD -1 ) - D_{KL}(P_{test}(\varphi | \mathcal{C}_{x,t}) || P_0(\varphi)) & \ln (\dD -1 ) + \ln P_{test}(\mathcal{C}_{x,t}) >  \ln P(\mathcal{C}_{x,t} | \varphi_0)\\
        0 & \ln (\dD -1 ) + \ln P_{test}(\mathcal{C}_{x,t}) \leq \ln P(\mathcal{C}_{x,t} | \varphi_0)
    \end{cases}
\end{equation*}
Now using Bayes rule $P(\mathcal{C}_{x,t} | \varphi_0) = \frac{P_{test}(\mathcal{C}_{x,t}, \varphi_0)}{P_0(\varphi_0)}$
\begin{equation*}
    S[P_{train}(\varphi|\mathcal{C}_{x,t})] = \begin{cases}
        \ln (\dD -1 ) - D_{KL}(P_{test}(\varphi | \mathcal{C}_{x,t}) || P_0(\varphi)) & \ln (\dD -1 ) > \ln \frac{P_{test}(\mathcal{C}_{x,t} , \varphi_0)}{P_{test}(\mathcal{C}_{x,t})P_0(\varphi_0)} \\
        0 & \ln (\dD -1 ) \leq \ln \frac{P_{test}(\mathcal{C}_{x,t} , \varphi_0)}{P_{test}(\mathcal{C}_{x,t})P_0(\varphi_0)}
    \end{cases}
\end{equation*}
Now we note that $\ln \frac{P_{test}(\mathcal{C}_{x,t} , \varphi_0)}{P_{test}(\mathcal{C}_{x,t})P_0(\varphi)}$ is the observable averaged in the expression for the mutual information. 
\begin{equation}
    I(\mathcal{C}_{x,t} , \varphi_0) = \int d\varphi d\cC \: P_{test}(\mathcal{C}_{x,t} , \varphi_0) \ln \frac{P_{test}(\mathcal{C}_{x,t} , \varphi_0)}{P_{test}(\mathcal{C}_{x,t})P_0(\varphi)}
\end{equation}
Therefore, if we assume that $\ln \frac{P_{test}(\mathcal{C}_{x,t} , \varphi_0)}{P_{test}(\mathcal{C}_{x,t})P_0(\varphi_0)}$ also concentrates, then
\begin{equation}
    S[P_{train}(\varphi|\mathcal{C}_{x,t})] = \begin{cases}
        \ln (\dD -1 ) - D_{KL}(P_{test}(\varphi | \mathcal{C}_{x,t}) || P_0(\varphi)) & \ln (\dD -1 ) > I(\mathcal{C}_{x,t}; \varphi) \\
        0 & \ln (\dD -1 ) \leq I(\mathcal{C}_{x,t}; \varphi)
    \end{cases}
\end{equation}
So, we see that the posterior entropy for a noisy training image is essentially the same when appropriate strong assumptions are made about $P(\mathcal{C}_{x,t} | \varphi)$.
\end{proof}

\subsection{Relationship to Biroli et al.'s Collapse Condition}\label{app:relation_to_biroli}
In \cite{biroli2024dynamical} the memorization to generalization phase transition for the empirical score function was studied using the random energy model. Our work with the random energy model in section \ref{app:main_proof} was inspired by theirs. The empirical score function is, in some sense, the simplest BIRD model in which all pixels see the entire noisy image. The Markov chain for the observation is just a white noise channel
\begin{equation}
    \varphi \rightarrow \phi := \sqrt{\bar{\alpha}_t} \varphi + \sqrt{1 - \bar{\alpha}_t} \eta
\end{equation}
They were able to solve for the entropy of the generated distribution and memorization time in the large dimension limit exactly in the case of isotropic Gaussian data by directly calculating the micro-canonical entropy. They conjectured that for the memorization time $t^*$, the identity
\begin{equation}\label{eq:biroli_condition}
    S[P(\phi_{t^*})] = S_{sep}^{|\mathcal{D}|}
\end{equation}
would hold for data $\varphi$ from arbitrary distributions, where $S_{sep}^{|\mathcal{D}|}$ is the entropy of $|\mathcal{D}|$ well-separated isotropic Gaussians with variance $(1 - \bar{\alpha}_t) I$:
\begin{equation}
    S_{sep}^{|\mathcal{D}|} = \ln \dD + S[\mathcal{N}(\eta | 0, (1 - \bar{\alpha}_t) I )] = \ln \dD + \frac{d}{2}(1 + \ln(2 \pi (1 - \bar{\alpha}_t)))
\end{equation}
Using the mutual information formula for the white noise channel with this parametrization (theorem \ref{thm:white_noise_MI})
\begin{equation}
    \ln |D| = S[P(\phi_{t^*})] - S[\mathcal{N}(\eta | 0, (1 - \bar{\alpha}_t) I )] = I(\phi_{t^*}; \varphi)
\end{equation}
which corresponds to the collapse condition (\ref{eq:memorization}) for the trivial channel $\mathcal{C}_{x,t}(\phi_t) = \phi_t$. Thus, our analysis proves Biroli's conjecture using random energy model methods, but also significantly generalizes their notions to cases where the channel $\mathcal{C}_{x,t}$ is potentially nonlinear or stochastic, for which their conjectured identity (\ref{eq:biroli_condition}) no longer serves as the correct collapse condition.


\section{Useful Mutual Information Bounds and Calculations}\label{app:mutual_information_calculations}

\subsection{Memorization implies a lower bound on mutual information}\label{app:fanos_bound}

Here we will use Fano's inequality to prove a simple lower bound on the mutual information $I(\mathcal{C}_{x,t}; \varphi)$ whenever a BIRD model is in the memorization phase. 

Consider again the mutual information $I(\mathcal{C}_{x,t}; \varphi)$ between a pixel $x$'s current observation $\mathcal C_{x,t}$ at time $t$ and the past training image $\varphi \in \mathcal D$ at time $t=0$ that led to the observation under the forward diffusion.   In BIRD models, the pixel $x$ attempts to play a reverse time Bayesian guessing game to guess the correct past training image $\varphi_c \in \mathcal D$ that led to $\mathcal C_{x,t}$. Its knowledge about the past training data is encapsulated in the posterior distribution $P(\varphi | \cC)$ for $\varphi \in \mathcal D$. The probability the pixel guesses the correct training point $\varphi_c$ is $P(\varphi_c | \cC)$. Thus the probability a pixel $x$ loses the Bayesian guessing game by making an incorrect guess or error is $p_e = 1 - P(\varphi_c | \cC)$.  The pixel $x$ has then memorized the training data, by definition, if $p_e$ is close to $0$ and the posterior $P(\varphi | \cC)$ concentrates on the correct data point $\varphi_c$.  

We can think of this process as an error correction problem where the $|\mathcal D|$ training samples $\varphi \in \mathcal D$ can be thought of as $|\mathcal D|$ codewords.  
One of these codewords $\varphi_c$ is sent through the forward diffusion plus observation communication channel $\varphi \rightarrow \phi_t \rightarrow \mathcal C_{x,t}$, yielding the channel output $\cC$. 
Then the pixel $x$ must guess the correct input codeword $\varphi_c$ based on this output.  
Memorization then corresponds to successful error correction in which $p_e \rightarrow 0$ (which is deleterious though for BIRD models as it leads to the failure to generalize).   

With this error correction analogy in place, we can now apply Fano's inequality (see \cite{cover2006elementsofinfo} for a standard proof) which provides a lower bound on the error probability $p_e$ in terms of the conditional entropy $S(\varphi|\cC)$.  Applying Fano's inequality yields 
\begin{align*}
    H_{b}(p_e) + p_e \ln(|\mathcal{D}| - 1) &\geq S(\varphi | \mathcal{C}_{x,t}) \\
    &= S(\varphi) - I(C_{t,x}; \varphi) \\
    &= \ln \dD - I(C_{t,x}; \varphi).
\end{align*}
Here $H_{b}(p_e)$ is the entropy of a binary random variable with probability parameter $p_e$. We next take the memorization limit $p_e \rightarrow 0$. This makes the left hand side $0$, yielding a lower bound on mutual information in the memorization phase:
\begin{equation}
    I(\mathcal{C}_{t,x}; \varphi) \geq \ln |\mathcal{D}|. \label{eq:fano_lower_bound}
\end{equation}

Importantly, this analysis only proves that in the memorization phase, corresponding to $p_e \rightarrow 0$, we have $I(\mathcal{C}_{t,x}; \varphi) \geq \ln |\mathcal{D}|$. It does {\it not} prove that in the generalizing phase, when $p_e$ is bounded away from $0$, that the opposite holds: $I(\mathcal{C}_{t,x}; \varphi) < \ln |\mathcal{D}|$. One would ideally like to have this latter inequality also in the generalizing phase to prove that the boundary between the generalizing phase and memorizing phase occurs exactly when $I(\mathcal{C}_{t,x}; \varphi) = \ln |\mathcal{D}|$. Indeed, we are able to prove this result in App. \ref{app:main_proof}, when $\cC$ is drawn from the testing Markov process.

\subsection{The Gaussian Bound on Mutual Information for the Additive White Noise Channel}\label{app:gaussian_bound}
In the prior sections, we keep the description of Bayes optimal denoisers as generic as possible, but calculating bounds on the channel mutual information requires specificity on the type of channel that we are considering. In the case of the ideal score function, the local score function, the equivariant local score function, the channels for each agent or pixel can be viewed as the composition of two channels. First, a non-invertible, possibly random linear map applied to a noiseless image, and second, rescaling and adding white noise. For example, in the case of the local score machine,
\begin{equation}
    \mathcal{C}_{x,t}: \varphi \rightarrow \varphi_{\Omega_x} \rightarrow \sqrt{\bar{\alpha}_t} \varphi_{\Omega_x} + \sqrt{1 - \bar{\alpha}_t} \eta_{\Omega_x}
\end{equation}
where $\eta$ is an isotropic Gaussian. Note that this is the opposite of the order which these two channels are applied in the actual computation. We actually apply a projection to the noisy image. However, because the white noise at each pixel is independent and the projection map is linear we can view the two as happening in either order. $P_{\Omega_x}: \RR^{H\times W \times C} \rightarrow \RR^{k \times k \times C}$
\begin{equation}
    \mathcal{C}_{x,t} = P_{\Omega_x}(\sqrt{\bar{\alpha}}_t \varphi + \sqrt{1- \bar{\alpha}_t} \eta_1) = \sqrt{\bar{\alpha}}_t P_{\Omega_x} \varphi + \sqrt{1 - \bar{\alpha}_t} \eta_2
\end{equation}
where $\eta_1 \sim \NN(0, I_{H\times W \times C})$ and $\eta_1 \sim \NN(0, I_{k\times k \times C})$. By the data processing inequality,
\begin{equation}
    I(\mathcal{C}_{x,t}; \varphi) \leq I(\mathcal{C}_{x,t}; P_{\Omega_x} \varphi)
\end{equation}
In other words, because $\cC$ only depends on $\varphi$ through its dependence on $P_{\Omega_x} \varphi$, $\cC$ cannot have more information about $\varphi$ than was contained in $P_{\Omega_x} \varphi$. For ease of notation in the following sections, I'll define $\Phi_x$ as the noise free version of the observation $\cC$.
\begin{equation}
    \Phi := P_{\Omega_x} \varphi
\end{equation}


Therefore, we can understand most Bayes optimal diffusion models just by understanding the mutual information of the additive white noise channel. Let $\Phi$ be some random variable inserted into an additive white noise channel.
\begin{align}
    I(\mathcal{C}_{x,t} := \sqrt{\bar{\alpha}_t} \Phi + \sqrt{1 - \bar{\alpha}_t} \eta; \Phi) &= S(\mathcal{C}_{x,t}) + S(\Phi) - S(\mathcal{C}_{x,t}, \Phi)\\
    &= S(\mathcal{C}_{x,t}) - S(\mathcal{C}_{x,t} | \Phi) \\
    &= S(\mathcal{C}_{x,t}) - S(\sqrt{\bar{\alpha}_t} \Phi + \sqrt{1 - \bar{\alpha}_t} \eta | \Phi)
\end{align}
After conditioning on $\Phi$, $\sqrt{\bar{\alpha}_t} \Phi$ is just a deterministic shift and $\eta$ is independent. So, this simplifies to:
\begin{align}
    I(\mathcal{C}_{x,t}; \Phi) &= S(\mathcal{C}_{x,t}) - S(\sqrt{1 - \bar{\alpha}_t}\eta) \\
    &= S(\mathcal{C}_{x,t}) + \int d\eta \mathcal{N}(\eta | 0, (1-\bar{\alpha}_t) I) \ln \mathcal{N}(\eta | 0, (1-\bar{\alpha}_t) I)\\
    &= S(\mathcal{C}_{x,t}) - \int d\eta \Big( \frac{1- \bar{\alpha}_t}{2} ||\eta||^2 + \frac{d}{2} \ln (2\pi(1-\bar{\alpha}_t)) \Big) \frac{\exp(-\frac{1- \bar{\alpha}_t}{2} ||\eta||^2)}{\sqrt{(2\pi (1-\bar{\alpha}_t))^d}}\\
    &=S(\mathcal{C}_{x,t}) - \frac{d}{2}(1 +  \ln(2 \pi (1-\bar{\alpha}_t))) 
\end{align}

This recovers the well known theorem describing the Mutual information between the input and output of an additive Gaussian white noise channel: 
\begin{theorem}\label{thm:white_noise_MI}
    For the additive white noise channel with input $\Phi$ and output $\mathcal{C}_{x,t} := \sqrt{\bar{\alpha}_t} \Phi + \sqrt{1 - \bar{\alpha}_t} \eta$ with $\eta \sim \mathcal{N}(0, I)$, 
    \begin{equation}
        I(\mathcal{C}_{x,t}; \Phi) = S(\mathcal{C}_{x,t}) - \frac{d}{2}(1 +  \ln(2 \pi (1-\bar{\alpha}_t)))
    \end{equation}
    where $d$ is the data dimension or the number of pixels.
\end{theorem}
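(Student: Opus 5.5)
The plan is to use the decomposition of mutual information as output entropy minus conditional output entropy, and then evaluate the conditional term explicitly. Write $\mathcal{C}_{x,t} = \sqrt{\bar{\alpha}_t}\Phi + \sqrt{1-\bar{\alpha}_t}\eta$ with $\eta \sim \mathcal{N}(0,I)$ independent of $\Phi$. Then
\begin{equation*}
I(\mathcal{C}_{x,t};\Phi) = S(\mathcal{C}_{x,t}) - S(\mathcal{C}_{x,t}\mid\Phi),
\end{equation*}
which is just the symmetric form $S(\mathcal{C}_{x,t}) + S(\Phi) - S(\mathcal{C}_{x,t},\Phi)$ rewritten with the chain rule. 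So the whole problem reduces to computing the conditional differential entropy $S(\mathcal{C}_{x,t}\mid\Phi)$.

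The key step is that, conditioned on $\Phi=\Phi_0$, the output equals the deterministic vector $\sqrt{\bar{\alpha}_t}\Phi_0$ plus $\sqrt{1-\bar{\alpha}_t}\eta$. Independence of $\eta$ and $\Phi$ guarantees that the conditional law of $\eta$ is still $\mathcal{N}(0,I)$. Differential entropy is invariant under translations, so for every $\Phi_0$,
\begin{equation*}
S(\mathcal{C}_{x,t}\mid\Phi=\Phi_0) = S(\sqrt{1-\bar{\alpha}_t}\,\eta).
\end{equation*}
Averaging over $\Phi_0$ therefore gives $S(\mathcal{C}_{x,t}\mid\Phi) = S(\mathcal{N}(0,(1-\bar{\alpha}_t)I))$.

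Next comes the routine Gaussian entropy computation. The density $\mathcal{N}(\eta\mid 0,(1-\bar{\alpha}_t)I)$ in $d$ dimensions has
\begin{equation*}
-\ln\mathcal{N} = \frac{\|\eta\|^2}{2(1-\bar{\alpha}_t)} + \frac{d}{2}\ln\bigl(2\pi(1-\bar{\alpha}_t)\bigr).
\end{equation*}
Its expectation uses $\mathbb{E}\|\eta\|^2 = d(1-\bar{\alpha}_t)$, so the first term contributes $d/2$. This gives $\frac{d}{2}\bigl(1+\ln(2\pi(1-\bar{\alpha}_t))\bigr)$. Substituting into the decomposition yields the claimed formula.

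There is no serious obstacle; the only care point is well-definedness. If $\Phi$ is discrete, for example a finite training set, then $S(\Phi)$ and $S(\mathcal{C}_{x,t},\Phi)$ are not finite differential entropies. For that reason I would state the derivation directly through $S(\mathcal{C}_{x,t}) - S(\mathcal{C}_{x,t}\mid\Phi)$, which is valid in general. The output $\mathcal{C}_{x,t}$ always has a smooth density, being a Gaussian convolution, so $S(\mathcal{C}_{x,t})$ is finite whenever $\Phi$ has finite second moment. I would also note in passing that the displayed intermediate step in the text has the sign and placement of the $(1-\bar{\alpha}_t)$ factor in the exponent misprinted. The correct quadratic term is $\|\eta\|^2/(2(1-\bar{\alpha}_t))$, which is what produces the stated result.
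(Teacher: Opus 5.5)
Your proposal is correct and follows essentially the same route as the paper: write $I(\mathcal{C}_{x,t};\Phi)=S(\mathcal{C}_{x,t})-S(\mathcal{C}_{x,t}\mid\Phi)$, use independence and translation invariance to reduce the conditional term to the entropy of $\sqrt{1-\bar\alpha_t}\,\eta$, and then evaluate that Gaussian entropy. Your correction of the misplaced $(1-\bar\alpha_t)$ factor in the paper's intermediate display (it should be $\|\eta\|^2/(2(1-\bar\alpha_t))$) is right, and working directly with the conditional entropy instead of $S(\Phi)$ and $S(\mathcal{C}_{x,t},\Phi)$ is a sensible refinement when $\Phi$ is discrete.
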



As we can see, the mutual information of the additive white noise channel only depends on the input, or data distribution through the entropy of the noisy distribution $\phi$. Therefore, we can bound the mutual information of the channel by choosing a maximal entropy distribution for $\phi$. Under the constraint that the data covariance matrix is $\Sigma_0$ and its distribution is centered, the maximal entropy distribution of $\phi$ is Gaussian with covariance $\Sigma_t = \bar{\alpha}_t \Sigma + (1 - \bar{\alpha}_t)I$. Let $\tilde{\phi} \sim \mathcal{N}(0,\Sigma_t)$ be the Gaussianized version of the data.
\begin{align*}
    S(\tilde{\phi}) &= -\int d \tilde{\phi} \mathcal{N}(\tilde{\phi} | 0,\Sigma_t) \ln \mathcal{N}(\tilde{\phi} | 0,\Sigma_t) \\
   &= \frac{1}{2}\ln \det(2 \pi \Sigma_t) + \int d \tilde{\phi} \frac{\tilde{\phi}^{T} \Sigma_t^{-1} \phi}{2} \frac{\exp(-\tilde{\phi}^{T} \Sigma_t^{-1} \phi / 2)}{\sqrt{\det(2 \pi \Sigma_t)}} \\
   &= \frac{1}{2}\mathbf{Tr}{\ln(2 \pi \Sigma_t)} + \frac{1}{2} \mathbf{Tr}[\Sigma_t^{-1} \Sigma_t] \\
   &= \frac{d}{2} + \frac{1}{2}\mathbf{Tr}{\ln(2 \pi \Sigma_t)}
\end{align*}
Now using this to find the mutual information
\begin{align}
    I(\tilde{\phi}; \tilde{\Phi}) &= S(\tilde{\phi}) - \frac{d}{2}(1 +  \ln(2 \pi (1-\bar{\alpha}_t))) \\
    &= \frac{1}{2}\mathbf{Tr}{\ln(\frac{\Sigma_t}{1 - \bar{\alpha}_t})} \\
    &= \frac{1}{2}\mathbf{Tr}{\ln(\frac{\bar{\alpha}_t\Sigma + (1-\bar{\alpha}_t)I}{1 - \bar{\alpha}_t})} \\
    &= \frac{1}{2}\mathbf{Tr}{\ln(I + \frac{\bar{\alpha}_t}{1 - \bar{\alpha}_t} \Sigma)}\label{eq:gaussian_mi}
\end{align}
Therefore, we can conclude theorem
\begin{theorem}\label{thm:gaussian_bound}
    Given a patch $\Phi$ with covariance $\Sigma$ the mutual information of the additive white noise channel, $\phi := \sqrt{\bar{\alpha}_t} \Phi + \sqrt{1 - \bar{\alpha}_t} \eta$, is bounded above by
    \begin{equation}
        I(\phi; \Phi) \leq \frac{1}{2} \mathbf{Tr} \ln[I + \frac{\bar{\alpha}_t}{1 - \bar{\alpha}_t} \Sigma].
    \end{equation}
\end{theorem}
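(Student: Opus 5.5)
The plan is to combine Theorem \ref{thm:white_noise_MI} with the maximum-entropy property of the Gaussian. By Theorem \ref{thm:white_noise_MI}, for the channel $\phi = \sqrt{\bar{\alpha}_t}\Phi + \sqrt{1-\bar{\alpha}_t}\eta$ with $\Phi \in \mathbb{R}^d$,
\begin{equation*}
I(\phi;\Phi) = S(\phi) - \frac{d}{2}\bigl(1 + \ln(2\pi(1-\bar{\alpha}_t))\bigr).
\end{equation*}
The second term does not depend on the law of $\Phi$. So bounding the mutual information reduces to bounding the differential entropy $S(\phi)$ of the noisy patch from above.

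First, I will compute the covariance of $\phi$. Since $\eta$ is independent of $\Phi$ and isotropic, $\mathrm{Cov}(\phi) = \Sigma_t := \bar{\alpha}_t \Sigma + (1-\bar{\alpha}_t) I$, which is strictly positive definite whenever $\bar{\alpha}_t<1$. Note that $\phi$ has a density, being a convolution with a nondegenerate Gaussian, so $S(\phi)$ is well defined even if $\Phi$ itself is degenerate.

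Second, I will invoke the maximum-entropy principle: among all densities on $\mathbb{R}^d$ with covariance $\Sigma_t$, the Gaussian $\mathcal{N}(\mu',\Sigma_t)$ maximizes differential entropy. To be self-contained, I would prove this via nonnegativity of KL divergence. Let $p$ be the density of $\phi$ and $q = \mathcal{N}(\mathbb{E}\phi, \Sigma_t)$. Then
\begin{equation*}
0 \le D_{KL}(p\,\|\,q) = -S(\phi) - \mathbb{E}_p[\ln q].
\end{equation*}
Because $\ln q$ is a quadratic form, $\mathbb{E}_p[\ln q] = \mathbb{E}_q[\ln q]$, as only first and second moments enter. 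This gives $S(\phi) \le S(q) = \frac{d}{2} + \frac12 \mathbf{Tr}\ln(2\pi\Sigma_t)$, which is exactly the Gaussian entropy computed just before the statement. The mean shift is irrelevant, so the centering assumption can be dropped.

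Third, I will substitute this into the identity and simplify. The constants $\frac{d}{2}$ cancel, and $\frac12\mathbf{Tr}\ln(2\pi\Sigma_t) - \frac{d}{2}\ln(2\pi(1-\bar{\alpha}_t)) = \frac12\mathbf{Tr}\ln\bigl(\Sigma_t/(1-\bar{\alpha}_t)\bigr)$. This gives
\begin{equation*}
I(\phi;\Phi) \le \frac12 \mathbf{Tr}\ln\Bigl[I + \frac{\bar{\alpha}_t}{1-\bar{\alpha}_t}\Sigma\Bigr],
\end{equation*}
as in \eqref{eq:gaussian_mi}.

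The only delicate step is the moment-matching identity $\mathbb{E}_p[\ln q]=\mathbb{E}_q[\ln q]$, which requires $\Sigma$ (and hence $\Sigma_t$) to be finite. Under that assumption the argument is routine.
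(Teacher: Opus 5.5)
Your proposal is correct and follows the paper's own argument: combine the white-noise identity $I(\phi;\Phi)=S(\phi)-\frac{d}{2}\left(1+\ln(2\pi(1-\bar{\alpha}_t))\right)$ with the maximum-entropy property of the Gaussian with covariance $\Sigma_t=\bar{\alpha}_t\Sigma+(1-\bar{\alpha}_t)I$, then simplify. The paper only asserts the maximum-entropy step, so your KL-divergence proof of it, and your remark that the centering assumption can be dropped, are small additions.
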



\subsection{Asymptotic Expansion for the Mutual Information of the AWNC at High Noise}\label{app:asymptotic_expansion}
In this section we will again consider a simple white noise channel. In section \ref{app:gaussian_bound}, we argued that the Gaussianized distribution provides an upper bound on the mutual information. Here, we provide a universal expansion for the mutual information of the white noise channel regardless of the underlying distribution. This helps give intuition for the surprisingly good agreement between the Gaussian prediction and the posterior entropies measured on real data sets. Recall for the additive white noise channel:
\begin{equation}
    I(\varphi; \phi) = S[P_{test}(\phi)] - S[\mathcal{N}(0, (1-\bar{\alpha}_t)I)].
\end{equation}
We can define the cumulant generating function for the test distribution as follows:
\begin{equation}
    K_{\phi}(\hat{\phi}) = \ln \EE_{\phi \sim P_{test}}[\exp(\hat{\phi} \cdot \phi)]
\end{equation}
The formal Taylor expansion for this function provides the cumulants of the distribution of $\phi$. In addition, the cumulant generating function with an imaginary argument can be used to entirely reconstruct the distribution. Because the cumulant generating function of the sum of two independent random variables is just the sum of each variable's cumulant generating function, we have
\begin{align}
    K_{\phi}(\hat{\phi}) &= \ln \EE_{\varphi \sim P_{0}, \eta \sim \mathcal{N}(0,I)}[\exp(\hat{\phi} \cdot (\sqrt{\bar{\alpha}_t} \varphi + \sqrt{1-\bar{\alpha}_t} \eta))] \\
    &= \ln  \EE_{\varphi \sim P_{0}}[\exp(\hat{\phi} \cdot (\sqrt{\bar{\alpha}_t} \varphi))] + \ln \EE_{\eta \sim \mathcal{N}(0,I)}[\exp(\sqrt{1-\alpha}_t \hat{\phi} \cdot \eta))] \\
    &= K_{\varphi}(\sqrt{\bar{\alpha}_t} \hat{\phi}) + K_{\eta}(\sqrt{1 - \bar{\alpha}_t} \hat{\phi}) \\
    &= K_{\varphi}(\sqrt{\bar{\alpha}_t} \hat{\phi}) + \frac{1 - \bar{\alpha}_t}{2} ||\hat{\phi}||^2.
\end{align}
The cumulant generating function of the isotropic Gaussian is just a quadratic. In addition, $\sqrt{\bar{\alpha}_t}$ ranges from 0 to 1, so one would expect that the low order terms in $\sqrt{\bar{\alpha}_t}$ describe the behavior at most time steps. This means that only the lowest order cumulants of the data distribution have a significant effect on the cumulants of the generated distribution. By simply representing the entropy in terms of the cumulant generating function, we should be able to expand the expression to lowest order terms and find a cumulant expansion for the mutual information.  We start with the expression for the distributio in terms of its CGF:
\begin{align}
    P_{test}(\phi) &= \int \frac{d\hat{\phi}}{(2 \pi)^d} \exp(-i \hat{\phi} \cdot \phi + K_{\phi}(i\hat{\phi})). \label{eq:distribution_in_term_of_CGF}
\end{align}
Taking derivatives with respect to $\phi$ has the effect of adding factors of $-i \hat{\phi}$ to the integrand in equation \eqref{eq:distribution_in_term_of_CGF}.
\begin{align}
    P_{test}(\phi) &= \int \frac{d\hat{\phi}}{(2 \pi)^d} \exp(-i \hat{\phi} \cdot \phi + K_{\phi}(i\hat{\phi})) \\
    &= \int \frac{d\hat{\phi}}{(2 \pi)^d} \exp(K_{\varphi}(i\sqrt{\bar{\alpha}_t}\hat{\phi})) \exp(-i \hat{\phi} \cdot \phi - \frac{1 - \bar{\alpha}_t}{2} ||\hat{\phi}||^2) \\
    &= \exp(K_{\varphi}(- \sqrt{\bar{\alpha}_t}\partial_\phi)) \int \frac{d\hat{\phi}}{(2 \pi)^d}  \exp(-i \hat{\phi} \cdot \phi - \frac{1 - \bar{\alpha}_t}{2} ||\hat{\phi}||^2) \\
    &= \exp(K_{\varphi}(- \sqrt{\bar{\alpha}_t} \partial_\phi)) \frac{\exp(-\frac{||\phi||^2}{2(1-\bar{\alpha}_t)})}{\sqrt{\det(2 \pi \Sigma_t)}} \label{eq:asymtotic_gen_for_distribution}
\end{align}
In the last line, 
\begin{align}
    S[P_{test}(\phi)] &= -\int d\phi \: P_{test}(\phi) \ln P_{test}(\phi) \\
    &\hspace{-20pt}= -\int d\phi \:\exp(\tilde{K}_{\varphi}(- \sqrt{\bar{\alpha}_t} \partial_\phi)) \mathcal{N}(\phi | 0, (1 - \bar{\alpha}_t)I) \\
    &\hspace{20pt} \times \ln \exp(\tilde{K}_{\varphi}(- \sqrt{\bar{\alpha}_t} \partial_\phi)) \mathcal{N}(\phi | 0, (1 - \bar{\alpha}_t)I) \label{eq:alpha_asymmtotic_generating_expression}
\end{align}
Expanding equation \eqref{eq:alpha_asymmtotic_generating_expression} in powers of $\bar{\alpha}_t$ will provide an expansion of the entropy in terms of higher cumulants of the distribution and expectations of Gaussian random variables. For example, the first order term in $\bar{\alpha}_t$ is
\begin{align}
    S[P_{test}(\phi)] &= \frac{d}{2}(1 + \ln(2\pi (1-\bar{\alpha}_t))) + \frac{1}{2}\Big(\frac{\bar{\alpha}_t}{1 - \bar{\alpha}_t}\Big) \mathbf{Tr}[\Sigma] \\
    &- \frac{1}{4}\Big(\frac{\bar{\alpha}_t}{1 - \bar{\alpha}_t}\Big)^2 (\mathbf{Tr}[\Sigma^2] + \frac{\kappa^{abcd}}{12}(\delta_{a,b}\delta_{c,d} + \delta_{a,c}\delta_{b,d} + \delta_{a,d}\delta_{b,c})) \\
    &+ O(\bar{\alpha}_t^{5/2})
\end{align}
where $\Sigma$ is the covariance matrix of the dataset and $\kappa_{abcd}$ is the four-cumulant of the data. We can convert this into a prediction for the mutual information of the white noise channel
\begin{align}
    I(\phi; \varphi) &=  S[P_{test}(\phi)] - S[\mathcal{N}(0, (1 - \bar{\alpha}_t)I)] \\
    &= \frac{1}{2}\Big(\frac{\bar{\alpha}_t}{1 - \bar{\alpha}_t}\Big) \mathbf{Tr}[\Sigma] \\
    &- \frac{1}{4}\Big(\frac{\bar{\alpha}_t}{1 - \bar{\alpha}_t}\Big)^2 (\mathbf{Tr}[\Sigma^2] + \frac{\kappa^{abcd}}{12}(\delta_{a,b}\delta_{c,d} + \delta_{a,c}\delta_{b,d} + \delta_{a,d}\delta_{b,c})) \\
    &+ O(\bar{\alpha}_t^{5/2})
\end{align}
Higher order terms of this expansion are quite tedious to calculate.
\subsection{A Data Model for Speciation with Exactly Solvable Mutual Information}\label{app:hierarchical_mixture_of_gaussians}
Imagine that the true image distribution is a mixture of Gaussians where the parameters are themselves random.
\begin{equation}
    P_0(\varphi) = \frac{1}{|M|}\sum_{\mu \in M} \mathcal{N}(\varphi | \mu, \Sigma_0)\label{eq:mix_gaussian_model}
\end{equation}
where the means $M$ are drawn from some other distribution $\mathcal{N}(0, \Sigma_M)$. If we assume that $|M|$ is large, we can exactly solve for the mutual information of this data passed through the white noise channel. This can be achieved by noting that $P_0(\varphi)$ itself is exactly the same as the training distribution created by $M$ samples from $\mathcal{N}(0, \Sigma_M)$ then passed through an additive noise channel $\mu + \eta$ with $\eta \sim \mathcal{N}(\varphi | \mu, \Sigma_0)$. However, the theory that we've built to this point would only allow you to predict the entropy of the posterior distribution, that is given $\varphi$ what is the probability that it came from the Gaussian centered at $\mu$. However, we can instead make use of our prediction for the partition function
\begin{equation}
    Z(\varphi) = \sum_{m \in M} \mathcal{N}(\varphi | \mu, \Sigma_0) = \begin{cases}
        |M| \mathcal{N}(\varphi | 0, \Sigma_0 + \Sigma_M) & \ln |M| > D_{KL}(P(\mu | \varphi) | P(\mu)) \\
        e^{-\beta E^*} & \ln |M| \leq D_{KL}(P(\mu | \varphi) | P(\mu))
    \end{cases}
\end{equation}
where $e^{-\beta E^*}$ does not concentrate, but behaves like M well separated Gaussian distributions. This predicts the probability density at $\varphi$ in the large $|M|$ limit. Leading to the counterintuitive result that $P_0(\varphi)$ concentrates to a value independent of $M$. This leads to
\begin{equation}
    S(\varphi) = \begin{cases}
        S[\mathcal{N}( \mu, \Sigma_0 + \Sigma_M)] & \ln |M| > I(\mu; \varphi) \\
        \ln|M| + S[\mathcal{N}(0, \Sigma_0)] & \ln |M| \leq I(\mu; \varphi)
    \end{cases}
\end{equation}
Adding in the white noise channel, this becomes
\begin{equation}
    S(\phi_t) = \begin{cases}
        S[\mathcal{N}( \mu, \Sigma_0 + \Sigma_M + \sigma_t^2 I)] & \ln |M| > I(\mu; \varphi) \\
        \ln|M| + S[\mathcal{N}(0, \Sigma_0 + \sigma_t^2 I)] & \ln |M| \leq I(\mu; \varphi)
    \end{cases}
\end{equation}
So, the mutual information is
\begin{equation}
    I(\phi_t; \varphi) = \begin{cases}
        S[\mathcal{N}( \mu, \Sigma_0 + \Sigma_M + \sigma_t^2 I)] - S[\mathcal{N}( 0,\sigma_t^2 I)] & \ln |M| > I(\mu; \varphi) \\
        \ln|M| + S[\mathcal{N}(0, \Sigma_0 + \sigma_t^2 I)] - S[\mathcal{N}( 0,\sigma_t^2 I)] & \ln |M| \leq I(\mu; \varphi)
    \end{cases}
\end{equation}
Plugging in the Gaussian entropy formula
\begin{equation}
    I(\phi_t; \varphi) = \begin{cases}
        Tr[\ln( I + \frac{1}{\sigma_t^2}(\Sigma_0 + \Sigma_M))] & \ln |M| > I(\mu; \varphi) \\
        \ln|M| + Tr[\ln( I + \frac{1}{\sigma_t^2}\Sigma_0)] & \ln |M| \leq I(\mu; \varphi)
    \end{cases}
\end{equation}
This can also be written
\begin{equation}
    I(\phi_t; \varphi) = \min(\ln|M| + Tr[\ln( I + \frac{1}{\sigma_t^2}\Sigma_0)], Tr[\ln( I + \frac{1}{\sigma_t^2}(\Sigma_0 + \Sigma_M))])
\end{equation}
This leads to the formula for the posterior entropy
\begin{equation}
    S[P(\varphi | \phi)] = \max(0, \ln\frac{\dD}{|M|} - Tr\Big[\ln( I + \frac{1}{\sigma_t^2}\Sigma_0)\Big], \ln \dD - Tr\Big[\ln( I + \frac{1}{\sigma_t^2}(\Sigma_0 + \Sigma_M))\Big])
\end{equation}
So, we see that the speciation transition has the exact same behavior as a memorization/generalization transition in terms of a discontinuity in the derivative of the posterior entropy with the noise level. You can see an example of this in sub-figure \ref{fig:posterior_entropy}b where the posterior entropy seems to plateau twice. More generally, we could imagine a Hierarchical Gaussian model where a set of means $M_1$ is drawn from $\mathcal{N}(0, \Sigma_{M_1})$ to generate a mixture of Gaussian distribution with covariance of each Gaussian $\Sigma_{M_2}$ from which I draw as set of means $M_2$, and so on until the final set of means $M_N$. In this case, we will have a posterior entropy like
\begin{equation}
    S[P(\varphi | \phi)] = \max(0, \max_{n \leq N} \ln \frac{\dD}{\Pi_{i=1}^n| M_i|} - Tr[\ln ( I + \frac{1}{\sigma_t^2}\Sigma_0 + \frac{1}{\sigma_t^2} \sum_{i={n+1}}^N \Sigma_{M_i})])
\end{equation}
where each change of the maximum $n$ corresponds to a speciation transition.
\section{Generalization in Local Score Models and the critical scale}\label{app:scaling}

As we have discussed above, Bayes-optimal diffusion models \textit{generalize} only when the channel through which they are observing their image lets pass fewer than $\log |\mathcal{D}|$ bits of information, preventing them from being able to resolve down to a single training set image. Increasing the noise level clearly decreases this mutual information, and extensive prior work has gone into studying the location of the memorization/generalization transition as a function of this parameter \citep{biroli2024dynamical, ventura2024manifolds}. Another parameter directly relevant to the capacity of the channel, however, is the \textit{locality scale} \citep{kambanalytic, niedoba2024towards, lukoianov2025locality}. When diffusion models have {locality biases}, they may only effectively observe a small proportion of the overall image. Under such constraints, the effective amount of information that they may in principle use to deduce the underlying image will be substantially \textit{lower} than the global information contained in the image. We thus ask the following question: where is the threshold of criticality, \textit{as a function of both the noise level and the locality scale}?

In this section, for conciseness, we will use the invariant noise-to-signal ratio to characterize the noise level at a time $t$:
\begin{align}
    \sigma_t^2 = \frac{1 - \bar{\alpha}_t}{\bar{\alpha}_t}
\end{align}
We will consider a family of patches $\Omega_{L,x}$ indexed uniquely by the parameter $L$, which satisfies $L = \sqrt{|\Omega_{L,x}|}$. For square patches \citep{kambanalytic} this quantity is simply the length of one side of the patch. We will demand that $\Omega_{L',x} \supset \Omega_{L,x}$ if $L' > L$. The \textit{critical scale} $L_{c}(\sigma_t)$ is then defined as the scale where a Local Score model using the patch $\Omega_{L,x}$ achieves the average collapse condition (\ref{eq:memorization_app}) at a noise level $\sigma_t$:
\begin{align}\label{eq:collapse_condition}
    \ln|\mathcal{D}| = I(\varphi ; \phi_{t,{\Omega_{x,L}}})
\end{align}
Without making any assumptions about the data-generating distribution, we are in a position to make a very general statement. Firstly, increasing the scale of a patch given a fixed noise scale {monotonically increases} the mutual information $I(\varphi ; \phi_{t;\Omega})$, while increasing the noise scale at a fixed patch size monotonically decreases this mutual information. It follows from implicit differentiation of the collapse condition (\ref{eq:collapse_condition}) with respect to $L$ that the values of $L_{c}(\sigma_t)$ must be {monotonically increasing with noise level}. This is phenomenologically consistent with the observed coarse-to-fine scaling of diffusion models \citep{kambanalytic, lukoianov2025locality, niedobatowards, dieleman2024spectral_autoregression}. We can formalize this into this theorem.
\begin{theorem}[Monotonicity of Length Scale]
    $L_{c}(\sigma_t)$ is a monotonically increasing function.
\end{theorem}
\begin{proof}
If $L < L'$, then $\Omega_{L',x} \supset \Omega_{L,x}$. This implies that the observation $C_{L,x,t}$ can be viewed as first restricting to the patch $\Omega_{L',x}$ and then restricting to the patch $\Omega_{L,x}$. This leads to the Markov chain
\begin{equation}
    \varphi \rightarrow \phi_t \rightarrow \phi_{t,{\Omega_{x,L'}}} \rightarrow \phi_{t,{\Omega_{x,L}}}
\end{equation}
The information processing inequality implies
\begin{equation}
    I(\varphi; \phi_{t,{\Omega_{x,L}}}) \leq I(\varphi; \phi_{t,{\Omega_{x,L'}}})
\end{equation}
Now we need to use the fact that for a projection we can switch the order of restricting the domain and adding the noise. For a positive increment $\Delta t$
\begin{equation}
    \varphi \rightarrow \varphi_{{\Omega_{x,L}}} \rightarrow \phi_{t, {\Omega_{x,L}}} \rightarrow \phi_{t+\Delta t, {\Omega_{x,L}}}
\end{equation}
This implies
\begin{equation}
    I(\varphi; \phi_{t + \Delta t,{\Omega_{x}}}) \leq I(\varphi; \phi_{t,{\Omega_{x, L}}})
\end{equation}
The collapse condition \eqref{eq:memorization_app} then implies the mutual information is held constant at the transition size $\phi_{t, {\Omega_{x, L(\sigma_t)}}}$. Since $\sigma_t$ is monotonically increasing in $t$, this implies
\begin{equation}
    |\Omega_{x,L_{c}(\sigma_t)}| \leq |\Omega_{x,L_{c}(\sigma_t + \Delta \sigma)}|
\end{equation}
\begin{equation}
    L_{c}(\sigma_t) \leq L_{c}(\sigma_t + \Delta \sigma_t)
\end{equation}
\end{proof}

\subsection{What scales do we \textit{need} vs. what scales do we \textit{have}: the spectral scale vs. the critical scale}\label{sec:spectral_scale}

Existing theories of locality \citep{lukoianov2025locality, dieleman2024spectral_autoregression} in diffusion models emphasize a quantity which we term the \textit{spectral scale}. Naturalistic images have a well-known power-law falloff in their power spectral density \citep{ruderman1993statistics}:
\begin{align}\label{eq:powerlaw}
    P(k) \approx A|k|^{-\alpha}
\end{align}
with the power-law exponent  $\alpha \approx 2 - \epsilon$, with $\epsilon$ typically in the range 0.1-0.3. The $\alpha = 2$ case is the exactly scale invariant case, where each logarithmic frequency interval has exactly the same total variance. The observed $\alpha < 2$ implies that natural images are nearly scale-invariant but slightly `UV tilted,' showing a bit more power at higher frequencies than at lower frequencies. When performing denoising diffusion, this power-law distribution is mixed with white noise, with a flat power spectral density $S(k) = \sigma_t^2$. There is a natural length scale in this problem, defined as the inverse of the spatial frequency where the noise power equals the signal power:
\begin{align}\label{eq:spectral_scale}
    L_{spec} = \bigg( \frac{\sigma_t^2}{A} \bigg)^{1/\alpha}
\end{align}
For exactly scale-invariant data with $\alpha = 2$, this reduces to 
\begin{align}\label{eq:spectral_alpha2}
    L_{spec} \sim \sigma_t
\end{align}
i.e., the spectral scale is linearly proportional to the standard deviation of the added noise. 

{Another useful way of interpreting the spectral scale is as the approximate radius of the optimal linear denoiser, which for stationary translationally-invariant data is known as the Wiener filter \cite{wiener1949extrapolation}. The optimal linear denoiser is the minimizer of the denoising objective (\ref{eq:neural_loss}) under the constraint that the model should be an affine function $M_{t}[\phi_t] = \hat{W}_t \phi_t + b_t$. This is a standard linear regression problem, and for a distribution $P_0(\varphi)$ with mean $\mu$ and covariance $\Sigma$, the optimal $\hat{W}_t$ and $b_t$ are given by, respectively,
\begin{align}
    \hat{W}_t &= (\bar{\alpha}_t \Sigma + (1 - \bar{\alpha}_t) I)^{-1} (\sqrt{\bar{\alpha}_t} \Sigma )\\
    b_t &= (I - \hat{W}_t) \sqrt{\bar{\alpha}_t} \,\mu.
\end{align}
The rows of $\hat{W}_t$ can be interpreted as denoising filters applied at a particular location. When the data statistics are translationally-invariant, the matrix $\hat{W}_t$ implements a convolution with a single filter, the Wiener filter. Convolution operators correspond to multiplication in Fourier space; the action of $\hat{W}_t$ for such data is given by
\begin{align}
    \hat{W}_t(k) \hat{\phi}_t(k) &= \frac{\sqrt{\bar{\alpha}_t} \Sigma(k)}{\bar{\alpha}_t \Sigma(k) + (1 - \bar{\alpha}_t)} \,\hat{\phi}_t(k)
\end{align}
where $\Sigma(k) = \text{Var}[\hat{\varphi}(k)] = P(k)$ is the variance of the $k$th Fourier mode of the unnoised images under $P_0(\varphi)$. This filter generally has infinite support, so to make sense of its `size' one has to make a definitional choice; the most natural characteristic width is the point where the signal and noise power meet in the denominator of the filter, i.e.
\begin{align}
    \bar{\alpha}_t \Sigma(k) = (1 - \bar{\alpha}_t)
\end{align}
Plugging in the power-law expression (\ref{eq:powerlaw}) for $P(k) = \Sigma(k)$ and rearranging yields the expression for the `fourier space radius' $|k_{spec}|$. Setting $L_{spec} = |k_{spec}|^{-1}$ yields the spectral scale (\ref{eq:spectral_scale}).}

{Because of the relatively compact support of this filter, it can be inferred that a {strictly local} denoiser with a locality scale $\geq L_{spec}$ should perform well with a relatively small error. Indeed, \cite{lukoianov2025locality} established that calibrating an LS model according to the radius of the Wiener filter yields models that perform capably even to moderately large image sizes. Conversely, we should expect that any reasonably capable denoiser should be able to attend \textit{at least} to roughly the radius of this filter. If the critical scale is much \textit{smaller} than the spectral scale ($L_{spec} \gg L_{c}$), a Bayes-optimal local denoiser \textit{cannot} perform well: it will either incur catastrophic memorization by using a scale $L \gg L_{spec}$ comparable to $L_{spec}$, or its denoising estimate will be robust but highly inaccurate due to using a scale $L < L_{c} \ll L_{spec}$. Figure \ref{fig:pixelization} illustrates both of these failure modes: when the scale the model uses is smaller than the spectral scale, the outputs of the denoiser are robust but suboptimal. Conversely, when the scale the model uses is larger than the critical scale, the model's output incurs catastrophic statistical error.}


Realistic data are highly non-Gaussian, with higher-order dependencies that exist and longer scales than implied by the two-point statistics. Indeed, \cite{lukoianov2025locality} finds that localizing based on the spectral scale breaks down on the highly non-Gaussian dataset MNIST. Nonetheless, achieving a radius of \textit{at least} the spectral scale is clearly \textit{necessary} for any performant local denoiser, and appears to be sufficient for a reasonable baseline level of denoising accuracy. A natural question then arises: \textit{is the critical scale for a Bayes-optimal local denoiser large enough to meet or exceed the spectral scale}? While by no means obvious, we show in this section that, both theoretically and empirically, the critical scale typically \textit{does} scale similarly to the spectral scale. This {remarkable concordance} between the scales \textit{needed} for denoising and the scales at which robust denoising is \textit{possible} (using a non-exponential dataset size) is at the heart of why local Bayes optimal models succeed at `reasonable' generalization on high-dimensional image data.

\subsection{A concrete calculation: the isotropic Gaussian}
The simplest case to analyze is the isotropic Gaussian first analyzed by \cite{biroli2024dynamical}. Given a patch $\Omega$, the mutual information between the $\Omega$-restricted image $\phi_{t;\Omega}$, and the underlying unnoised image $\varphi$ is given by
\begin{align}
    I(\phi_{t;\Omega} ; \varphi) = \frac{|\Omega|}{2} \ln\bigg(1 + \frac{s^2}{\sigma_t^2}\bigg)
\end{align}
where $s^2$ is the variance of $\varphi$, $|\Omega|$ is the number of pixels in the patch. The critical scale $L_{c} = \sqrt{|\Omega_{mem}|}$ is then given by equating this quantity to the number of nats $\ln |\mathcal{D}|$ needed to resolve down to a single training set image:
\begin{align}
    \ln |\mathcal{D}| = \frac{L_{c}^2}{2} \ln\bigg(1 + \frac{s^2}{\sigma_t^2}\bigg)
\end{align}
which upon rearranging yields
\begin{align}\label{eq:isotropic_exact}
    L_{c} = \sqrt{\frac{2 \ln|\mathcal{D}|}{\ln(1 + \frac{s^2}{\sigma_t^2})}}
\end{align}
While presented in slightly different form, under the identification $|\Omega| = L_{c}^2 = d$ this result reproduces exactly the `collapse time' result of \cite{biroli2024dynamical}. At large $\sigma_t^2$, this quantity becomes
\begin{align}\label{eq:isotropic_asymptotic}
    L_{c} \to \sigma_t \,\sqrt{\frac{2  \ln|\mathcal{D}|}{s^2}} 
\end{align}
i.e. we find that at high noise levels, {the critical scale is directly proportional to the noise level}. This asymptotic scaling is promising: it suggests that the requisite spectral scaling (\ref{eq:spectral_alpha2}) may be achieved by a generalizing Bayes-optimal denoiser even for asymptotically large image sizes, without scaling the dataset. As we will see, the form derived above is very general across a wide range of images. 

\subsection{Scaling bounds}\label{app:scaling_bounds}

If we have a more specific model for our data distribution where $I(\phi_{t;\Omega};\varphi)$ is tractable, we can in principle compute the location of the critical threshold exactly. This computation can also be performed numerically; we illustrate some such results in figure \ref{fig:multiscale}. However, doing so directly is often challenging, and extracting an analytic solution for $L_{c}(\sigma_t)$ from the collapse condition (\ref{eq:collapse_condition}) may not be feasible even when the mutual information $I(\phi_{t;\Omega};\varphi)$ is available. In such cases, however, we can produce a number of useful bounds. The first is an immediate corollary of the additive noise bound:
\begin{corollary}\label{cor:diag_gaussian_bound}
    Assume an additive white noise channel with a locality constraint $\phi_{\Omega,t}$, and suppose $\varphi$ is drawn from an arbitrary distribution $P$ with mean $\mu$ and covariance $\Sigma$. We define the distribution $Q_1 = \mathcal{N}(\mu, \Sigma)$ to be the Gaussian distribution with identical second-order statistics to $P$, and $Q_2 = \mathcal{N}(\mu, \text{Diag}[\Sigma])$ to be the Gaussian distribution with identical diagonal covariance and zero off-diagonal elements. We then have
    \begin{align}
        L_{c;P}(\sigma_t) \geq L_{c; Q_1}(\sigma_t) \geq L_{c; Q_2}(\sigma_t)
    \end{align}
\end{corollary}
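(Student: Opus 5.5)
The plan is to reduce the statement to a pointwise comparison of mutual informations at each fixed patch scale $L$. Because $I(\varphi;\phi_{t;\Omega_{L,x}})$ is monotonically increasing in $L$ (by the data processing argument used in the Monotonicity of Length Scale theorem), the critical scale $L_{c;R}(\sigma_t)$ for a distribution $R$ is the smallest $L$ at which $I_R(L) \geq \ln|\mathcal{D}|$. Hence if
\begin{equation*}
I_P(L)\le I_{Q_1}(L)\le I_{Q_2}(L)
\end{equation*}
holds for every $L$, then $I_{Q_2}$ reaches $\ln|\mathcal{D}|$ no later than $I_{Q_1}$, and $I_{Q_1}$ no later than $I_P$. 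This gives the reversed ordering of critical scales, $L_{c;P}\ge L_{c;Q_1}\ge L_{c;Q_2}$.

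For the first inequality, write the local observation as an additive white noise channel on the noiseless patch $\Phi=\varphi_{\Omega_{L,x}}$, as in App.~\ref{app:gaussian_bound}. The channel depends on $\varphi$ only through $\Phi$, so $I(\varphi;\phi_{t;\Omega})=I(\Phi;\phi_{t;\Omega})$; the data processing bound there is in fact an equality in this case. Theorem~\ref{thm:white_noise_MI} then expresses this information as $S(\phi_{t;\Omega})$ minus a constant that is independent of the data. Since $\phi_{t;\Omega}$ has covariance $\bar\alpha_t\Sigma_\Omega+(1-\bar\alpha_t)I$ under both $P$ and $Q_1$, the maximum entropy property of the Gaussian (Theorem~\ref{thm:gaussian_bound}) gives $I_P(L)\le \tfrac12\mathbf{Tr}\ln(I+\sigma_t^{-2}\Sigma_\Omega)=I_{Q_1}(L)$. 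Here $\Sigma_\Omega$ is the restriction of $\Sigma$ to the patch, and the mean $\mu$ plays no role because entropy is shift invariant.

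For the second inequality, compare $\tfrac12\ln\det(I+\sigma_t^{-2}\Sigma_\Omega)$ with $\tfrac12\sum_i\ln(1+\sigma_t^{-2}\Sigma_{ii})$ using Hadamard's inequality: for a positive definite matrix $M$, $\det M\le\prod_i M_{ii}$. Applying this to $M=I+\sigma_t^{-2}\Sigma_\Omega$, whose diagonal entries are $1+\sigma_t^{-2}\Sigma_{ii}$, gives $I_{Q_1}(L)\le I_{Q_2}(L)$. Equivalently, this is subadditivity of entropy for the Gaussian patch.

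The only delicate step is the translation from the mutual information ordering to the critical scale ordering. $L$ takes discrete values and $L_c$ is defined implicitly by the equality in \eqref{eq:collapse_condition}, so I would state the definition as $L_c=\inf\{L: I(L)\ge\ln|\mathcal{D}|\}$, or use a continuous interpolation in $L$, and make sure the monotonicity argument respects ties. Once that convention is fixed, the argument is immediate.
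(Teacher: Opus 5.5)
Your proposal is correct and follows the same route as the paper: the Gaussian maximum-entropy bound gives $I_P(L)\le I_{Q_1}(L)$ at each patch scale, a diagonal comparison gives $I_{Q_1}(L)\le I_{Q_2}(L)$, and monotonicity in $L$ transfers both orderings to the critical scales. The one difference is that you justify the middle step with Hadamard's inequality, which is cleaner than the paper's appeal to concavity of $\mathrm{Tr}\ln$ plus ``$\mathrm{diag}(\Sigma)\prec\Sigma$'' (true only if $\prec$ means majorization of the spectrum via Schur--Horn, not the semidefinite order), and your infimum convention for $L_c$ makes the paper's loosely stated threshold comparison precise.
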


\begin{proof}
    The Gaussian bound (thm. \ref{thm:gaussian_bound}) entails
    \begin{align}
        I(\phi_{t,\Omega}; \varphi \sim P) \leq I(\phi_{t,\Omega}; \varphi \sim Q_1)
    \end{align}
    We further have a closed formula for the mutual information of a Gaussian variable
    \begin{equation}
        I(\phi_{t,\Omega}; \varphi \sim Q_1) = \Tr[\ln(I + \frac{\Sigma}{\sigma_t^2})]
    \end{equation}
    Since $Tr \ln(\cdot)$ is a strictly concave function and $\text{diag}(\Sigma) \prec \Sigma$ in the ordering of semi-definite matrices, $Tr[\ln(I + \frac{\Sigma}{\sigma_t^2})] \leq Tr[\ln(I + \frac{\text{Diag}(\Sigma)}{\sigma_t^2})]$
    \begin{align}
        I(\phi_{t,\Omega}; \varphi \sim Q_1) \leq I(\phi_{t,\Omega}; \varphi \sim Q_2)
    \end{align}
    From this it follows that, for $L > L'$, $I(\phi_{t,\Omega_L} ; \varphi) \geq I(\phi_{t,\Omega_{L'}} ; \varphi)$. Consequently, we find that at $L_{c;Q_1}$,
    \begin{align}
        \ln|\mathcal{D}| = I(\phi_{t,\Omega_{L_{c};Q_1}} ; \varphi \sim Q_{1}) \geq I(\phi_{t,\Omega_{L_{c};P}} ; \varphi \sim P)
    \end{align}
    so we deduce that $L_{c;Q_1} \leq L_{c;P}$. Applying the same argument for $Q_2$ and $Q_1$ yields $L_{c;Q_2} \leq L_{c;Q_1}$.
\end{proof}

These bounds alone are sufficient to answer our most basic question in the affirmative: \textit{for data with exactly scale-invariant power laws}, the relationship (\ref{eq:isotropic_asymptotic}) provides a {lower bound} on the critical scale. This implies that for any data distribution with translation invariance, bounded pixel-wise variance, and a scale-invariant power law $|k|^{-2}$, we have $L_{c}(\sigma_t) \geq C L_{spec}(\sigma_t)$ for arbitrarily large image sizes even with a fixed dataset size. We also find that these Gaussian bounds are very strong predictors of the critical scale: as shown in figure \ref{fig:mem_vs_spec}, the patch mutual information and concomitant critical scales for even highly non-Gaussian datasets like MNIST tend to be well-predicted by their Gaussian bounds. 

Beyond these Gaussian bounds, a much more general scaling theorem is available, which covers a wide range of naturalistic images:
\begin{theorem}\label{theorem:extensive_scaling_appendix}
    Suppose we have a translationally-invariant field $\varphi$ with bounded pointwise variance and asymptotically image-size-extensive entropy density $\lim_{|\Omega| \to \infty} \frac{1}{|\Omega|} S(\varphi_{\Omega}) = \tilde{S}$. Then, for $\ln|\mathcal{D}|$ which is $O(1)$ with respect to $\sigma_t$,
    \begin{align}
         L_{c}(\sigma_t) \sim O\big(\sigma_t \sqrt{\ln|\mathcal{D}|}\big)
    \end{align}
\end{theorem}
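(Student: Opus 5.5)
Since the collapse condition \eqref{eq:collapse_condition} defines $L_c$ implicitly through $\ln|\mathcal{D}| = I(\varphi;\phi_{t,\Omega_{L}})$, the plan is to establish two-sided estimates for the patch mutual information in the high-noise regime, of the form
\begin{align*}
    c_1\,\frac{|\Omega_L|}{\sigma_t^2} \;\lesssim\; I(\varphi;\phi_{t,\Omega_L}) \;\lesssim\; c_2\,\frac{|\Omega_L|}{\sigma_t^2},
\end{align*}
uniformly for $|\Omega_L|$ in the range relevant to the transition. We then invert, using $|\Omega_L| = L^2$. Setting the estimate equal to $\ln|\mathcal{D}| = O(1)$ forces $L_c^2 = \Theta(\sigma_t^2\ln|\mathcal{D}|)$. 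The hypothesis that $\ln|\mathcal{D}|$ is $O(1)$ in $\sigma_t$ guarantees that $L_c\to\infty$ as $\sigma_t\to\infty$, so the asymptotic, extensive regime of the field is the one that actually controls the transition.

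\textbf{Upper bound.} This follows directly from Theorem \ref{thm:gaussian_bound} together with Corollary \ref{cor:diag_gaussian_bound}. Passing to the diagonal Gaussian $Q_2$ gives
\begin{align*}
    I \le \tfrac12\sum_{x\in\Omega}\ln\bigl(1+s^2/\sigma_t^2\bigr) \le \frac{|\Omega|\,s^2}{2\sigma_t^2},
\end{align*}
where $s^2$ bounds the pointwise variance. Only bounded variance is used here, and translation invariance makes $s^2$ uniform across pixels. Inverting yields $L_c \ge \sigma_t\sqrt{2\ln|\mathcal{D}|/s^2}$, which matches \eqref{eq:isotropic_asymptotic}.

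\textbf{Lower bound.} This is the main obstacle, and it is where the extensive entropy density $\tilde S$ and translation invariance are needed. Taken alone, the small-SNR expansion of App.~\ref{app:asymptotic_expansion} gives
\begin{align*}
    I = \frac{\operatorname{Tr}\Sigma_\Omega}{2\sigma_t^2} + O\bigl(\sigma_t^{-4}\operatorname{Tr}\Sigma_\Omega^2\bigr).
\end{align*}
The trouble is that $\operatorname{Tr}\Sigma_\Omega^2$ can be superextensive for long-range, scale-invariant correlations, since the top eigenvalue of $\Sigma_\Omega$ can grow with $|\Omega|$. The term-by-term expansion therefore does not obviously stay linear in $|\Omega|$ once $|\Omega|\sim\sigma_t^2$.

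My first attempt at the lower bound is a blocking argument. Partition $\Omega$ into $m$ disjoint sub-blocks $B_j$ of fixed size $b$, and use
\begin{align*}
    I(\varphi;\phi_{\Omega}) \ge I(\varphi_\Omega;\phi_\Omega) - \bigl(\text{correction}\bigr) \ge \sum_j I(\varphi_{B_j};\phi_{B_j}) - \Bigl(\sum_j S(\varphi_{B_j}) - S(\varphi_\Omega)\Bigr).
\end{align*}
This rests on the chain rule with independent channel noise across blocks. The subtracted term is the total correlation (multi-information) of the clean blocks, and extensivity with density $\tilde S$ together with translation invariance should make it $o(|\Omega|)$ relative to a fixed per-block gain once $b$ is large. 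Each block contributes $I(\varphi_{B};\phi_{B}) \approx \operatorname{Tr}\Sigma_B/(2\sigma_t^2) = b\,\mathrm{Var}/(2\sigma_t^2)$ at high noise, because $b$ is fixed and the second-order remainder is $O(b^2/\sigma_t^4)$.

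The delicate point is that the total-correlation penalty does not scale with $1/\sigma_t^2$. The subtraction must therefore be applied to the noisy variables $\phi_{B_j}$ instead, where the multi-information is damped by the noise. Concretely, the quantity to bound is $\sum_j S(\phi_{B_j}) - S(\phi_\Omega)$, which I would try to control at order $O(|\Omega|\,\sigma_t^{-4})$ via the same cumulant expansion, applied only to cross-block covariances. If that works, then $I \ge c\,|\Omega|/\sigma_t^2$ for $|\Omega| \lesssim \sigma_t^2\ln|\mathcal{D}|$, which suffices to conclude $L_c = O(\sigma_t\sqrt{\ln|\mathcal{D}|})$. Combined with the upper bound, this gives the stated scaling.
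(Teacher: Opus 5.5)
\textbf{The first half is fine; the second half has a gap.} Your first half is correct and is the paper's. The Gaussian bound followed by Hadamard's inequality (your $Q_2$ step) gives $I\le|\Omega|s^2/(2\sigma_t^2)$, hence $L_c\ge\sigma_t\sqrt{2\ln|\mathcal{D}|/s^2}$. You also correctly note that this forces $|\Omega_{L_c}|\to\infty$.

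The gap is the reverse inequality, which you leave conditional. The estimate you propose for it is false under the theorem's hypotheses. Your decomposition is the exact identity $I(\varphi_\Omega;\phi_\Omega)=\sum_j I(\varphi_{B_j};\phi_{B_j})-\mathrm{TC}(\phi_{B_1},\dots,\phi_{B_m})$. At order $\sigma_t^{-4}$, the covariance part of the noisy total correlation is $\frac{1}{4\sigma_t^4}\sum_{j\ne k}\|\Sigma_{B_jB_k}\|_F^2$. This is the cross-block part of $\operatorname{Tr}\Sigma_\Omega^2$, the same superextensive quantity you flagged, so blocking relocates the difficulty rather than removing it. It is $O(|\Omega|)$ only if $C$ is square-summable. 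In two dimensions with $C(r)\sim r^{\alpha-2}$ that requires $\alpha<1$; for $1<\alpha<2$ it grows like $|\Omega|^{\alpha}$. Once $\|\Sigma_\Omega\|_{\mathrm{op}}$ is comparable to $\sigma_t^2$, the expansion is not valid at all.

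\textbf{Counterexample.} Take $\varphi_x=Z+\xi_x$ with a shared $Z\sim\mathcal{N}(0,v)$ and i.i.d.\ $\xi_x\sim\mathcal{N}(0,s_0^2)$. This field is translation invariant, has bounded variance, and has $\tilde S=\tfrac12\ln(2\pi e s_0^2)$. Writing $\phi=\varphi+\sigma_t\eta$,
\begin{equation*}
I(\varphi_\Omega;\phi_\Omega)=\frac{|\Omega|}{2}\ln\Bigl(1+\frac{s_0^2}{\sigma_t^2}\Bigr)+\frac12\ln\Bigl(1+\frac{v|\Omega|}{\sigma_t^2+s_0^2}\Bigr).
\end{equation*}
For fixed block size, however, $\sum_j I(\varphi_{B_j};\phi_{B_j})\approx(s_0^2+v)|\Omega|/(2\sigma_t^2)$. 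Hence $\mathrm{TC}\approx\frac{v|\Omega|}{2\sigma_t^2}-\frac12\ln\bigl(1+\frac{v|\Omega|}{\sigma_t^2}\bigr)$. At $|\Omega|\sim\sigma_t^2$ this is the same order as the main term, not $O(|\Omega|\sigma_t^{-4})=O(\sigma_t^{-2})$. The block sum counts the information about the shared mode once per block.

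\textbf{Missing ingredient.} What you need is the entropy power inequality applied to the whole clean patch, which is exactly the paper's argument:
\begin{equation*}
I(\varphi;\phi_\Omega)=S(\phi_\Omega)-S(\sigma_t\eta_\Omega)\ \ge\ \frac{|\Omega|}{2}\ln\Bigl(1+\frac{e^{2S(\varphi_\Omega)/|\Omega|}}{2\pi e\,\sigma_t^2}\Bigr).
\end{equation*}
This bound depends only on the joint entropy of $\varphi_\Omega$, so correlations of any range cost nothing. It is also precisely where extensivity enters. Along the transition $|\Omega|\to\infty$, so $S(\varphi_\Omega)/|\Omega|\to\tilde S$. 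This gives $\ln|\mathcal{D}|\gtrsim L_c^2e^{2\tilde S}/(4\pi e\,\sigma_t^2)$, and hence $L_c\lesssim 2\sigma_t e^{-\tilde S}\sqrt{\pi e\ln|\mathcal{D}|}$. In the example above, the effective per-pixel variance $e^{2\tilde S}/(2\pi e)$ is the innovation variance $s_0^2$, not $s_0^2+v$.

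If you want to keep blocks, there is a salvageable version. Use the conditional chain rule $I\ge\sum_j I(\varphi_{B_j};\phi_{B_j}\mid\phi_{B_{<j}})$, a conditional EPI, the inequality $S(\varphi_{B_j}\mid\phi_{B_{<j}})\ge S(\varphi_{B_j}\mid\varphi_{B_{<j}})$, and convexity of $x\mapsto\ln(1+ce^{x})$. But this just reproduces the EPI bound above; a cumulant expansion cannot substitute for it.
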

\begin{proof}
    The inequality $L(\sigma_t) \geq C\, \sigma_t \sqrt{\ln|\mathcal{D}|}$ follows immediately from corollary \ref{cor:diag_gaussian_bound}. To show the reverse direction, we use the result
    \begin{align}
        I(\phi_{\Omega}; \varphi) = S(\phi_{\Omega}) - S(\phi_{\Omega} | \varphi) = S(\phi_{\Omega}) - S(\sigma_t \eta_{\Omega})
    \end{align}
    where $\sigma_t \eta_{\Omega}$ is the added noise in the patch $\Omega$. We can produce a lower bound on the information $S(\phi_{\Omega})$ via the entropy power inequality, which states that for any variables $X, Y \in \mathbb{R}^d$,
    \begin{align}
        S(Y + X) \geq \frac{d}{2} \ln( e^{\frac{2}{d} S(Y)} + e^{\frac{2}{d} S(X)}  ) = \frac{d}{2} \ln(1 + e^{\frac{2}{d} [S(X) - S(Y)]}) + S(Y)
    \end{align}
    Applying this with $X = \varphi_{\Omega}$ and $Y = \sigma \eta$ yields
    \begin{align}
        S(\phi_{\Omega}) - S(\sigma_t\eta_{\Omega}) \geq  \frac{|\Omega|}{2} \ln \bigg(1 + \frac{e^{\frac{2}{|\Omega|} S(\varphi_{\Omega})}}{2\pi e \sigma_t^2}\bigg)
    \end{align}
    Since $\sigma_t^2$ is going off to infinity and $L(\sigma_t) \geq C\, \sigma_t \sqrt{\ln|\mathcal{D}|}$, $|\Omega|= L^2$ is going off to infinity as well. Therefore, we can replace $S(\varphi_{\Omega})/|\Omega|$ with its limit.
    \begin{align}
        I(\phi_{\Omega} ; \varphi) \geq \frac{|\Omega|}{2} \ln\bigg(1 + \frac{e^{2 \tilde{S} + o(1)}}{2\pi e \sigma_t^2}\bigg)
    \end{align}
    Applying this to the collapse condition (\ref{eq:memorization_app}) we find that 
    \begin{align}
        \ln |\mathcal{D}| \geq \frac{L_{c}^2(\sigma_t)}{2} \frac{e^{2 \tilde{S}}}{2\pi e \sigma_t^2} + o(\frac{1}{\sigma_t^2})
    \end{align}
    This is in the form of the information condition scaling for an isotropic Gaussian with pointwise variance $\frac{e^{2\tilde{S}}}{2\pi e}$, which asymptotically as $\sigma_t \to \infty$ yields the result
    \begin{align}
        L_{c}(\sigma_t) + o(\frac{1}{\sigma_t^2}) \leq  2\sigma_t \, e^{-\tilde{S}}\sqrt{\pi e\ln|\mathcal{D}|}
    \end{align}
    This completes the proof.
\end{proof}

\subsection{Deviations from scale-invariance}

For power laws $\alpha > 2$, the spectral scale (\ref{eq:spectral_scale}) increases sublinearly with $\sigma_t$. The Gaussian bounds show that $L_{c} \geq C\sigma_t$, which as $\sigma_t \to \infty$ shows that $L_{c}$ asymptotically dominates over the spectral scale and thus is asymptotically learnable with a Bayes-optimal local denoiser. This case, while instructive, is less phenomenologically interesting for natural images. For power laws $\alpha < 2$, the bulk entropy scales extensively, so the conditions for theorem \ref{theorem:extensive_scaling_main} are met and we find $L_{c} \sim \sigma_t \sqrt{\ln|\mathcal{D}|}$. The spectral scale $L_{spec} \sim \sigma_t^{2/\alpha}$ thus grows faster with $\sigma_t$ than the critical scale at fixed $\ln|\mathcal{D}|$. This implies that to match the spectral scale up to arbitrarily large image sizes, we need to scale the dataset size according to
\begin{align}
    \ln |\mathcal{D}| \sim \sigma_t^{\frac{2\epsilon}{2 - \epsilon}} \sim L_{spec}^{\epsilon}
\end{align}
While this scaling is superpolynomial with regards to image size (and therefore experiences a mild form of the curse of dimensionality), it is significantly subexponential, i.e. it significantly improves over the naive $|\mathcal{D}| \sim const^{d}$ scaling characteristic of the empirical score function. The right hand side in practice exhibits a very weak dependence-- e.g., for $\epsilon = 0.1$, taking $L_{spec} = 32$ to $L = 1024$ results in $L_{spec}^\epsilon \sim 1.4$ to $L_{spec}^{\epsilon} = 2$. The exact impact of this scaling on the required $|\mathcal{D}|$ will depend on the constants of proportionality, which are in general problem-dependent. In our experiments, we see little evidence of this $\epsilon$-dependent growth of the spectral scale relative to the critical scale, although such growth may become more pronounced for very large-scale images.

\subsection{Empirical analysis of the Posterior Entropy on Real and Toy Datasets}\label{app:empirical_post_entropy}
In figure \ref{fig:posterior_entropy}, we show comparisons between the real entropy of the posterior distribution and the theoretical prediction for two toy models of data and two real datasets. We take $5 \times 5$ image patches from $32 \times 32$ images and calculate the posterior distribution according to \eqref{eq:bayes_weights_channel} and calculate $-\sum_{\varphi \in \mathcal{D}} P(\varphi | \phi_{\Omega_x,L=5}) \ln P(\varphi | \phi_{\Omega_x,L=5})$ by summing over all training samples as the noise strength $\sigma_t^2$ is varied between $10^{-3}$ and $10^{1.5}$. The entropy is then calculated with dataset sizes $2^8=256$, $2^{10}=1024$, $2^{12} = 4096$, and $2^{14} = $ 16,384. Each dataset size is plotted in its own color. In sub-figure (a), training data samples were drawn from a Gaussian distribution with pixel covariance $\langle \varphi(x) \varphi(y)\rangle = \frac{1}{|x-y|^{2-\alpha}}$ for $x \neq y$ with $\langle \varphi(x)^2 \rangle = 1$. The distance between neighboring pixels is set to 1. The theory curve uses the expression for the mutual information of Gaussian variables \eqref{eq:gaussian_mi}. In sub-figure (b), a mixture of 16 Gaussians was sampled in the fashion described in section \ref{app:hierarchical_mixture_of_gaussians}. Section \ref{app:hierarchical_mixture_of_gaussians} also describes the formula for the theory curve. All covariances were taken to be isotropic with the variance of the means equal to 1 and a variance of the Gaussians equal to 0.15. In sub-figures (c) and (d), data samples were drawn uniformly from the datasets CIFAR10 and CelebA and then downscaled to $32 \times 32$ color images. The theory curves were calculated by plugging the empirical patch covariance matrix into the equation for the Gaussian mutual information \eqref{eq:gaussian_mi}.

\subsection{Empirical analysis of the critical scale and spectral scale}\label{app:empirical_scaling}
\begin{figure}
    \centering
    
    \includegraphics[width=0.8\textwidth]{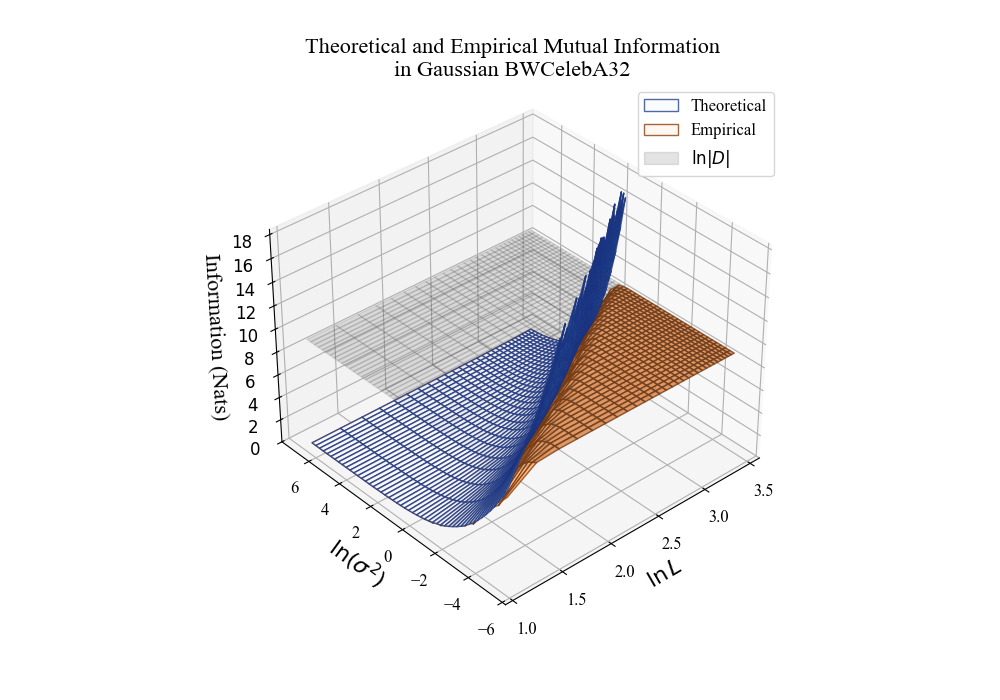}
    \caption{A plot of the mutual information $I(\phi_{\Omega}; \varphi)$ on a Gaussianized version of the BWCeleba32 dataset, as a function of both the noise level $\sigma^2$ and the patch length $L$, relative to the prior for a) the theoretical distribution b) a finite sample of $10^4$ examples, as a function of noise level and patch scale. Plotted as well is the critical threshold $I = \ln|\mathcal{D}|$.}
    \label{fig:3dplotandscaling}
\end{figure}

To test the qualitative validity of our theoretical analyses, we ran a number of experiments on realistic data distributions. Firstly, we wanted to directly test the validity of the collapse condition (\ref{eq:memorization_app}). To do so, we first took the \textit{Gaussianized} version of an exemplar dataset (in our case BWCeleba32), defined as the Gaussian distribution whose mean and covariance match the first and second order statistics of the dataset. The reason for this choice is that it furnishes a dataset with naturalistic statistics, for which the underlying patch mutual information $I(\phi_{\Omega_L, t}; \varphi)$ can be computed analytically using the formula (\ref{eq:gaussian_mi}). We first computed the theoretical mutual information for a large range of length scales $L$ and noise parameters $\sigma_t$; the results are plotted in figure \ref{fig:3dplotandscaling} as the blue `Theoretical' surface. As expected, these results grow without bound as the noise level decreases/as the length scale increases. We then sampled a dataset $\mathcal{D}$ of $10^4$ samples from this Gaussianized prior, and evaluated the mutual information of the noised patch relative to the \textit{discrete} prior induced by this dataset; the result of this evaluation is plotted in figure \ref{fig:3dplotandscaling} as the orange `Empirical' surface. We found, as expected, that the empirical finite-dataset result closely matches the theoretical infinite-dataset result when the latter is below the critical threshold (illustrated by the grey surface in fig. \ref{fig:3dplotandscaling}). When the latter exceeds the critical threshold, the empirical surface `peels off,' never reaching its theoretical upper bound of $\ln|\mathcal{D}|$. 

To test the concordance between the critical scale and the spectral scale, we compute numerically the location of a \textit{sub}critical threshold, defined by $\ln|\mathcal{D}| - \ln d^* = I(\phi_{\Omega_L}; \varphi)$. Intuitively, this defines the necessary amount of information to resolve down to an `effective sample size' of $d^*$ from a dataset of size $|\mathcal{D}|$. Because this threshold is below the critical threshold, we expect the empirical estimator for $I(\phi_{\Omega_L}; \varphi)$ to be robust and match the theoretical infinite-data mutual information with a high degree of accuracy, as demonstrated in the plot (\ref{fig:3dplotandscaling}) in the case where the exact mutual information is known analytically. We take $d^* = 10$ for our experiments. We compute the subcritical threshold curve by sweeping the empirical mutual information over $\sigma_t$ and $L$ and computing numerically the intersection of this surface with the threshold surface. We also compute the Gaussian bound analytically over the same $\sigma_t, L$ domain, using the patch covariance statistics. We find that the Gaussian bound is usually a strong predictor of the (sub)critical scale. Finally, for each dataset, we extract the curve of the spectral scale $L_{spec}(\sigma_t)$ as a function of the noise level. To do this, we identify at each $\sigma_t$ the Fourier band $|k^*(\sigma_t)|$ wherein the average power spectral density for a mode in that band equals the added noise power $\sigma_t^2$, and take $L_{spec}(\sigma_t) = |k^*(\sigma_t)|^{-1}$. We generally find that this curve generally tracks the critical scale remarkably well, validating our thesis that there is a concordance between the spectral and critical scales on realistic datasets.

Finally, to illustrate the impact of the scale on the performance of the Bayes-optimal local denoiser, we took two images, one from the denoiser's training set and an unrelated image from the test set, and applied noise to each at a noise level of $\sigma_t = 1$. We then applied an optimal local denoiser to each image at length scales $L$. At small scales ($L=5$), the Bayes-optimal local denoiser is robust, performing comparably on the training and test set, but produces poor quality images. At the test set-optimal scale, near but below the critical scale, ($L = 7$, boxed in green), the denoising estimate is both accurate and robust on both the training set and test set. Past this scale, statistical error begins to accumulate on the denoising estimate of test image, while the model's performance on the training image increases. At very large scales ($L = 17,33$), the LS model denoises the training image perfectly, while failing catastrophically on the test set image. This result is illustrated in panel \ref{fig:pixelization} in figure \ref{fig:multiscale}.

\section{Why scaling the patch size with the spectral scale is necessary for a local denoiser}\label{app:renormalization}
Our work has emphasized the critical importance of using a local denoiser with a length scale that corresponds to the spectral scale. We can make the rationale for this somewhat more precise in the context of Gaussian power-law data. Roughly speaking, any filter family that achieves this scale will be able to reconstruct the power spectral density $P(k) \sim |k|^{-\alpha}$ of the target distribution; conversely, a filter family that scales according to a different law will produce data distributed according to a different power law spectral density. 

For a power-law Gaussian distribution with power spectral density $P(k) = A |k|^{-\alpha}$, the Wiener filter in Fourier space is given by
\begin{align}
    \hat{w}(\sigma^2, k) = \frac{1}{1 + A^{-1}\sigma^2 |k|^{\alpha}}
\end{align}
This satisfies an interesting invariance condition, which we call the \textit{renormalization condition}:
\begin{align}
    \hat{w}(\beta \sigma^2, k) = \hat{w}(\sigma^2, \beta^{1/\alpha} k)
\end{align}
In real space, this corresponds to the condition
\begin{align}
    w(\beta\sigma^2, x) = \beta^{-2/\alpha} w(\sigma^2, \beta^{-1/\alpha} x)
\end{align}
The content of this statement is that the family of Wiener filters is scale-equivariant: the filter used for a given noise level $\sigma^2$ is simply a rescaled and dilated version of the filter at any other given noise level. The Wiener filter also satisfies two other important properties: it is \textit{normalized} (i.e. it integrates to 1), and it goes to 0 in Fourier space as $k \to \infty$ (spectral \textit{falloff}). The normalization property in particular is important for a consistent denoiser: it entails in particular that as the noise level goes to 0, the estimate of the image produced by the filter converges to the true image, as opposed to a rescaled version of it.

In our paper, we have focused on the question of \textit{strictly local denoising}, where the filter mass is constrained to lie entirely within a particular noise-dependent radius, which we expect to be proportional to the spectral scale. We might wonder what sort of distributions the resulting optimal localized filters generate. It turns out that the three properties of \textit{renormalization}, \textit{normalization}, and \textit{falloff}, on their own are enough to guarantee that a diffusion model, parameterized by a linear denoiser with \textit{any} filter shape $\hat{m}(\sigma^2,k)$, will obtain the correct spectral power law $\sim |k|^{-\alpha}$ (with a uniform error, relative to the optimal filter, on the global amplitude $A$). Conversely, if the filter family satisfies the renormalization property with a renormalization condition that is for a \textit{different} power value $\alpha ' \neq \alpha$ (e.g. if it is scale-equivariant but localized to a radius that is \textit{not} proportional to the spectral scale), this filter family will generate a power law with an \textit{incorrect} exponent. This result is captured below:

\begin{theorem}\label{thm:renormalization}
    Suppose we parameterize a diffusion model with a linear denoising model
    \begin{align}\label{eq:linear_denoiser}
        M_{t}[\phi_t](x) = m(\sigma^2_{t}, x) \,\,\star\,\, \phi_{t}(x)
    \end{align}
    where $m(\sigma^2, x)$ is a filter whose Fourier transform (with respect to $x$) satisfies the renormalization condition
    \begin{align}\label{eq:renormalization}
        \hat{m}(\beta \sigma^2, k) = \hat{m}(\sigma^2, \beta^{1/\alpha} k)
    \end{align}
    the normalization condition
    \begin{align}\label{eq:normalization}
        \int m(\sigma^2, x)\,dx = 1
    \end{align}
    and two falloff conditions for some $p,q>0$.
    \begin{align*}
        \lim_{|k|\to\infty} |k|^p \hat{m}(\sigma^2, k) = 0 & & \lim_{|k|\to 0} |k|^{-q} (\hat{m}(\sigma^2, k) - 1) = 0
    \end{align*}
    These falloff conditions can also be written in terms of $\sigma^2$ for some $p',q'>0$.
    \begin{align*}
        \lim_{\sigma^2\to\infty} (\sigma)^{2p'} \hat{m}(\sigma^2, k) = 0 & & \lim_{\sigma^2 \to 0} (\sigma)^{-2q'} (\hat{m}(\sigma^2, k) - 1) = 0
    \end{align*}
    Then the diffusion model will generate a Gaussian distribution with a power spectral density
    \begin{align}\label{eq:psd}
        P(k) \sim |k|^{-\alpha}
    \end{align}
\end{theorem}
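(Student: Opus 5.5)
Since the denoiser in \eqref{eq:linear_denoiser} is a translation-invariant linear filter, I will work in Fourier space, where every mode $k$ evolves independently. Start the reverse process from $\phi_1\sim\mathcal N(0,I)$ and write $\hat\phi_t(k)$ for the Fourier coefficient. Inserting $M_t$ into Tweedie's formula \eqref{eq:Tweedie} and then into the reverse ODE \eqref{eq:reverse_process} gives a scalar linear ODE for each mode:
\begin{align*}
\frac{d\hat\phi_t(k)}{dt} = -\gamma_t\Big(1 - \frac{1}{1-\bar\alpha_t}\Big)\hat\phi_t(k) - \gamma_t\frac{\sqrt{\bar\alpha_t}}{1-\bar\alpha_t}\,\hat m(\sigma_t^2,k)\,\hat\phi_t(k).
\end{align*}
Linearity keeps the generated distribution Gaussian with independent modes, so $P(k)=\mathrm{Var}[\hat\phi_0(k)]=\exp\big(2\int_0^1(\cdots)\,dt\big)$. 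The plan is to rewrite this exponent so that $k$ and the noise level enter only through $\hat m$, and then apply the renormalization condition. To do this, I change variables from $t$ to the variance-exploding variable $\psi_t=\phi_t/\sqrt{\bar\alpha_t}$ and the noise parameter $s=\sigma_t^2$. In these variables the probability-flow ODE becomes $d\hat\psi/ds=\frac{1}{2s}\big(1-\hat m(s,k)\big)\hat\psi$. The $\bar\alpha$ factors cancel, and since the schedule drops out the result is schedule-independent.

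Integrating from $s=S\to\infty$ (where $\mathrm{Var}[\hat\psi_S]=S$) down to $s=0$ gives
\begin{align*}
\ln P(k) = \lim_{S\to\infty}\Big[\ln S - \int_0^S \frac{1-\hat m(s,k)}{s}\,ds\Big].
\end{align*}
Next I use \eqref{eq:renormalization} with $\beta = s/s_0$: $\hat m(s,k)=\hat m(s_0,(s/s_0)^{1/\alpha}k)$. Setting $g(u)=\hat m(s_0,u)$ and substituting $u=(s/s_0)^{1/\alpha}|k|$ gives $ds/s=\alpha\,du/u$ and upper limit $U=(S/s_0)^{1/\alpha}|k|$. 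Writing $\ln S=\alpha\ln U-\alpha\ln|k|+\ln s_0$, the bracket becomes
\begin{align*}
-\alpha\ln|k| + \ln s_0 + \alpha\Big[\ln U - \int_0^U \frac{1-g(u)}{u}\,du\Big].
\end{align*}
The main task is to show that the last bracket converges to a finite constant $c$ independent of $k$ as $U\to\infty$. I will split the integral at $u=1$.

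On $[0,1]$, the small-$|k|$ falloff condition gives $1-g(u)=o(u^q)$, so $(1-g)/u$ is integrable at $0$. This is also where the normalization condition \eqref{eq:normalization} is needed, since it is equivalent to $g(0)=1$ and ensures there is no $\ln 0$ divergence. On $[1,U]$, I write $\ln U=\int_1^U du/u$, so the bracket contributes $\int_1^U g(u)/u\,du$. The large-$|k|$ falloff $g(u)=o(u^{-p})$ makes this integral converge as $U\to\infty$. Hence $c=-\int_0^1\frac{1-g}{u}du+\int_1^\infty\frac{g}{u}du$ is finite, and $P(k)=s_0e^{\alpha c}|k|^{-\alpha}$, which is \eqref{eq:psd} with an amplitude that depends only on the filter shape.

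I expect the main obstacle to be justifying the limits in the original time variable. Specifically, I must check that the exploding terms $\ln S$ (from the initial noise) and the divergent part of the integral cancel exactly. I also need the $\sigma^2\to 0$ endpoint to be harmless, which the $\sigma$-form falloff conditions are designed to guarantee, because they make the integrand integrable at both endpoints uniformly in $k\neq0$. I also need to check that starting at $t=1$ with $\bar\alpha_1=0$ is correctly represented by the $S\to\infty$ limit.
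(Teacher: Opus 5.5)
\textbf{A step that fails as written.} The claim that ``the $\bar\alpha$ factors cancel'' is false for the ODE you display. That ODE inserts $M_t[\phi_t]=m\star\phi_t$ literally into the variance-preserving flow. Pass to $\psi_t=\phi_t/\sqrt{\bar\alpha_t}$ and $s=\sigma_t^2$, using $ds/dt=2\gamma_t/\bar\alpha_t$. What you actually get is
\begin{align*}
\frac{d\hat\psi}{ds}=\frac{1}{2s}\Big(1-\sqrt{\bar\alpha(s)}\,\hat m(s,k)\Big)\hat\psi,\qquad \bar\alpha(s)=\frac{1}{1+s},
\end{align*}
not $\frac{1}{2s}(1-\hat m)\hat\psi$.

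The leftover factor is not harmless. It adds $-h(k)$ to $\ln P(k)$, where $h(k)=\int_0^\infty \frac{1-(1+s)^{-1/2}}{s}\,\hat m(s,k)\,ds$. By renormalization and the falloff conditions, $h(k)=-\alpha\ln|k|+O(1)$ as $|k|\to 0$. This cancels the power law, so the generated spectrum flattens at low frequency.

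The cancellation you want holds only if the normalized filter acts on the rescaled field, $M_t[\phi_t]=m(\sigma_t^2,\cdot)\star(\phi_t/\sqrt{\bar\alpha_t})$. This is the correct variance-preserving reading of the hypotheses. The paper's Wiener filter $\hat W_t(k)=\sqrt{\bar\alpha_t}\Sigma(k)/(\bar\alpha_t\Sigma(k)+1-\bar\alpha_t)$ equals $\bar\alpha_t^{-1/2}\,\Sigma(k)/(\Sigma(k)+\sigma_t^2)$, so its DC value is $\bar\alpha_t^{-1/2}$, not $1$.

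Equivalently, start directly from the variance-exploding flow $\frac{d\phi_t}{dt}=\frac12(\partial_t\log\sigma_t^2)(\phi_t-M_t[\phi_t])$ with $\phi_t=\phi_0+\sigma_t\eta$, as the paper does. With this fix, your ODE and your formula $\ln P(k)=\lim_{S\to\infty}[\ln S-\int_0^S\frac{1-\hat m(s,k)}{s}ds]$ coincide with the paper's under $u=\ln s$.

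\textbf{Comparison of the final step.} The paper differentiates $\ln P$ in $\log|k|$ under the integral, citing dominated convergence. It then uses renormalization to trade $\partial_{\log|k|}\hat m$ for $\alpha\,\partial_{\log\sigma^2}\hat m$, and integrates exactly to $\alpha[\hat m(\infty,k)-\hat m(0,k)]=-\alpha$.

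You instead substitute $u=(s/s_0)^{1/\alpha}|k|$. This pushes all $k$-dependence into $\ln S$ and the upper limit, and you show the remaining bracket converges to a $k$-independent constant. Your route buys three things:
\begin{itemize}
\item It needs no differentiability of $\hat m$ and no dominating function; the paper invokes dominated convergence without exhibiting one.
\item It uses only the $k$-falloff of $g=\hat m(s_0,\cdot)$ at a single noise level, so the $\sigma$-form conditions are redundant.
\item It yields the explicit amplitude $P(k)=s_0e^{\alpha c}|k|^{-\alpha}$, which one can check is independent of $s_0$.
\end{itemize}
The paper's version is shorter and makes the origin of the exponent transparent: it is $\alpha$ times the total drop of $\hat m$ from $1$ to $0$.

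Both arguments implicitly take $\hat m$ to be radial in $k$. Otherwise your constant becomes $c(k/|k|)$, and you get $A(k/|k|)\,|k|^{-\alpha}$.
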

\begin{proof}
    For the purposes of this proof, we will use the `variance-exploding' parameterization:
    \begin{align}
        \phi_t &= \phi_0 + \sigma_t \eta
    \end{align}
    where $\sigma_t \to \infty$ as $t \to \infty$. An analogous proof should be available using other parameterizations, such as the variance-preserving parameterization. The probability flow equation for this parameterization is
    \begin{align}\label{eq:flow}
        \frac{d}{dt} \phi_t &= \frac{1}{2}(\partial_t \log \sigma_t^2) (\phi_t - M_t[\phi_t])
    \end{align}
    which, for the linear denoiser (\ref{eq:linear_denoiser}), reduces to
    \begin{align}\label{eq:flow_linear}
        \frac{d}{dt} \phi_t &= \frac{1}{2}(\partial_t \log \sigma_t^2) \Big(1 - (m(\sigma^2_t, x) \, \star)\Big) \phi_t
    \end{align}
    The linear operator $(1 - m(\sigma^2_t, x) \, \star))$ is diagonalized in the Fourier basis, so we can solve \eqref{eq:flow_linear} by solving the evolution of each Fourier component $\hat{\phi}_t(k)$ independently. This yields
    \begin{align}
        \partial_t \hat{\phi}_t(k) &= \frac{1}{2}(\partial_t \log \sigma_t^2) (1 - \hat{m}(\sigma^2, k)) \hat{\phi}_t(k)
    \end{align}
    Integrating this equation gives
    \begin{align}
        \hat{\phi}_T(k) &= \hat{\phi}_0(k) \exp(\frac{1}{2}\int_0^T (\partial_t \log \sigma_t^2)  (1 - \hat{m}(\sigma_t^2,k)) \,dt)
    \end{align}
    Or, reparameterizing with the substitution $u = \log \sigma_t^2$, 
    \begin{align}\label{eq:phi_T_u}
        \hat{\phi}_T(k) = \hat{\phi}_0(k) \exp(\frac{1}{2} \int_0^{\log \sigma_T^2} \Big( 1 - \hat{m}(u,k) \Big)\,du)
    \end{align}
    In variance-exploding flows, the samples $\phi_0$ are generated by taking samples $\phi_T \sim \sigma_T \eta$ and flowing these samples backwards. We will consider the limit where $T \to \infty$. Substituting this into the equation \eqref{eq:phi_T_u} and rearranging, we find that the variance of $\hat{\phi}_0(k)$ (i.e. the power spectral density $P(k)$) satisfies 
    \begin{align}
        P(k) = \lim_{T \to \infty} \sigma_T^2 \exp(-\int_{-\infty}^{\log \sigma_T^2} \Big(1 - \hat{m}(u,k)\Big)\,du)
    \end{align}
    or in logarithmic terms
    \begin{align}
        \log P(k) &= \lim_{T \to \infty} \bigg[\log \sigma_T^2 - \int_{-\infty}^{\log \sigma_T^2} (1 - \hat{m}(e^u,k)) \,du\bigg] \\
        &= \int_{0}^{\infty} \hat{m}(e^u,k) du + \int_{-\infty}^{0} (\hat{m}(e^u,k) - 1) du
    \end{align}
    One can easily verify that the falloff conditions imply both of these integrals converge by the power test. To show the expected PSD \eqref{eq:psd} holds for this distribution, we can differentiate this quantity with respect to $|k|$. The dominated convergence theorem allows us to swap integration and differentiation to obtain
    \begin{align}\label{eq:p1}
        \partial_{\log|k|} \log P(k) = \int_{-\infty}^{\infty} \partial_{\log|k|} \hat{m}(e^u,k) \,du
    \end{align}
    The renormalization condition \eqref{eq:renormalization} implies that we can interchange the derivative of the filter with respect to $|k|$ with a derivative with respect to $u$:
    \begin{align}
        \partial_{\log |k|} \hat{m}(\sigma^2, k) = \alpha\, \partial_{\log\sigma^2} \,\hat{m}(\sigma^2,k)
    \end{align}
    Substituting this into \eqref{eq:p1} and evaluating the integral with respect to u, we obtain
    \begin{align}\label{eq:limits}
        \partial_{\log |k|} \log P(k) = \alpha [\hat{m}(\sigma^2=\infty, k) - \hat{m}(\sigma^2 = 0,k)]
    \end{align}
    The renormalization condition implies that $\hat{m}(\sigma^2 = 0,k) = \hat{m}(\sigma^2, k = 0)$; applying the normalization condition \eqref{eq:normalization} yields $\hat{m}(\sigma^2 = 0, k) = 1$. Meanwhile, the renormalization condition implies the asymptotic falloff of $\hat{m}(\sigma_T^2, k)$ as $\sigma_T^2 \to \infty$ coincides with the limit $\lim_{|k|\to\infty} \hat{m}(\sigma^2,k) = 0$. Thus \eqref{eq:limits} evaluates to
    \begin{align}
        \partial_{\log |k|} \log P(k) = \alpha(0 - 1) = -\alpha
    \end{align}
    which proves the claim.
\end{proof}

To complete our analysis, we need to prove that, for a power-law Gaussian, the conditional expectation $\mathbb{E}[\varphi | \phi_{\Omega_{L(\sigma^2), x}}]$, with $L(\sigma^2) \propto L_{spec}(\sigma^2)$, satisfies the three properties. This is indeed the case, as we show in the following theorem. 

\begin{theorem}
    For $\varphi$ an image drawn from a power-law Gaussian with PSD $|k|^{-\alpha}$, the model $M_t[\phi_t] = \mathbb{E}[\varphi | \phi_{\Omega_{L(\sigma^2), x}}]= m(\sigma^2, x) \star \phi$, with $L(\sigma^2) \propto \sigma^{2/\alpha}$, defines a filter family $m(\sigma^2,x)$ that satisfies the normalization and falloff conditions, as well as the renormalization condition with parameter $\alpha$.
\end{theorem}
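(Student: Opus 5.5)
The plan is to reduce $\mathbb{E}[\varphi_x \mid \phi_{\Omega_{L(\sigma^2),x}}]$ to an explicit linear filter and then check each property from the structure of the equation that defines it. I will work in the variance-exploding parameterization $\phi = \varphi + \sigma\eta$, as in the proof of Theorem \ref{thm:renormalization}, and in a continuum idealization of the image plane.

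\textbf{Reduction to a linear filter.} For a centered Gaussian $\varphi$ with covariance $C(x-y)$ (power spectral density $A|k|^{-\alpha}$), the local posterior mean is linear, with weights $w_x = (\Sigma_{\Omega_x}+\sigma^2 I)^{-1}\Sigma_{\Omega_x,x}$. By translation invariance, with patches $\Omega_{L,x}=x+\Omega_L$ for a fixed symmetric shape, the weights depend only on the offset. Hence the model is a convolution $m(\sigma^2,\cdot)\star\phi$. Here $m$ is supported on $\Omega_{L(\sigma^2)}$ and solves the second-kind Fredholm (Wiener--Hopf) equation
\begin{equation*}
\int_{\Omega_L} C(y-z)\, m(z)\,dz + \sigma^2 m(y) = C(y), \qquad y\in\Omega_L .
\end{equation*}
Since $\sigma^2>0$, the operator on the left is positive definite, so the solution is unique.

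\textbf{Renormalization.} This is a scaling check. The power law gives $C(\lambda r)=\lambda^{\alpha-2}C(r)$ in two dimensions. Set $\lambda=\beta^{1/\alpha}$ and use $L(\beta\sigma^2)=\lambda L(\sigma^2)$, so that $\Omega_{L(\beta\sigma^2)}=\lambda\Omega_{L(\sigma^2)}$. I would verify that $m'(y)=\lambda^{-2}m(\sigma^2,y/\lambda)$ solves the equation at noise $\beta\sigma^2=\lambda^\alpha\sigma^2$ on $\lambda\Omega$:
\begin{itemize}
\item substituting $z=\lambda w$ turns the integral term into $\lambda^{\alpha-2}\int_\Omega C(y/\lambda-w)m(w)\,dw$;
\item the noise term becomes $\lambda^{\alpha-2}\sigma^2 m(y/\lambda)$;
\item the right-hand side becomes $\lambda^{\alpha-2}C(y/\lambda)$.
\end{itemize}
Uniqueness then gives $m(\beta\sigma^2,y)=\beta^{-2/\alpha}m(\sigma^2,\beta^{-1/\alpha}y)$, which is the real-space form of the renormalization condition.

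\textbf{Normalization.} I would use the infrared divergence of the spectrum at $k=0$: the constant mode has unbounded prior variance. I would regularize by $P(k)=A(|k|^2+\mu^2)^{-\alpha/2}$ and let $\mu\to 0$. In this limit the posterior mean must become shift-equivariant under $\phi\mapsto\phi+c$, which forces $\int m=1$, i.e. $\hat m(\sigma^2,0)=1$. Equivalently, I would rewrite the equation in terms of the structure function $C(0)-C(r)$, imposing the constraint $\int m=1$ with a Lagrange multiplier (kriging with unknown mean). The same scaling argument applies to this constrained form, so renormalization survives the regularization.

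\textbf{Falloff.} For the small-$k$ condition: $m$ is compactly supported and even, because the patch is symmetric about $x$. So $\hat m$ is analytic, its first moment vanishes, and $\hat m(k)-1=O(|k|^2)$, giving $q<2$.

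For the large-$k$ condition: rearranging the equation gives $\sigma^2 m = C - \int_{\Omega} C(\cdot-z)m(z)\,dz$. Since $C(r)\sim r^{\alpha-2}$ is locally integrable, $m$ is bounded and piecewise regular, with at worst a jump across $\partial\Omega$. Consequently $\hat m(k)=O(|k|^{-p})$ for some $p>0$; a jump discontinuity typically gives $p$ around $3/2$ in two dimensions.

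The equivalent $\sigma^2$-form of both falloff conditions then follows directly from renormalization, since $\hat m(\beta\sigma^2,k)=\hat m(\sigma^2,\beta^{1/\alpha}k)$ trades $\sigma^2\to\infty$ for $|k|\to\infty$ and $\sigma^2\to0$ for $|k|\to0$.

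\textbf{Main obstacle.} I expect the hardest step to be making the continuum and infrared idealizations rigorous. The lattice breaks exact scaling, and for $\alpha\ge 2$ the covariance $C$ itself is ill-defined. I would handle this by working throughout with the intrinsic (structure-function) formulation, where both scaling and normalization are clean. I would then argue that lattice effects only perturb $\hat m$ near the Brillouin-zone edge, at wavenumbers where the falloff condition already makes $\hat m$ negligible.
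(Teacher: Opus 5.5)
Your renormalization step is correct, and it takes a different route from the paper. The paper obtains $\hat m=\frac{P\star\hat{\mathbf 1}_\Omega}{P\star\hat{\mathbf 1}_\Omega+\sigma^2}$ by treating $\Sigma_{\Omega\Omega}+\sigma^2 I$ as convolution with $\mathbf 1_\Omega C+\sigma^2\delta$, and then rescales $\hat{\mathbf 1}_\Omega$. You instead verify the scaling on the exact normal equations and invoke uniqueness. In those equations $\Sigma_{\Omega\Omega}$ is the restriction of convolution-by-$C$ to $\Omega$, not a convolution with a truncated kernel. Your version is therefore the one that actually concerns $\mathbb{E}[\varphi_x\mid\phi_{\Omega_x}]$. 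It also extends to the constrained kriging system, with the Lagrange multiplier rescaled by $\lambda^{\alpha-2}$.

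\textbf{The gap: normalization fails for $0<\alpha<2$.} Your shift-equivariance argument needs the constant mode to have divergent prior variance, $\int_{|k|<\epsilon}P(k)\,d^2k=\infty$. That holds only for $\alpha\ge 2$. For $\alpha<2$, $|k|^{-\alpha}$ is integrable at the origin in two dimensions. Then $C_\mu\to C\propto r^{\alpha-2}$, which is exactly the decaying covariance your Fredholm equation uses. The patch average keeps a finite prior variance, and the $\mu\to0$ limit is the ordinary shrinking posterior mean.

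Your own scaling result makes the failure explicit:
\begin{itemize}
\item $\hat m(\sigma^2,0)=\int m(\sigma^2,\cdot)$ is independent of $\sigma^2$. It depends only on the patch shape and the constant $\kappa$ in $L=\kappa\,\sigma^{2/\alpha}$.
\item The patch-average signal-to-noise ratio, $L^{\alpha-2}/(\sigma^2L^{-2})\sim\kappa^\alpha$, is fixed and finite.
\item As $\kappa\to0$ this constant tends to $0$: the operator norm of the patch-restricted covariance operator $K_\Omega$ scales as $\|K_\Omega\|\sim L^\alpha=\kappa^\alpha\sigma^2$, so $m\approx C/\sigma^2$ with mass of order $\kappa^\alpha$.
\item It approaches $1$ only as $\kappa\to\infty$, where the full Wiener filter has $\hat w(0)=P(0)/(P(0)+\sigma^2)=1$.
\end{itemize}
So for $\alpha<2$, which is the paper's regime $\alpha=2-\epsilon$, normalization fails for generic $\kappa$. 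The small-$k$ falloff condition you derive from it fails too. The claim is false there, not merely unproven.

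Switching to the intrinsic (kriging) formulation does not repair this. For $\alpha<2$ the constraint $\int m=1$ carries a nonzero multiplier, so you would be proving the properties for a filter other than the posterior mean. What your argument does establish is the statement for $2\le\alpha<4$, with the model read as ordinary kriging (a flat prior on the additive constant). You should restrict to that range or state that reading explicitly.

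The paper's proof does not close this gap either. It derives normalization from $\lim_{\sigma^2\to0}\hat m(\sigma^2,k)=1$, but under renormalization that limit equals $\hat m(\sigma^2,0)$, so this just restates the claim.

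\textbf{A smaller point on large-$k$ falloff.} For $\alpha<2$, $m$ is not bounded: the right-hand side forces $\sigma^2 m(y)\approx C(y)\sim|y|^{\alpha-2}$ near the center. $\hat m$ still decays, with a $|k|^{-\alpha}$ tail from that singularity, and for square patches the edges give only $|k|^{-1}$ decay along the axes, not $|k|^{-3/2}$. So some $p>0$ survives, but the justification needs rewriting.
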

\begin{proof}
    The optimal linear denoising model given a patch $\Omega$ for its center pixel $x$ is given by
    \begin{align}\label{eq:restricted_wiener}
        (\Sigma_{\Omega,x}) \cdot (\Sigma_{\Omega,\Omega} + \sigma_t^2 I)^{-1} \phi_{t}
    \end{align}
    where $\Sigma_{\Omega\Omega}$ is the block covariance (under the underlying distribution $P_0(\varphi)$ for the pixels within $\Omega$, and $\Sigma_{\Omega,x}$ is the row of this block covariance associated with the covariance between each pixel and the center pixel. Letting $C(x) \propto |x|^{\alpha - 2}$ be the two-point correlation function of the field (whose Fourier transform is the power spectral density $P(k)$), the operator $\Sigma_{\Omega\Omega} + \sigma_t^2 I$ can be understood as the convolution operator with the kernel
    \begin{align}
        \textbf{1}(y \in \Omega)C(y) + \sigma^2_t \delta(y)
    \end{align}
    while $\Sigma_{\Omega,x}$ corresponds to the function $C(y - x)$. In Fourier space, the resulting filter $\hat{m}(\sigma^2, k)$ is given by
    \begin{align}
        \hat{m}(\sigma^2, k) = \frac{P(k) \star \hat{\textbf{1}}_{\Omega}}{P(k) \star \hat{\textbf{1}}_\Omega + \sigma^2_t} = \frac{1}{1 + \sigma_t^2 [P(k) \star \hat{\textbf{1}}_\Omega](k)^{-1}}
    \end{align}
    where $\hat{\textbf{1}}_{\Omega}$ is the Fourier transform of the indicator function $\textbf{1}(y \in \Omega)$, so that $P(k) \star \hat{\textbf{1}}_{\Omega}$ is the Fourier transform of $C \textbf{1}_{\Omega}$. We will now assume that $\Omega_{L(\sigma^2)}$ is a patch whose geometry scales according to the noise level with the length scale $L(\sigma^2) = \sigma_t^{2/\alpha}$. This entails that the indicator function satisfies
    \begin{align}
        \textbf{1}_{\Omega_{L(\beta\sigma^2)}}(y) = \textbf{1}_{\Omega_{L(\sigma^2)}}(\beta^{-1/\alpha} y)
    \end{align}
    or, in Fourier space,
    \begin{align}\label{eq:indicator_renorm}
        \hat{\textbf{1}}_{\Omega_{L(\beta \sigma^2)}}(k) = \beta^{2/\alpha} \hat{\textbf{1}}_{\Omega_{L(\sigma^2)}}(\beta^{1/\alpha} k)
    \end{align}
    From this, we observe
    \begin{align*}
        [\hat{P} \star \hat{\textbf{1}}_{\Omega_{L(\sigma^2)}}](\beta^{1/\alpha} k) &= \int \hat{P}(\beta^{1/\alpha} k - v) \hat{\textbf{1}}_{\Omega_{L(\sigma^2)}}(v) \,dv\\
        &= \beta^{2/\alpha} \int P(\beta^{1/\alpha}(k - u)) \hat{\textbf{1}}_{\Omega_{L(\sigma^2)}} (\beta^{1/\alpha} u) \,du
    \end{align*}
    Using \eqref{eq:indicator_renorm} and the property $P(\beta^{1/\alpha} k) = P(k) \beta^{-1}$, this becomes
    \begin{align}
         [\hat{P} \star \hat{\textbf{1}}_{\Omega_{L(\sigma^2)}}](\beta^{1/\alpha} k) = \beta^{-1} [\hat{P} \star \hat{\textbf{1}}_{\Omega(\beta L(\sigma^2))}](k)
    \end{align}
    From this we find that
    \begin{align}
        \hat{m}( \beta\sigma^2, k) &= \bigg(1 + \frac{(\beta \sigma^2)}{[\hat{P} \star \hat{\textbf{1}}_{\Omega(\beta L(\sigma^2))}](k)}\bigg)^{-1} = \bigg(1 + \frac{(\beta \sigma^2)}{\beta [\hat{P} \star \hat{\textbf{1}}_{\Omega( L(\sigma^2))}](\beta^{1/\alpha} k)}\bigg)^{-1}\\
        &= \bigg(1 + \frac{ \sigma^2}{ [\hat{P} \star \hat{\textbf{1}}_{\Omega( L(\sigma^2))}](\beta^{1/\alpha} k)}\bigg)^{-1} = \hat{m}(\sigma^2, \beta^{1/\alpha} k)
    \end{align}
    So $\hat{m}$ satisfies the renormalization condition with parameter $\alpha$. Since $\lim_{\sigma^2 \to 0} m(\sigma^2, k) = 1$ and $\lim_{\sigma^2 \to \infty} m(\sigma^2, k) = 0$, it follows from the renormalization conditions that the falloff and normalization conditions are satisfied.    
\end{proof}

Our results do not establish the result that $\mathbb{E}[\varphi | \phi_{\Omega_{L(\sigma^2), x}}]$, with $L(\sigma^2)$ parameterized by an \textit{incorrect} power law $\sim \sigma^{\frac{2}{\alpha'}}$ with $\alpha' \neq \alpha$, will generate an incorrect power law in the power spectral density, because the restricted statistics are not appropriately self-similar with noise, and therefore this estimator does not define a scale-equivariant family of filters satisfying the renormalization condition. We generally expect, however that if the length scale is significantly less than the spectral scale, that this family will nonetheless generate a power law that is \textit{asymptotically} incorrect at large $|k|$.

\section{Empirical Results in Early Training}\label{app:early_training}

\subsection{Methods}

\begin{figure}
    \centering

    \includegraphics[width=0.49\linewidth]{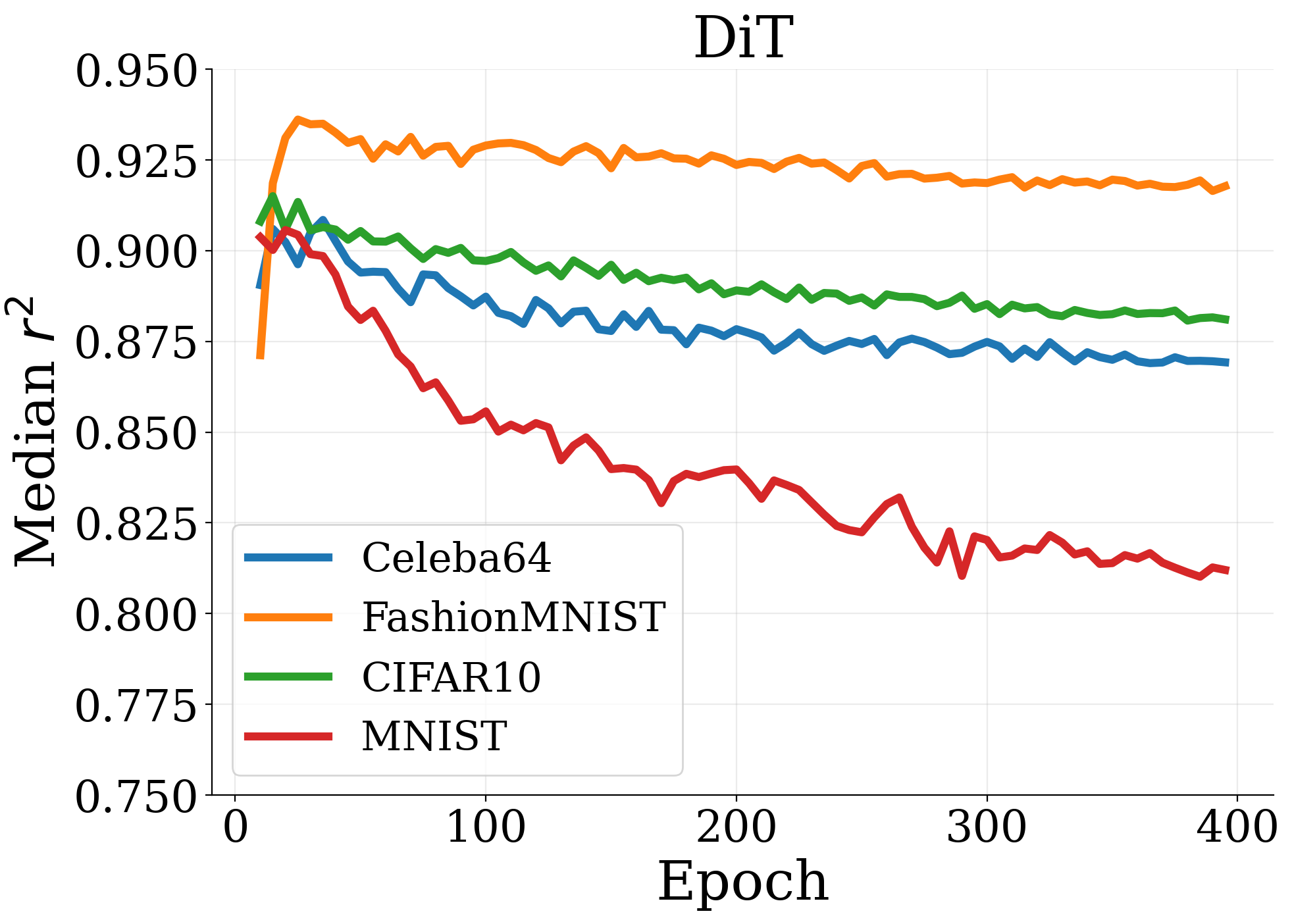} 
    \hspace{1mm}
    \includegraphics[width=0.49\linewidth]{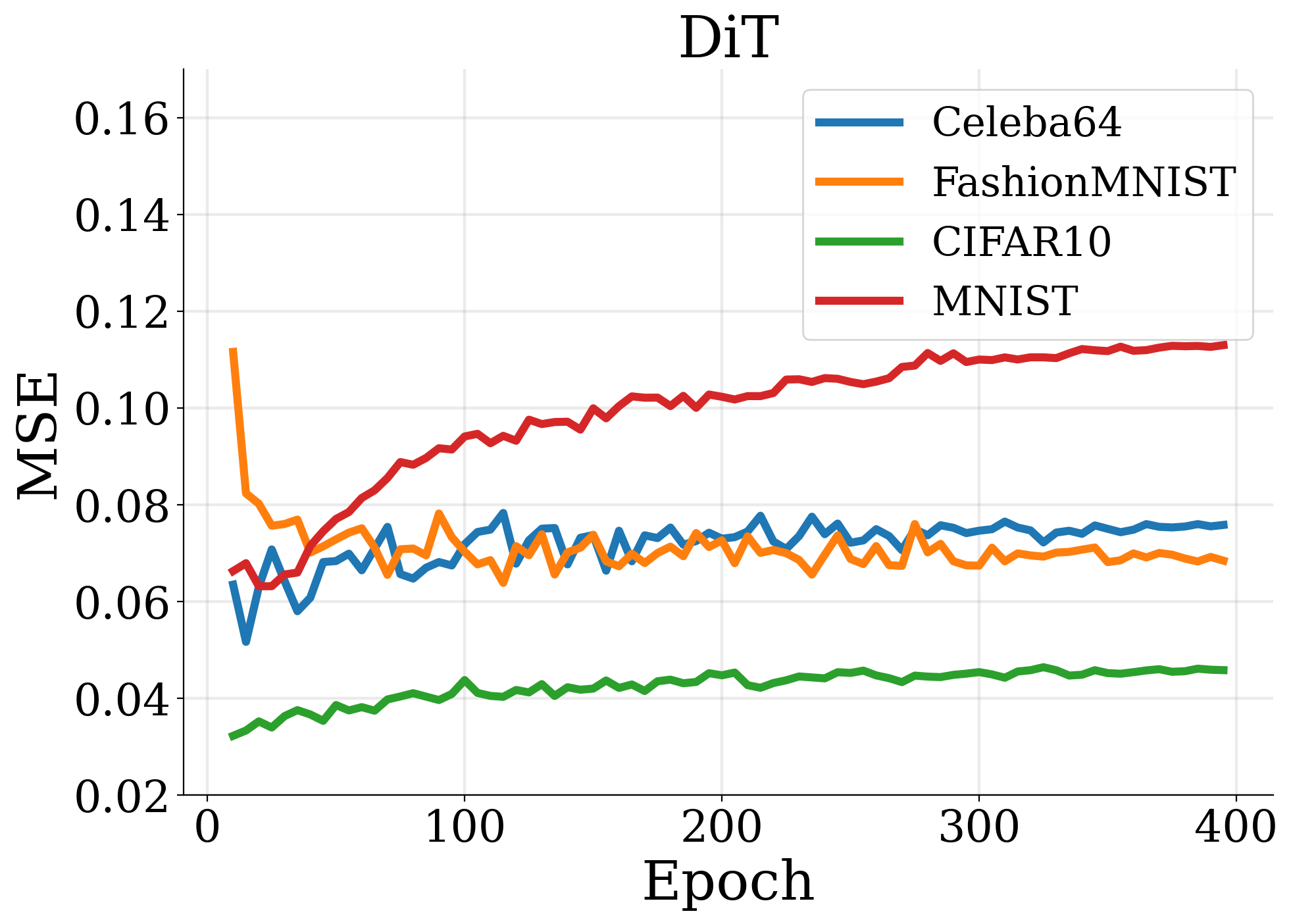}

    \includegraphics[width=0.49\linewidth]{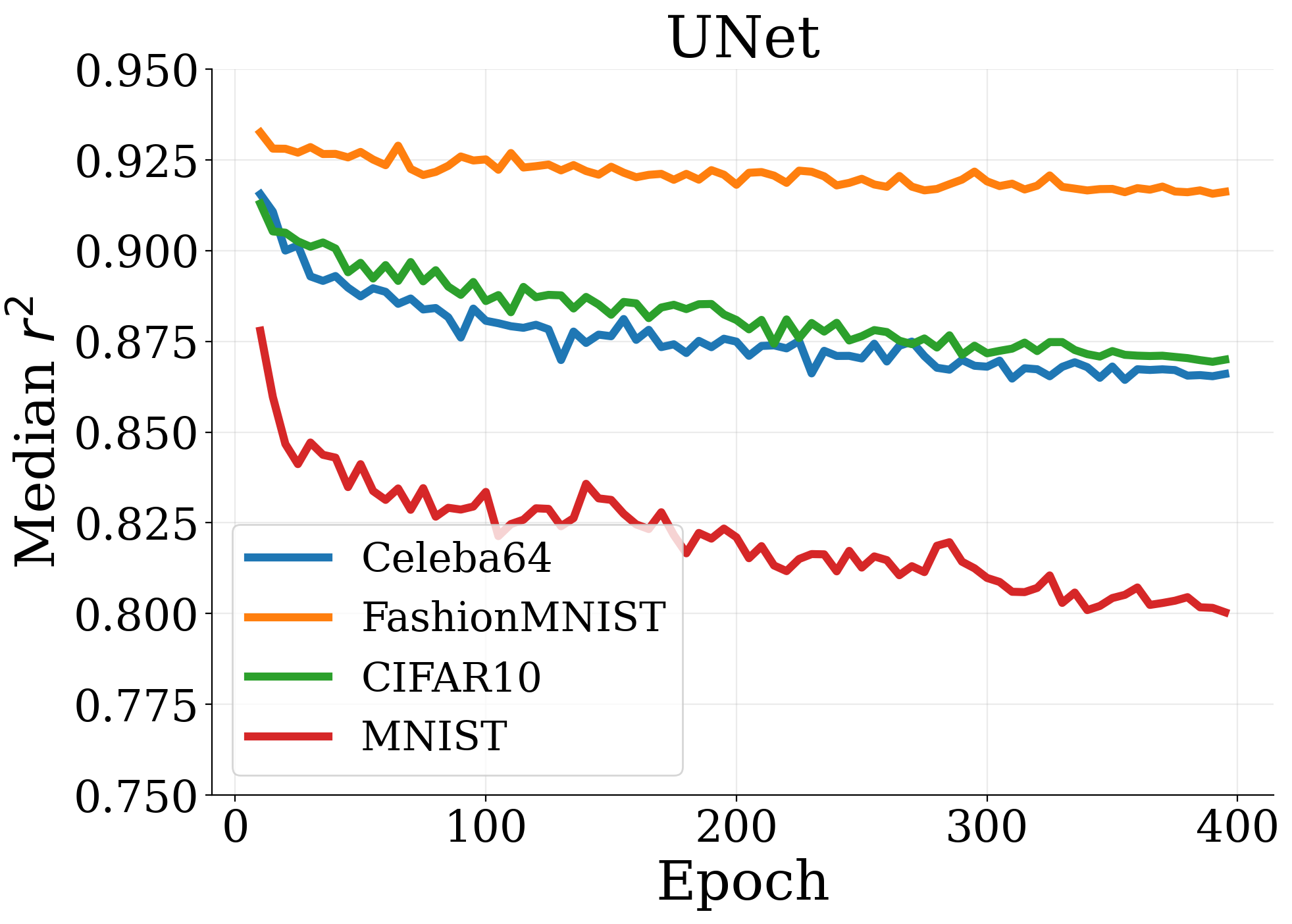} 
    \hspace{1mm}
    \includegraphics[width=0.49\linewidth]{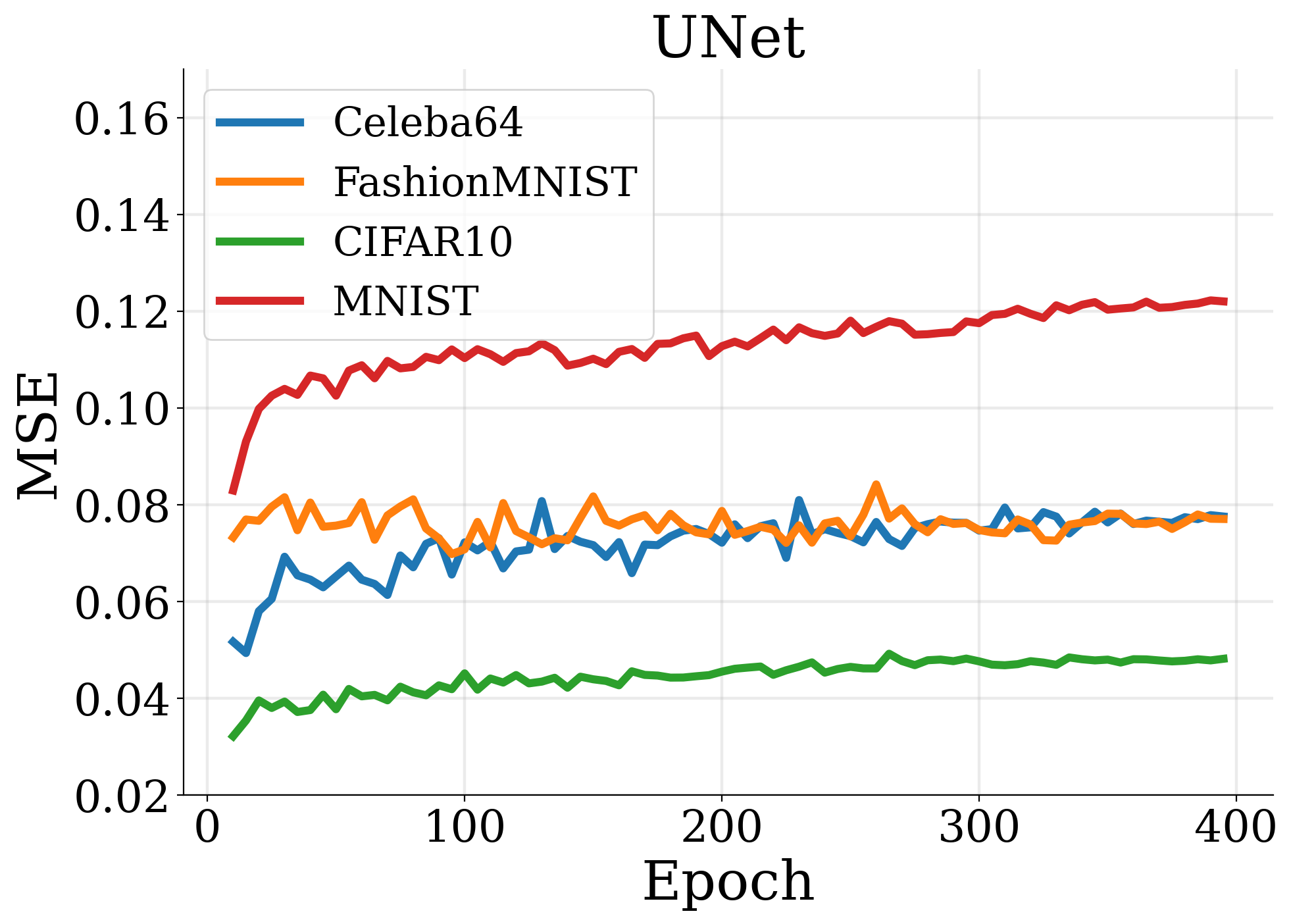}

    \caption{Plots of the Median $r^2$ metric (left) and the MSE metric (right) averaged over comparisons between several independently trained DiTs (top two panels) and UNets (bottom two panels) and their corresponding analytical models, across four datasets Celeba64, FashionMNIST, CIFAR10, and MNIST. Each model is trained on a reduced dataset of $10^4$ samples. The Median $r^2$ metric starts high across all datasets and gradually decreases over the course of training, while the MSE generally starts low and then gradually increases over the course of training. On some datasets these metrics are more level than others, or even decreasing as in MSE on FashionMNIST; however, the generally decreasing trend is mostly consistent.}
    \label{fig:early_training_1}
\end{figure}


\begin{table}
    \centering
        \caption{Quantitative agreement between DiTs trained on a subset of size $10^4$ and corresponding (LS/ELS) analytical models given the same dataset subset. Median $r^2$ values and MSE values are computed using a fixed subset of 100 samples and averaged over 4 model trials. Error bars are given by $2\sigma / \sqrt{n}$ for $n = 400$ the sample size and $\sigma$ the sample standard deviation for the given statistic. Selected metrics are taken from 30 epochs and 400 epochs respectively; the full trajectory of the median $r^2$ and MSE losses are shown in plots \ref{fig:early_training_1}.}
        \vspace{2mm}
    \begin{tabular}{c|cccc}
    \hline
          Dataset & Med. $r^2$ (30 eps.) & MSE (30 eps.) & Med. $r^2$ (400 eps.) & MSE (400 eps.) \\
          \hline
          Celeba64 & $0.905 \;(\pm 0.003)$ & $0.064 \;(\pm 0.003)$ & $0.869 \;(\pm 0.005)$ & $0.076 \;(\pm 0.004)$ \\
          CIFAR10 & $0.906 \;(\pm 0.005)$ & $0.036 \;(\pm 0.002)$ & $0.881 \;(\pm 0.007)$ & $0.046 \;(\pm 0.002)$ \\
          MNIST & $0.899 \;(\pm 0.007)$ & $0.066 \;(\pm 0.003)$ & $0.812 \;(\pm 0.009)$ & $0.113 \;(\pm 0.005)$ \\
          FashionMNIST & $0.935 \;(\pm 0.011)$ & $0.076 \;(\pm 0.008)$ & $0.918 \;(\pm 0.009)$ & $0.068 \;(\pm 0.005)$ \\
    \hline
    \end{tabular}
    \label{tab:training_quantitative_1}
\end{table}


\begin{table}
    \centering
        \caption{Quantitative agreement between UNets trained on a subset of size $10^4$ and corresponding (LS/ELS) analytical models given the same dataset subset. Median $r^2$ values and MSE values are computed using a fixed subset of 100 samples and averaged over 4 model trials. Error bars are given by $2\sigma / \sqrt{n}$ for $n = 400$ the sample size and $\sigma$ the sample standard deviation for the given statistic. Selected metrics are taken from 30 epochs and 400 epochs respectively; the full trajectory of the median $r^2$ and MSE losses are shown in plots \ref{fig:early_training_1}.}
        \vspace{2mm}
    \begin{tabular}{c|cccc}
    \hline
          Dataset & Med. $r^2$ (30 eps.) & MSE (30 eps.) & Med. $r^2$ (400 eps.) & MSE (400 eps.) \\
          \hline
          Celeba64 & $0.893 \;(\pm 0.005)$ & $0.069 \;(\pm 0.004)$ & $0.866 \;(\pm 0.005)$ & $0.078 \;(\pm 0.004)$ \\
          CIFAR10 & $0.901 \;(\pm 0.005)$ & $0.039 \;(\pm 0.002)$ & $0.870 \;(\pm 0.007)$ & $0.048 \;(\pm 0.002)$ \\
          MNIST & $0.847 \;(\pm 0.011)$ & $0.104 \;(\pm 0.006)$ & $0.800 \;(\pm 0.011)$ & $0.122 \;(\pm 0.005)$ \\
          FashionMNIST & $0.929 \;(\pm 0.012)$ & $0.082 \;(\pm 0.009)$ & $0.916 \;(\pm 0.011)$ & $0.077 \;(\pm 0.006)$ \\
    \hline
    \end{tabular}
    \label{tab:training_quantitative_1_u}
\end{table}

In all experiments, we use a cosine noise schedule and a uniformly discretized reverse process. Our neural networks diffusion models employ a 100-step reverse process. Due to the high computational cost of the analytical denoisers, we employ a 20-step reverse process for each of our analytical diffusion models. We used an H100 GPU node with 80GB RAM for all experiments.

To calibrate the locality scale in our experiments, rather than calibrating based on agreement with the neural network \cite{kambanalytic,niedobatowards} or using an absolute threshold on the Wiener filter \cite{lukoianov2025locality}, we simply select a validation set ($\sim 2000$ images) from the training set, and select a patch scale at each time step of denoising based on the \textit{best validation loss} for the analytical denoising model employing the remainder of the dataset. This naturally selects patch sizes around the criticality threshold, which is the largest scale (and therefore most informative) before the onset of catastrophic statistical error (fig. \ref{fig:pixelization}). For MNIST, FashionMNIST, and Celeba64, we use an LS model with patch geometries chosen using the Wiener filter binarization of \cite{lukoianov2025locality}. However, our procedure for picking the patch length differs from that used in this paper because we are only selecting the \textit{family} of patches given by $\Omega_k$ representing the mask of top-$k$ coefficients of the Wiener filter; however, rather than setting $k$ using the coefficients of the Wiener filter, we calibrate it (independently for each pixel) using the cross validation procedure described above. We start our calibration at the lowest noise level, and incrementally increase the patch size $k$ in increments of $\max(4, \lfloor 0.05 k \rfloor)$ pixels until the validation loss at that pixel location fails to decrease. We make the assumption based on our theoretical analysis that the optimal scale should be nondecreasing with noise level, and thus initialize the scale at the next noise level at the optimal scale from the previous iteration. For CIFAR10, we find instead that the better analytical model at early training is the {ELS} rather than {LS}. We use the boundary-broken ELS prescription given in \cite{kambanalytic} with a square patch geometry, and calibrate the scale of this patch using the same procedure described above. For the ELS, however, due to computational constraints, we only use one global scale, rather than a pixel-specific scale.

For our neural network experiments, we use a DiT with a patch size of $4 \times 4$, hidden dimension of 512, 8 attention heads, and a 4-to-1 MLP ratio per block, with 8 transformer blocks, as well as an input and output convolutional stem. We use a UNet with four scales, with channel dimensions (128, 256, 256, 512) at each scale, and use an attention layer with 8 heads at the 16x16 image resolution layers. We train the models using AdamW with betas (0.9,0.95) and weight decay of 5e-2. We use an initial learning rate of 2e-4, with cosine annealing. We train for 400 epochs on dataset subsets of size $10^4$ with a batch size of 64 for the DiTs and 32 for the UNets. 

One of the difficult features of early-training diffusion models is that, at standard (large) learning rates, their outputs have a propensity to oscillate. The global hues of generated images are particularly strongly affected by this oscillation, which can significantly damage the appearance of early-training consistency if not managed. This instability phenomenon has been noted elsewhere \citep{yu2024unmasking, deck2023easing, choi2022perception}, and a number of techniques have been proposed to mitigate this effect. To get consistent results in early training, we find best results are obtained combining the following two techniques:
\begin{itemize}
    \item \textbf{Weight averaging:} exponential moving average weighting of past checkpoints is a commonly employed technique in diffusion modeling to improve checkpoint performance. We use a slightly simpler approach of linearly weight averaging the checkpoints taken from the end of each of the 5 previous epochs. We find that this helps average out fluctuations and stabilize the model's output significantly.
    \item $v$\textbf{-prediction:} this technique was designed \cite{salimans2022progressive} to keep the target variance consistent at all noise levels. We find in our work that this choice significantly stabilizes early training outputs, relative to other standard parameterizations ($\hat{\epsilon}$- and $\hat{x}_0$-prediction).
\end{itemize}

Following the literature on analytical diffusion models \citep{kambanalytic, niedobatowards, lukoianov2025locality}, we use pixel-space metrics (median pixelwise $r^2$ and MSE) to characterize the distance between our neural and analytical model outputs. While the quantitative signals we use show the effects of increased training in the form of slightly worse scores, the effect is somewhat less evident in the statistics than the qualitative perceptual effect would suggest. We do not propose a new metric here, but feel that there is room to develop metrics that capture the more perceptually and phenomenologically relevant dimensions of model training. In our table \ref{tab:training_quantitative_1} and our plots \ref{fig:early_training_1} we report the average of these metrics taken across four iterations and both subsets $\mathcal{D}_1$ and $\mathcal{D}_2$ that we are training with.


\FloatBarrier
\subsection{Samples}\label{app:samples}

We find empirically that the rate of training is somewhat different depending on the architecture and dataset, with maximal similarity to the outputs of the analytical models not always occurring at the same point (see fig \ref{fig:early_training_1} for the evolution of the quantitative metrics). In the samples below, we highlight samples taken from the epoch showing approximately the point of maximum agreement between theory and experiment for each diffusion model/dataset; examples of the evolution of the samples further in training are shown in fig \ref{fig:carveout}. For each sample figure including \ref{fig:early_training_consistency}, the samples for the epochs displayed are:
\begin{itemize}
    \item MNIST: UNet, 10; DiT, 15.
    \item FashionMNIST: UNet, 10; DiT, 15.
    \item CIFAR10: UNet, 10; DiT, 30.
    \item Celeba64: UNet, 10; DiT, 30.
\end{itemize}
These epochs are on the reduced datasets of size $10^4$ used in training, so the equivalent number of epochs on a larger dataset for peak agreement would likely be smaller.

\begin{figure}
    \centering
    \centering
    \sidecaptionimage{0.3\linewidth}{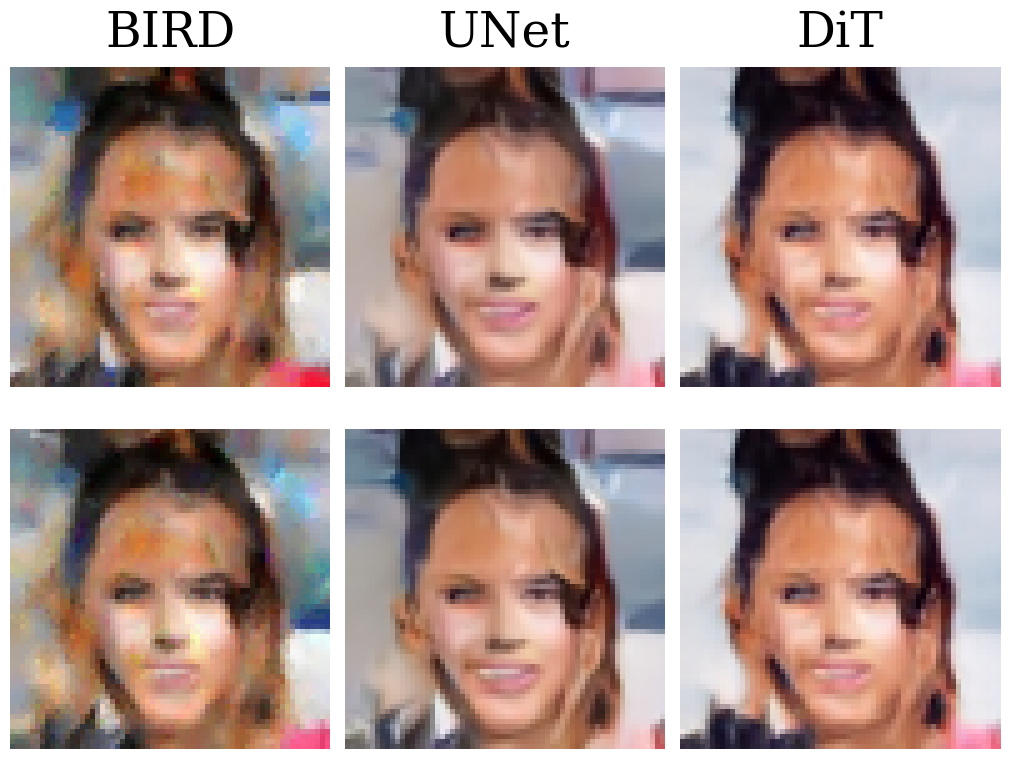}{\,\,\,$\mathcal{D}_2\,\,\,\,\,\,\,\,\,\,\,\,\,\,\,\,\,\,\,\,\mathcal{D}_1$\,\,\,\,}
        \includegraphics[width=0.3\linewidth]{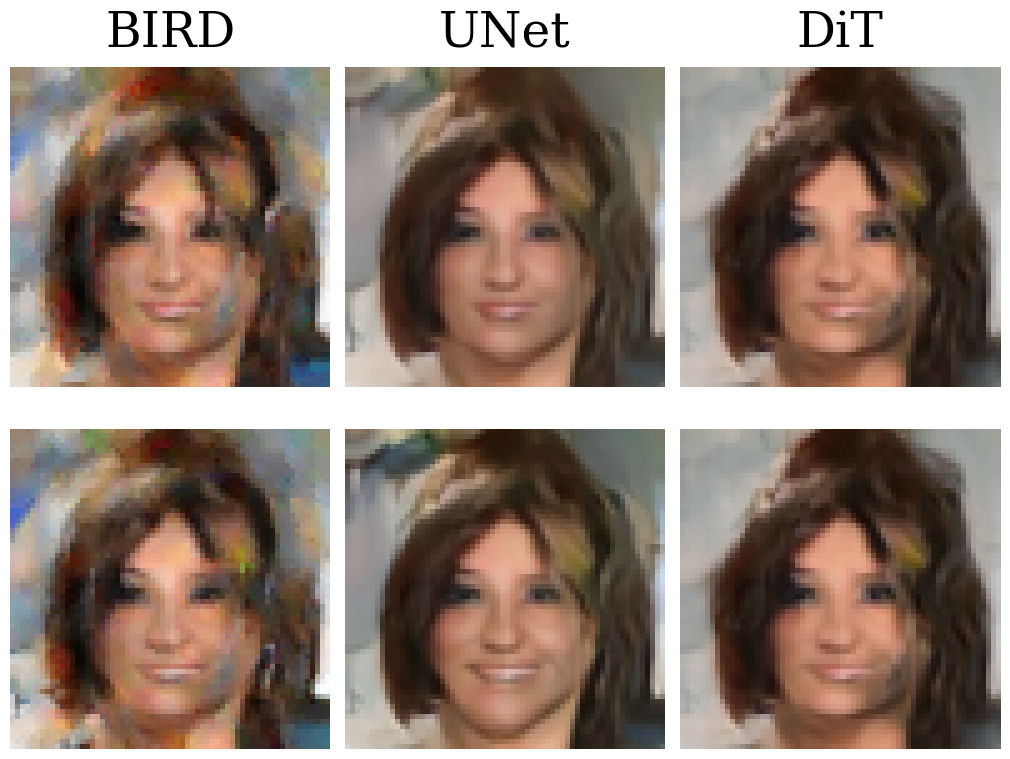}
    \includegraphics[width=0.3\linewidth]{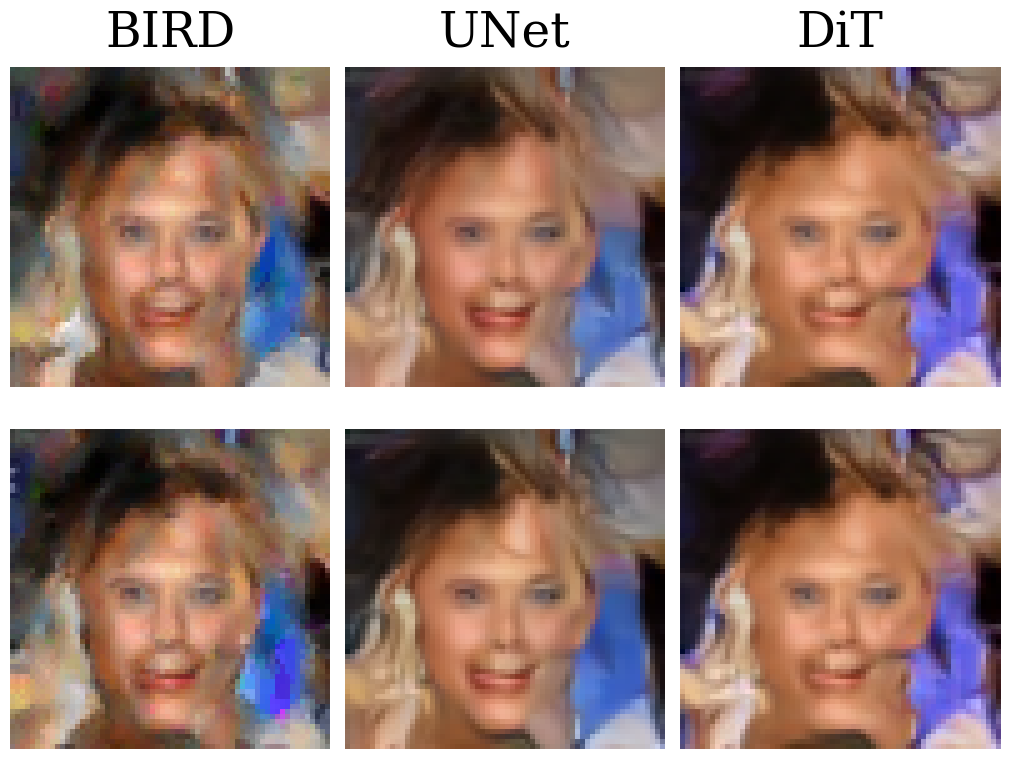}
    
    \sidecaptionimage{0.3\linewidth}{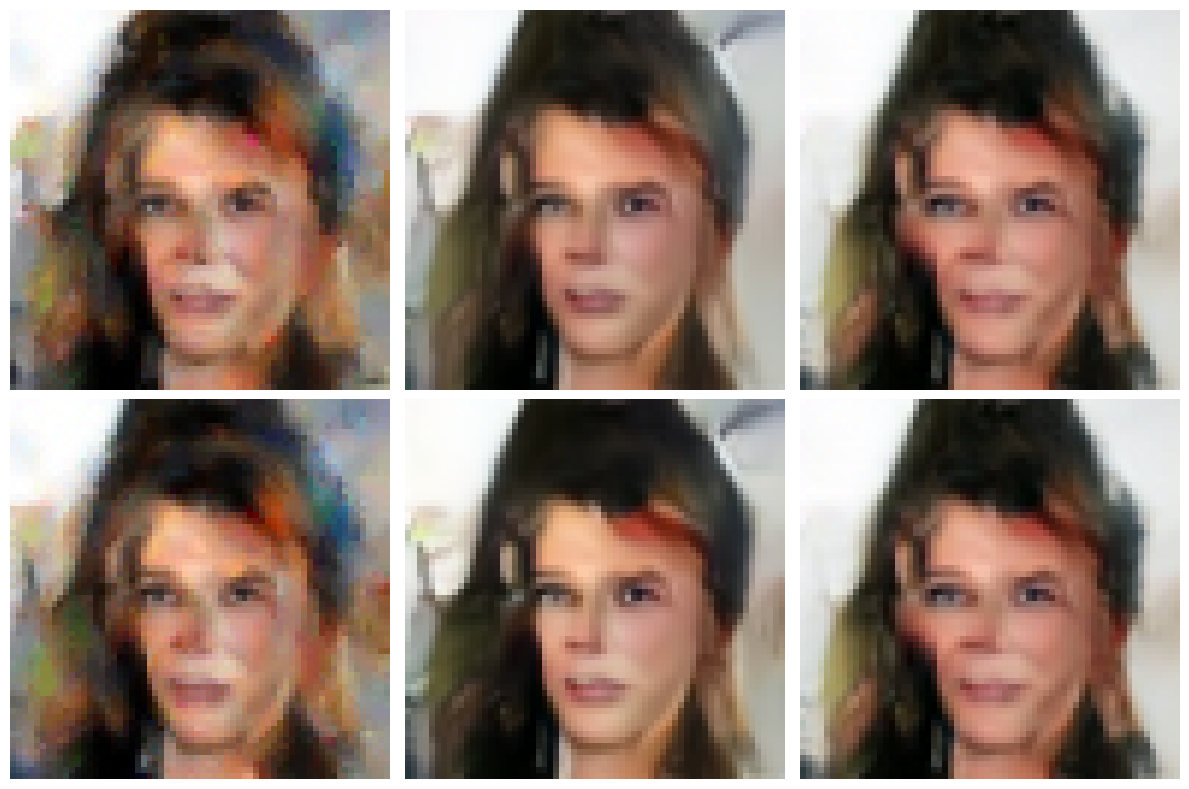}{\,\,\,$\mathcal{D}_2\,\,\,\,\,\,\,\,\,\,\,\,\,\,\,\,\mathcal{D}_1$}
    \includegraphics[width=0.3\linewidth]{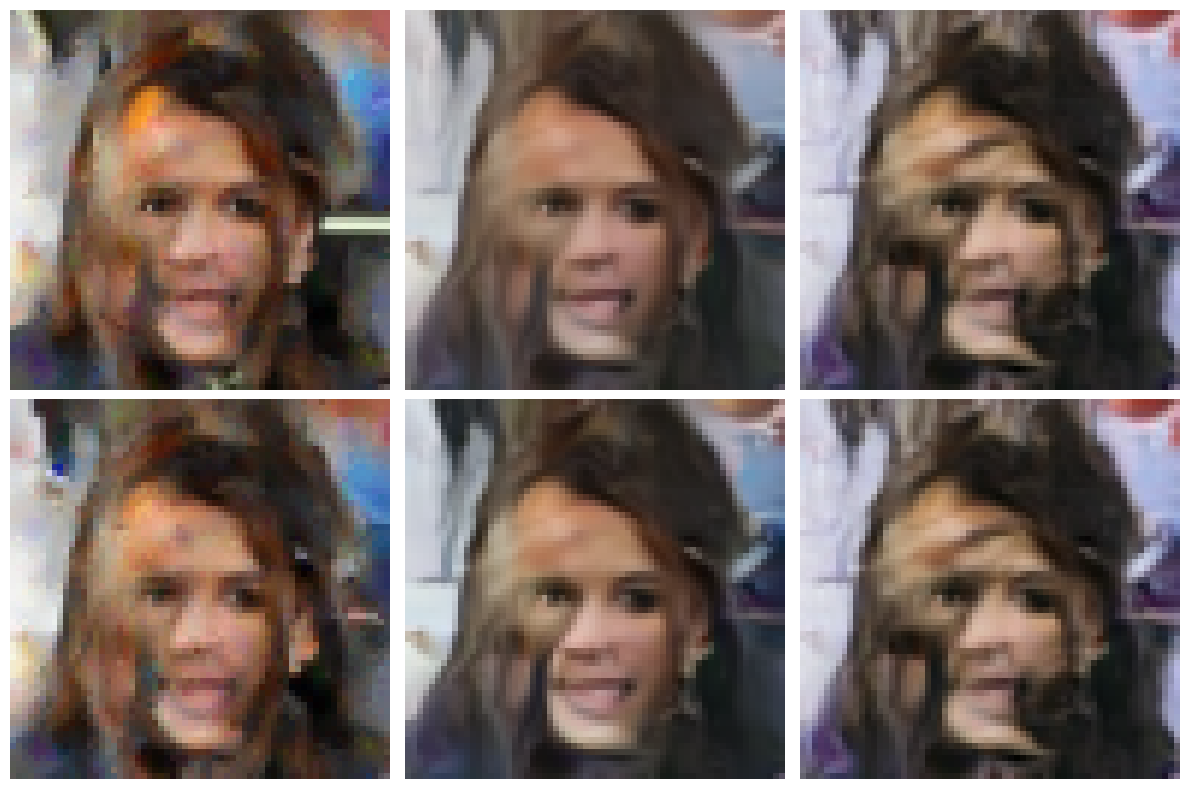}
    \includegraphics[width=0.3\linewidth]{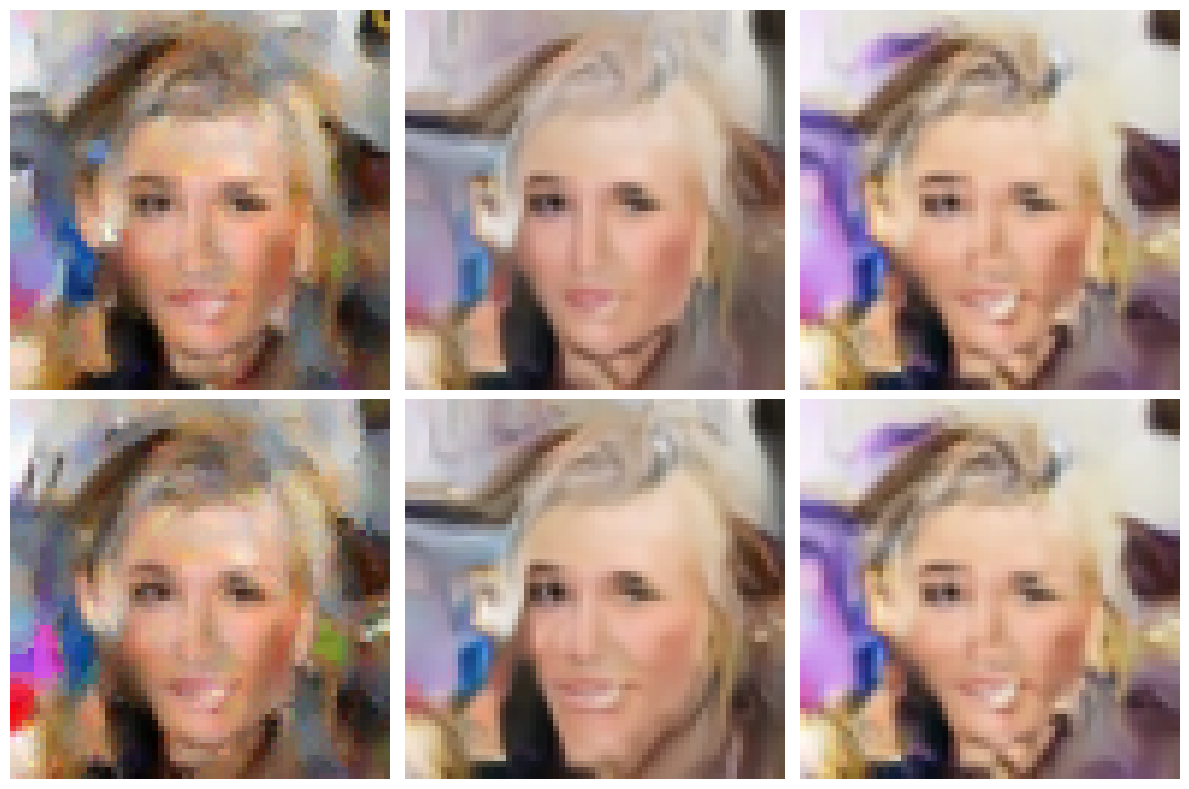}

    \sidecaptionimage{0.3\linewidth}{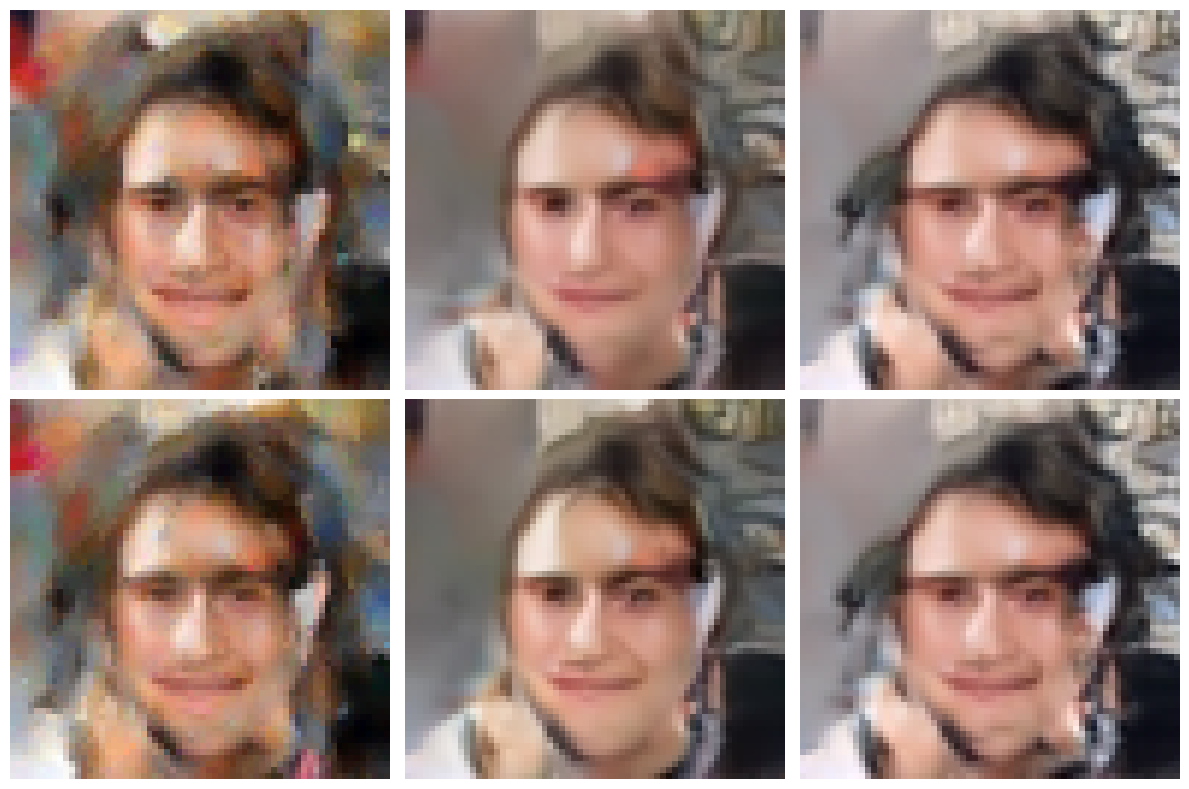}{\,\,\,$\mathcal{D}_2\,\,\,\,\,\,\,\,\,\,\,\,\,\,\,\,\mathcal{D}_1$}
    \includegraphics[width=0.3\linewidth]{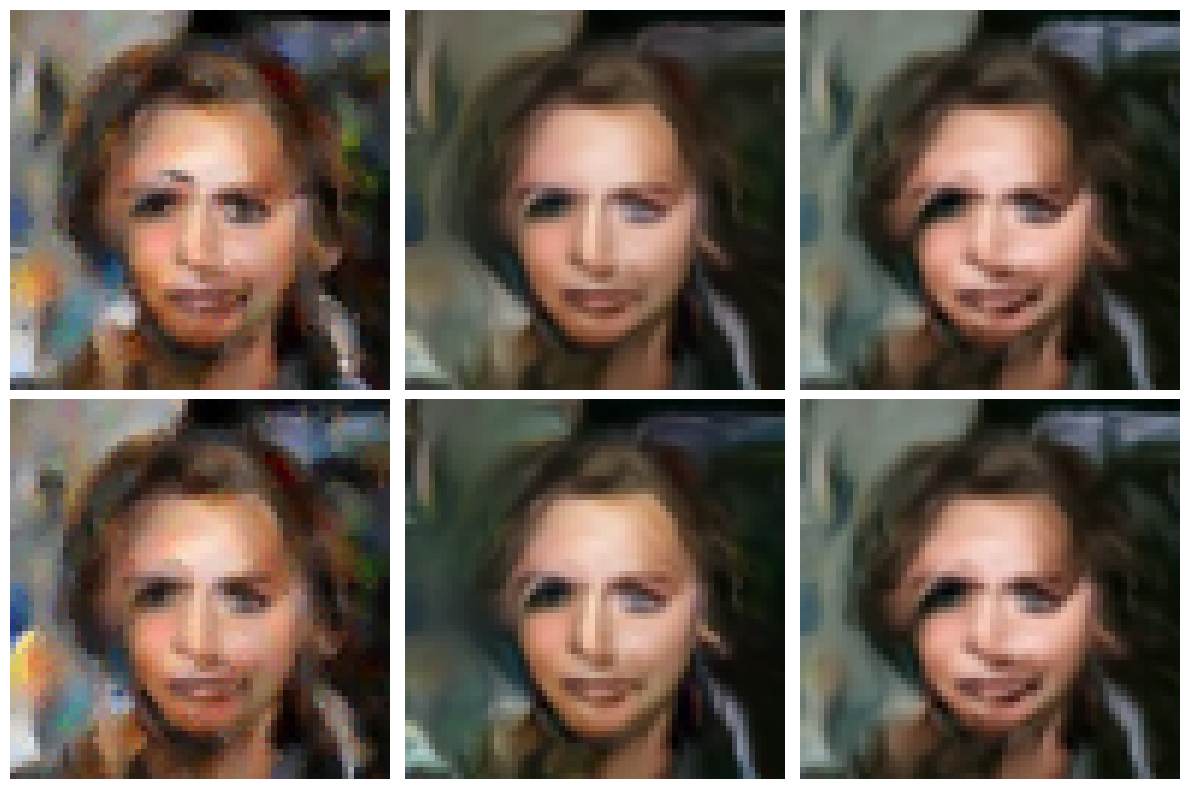}
    \includegraphics[width=0.3\linewidth]{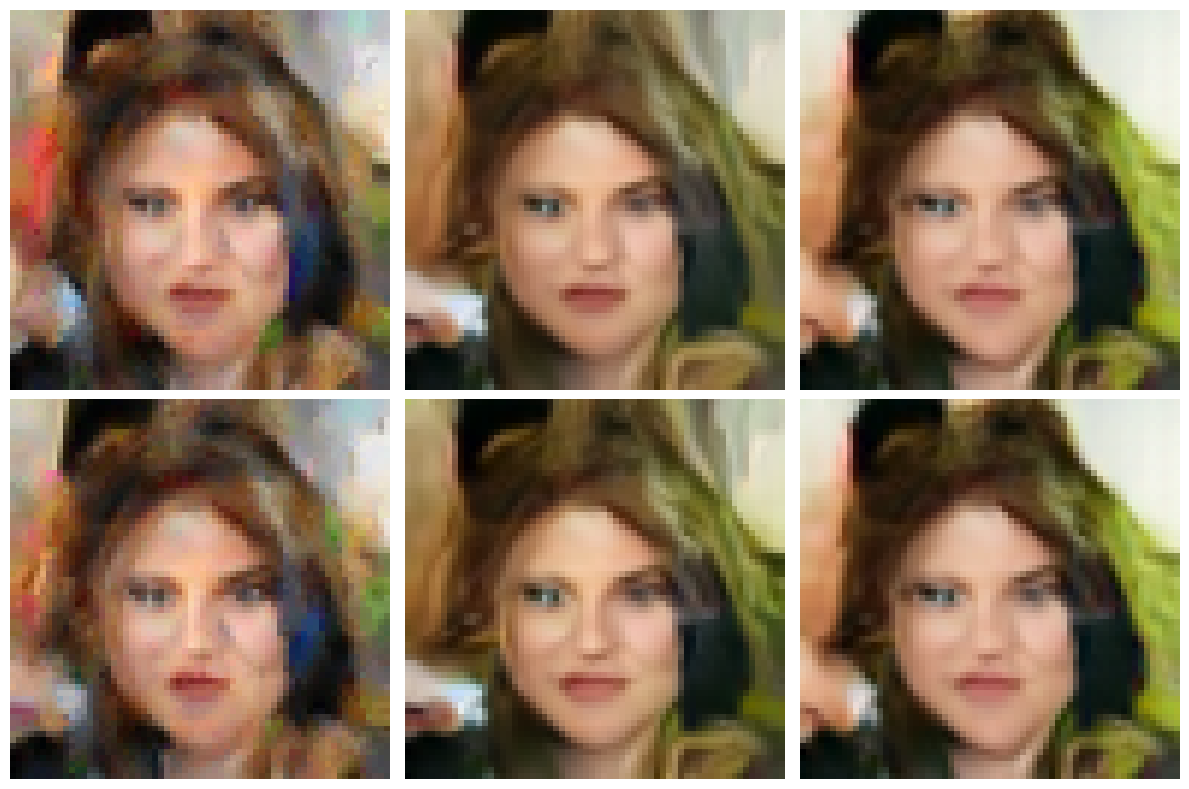}

    \caption{Further samples comparing DiTs and UNets early in training, trained on two disjoint subsets of $10^4$ samples from Celeba64, and the outputs of calibrated local score models. UNet samples are from 10 epochs of training, DiT samples are from 30 epochs of training. Samples not curated for quality.}
    \label{fig:app_samps_cba}
\end{figure}

\begin{figure}
    \centering
    \centering
    \sidecaptionimage{0.3\linewidth}{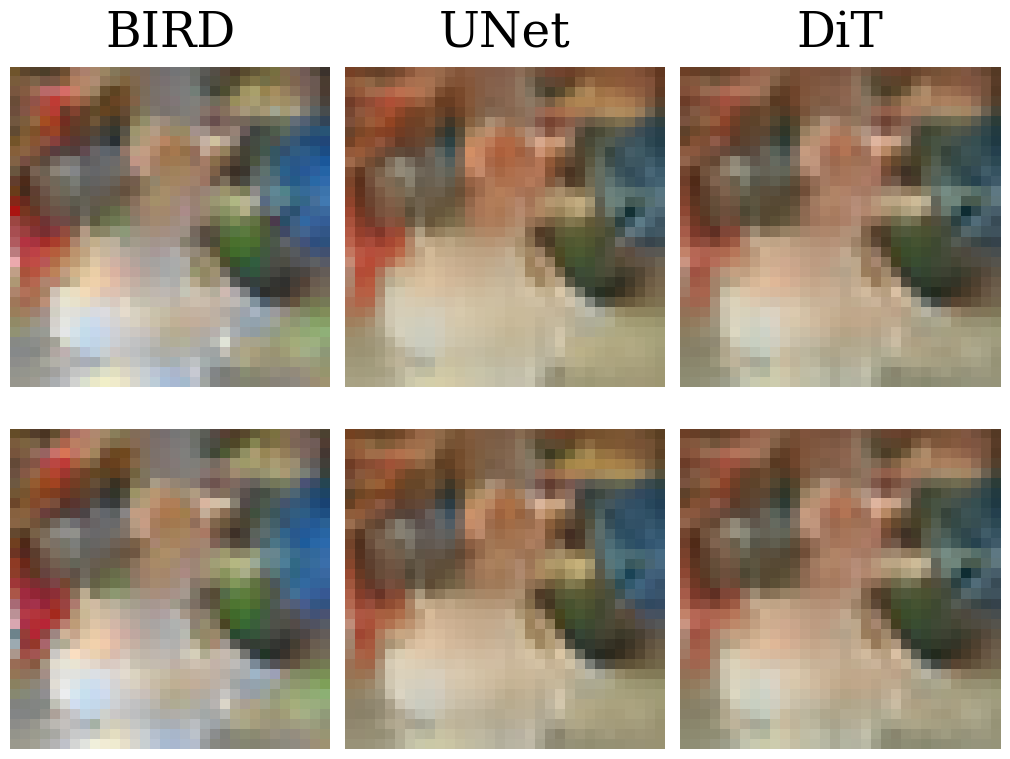}{\,\,\,$\mathcal{D}_2\,\,\,\,\,\,\,\,\,\,\,\,\,\,\,\,\,\,\,\,\mathcal{D}_1$\,\,\,\,}
        \includegraphics[width=0.3\linewidth]{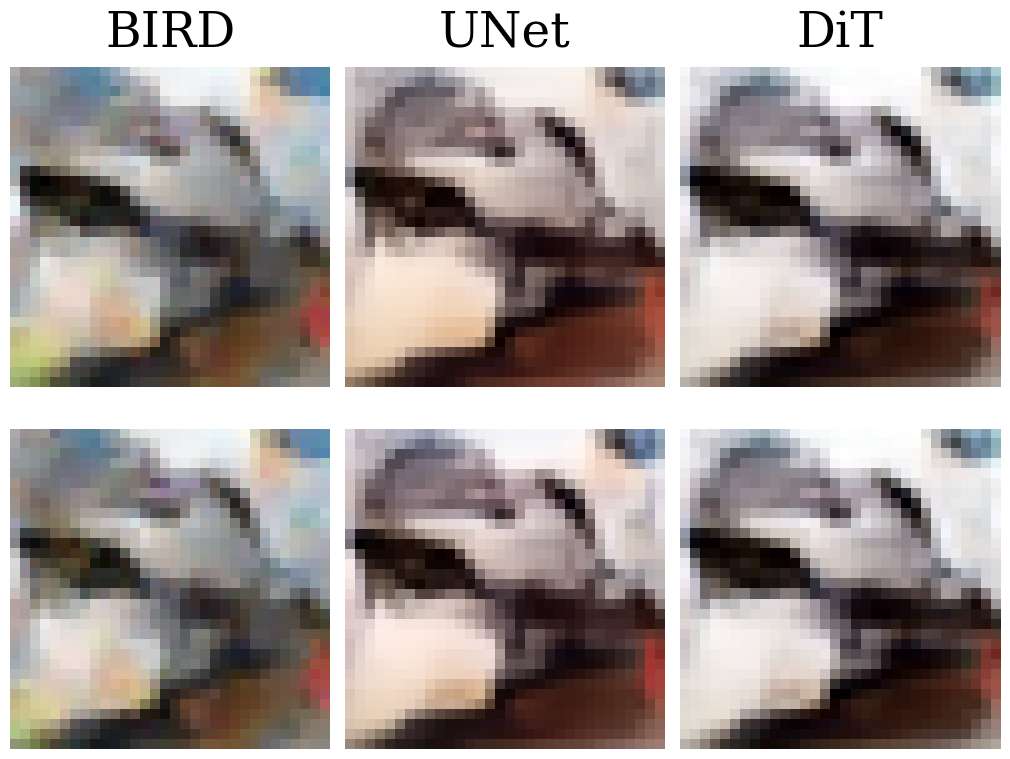}
    \includegraphics[width=0.3\linewidth]{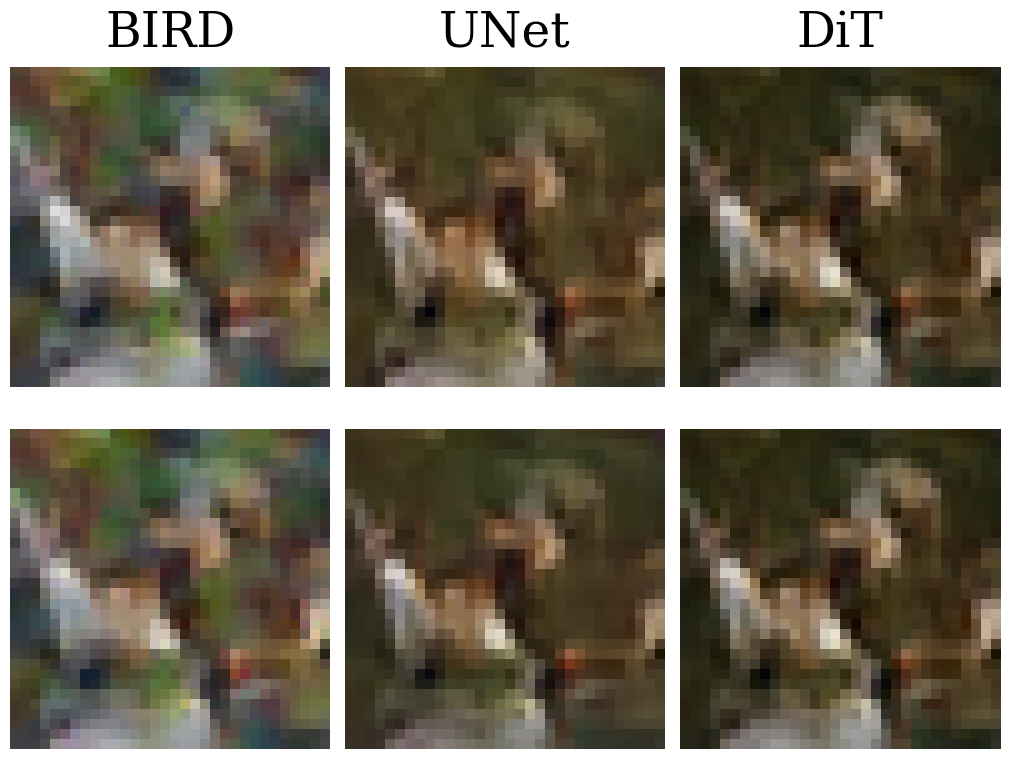}
    
    \sidecaptionimage{0.3\linewidth}{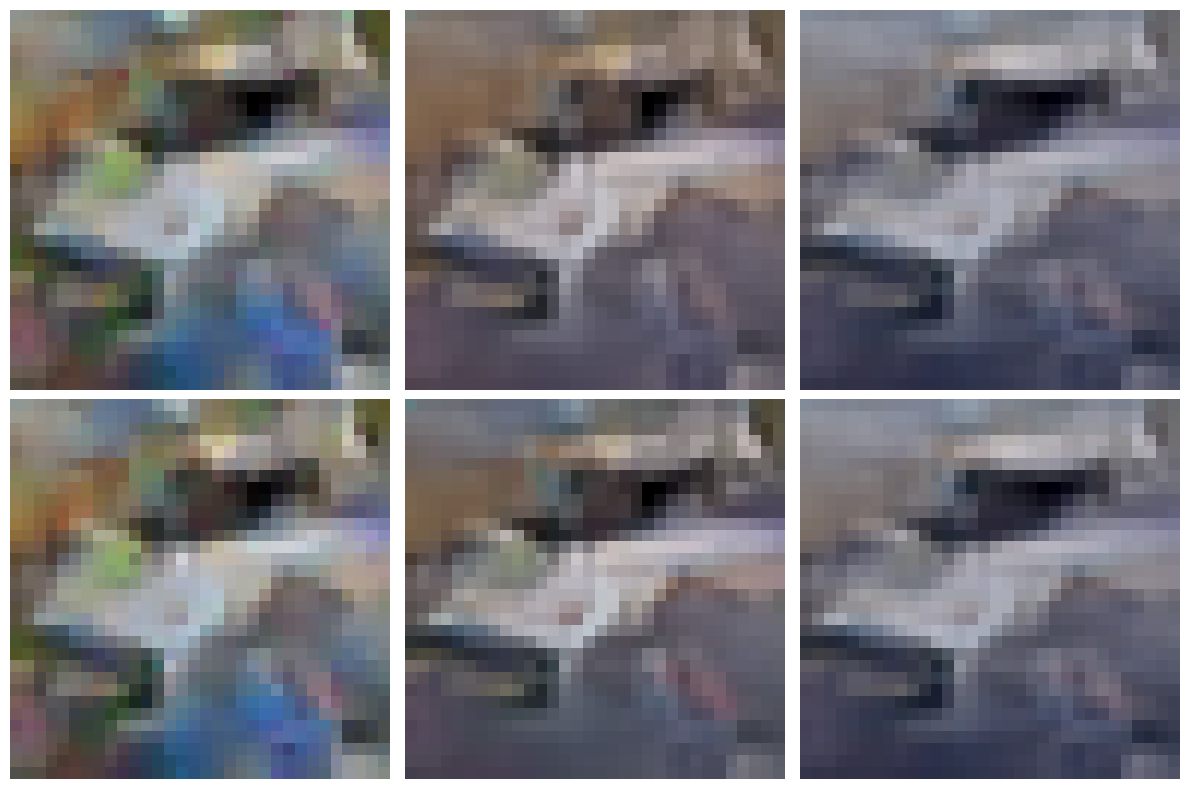}{\,\,\,$\mathcal{D}_2\,\,\,\,\,\,\,\,\,\,\,\,\,\,\,\,\mathcal{D}_1$}
    \includegraphics[width=0.3\linewidth]{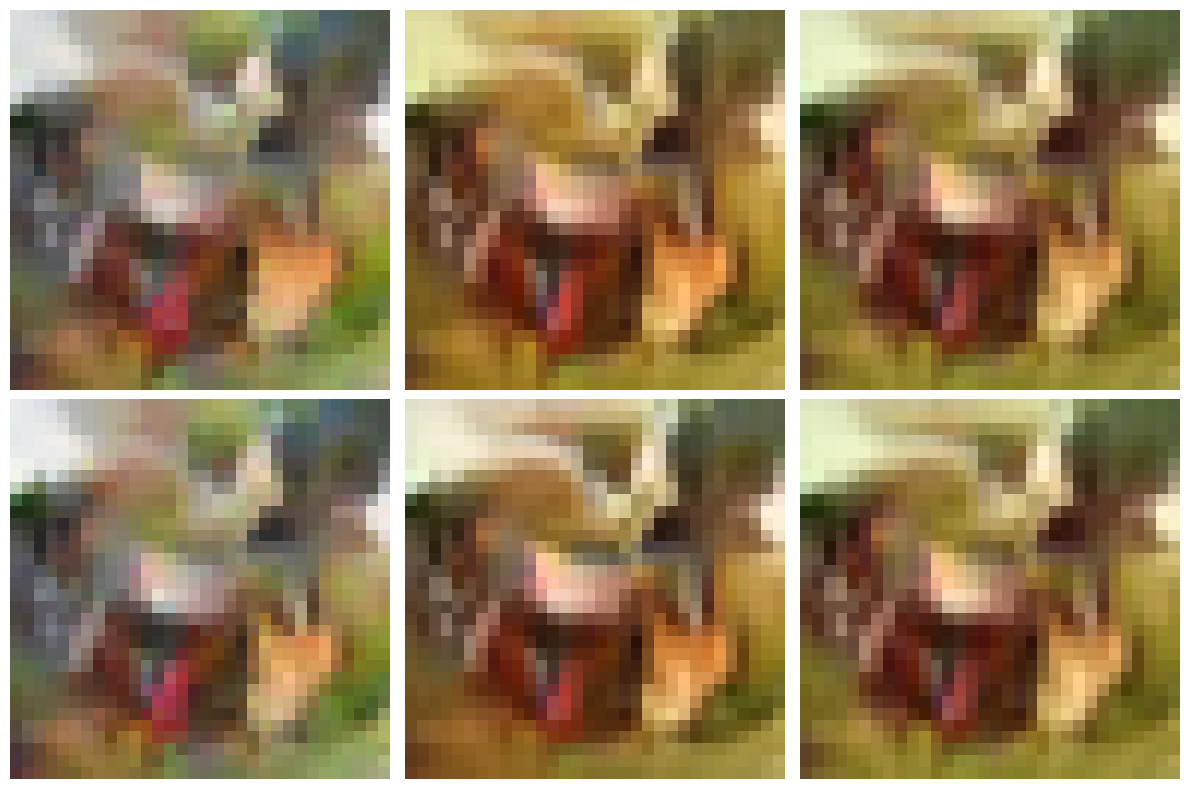}
    \includegraphics[width=0.3\linewidth]{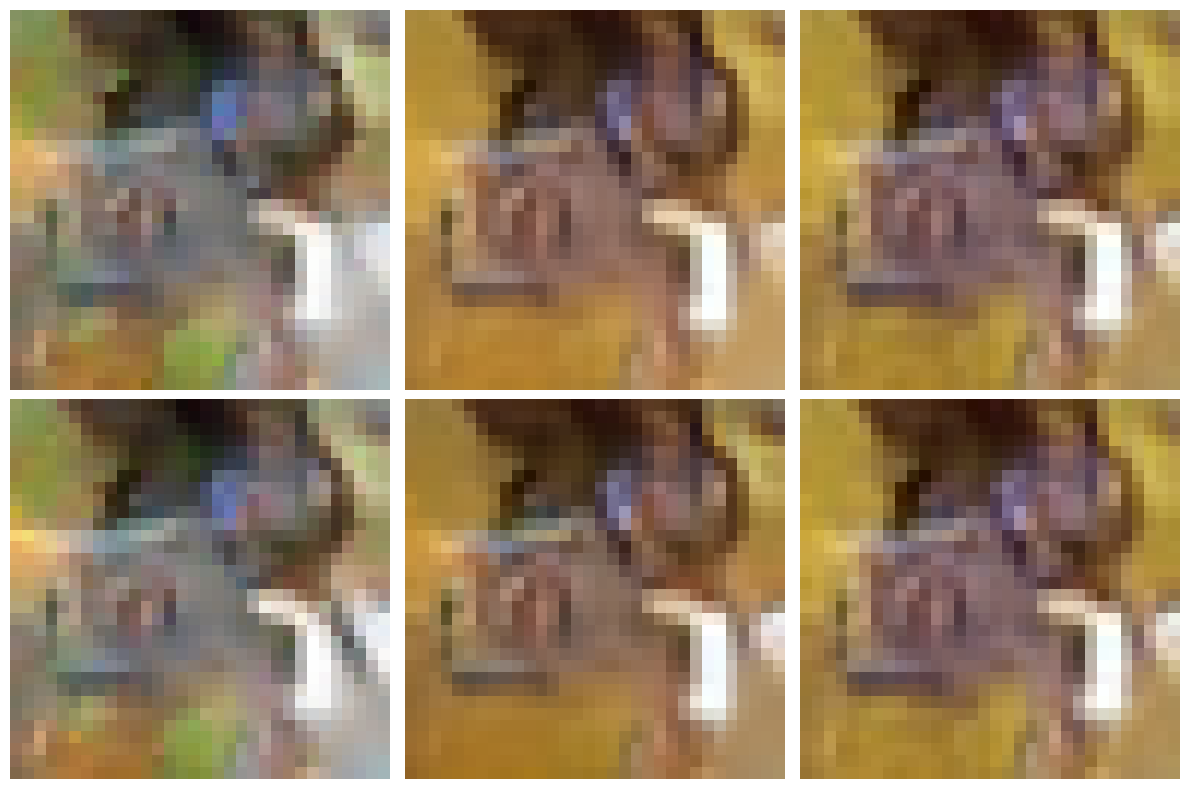}

    \sidecaptionimage{0.3\linewidth}{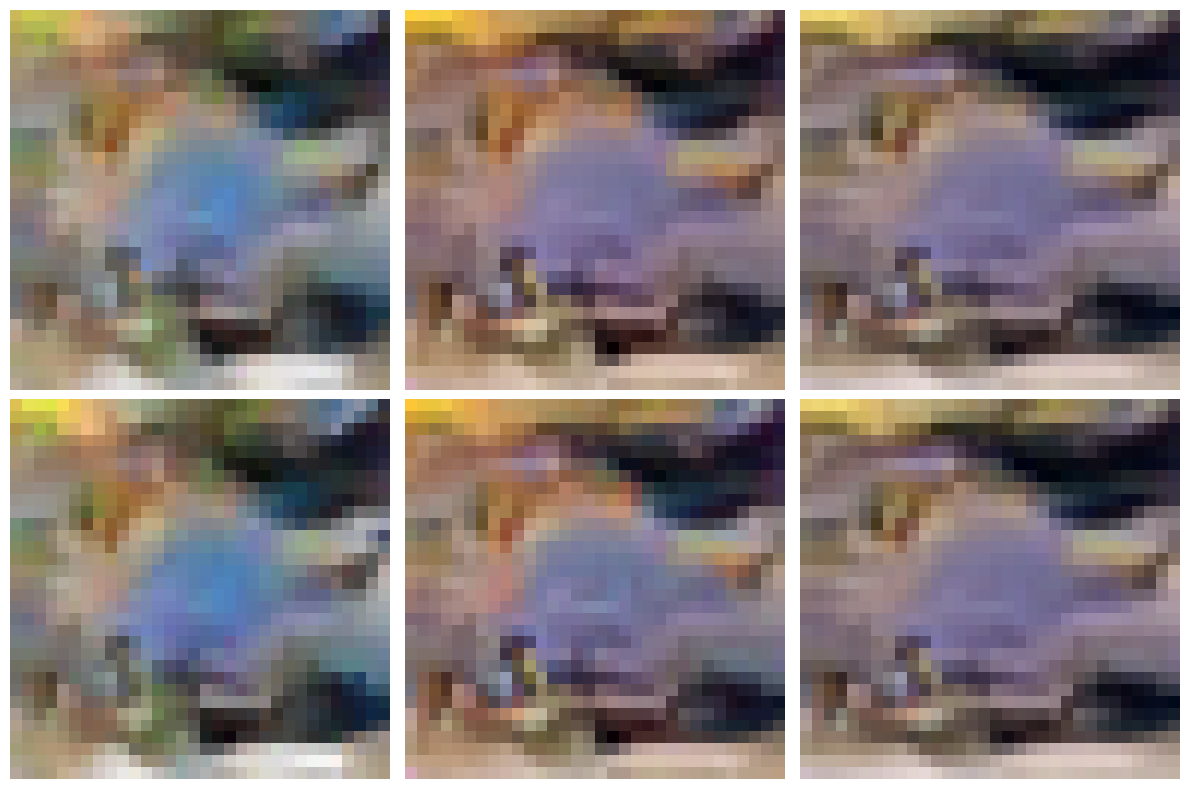}{\,\,\,$\mathcal{D}_2\,\,\,\,\,\,\,\,\,\,\,\,\,\,\,\,\mathcal{D}_1$}
    \includegraphics[width=0.3\linewidth]{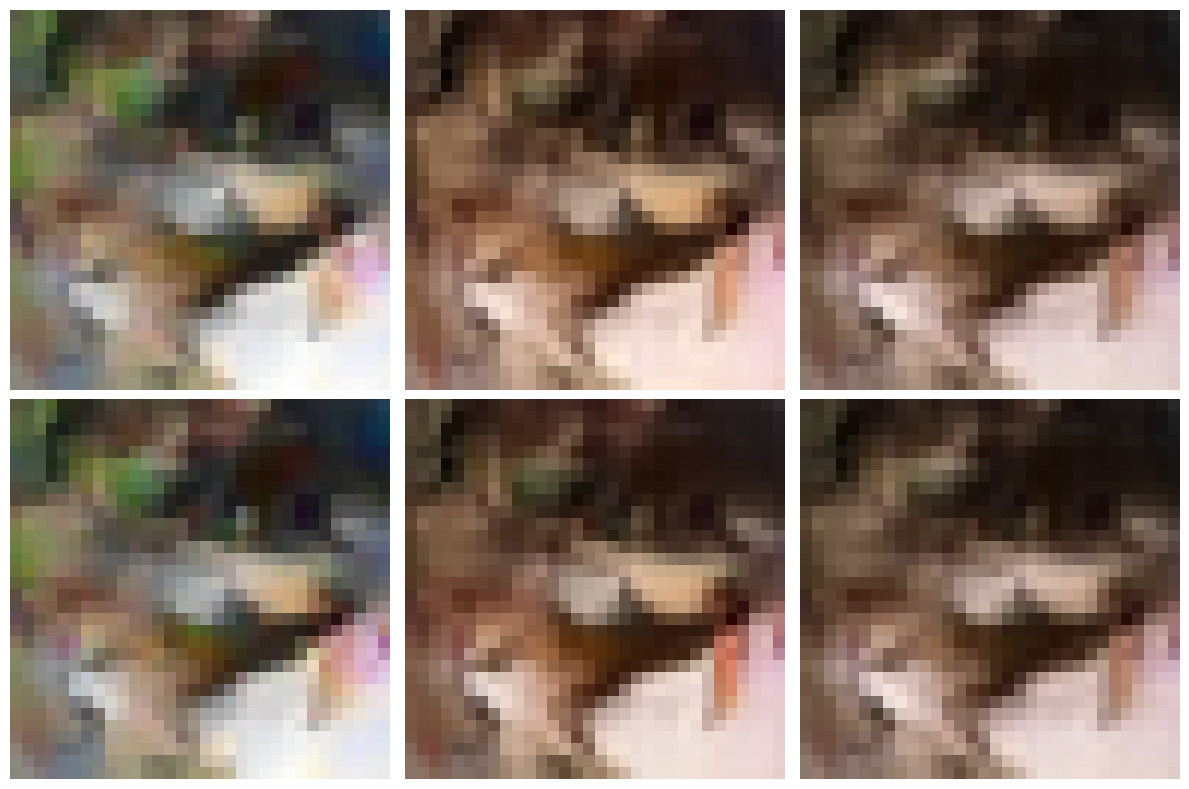}
    \includegraphics[width=0.3\linewidth]{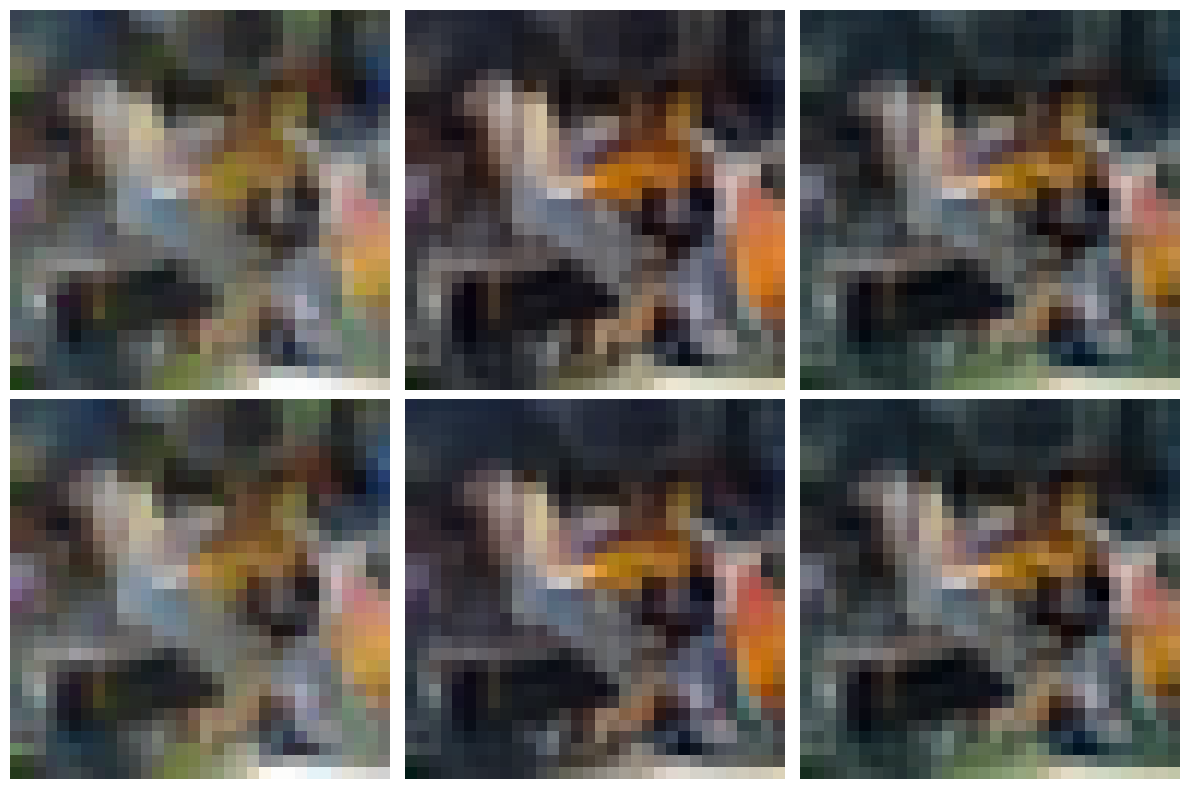}

    \caption{Further samples comparing DiTs and UNets early in training, trained on two disjoint subsets of $10^4$ samples from CIFAR10, and the outputs of calibrated local score models. UNet samples are from 10 epochs of training, DiT samples are from 30 epochs of training. Samples not curated for quality.}
    \label{fig:app_samps_c10}
\end{figure}

\begin{figure}
    \centering
    \sidecaptionimage{0.3\linewidth}{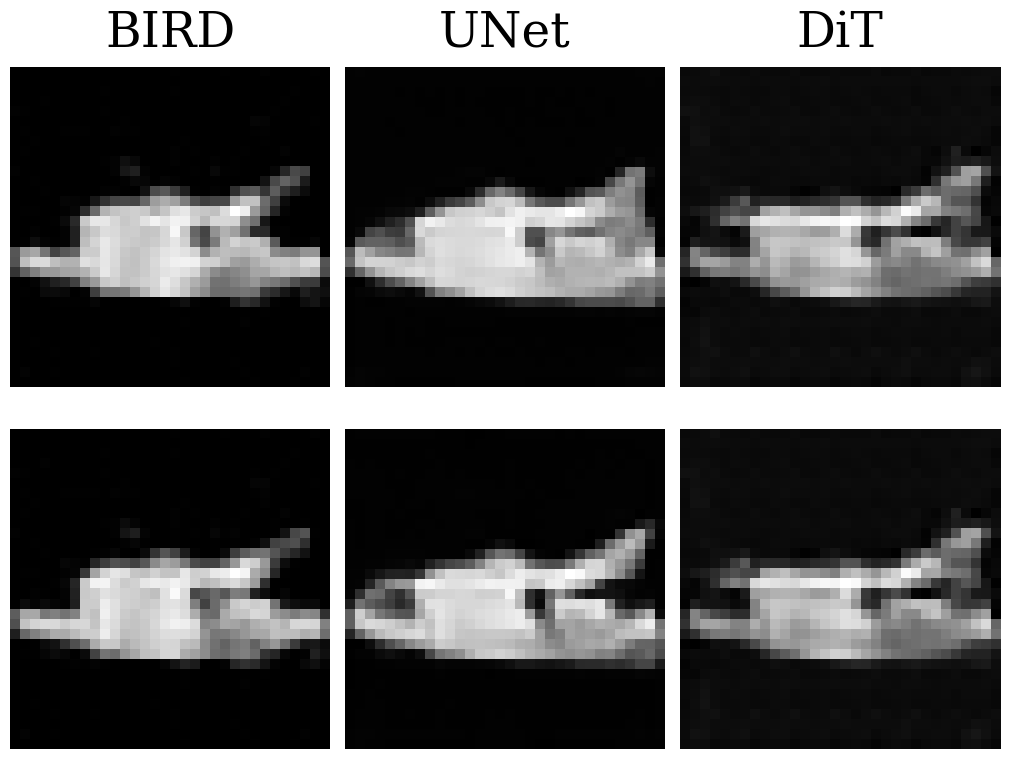}{\,\,\,$\mathcal{D}_2\,\,\,\,\,\,\,\,\,\,\,\,\,\,\,\,\,\,\,\,\mathcal{D}_1$\,\,\,\,}
        \includegraphics[width=0.3\linewidth]{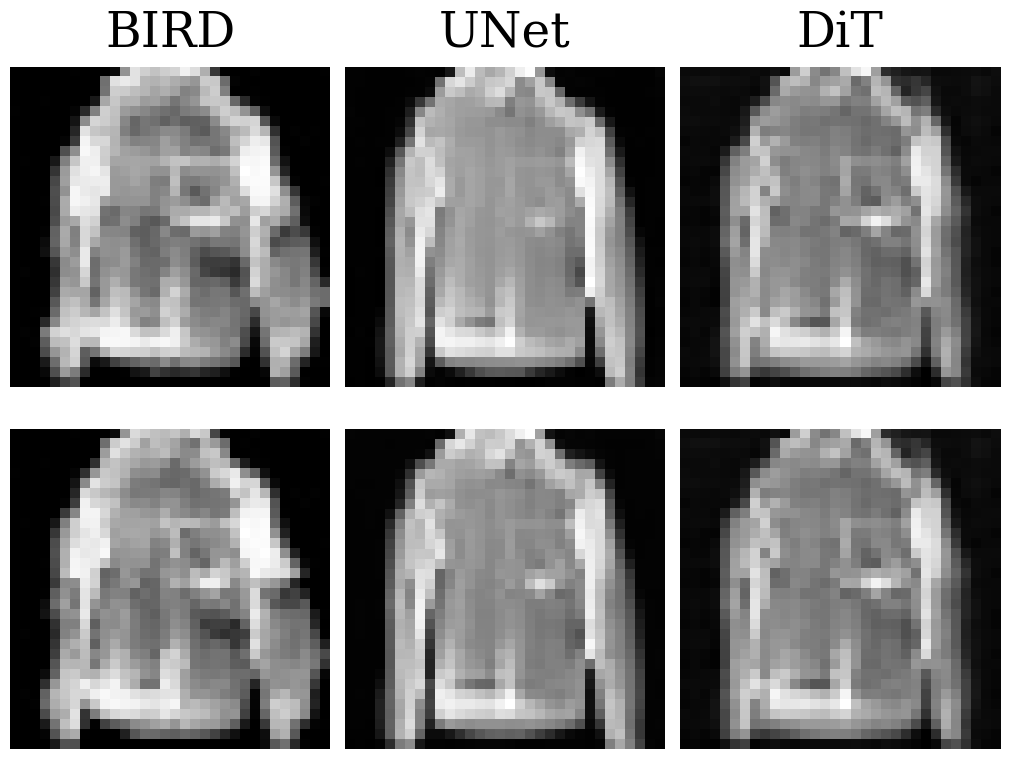}
    \includegraphics[width=0.3\linewidth]{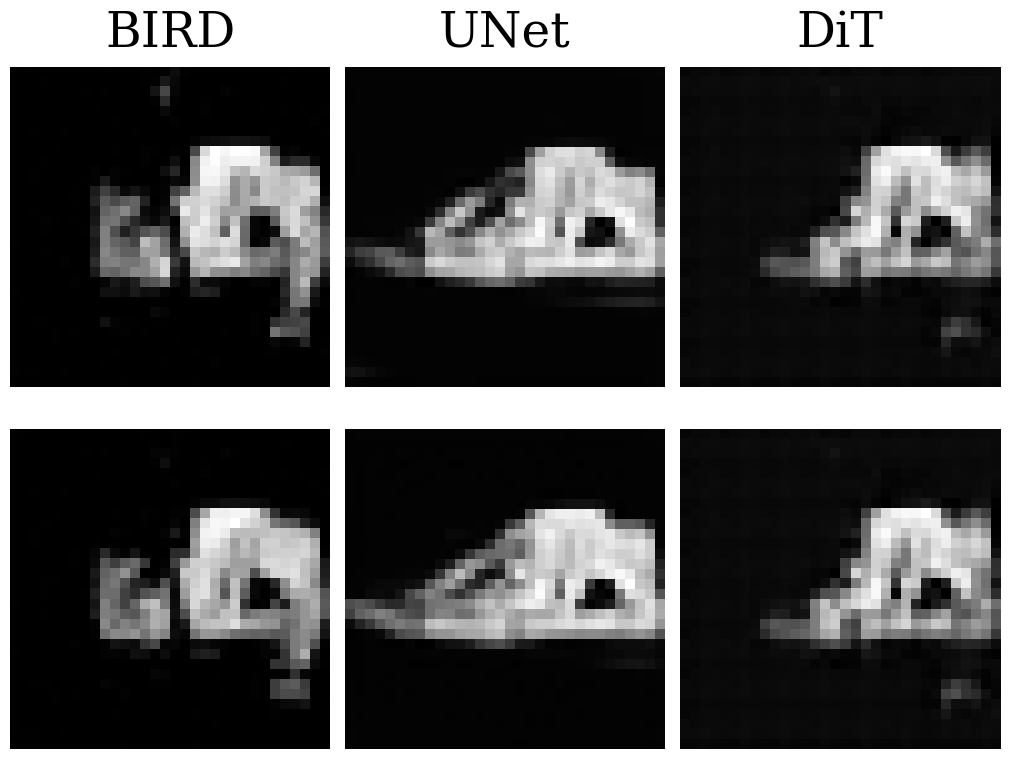}
    
    \sidecaptionimage{0.3\linewidth}{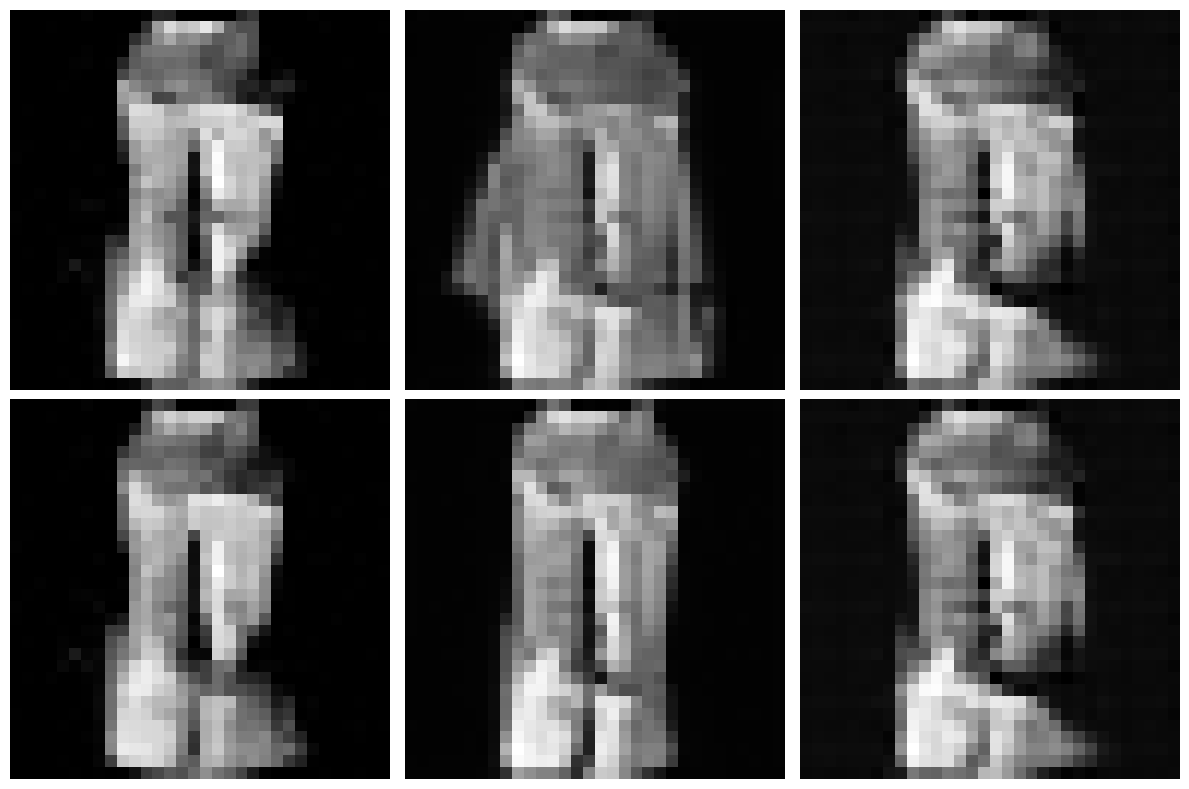}{\,\,\,$\mathcal{D}_2\,\,\,\,\,\,\,\,\,\,\,\,\,\,\,\,\mathcal{D}_1$}
    \includegraphics[width=0.3\linewidth]{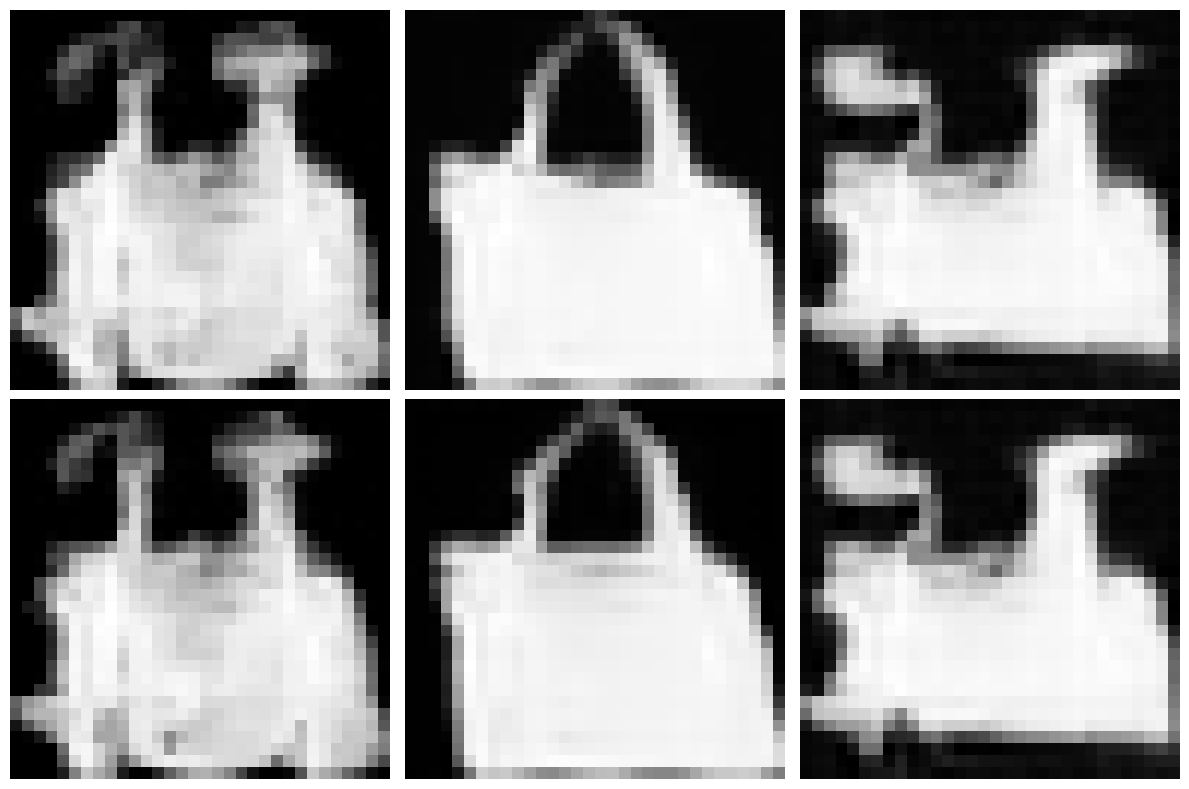}
    \includegraphics[width=0.3\linewidth]{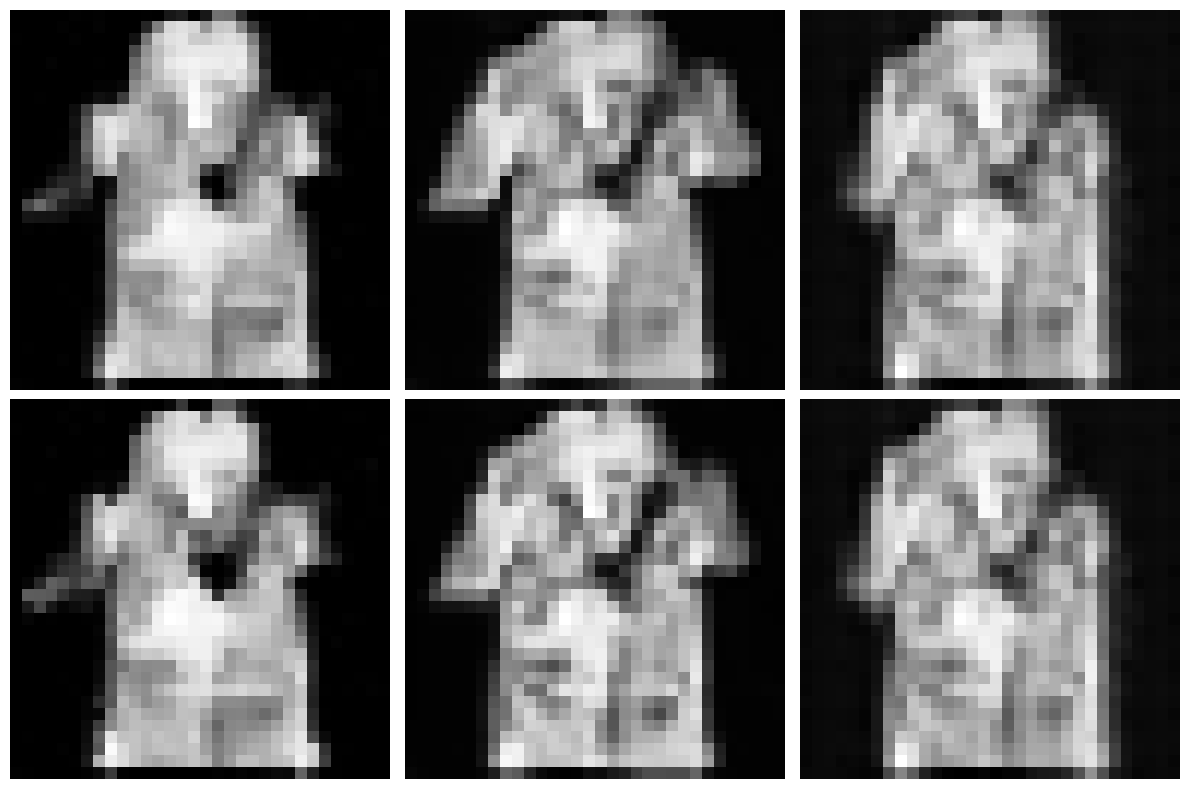}

    \sidecaptionimage{0.3\linewidth}{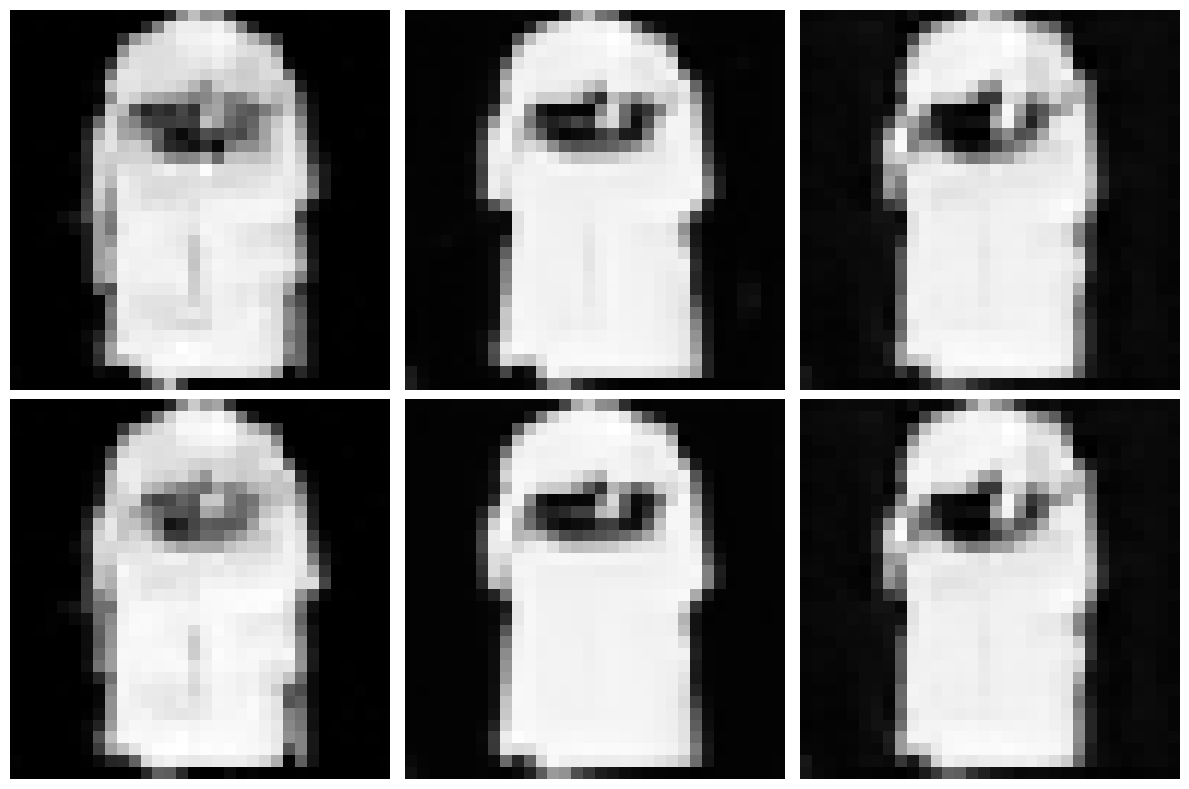}{\,\,\,$\mathcal{D}_2\,\,\,\,\,\,\,\,\,\,\,\,\,\,\,\,\mathcal{D}_1$}
    \includegraphics[width=0.3\linewidth]{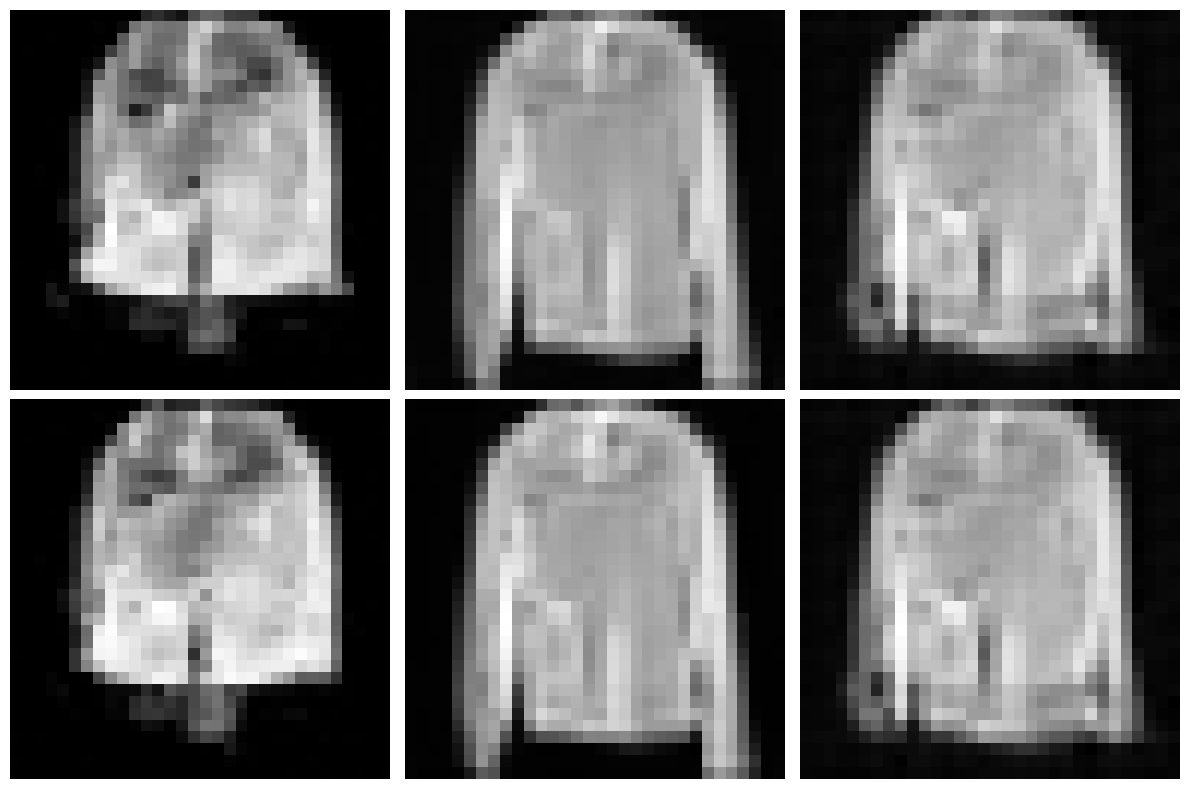}
    \includegraphics[width=0.3\linewidth]{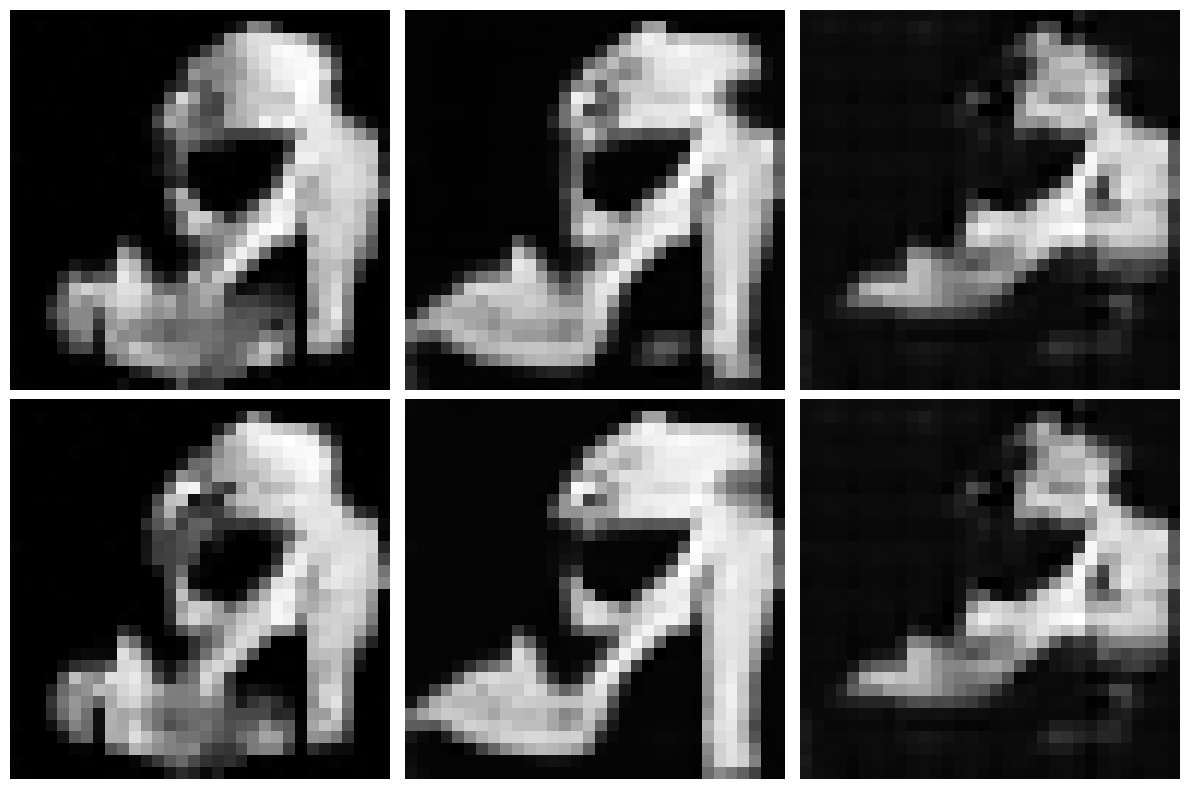}

    \caption{Further samples comparing DiTs and UNets early in training, trained on two disjoint subsets of $10^4$ samples from FashionMNIST, and the outputs of calibrated local score models. UNet samples are from 10 epochs of training, DiT samples are from 15 epochs of training. Samples not curated for quality.}
    \label{fig:app_samps_fmnist}
\end{figure}

\begin{figure}
    \centering
    \sidecaptionimage{0.3\linewidth}{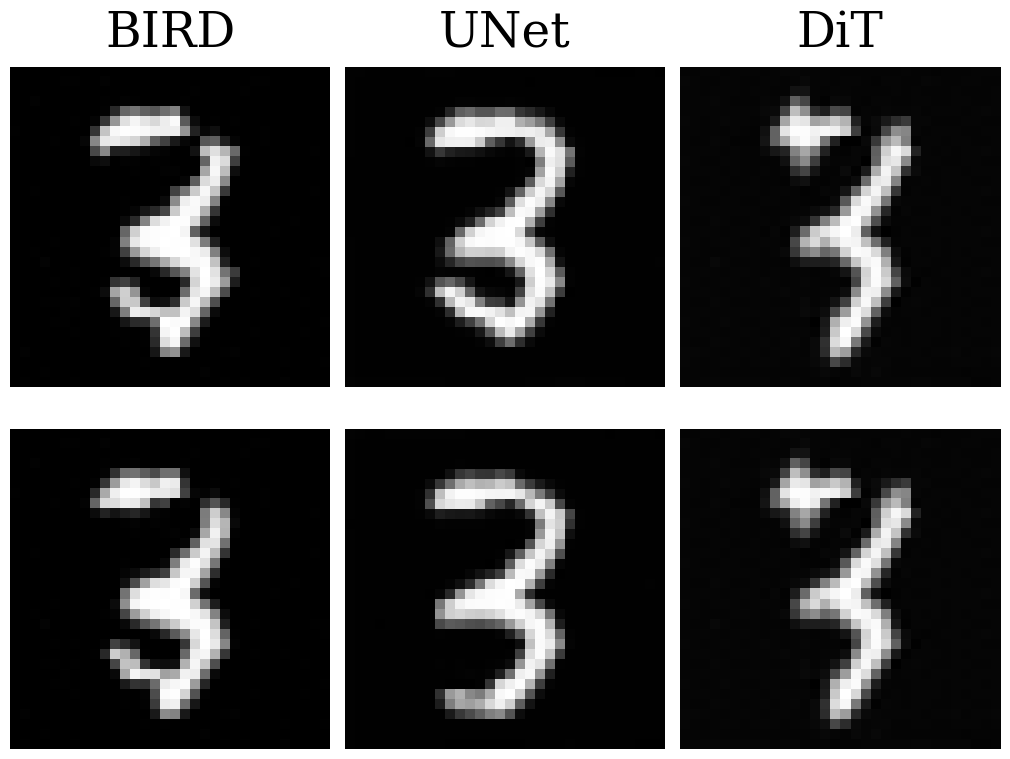}{\,\,\,$\mathcal{D}_2\,\,\,\,\,\,\,\,\,\,\,\,\,\,\,\,\,\,\,\,\mathcal{D}_1$\,\,\,\,}
        \includegraphics[width=0.3\linewidth]{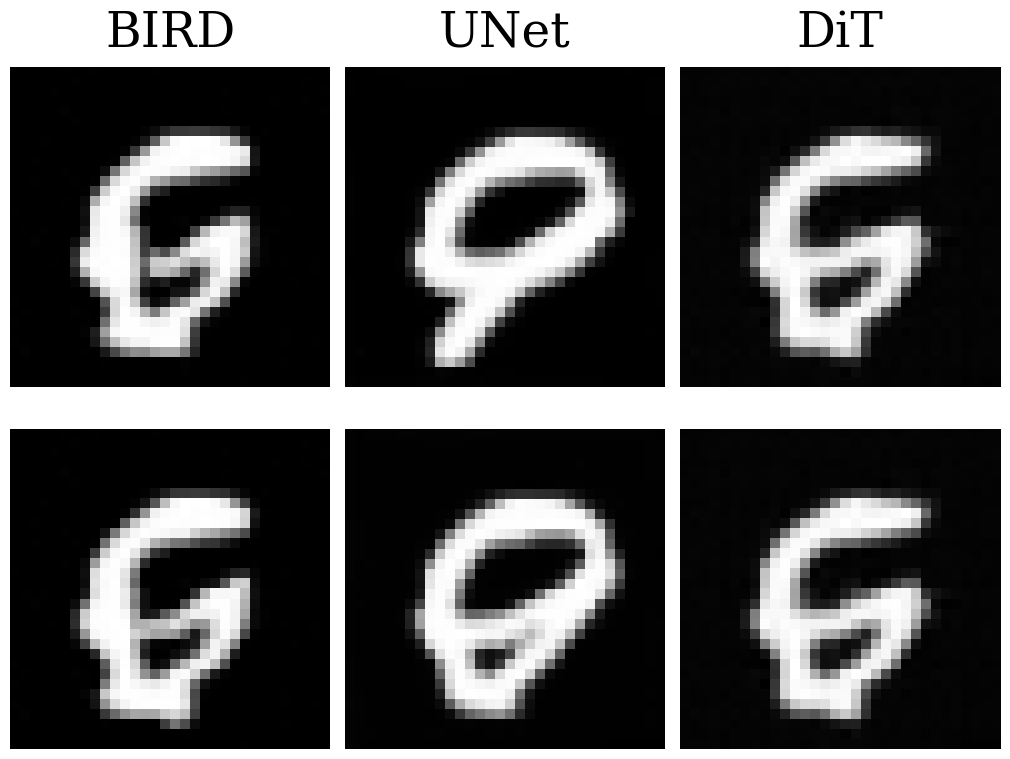}
    \includegraphics[width=0.3\linewidth]{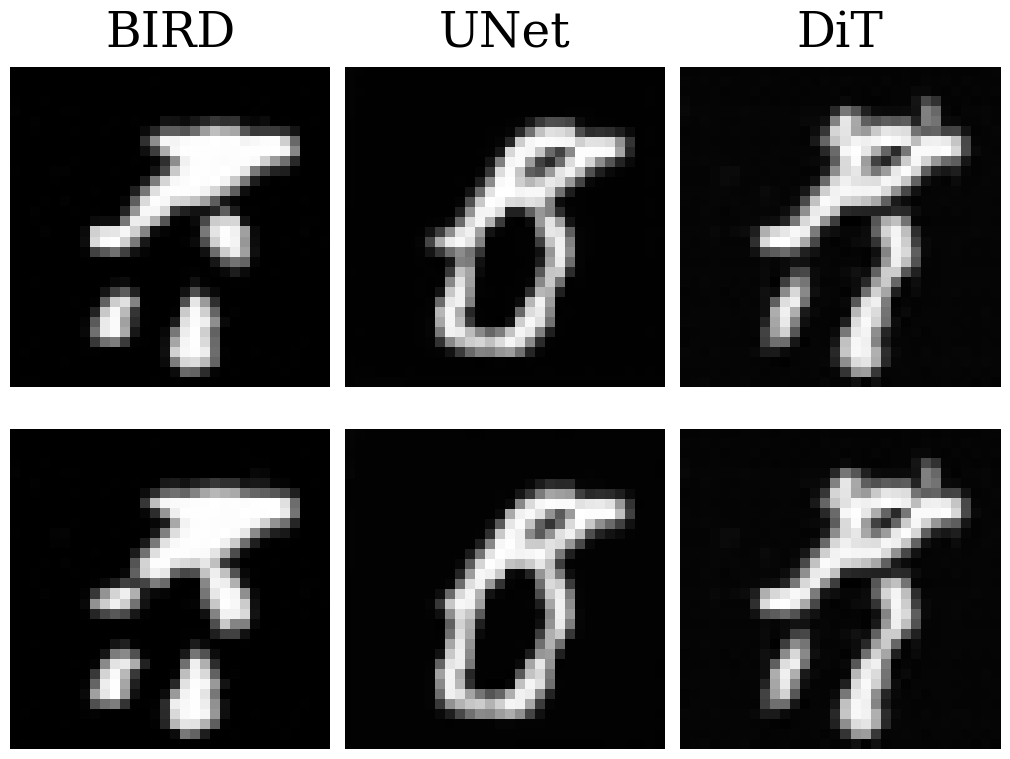}
    
    \sidecaptionimage{0.3\linewidth}{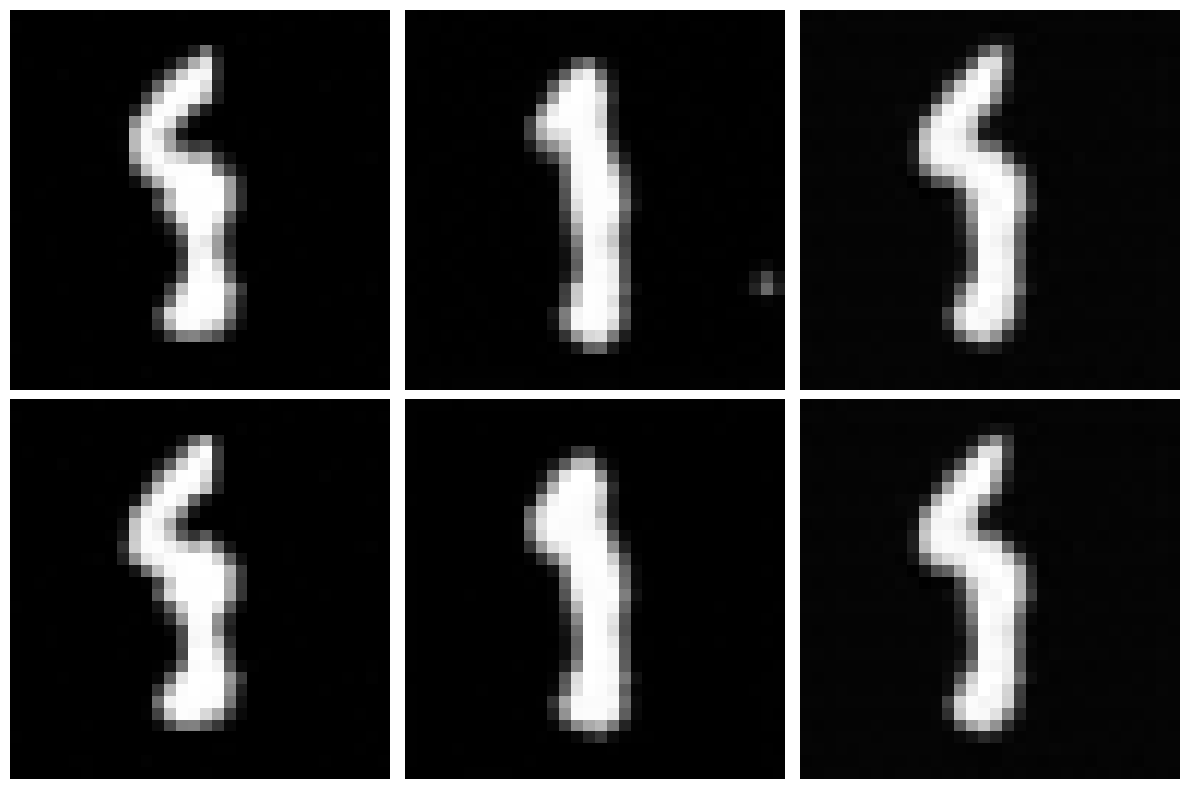}{\,\,\,$\mathcal{D}_2\,\,\,\,\,\,\,\,\,\,\,\,\,\,\,\,\mathcal{D}_1$}
    \includegraphics[width=0.3\linewidth]{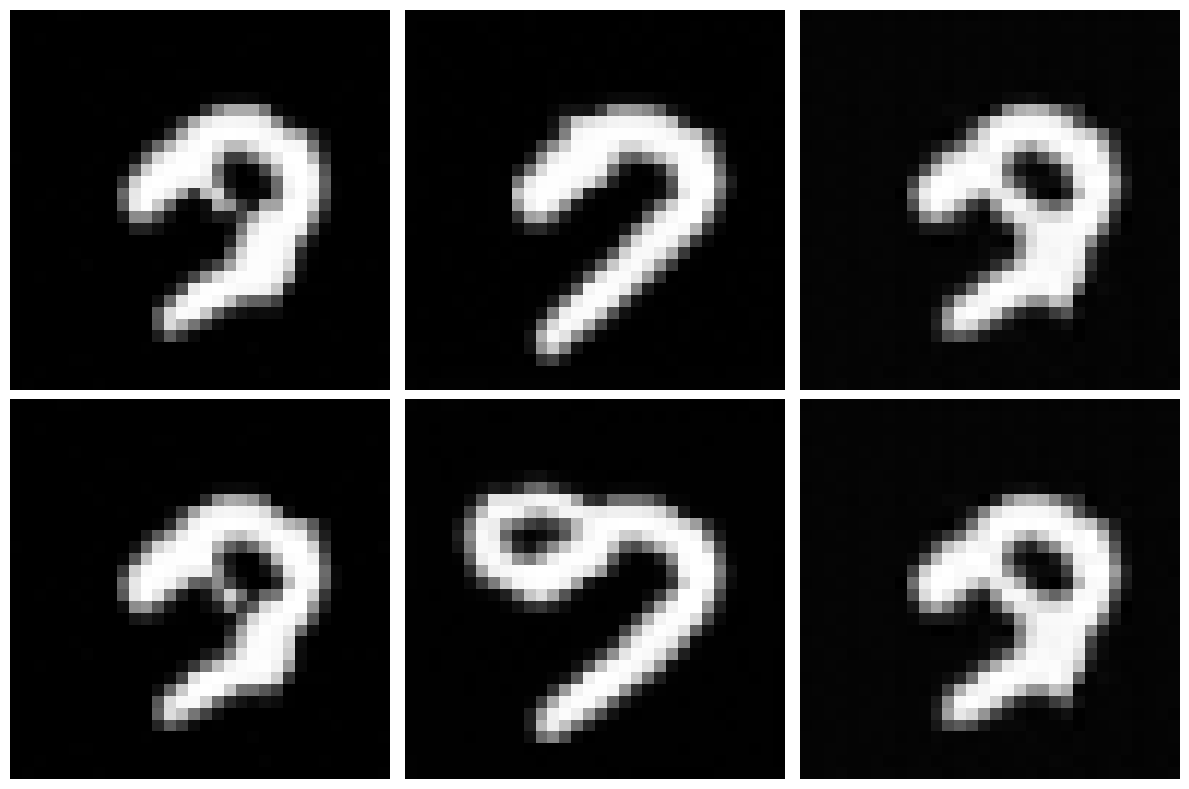}
    \includegraphics[width=0.3\linewidth]{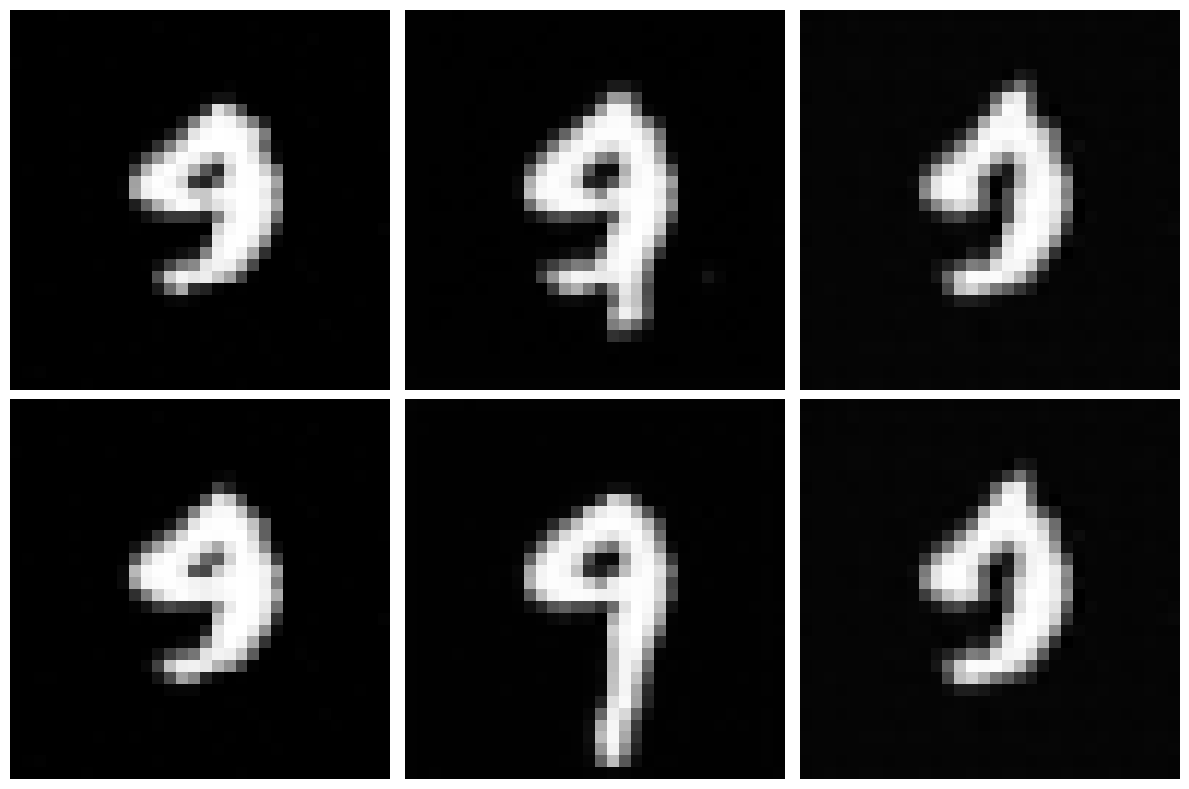}

    \sidecaptionimage{0.3\linewidth}{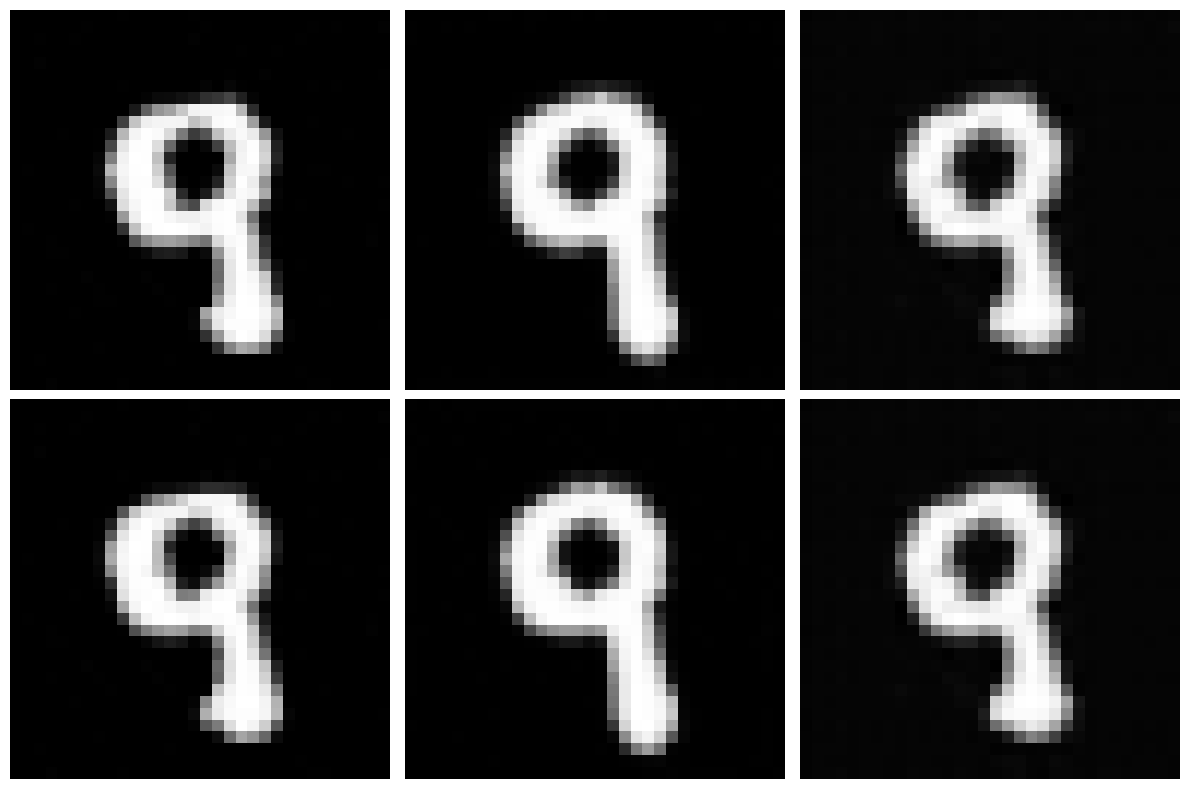}{\,\,\,$\mathcal{D}_2\,\,\,\,\,\,\,\,\,\,\,\,\,\,\,\,\mathcal{D}_1$}
    \includegraphics[width=0.3\linewidth]{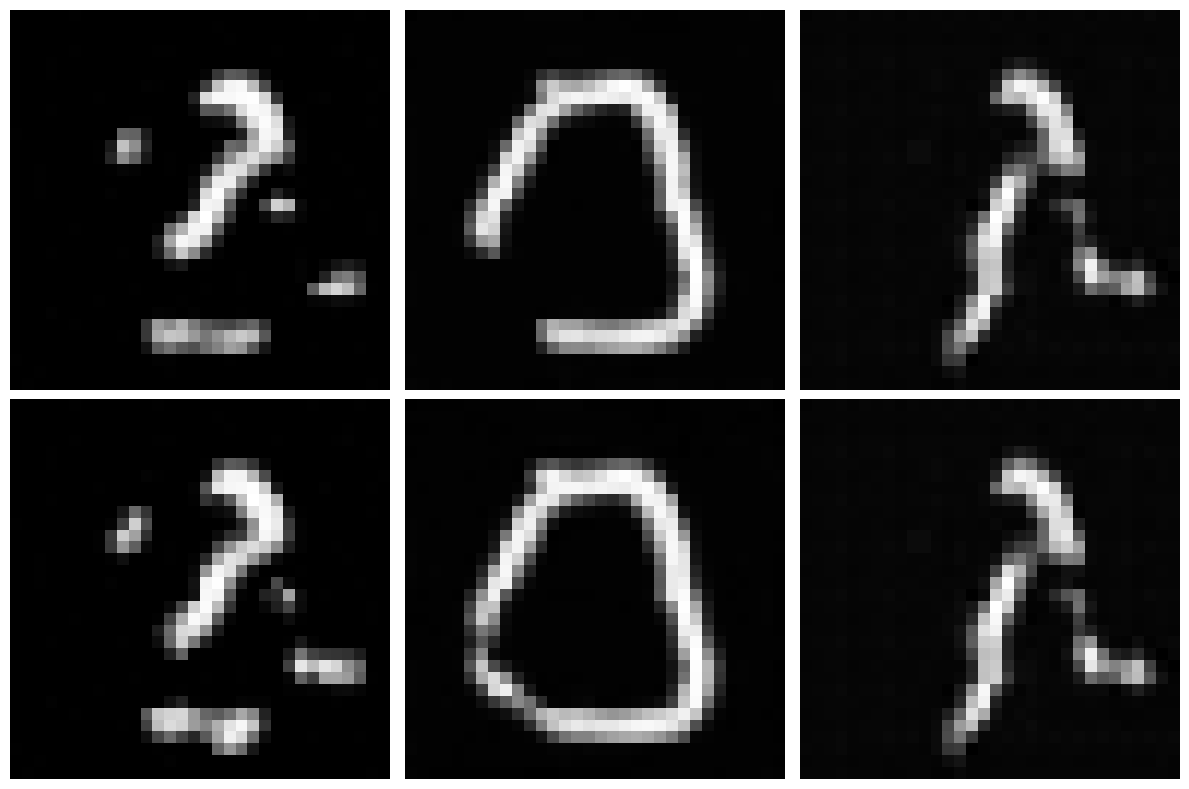}
    \includegraphics[width=0.3\linewidth]{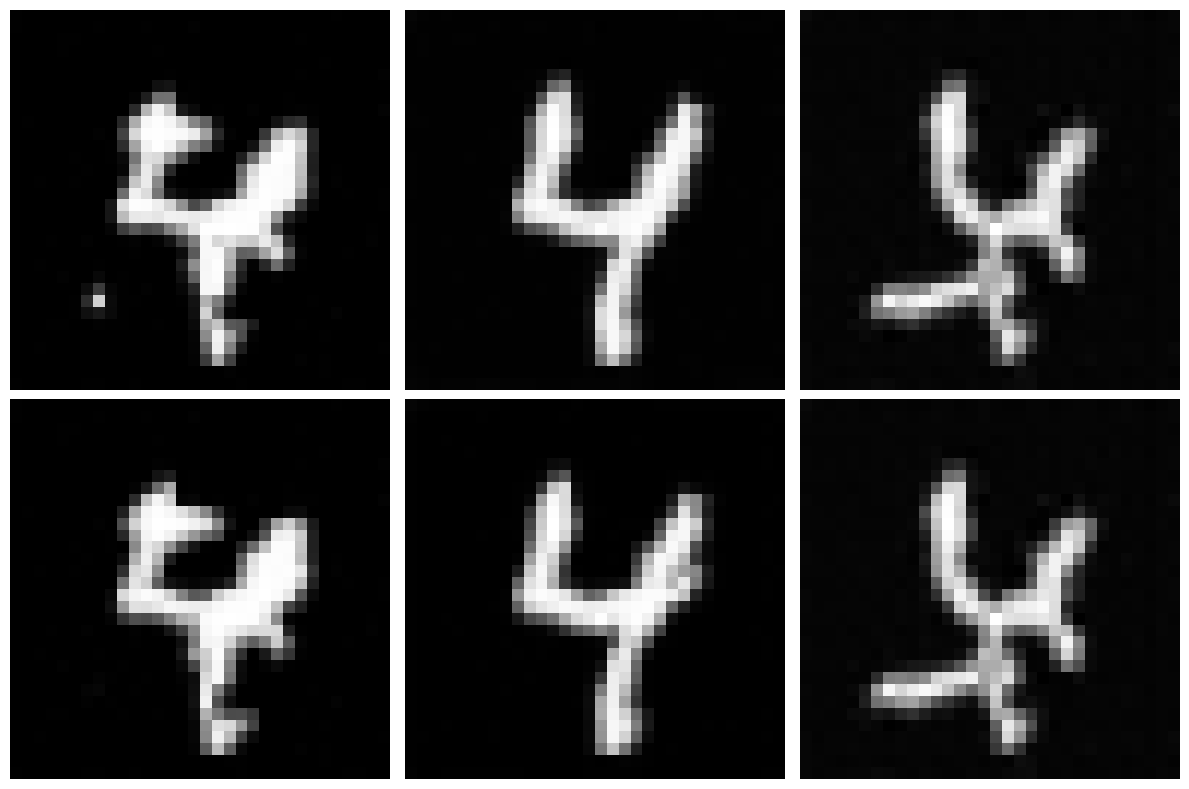}

    \caption{Further samples comparing DiTs and UNets early in training, trained on two disjoint subsets of $10^4$ samples from MNIST, and the outputs of calibrated local score models. UNet samples are from 10 epochs of training, DiT samples are from 15 epochs of training. Samples not curated for quality.}
    \label{fig:app_samps_mnist}
\end{figure}

\begin{figure}
    \centering
    \sidecaptionimage{0.49\linewidth}{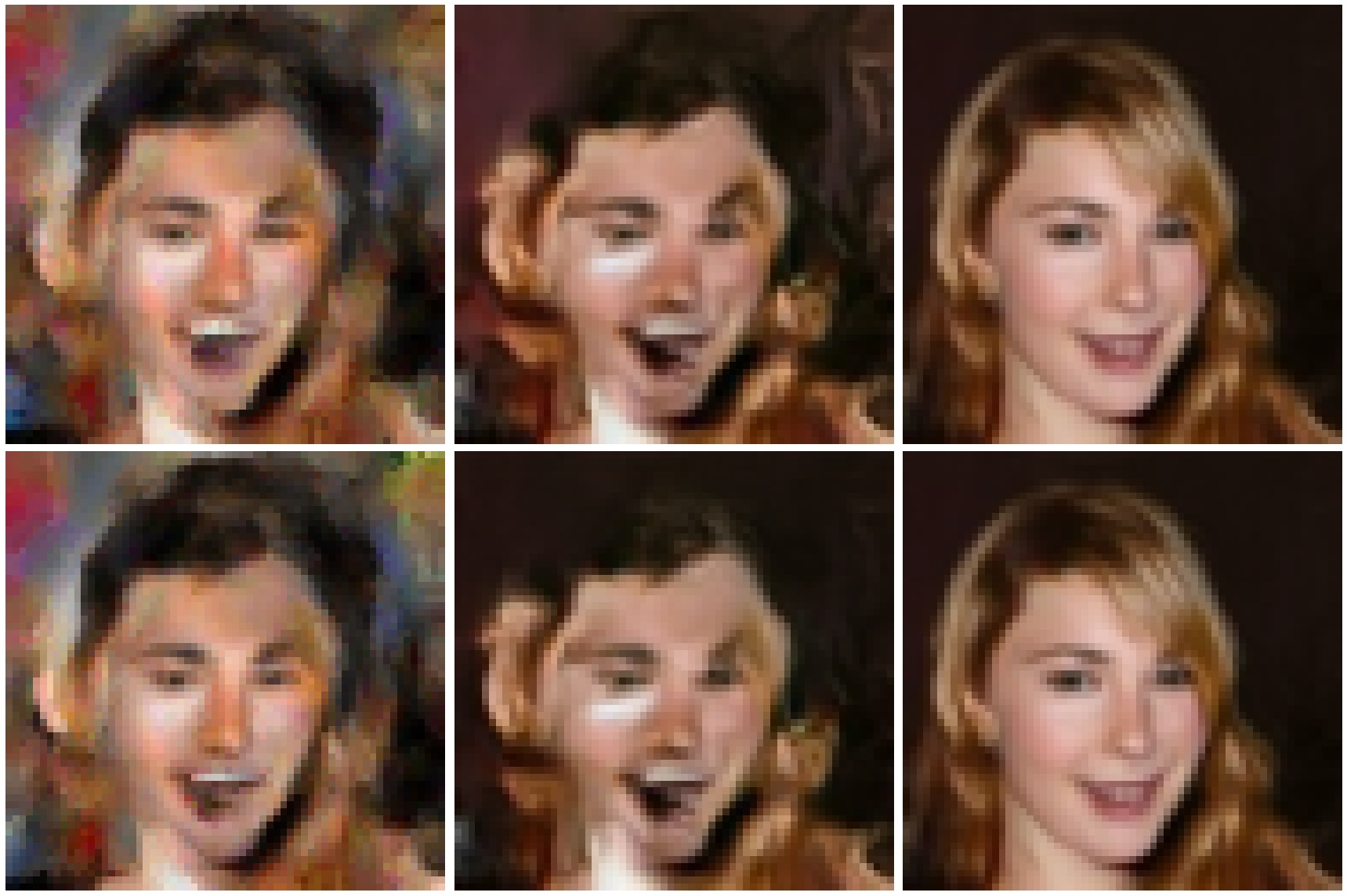}{\,\,\,\,\,\,\,$\mathcal{D}_2\,\,\,\,\,\,\,\,\,\,\,\,\,\,\,\,\,\,\,\,\,\,\,\,\,\,\,\,\,\mathcal{D}_1\,\,\,$} \includegraphics[width=0.49\linewidth]{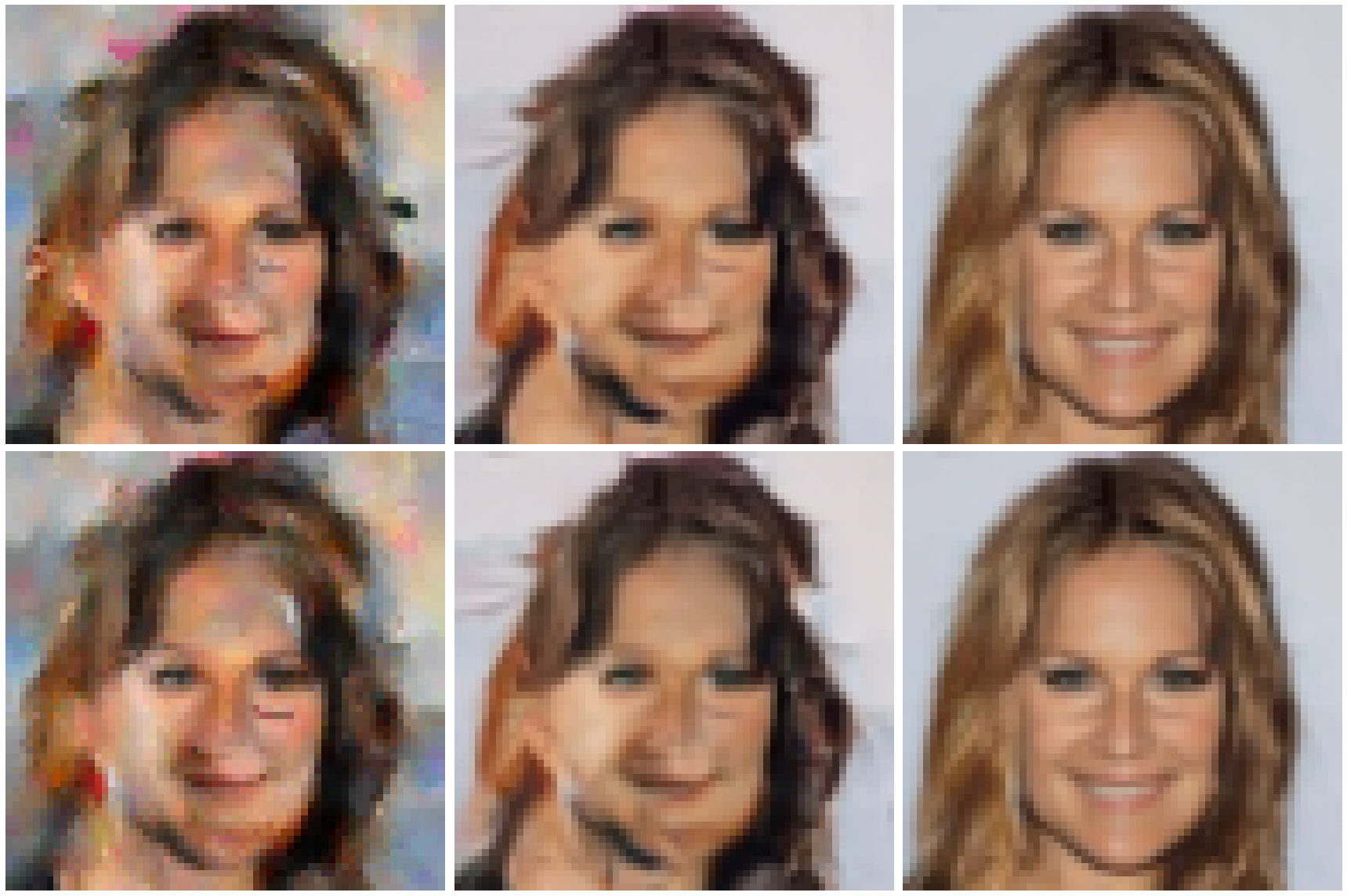}
    \sidecaptionimage{0.49\linewidth}{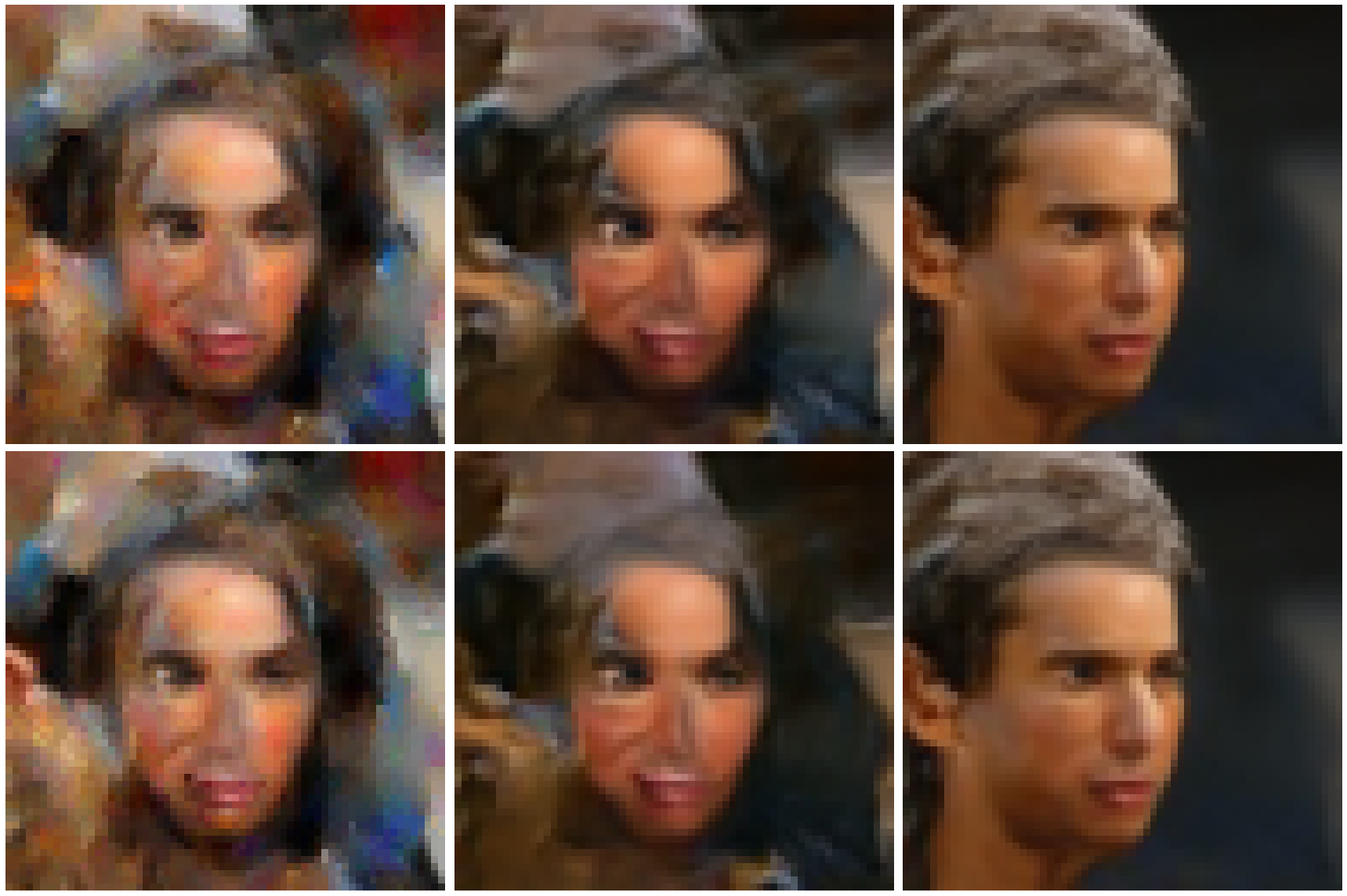}{\,\,\,\,\,\,\,$\mathcal{D}_2\,\,\,\,\,\,\,\,\,\,\,\,\,\,\,\,\,\,\,\,\,\,\,\,\,\,\,\,\,\mathcal{D}_1\,\,\,$}   
    \includegraphics[width=0.49\linewidth]{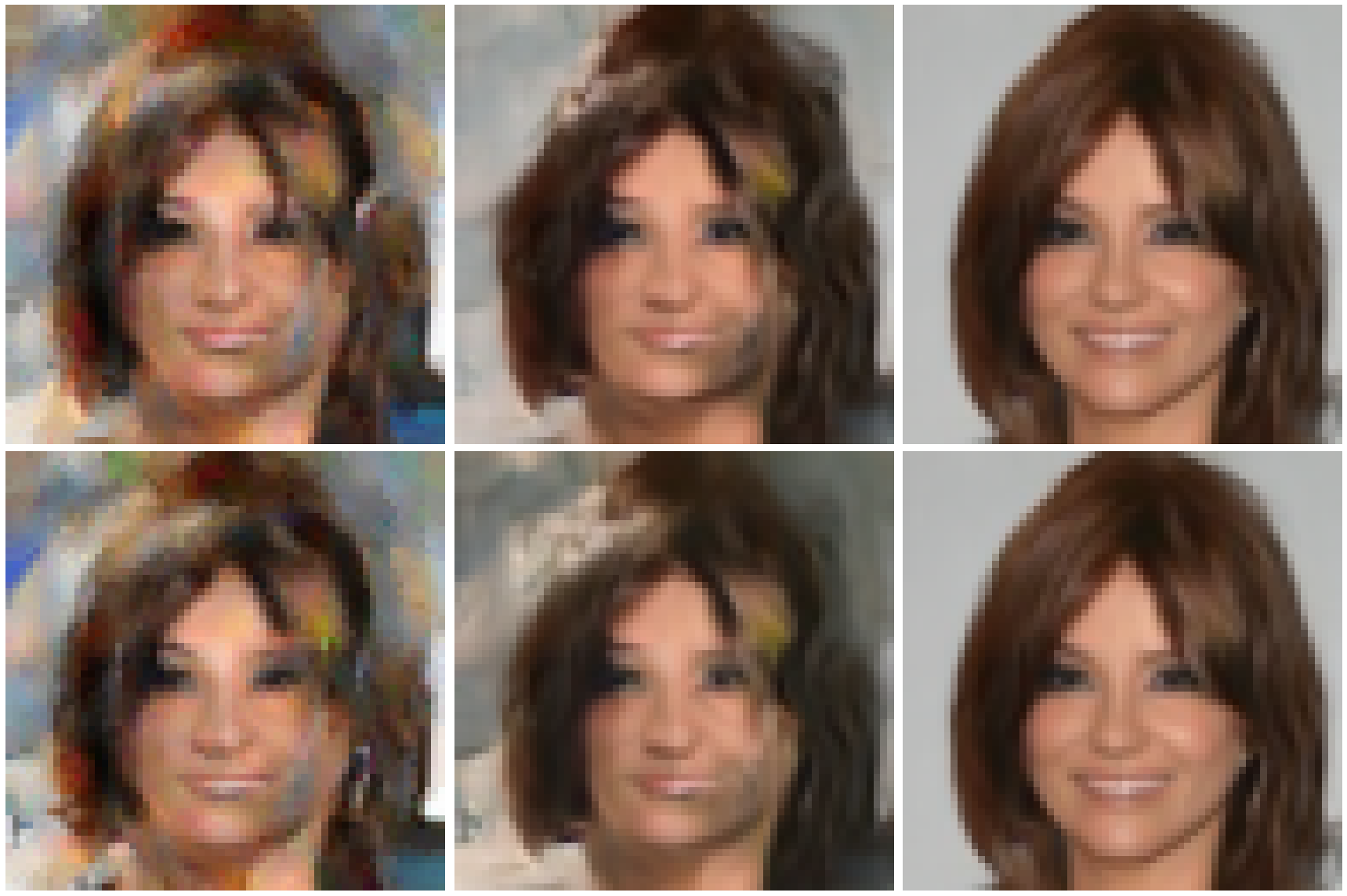}

    \caption{As training progresses, models evolve from `patch mosaic' style outputs \citep{kambanalytic} towards semantically coherent, nonlocally consistent generations, which nevertheless remain robust to dataset shifts. Leftmost columns are analytical, middle columns are DiT after 30 epochs on a $10^4$ sample subset, and rightmost is after 300 epochs on a $10^4$ sample subset.}
    \label{fig:carveout}
\end{figure}

\end{appendices}


\end{document}